\documentclass{article}

\usepackage[preprint]{neurips_2026}
\makeatletter
\renewcommand{\@noticestring}{Preprint.}
\makeatother
\usepackage{natbib}

\usepackage[utf8]{inputenc}
\usepackage[T1]{fontenc}
\usepackage{url}
\usepackage{booktabs}
\usepackage{float}
\usepackage{placeins}
\usepackage{amsfonts}
\usepackage{nicefrac}
\usepackage{microtype}
\usepackage{xcolor}
\usepackage{amsmath,amssymb,amsthm}
\usepackage{mathtools}
\usepackage{graphicx}
\usepackage{tikz}
\usepackage{pgfplots}
\usepgfplotslibrary{groupplots}
\pgfplotsset{compat=1.18}
\usepackage{multirow}
\usepackage{longtable}
\usepackage{tabularx}
\usepackage{algorithm}
\usepackage{algpseudocode}
\usepackage[nohyperlinks,nolist]{acronym}
\usepackage[textsize=footnotesize,color=yellow!60]{todonotes}

\usepackage[hypertexnames=false,hidelinks]{hyperref}

\newcommand{\NN}{\mathbb{N}}

\newcommand{\RR}{\mathbb{R}}
\renewcommand{\P}{\mathbb{P}}
\providecommand{\E}{}
\renewcommand{\E}{\mathbb{E}}
\newcommand{\relu}{\text{ReLU}}
\newcommand{\1}{\mathbf{1}}

\newtheorem{theorem}{Theorem}
\newtheorem{proposition}{Proposition}
\newtheorem{lemma}{Lemma}

\newtheorem{definition}{Definition}
\newtheorem{remark}{Remark}

\makeatletter
\providecommand{\hyper@natlinkstart}[1]{}
\providecommand{\hyper@natlinkend}{}
\providecommand{\hyper@natlinkbreak}[2]{#1}
\providecommand{\hyper@natanchorstart}[1]{}
\providecommand{\hyper@natanchorend}{}
\def\hyper@natlinkstart#1{}
\def\hyper@natlinkend{}
\def\hyper@natlinkbreak#1#2{#1}
\def\hyper@natanchorstart#1{}
\def\hyper@natanchorend{}
\makeatother

\title{Playing the network backward: A Game Theoretic Attribution Framework
  \thanks{e-mail:
    \texttt{jakob.paul.zimmermann@campus.tu-berlin.de},\,
    \texttt{jim.berend@hhi.fraunhofer.de},\linebreak
    \texttt{georg.loho@math.fu-berlin.de},\,
    \texttt{sebastian.lapuschkin@hhi.fraunhofer.de},\linebreak
    \texttt{wojciech.samek@hhi.fraunhofer.de}}}

\author{
  Jakob Paul Zimmermann$^{1,2,3}$\quad
  Jim Berend$^{1}$\\[2pt]
  \bfseries
  Georg Loho$^{3}$\quad
  Sebastian Lapuschkin$^{1,5}$\quad
  Wojciech Samek$^{1,2,4}$\\[4pt]
  \normalfont
  \small $^{1}$Fraunhofer Heinrich-Hertz-Institute, Berlin, Germany.\\
  \small $^{2}$Technische Universit\"at Berlin, Berlin, Germany.\\
  \small $^{3}$Freie Universit\"at Berlin, Berlin, Germany.\\
  \small $^{4}$BIFOLD -- Berlin Institute for the Foundations of Learning and Data, Berlin, Germany.\\
  \small $^{5}$Technological University Dublin, Dublin, Ireland.
}

\begin{document}

\maketitle

\begin{acronym}[Bhattacharyya]
\acro{XAI}{Explainable Artificial Intelligence}
\acro{ADF}{Assumed Density Filtering}
\acro{CDF}{Cumulative Distribution Function}
\acro{DC}{Difference-of-Convex}
\acro{CPWL}{Continuous Piecewise Linear}
\acro{IB}{Information Bottleneck}
\acro{KL}{Kullback-Leibler}
\acro{LMP}{Layered Markov Process}
\acroplural{LMP}[LMPs]{Layered Markov Processes}
\acro{MDL}{Minimum Description Length}
\acro{MDP}{Markov Decision Process}
\acroplural{MDP}[MDPs]{Markov Decision Processes}
\acro{MP}{Markov Process}
\acroplural{MP}[MPs]{Markov Processes}
\acro{SAC}{Soft Actor-Critic}
\acro{SPNE}{Subgame Perfect Nash Equilibrium}
\acroplural{SPNE}[SPNEs]{Subgame Perfect Nash Equilibria}
\acro{TV}{Total Variation}

\acro{AV}{Attention-Value}
\acro{CNN}{Convolutional Neural Network}
\acro{GELU}[GeLU]{Gaussian Error Linear Unit}
\acro{ILSVRC}{ImageNet Large Scale Visual Recognition Challenge}
\acro{LN}{Layer Normalization}
\acro{MLP}{Multi-Layer Perceptron}
\acro{QK}{Query-Key}
\acro{ReLU}{Rectified Linear Unit}
\acro{ResNet}{Residual Network}
\acro{SiLU}{Sigmoid Linear Unit}
\acro{VGG}{Visual Geometry Group}
\acro{ViT}{Vision Transformer}

\acro{AttrLoc}[AL]{Attribution Localisation}
\acro{AUC}{Area Under the Curve}
\acroplural{AUC}[AUCs]{Areas Under the Curve}
\acro{AvgS}{Average Sensitivity}
\acroplural{AvgS}[AvgS]{Average Sensitivities}
\acro{BC}{Bhattacharyya Coefficient}
\acro{HOG}{Histogram of Oriented Gradients}
\acroplural{HOG}[HOGs]{Histograms of Oriented Gradients}
\acro{MaxS}{Maximum Sensitivity}
\acroplural{MaxS}[MaxS]{Maximum Sensitivities}
\acro{PF}{Pixel Flipping}
\acro{PFAUC}[PF-AUC]{Pixel-Flipping Area Under the Curve}
\acroplural{PFAUC}[PF-AUCs]{Pixel-Flipping Areas Under the Curve}
\acro{PG}{Pointing Game}
\acro{Sel}{Selectivity}
\acroplural{Sel}[Sel]{Selectivities}
\acro{SSIM}{Structural Similarity Index Measure}
\acro{TK}{Top-$k$ Intersection}

\acro{AttnLRP}{Attention-Aware Layer-wise Relevance Propagation}
\acro{CAM}{Class Activation Mapping}
\acro{CPLRP}[CP-LRP]{Conservative Propagation LRP}
\acro{DAVE}{Distribution-aware Attribution via ViT Gradient Decomposition}
\acro{DeepLift}[DeepLift]{Deep Learning Important FeaTures}
\acro{GCAM}[GCAM]{Grad-CAM}
\acro{GCAMpp}[GCAM++]{Grad-CAM++}
\acro{GrInp}[Gr$\times$Inp]{Gradient times Input}
\acro{HiResCAM}{High-Resolution Class Activation Mapping}
\acro{IntGrad}{Integrated Gradients}
\acro{KShap}[KShap]{KernelSHAP}
\acro{KernelSHAP}{Kernel SHapley Additive exPlanations}
\acro{LCAM}[LCAM]{LayerCAM}
\acro{LIME}{Local Interpretable Model-agnostic Explanations}
\acro{LRP}{Layer-wise Relevance Propagation}
\acro{RGEq}[RG+Eq]{Routing Game with equivariant Reynolds averaging}
\acro{RG}{Routing Game}
\acro{SG}{Stopping Game}
\acro{SHAP}{SHapley Additive exPlanations}
\acro{SmGrad}{SmoothGrad}
\acro{absLRP}[absLRP]{absolute-value Layer-wise Relevance Propagation}
\acro{AR}{Attention Rollout}
\acro{SOTA}{state-of-the-art}
\acro{RawAttn}{Raw Attention}
\acro{GAE}{Generic Attention Explainability}
\acro{TIBAV}{Transformer Interpretability Beyond Attention Visualization}
\end{acronym}

\begin{abstract}
Attribution methods explain which input features drive a model's prediction, making them central to model debugging and mechanistic interpretability. Yet backward attribution methods (gradients, LRP, transformer‑specific rules) lack a shared framework in which to compare the underlying backward calculations.
We introduce such a framework by recasting backward attribution as a two-player game on an extended network graph building on Gaubert and Vlassopoulos’ ReLU Net Game. Gradients and the full $\alpha\beta$-\ac{LRP} family arise as integrals over game trajectories under specific equilibria, so attribution maps become projections of trajectory distributions rather than the primary object.
Desired explanation properties—such as localisation focus, robustness to input noise, or stable attention routing—can be specified as game‑theoretic concepts (policy regularization, risk aversion, extended action sets) and translate directly into novel adaptations of the well-known backward rules. On ViT-B/16, one such selected adaptation of $\alpha\beta$-\ac{LRP} outperforms prior transformer-specific backward methods across all considered localisation metrics.
\end{abstract}

\begin{figure}[th]
    \centering
    \includegraphics[height=0.57\textheight, angle=270]{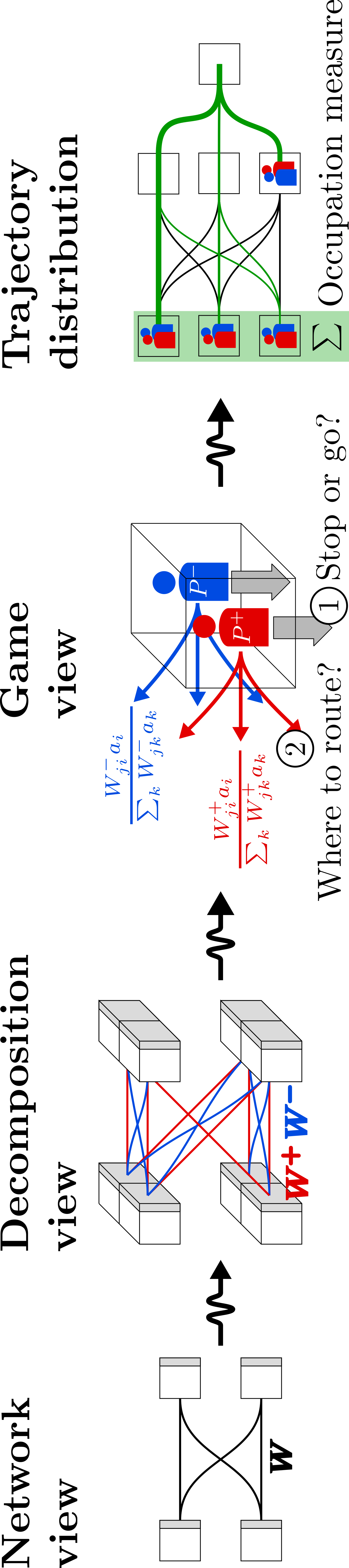}
    \caption{We lift the backward pass through a network into a two-player game on an extended computational graph. The game is based on the Stopping Decomposition that unravels excitatory and inhibitory contributions by decomposing the weights by sign into $W = W^+ - W^-$. In the game view at each state the two players answer (1) stop or go? and (2) where to route? and their optimal policies induce a trajectory distribution; a weighted trajectory integral recovers the attribution.}
    \label{fig:game-trajectory-distribution}
\end{figure}

\section{Introduction}
Attribution has shifted from a tool for inspecting model decisions to a practical mechanism for improving and controlling models: recent work uses attributions to reveal and correct spurious behaviour in vision models~\citep{Pahde2023R2R}, identify retrieval and parametric heads in language models~\citep{Kahardipraja2025Atlas}, steer decoding toward instructions or evidence~\citep{Komorowski2026AGD}, and localize circuit components through attribution patching~\citep{Syed2024AttributionPatching,Kramar2024AtPStar,Hanna2024Faithfulness}. This makes critiques of saliency more consequential: parameter insensitivity, input dependence, perturbation fragility, and misleading modified-gradient structure~\citep{Adebayo2018sanity, Kindermans2018, Ghorbani2019fragile, Nie2018guided} are no longer only failures of visual explanation, but potential failure modes of downstream procedures built on attribution. Understanding these methods therefore requires a common account of the backward calculation itself.

Backward attribution, however, is usually presented as a zoo of recipes---gradients as local sensitivities, \ac{LRP} as redistribution, Deep Taylor as local decomposition, and transformer-specific rules as handcrafted extensions~\citep{Bach2015, Montavon2017, Binder2016, Kohlbrenner2020, Chefer2021transformer, Ali2022, achtibat2024attnlrp, Wrobel2024}. This diversity is useful: different explanations answer different questions about the model. The difficulty is that these rules use method-specific stabilisers or stream balances, with no common comparison framework. The gap is especially acute on \acp{ViT}: LayerNorm's cross-token coupling and the QK-softmax break \ac{LRP} conservation~\citep{Ali2022}, and register-token artefacts contaminate attention maps~\citep{darcet2024vision}.

\paragraph{Network calculation as a backward game.}
\citet{Gaubert2025} prove that a ReLU network's forward output is the game value of a two-player Stopping Game on its graph: layers are rounds, neurons are states, input values are terminal payoffs, and the scalar output $f(x)$ is the game value. Negative and positive weights are modeled via turn switches resulting in payoffs to the players opponent. We reinterpret backward attribution as a \textbf{distribution of weighted game trajectories} through the network, yielding a framework to compare different backward attribution methods. We adapt the game to a novel Stopping Decomposition, separating paths with positive and negative contributions to the prediction (Lemma~\ref{lem:parity-path})---information that the original game formulation averages out.
Propagation rules become \textbf{equilibrium policies}, and attribution maps are recovered from the \textbf{discounted occupation measure} of the induced trajectory distribution---an integral over the path distribution assigning relevance to each game state. 
We introduce two games: 
the \textbf{\ac{SG}}, whose occupation measures recover the Jacobian (Theorem~\ref{thm:gradient}), and the entropy-regularised \textbf{\ac{RG}} at temperature $\tau$, whose occupation measures recover the full $\alpha\beta$-\ac{LRP} family at its anchor $\tau{=}1$ (Theorem~\ref{thm:lrp-recovery}) on ReLU networks. Guiding the player's policies at attention blocks by reference policies of an oracle player recovering the QK-softmax extends the framework to attention modules; the same entropy regularisation extends \ac{SG} from \ac{ReLU} to Softplus networks (parameter $1/\theta$, Appendix~\ref{app:softplus-stopping-game}).

\paragraph{Consequences of the reformulation.}

\emph{(i)} We interpret the trajectory distribution as the network's backward calculation. 
The Hellinger distance $\mathrm{H}(\Pi_A,\Pi_B)$ between two such distributions---computable by a linear-time backward pass---therefore gives a metric between calculations rather than between heatmaps. 
Under cascading randomisation~\citep{Adebayo2018sanity}, this distance detects calculation collapse that the pixel projection hides: the failure exposed by the sanity check happens in the projection, not the network calculation.
\emph{(ii)} The game framework yields \textbf{analytical modes}: ideas like risk aversion, action space augmentation or policy regularisation map back to adaptations of classical backpropagation rules boosting the rules performance.
\emph{(iii)} $\alpha\beta$-\ac{LRP} parameters have game-space duals: $(\alpha,\beta)$ set occupation-measure path weights, while the $\epsilon$-stabiliser becomes a stop option with payoff $\log(\epsilon)$.

\goodbreak

\paragraph{Contributions.}
\begin{itemize}
    \item \textbf{Backward attribution as a game on the network graph.}
    We present the \acf{SG} on an extended network graph---an adaptation of the ReLU Net Game of~\citet{Gaubert2025} to the Stopping Decomposition---as well as the novel entropy-regularised \acf{RG}, with an oracle player construction extending the latter to attention modules.
    \item \textbf{A distance on the space of trajectory distributions.} The Hellinger distance between induced trajectory distributions separates trained from randomised calculations under cascading weight randomisation~\citep{Adebayo2018sanity} even when their attribution maps appear similar (\S\ref{sec:hellinger-sanity}).
    \item \textbf{Known methods arise as equilibria.} A difference of \ac{SG} occupation measures yields the Jacobian; \ac{RG}'s $\tau{=}1$ occupation measures yield the full $\alpha\beta$-\ac{LRP} family on ReLU networks.
    \item \textbf{Mode selection improves ViT localization under the evaluated protocol.} Small game changes yield analytical modes of gradients and $\alpha\beta$-\ac{LRP}; one mode beats all prior transformer-specific baselines on ViT-B/16 across localization metrics and other attribution desiderata from \citet{hedstrom2023quantus}, with VGG-16 taking the best \ac{AttrLoc}. The localisation lead carries over to Pascal VOC 2012 \emph{without} re-tuning \ac{RG} / \ac{RG}+Eq, while baselines are tuned on the validation split (Appendix~\ref{app:voc-cross-data}, Table~\ref{tab:eval-vit-voc}).
\end{itemize}

\section{Related Work}\label{sec:related-work}

\textbf{Game and trajectory views.} The closest mathematical precursor is the ReLU Net Game of \citet{Gaubert2025}, which identifies the forward evaluation of a ReLU network with the value of a backward Stopping Game but does not connect that game to the backward pass or to attribution. We reformulate the game to draw the connection to a novel Stopping Decomposition, whose inherent weight decomposition is related to the \ac{DC} decomposition of neural networks---the decomposition \citet{hiddenMono} use directly for attribution. Our \ac{SG} and \ac{RG} are different games utilising this weight decomposition, chosen to recover backward attribution rules rather than the pure DC decomposition.

\textbf{Relevance propagation and stochastic explanations.} \ac{LRP}, Deep Taylor, and Excitation Backprop constitute the core relevance propagation family~\citep{Bach2015, Montavon2017, Zhang2018excitation}, with later variants modifying redistribution for contrast, stability, or signed relevance~\citep{Binder2016, Kohlbrenner2020, Gu2019, Nam2020, vukadin2024abslrp}. We supply a common trajectory-space semantics for this family: \ac{SG} recovers gradients, \ac{RG} recovers the generalised $\alpha\beta$ rule, and its analytical modes map back to principled modifications of existing backward rules. Excitation Backprop~\citep{Zhang2018excitation} is an absorbing Markov chain recovering the $\alpha\beta$-\ac{LRP} rule for the special case $(\alpha, \beta) = (1,0)$ known as the $z^+$-rule, which only considers the positive weights in the relevance propagation. Our game extends it with an opponent player (enabling negative-weight influence) and replaces input-dependent transition probabilities with soft Bellman equations: in the game, the forward pass is simulated in the mind of the players to find their optimal policies. Deep Taylor~\citep{Montavon2017} justifies that same $z^+$-rule and explicitly leaves the mathematical modeling of the general $\alpha\beta$ rule as an open problem; \ac{RG} closes this.

\textbf{Shapley-value attribution methods.} SHAP and KernelSHAP frame explanation as a cooperative game over feature coalitions~\citep{Lundberg2017} in a \emph{black-box} setting; they are therefore a distinct, complementary game that explains marginal input-feature contributions relative to a baseline, whereas our backward game explains how relevance is routed through network internals in a \emph{white-box} setting.

\textbf{Robust explanations via Softplus.} \citet{Dombrowski2020} propose Softplus replacement as a curvature-based defence against gradient-explanation manipulation. Appendix~\ref{app:softplus-stopping-game} shows that entropy-regularising the activation-state stop/continue choice in \ac{SG} recovers Softplus-network gradients, situating their construction within our framework.

\textbf{Transformer and ViT explainability.} Attention Rollout / Flow read attention as a routing graph~\citep{Abnar2020attention}; transformer-\ac{LRP} methods propagate through attention and skip connections~\citep{Chefer2021transformer, Ali2022, achtibat2024attnlrp}; DAVE addresses ViT tokenisation artefacts via conditioned operators and equivariant averaging~\citep{Wrobel2024}, which aligns with the detached attention handling of AttnLRP~\citep{achtibat2024attnlrp}. The conditioned-operator transformer constructions arise naturally inside our game semantics: the QK-softmax is the oracle player's reference policy, while attribution lives on the Value-routing trajectories. Hence, our method is compatible with Reynolds averaging, which in our \ac{RG}+Eq mode removes the same grid artefact that DAVE targets---however, with more control via the different game modes.
\section{Backward Attribution as a Two-Player Game}\label{sec:game}

\paragraph{Setup and common state-space structure.}
Let $a^{(l)} = \relu(W^{(l)} a^{(l-1)} + b^{(l)})$ with $a^{(0)}=x$ and scalar output $f(x) = a^{(L)}$. Write $[z]^\pm$ for the entrywise positive/negative parts and $W^{(l,\sigma)} \coloneqq [W^{(l)}]^\sigma$ for $\sigma\in\{+,-\}$. We build a finite-horizon two-player game played \emph{backward} from the output: a state records a position in the backward traversal (layer, neuron, plus game-specific bookkeeping for player turn and stream sign), and at each non-terminal state the active player either stops or continues. Bookkeeping labels are summarised in Table~\ref{tab:notation} (Appendix~\ref{app:notation}). Skip connections become random transitions to summand states; max-pooling sends players greedily to maximal-payoff states (Appendix~\ref{app:skip-connections},~\ref{app:routing-skip}).
The \ac{SG} value $V_p(s)$ is the $\gamma$-discounted expected payoff difference $R_p - R_{p'}$ of player $p$ and opponent $p'$ along trajectories from $s$ under the policies, expressed by the Bellman recursion
\begin{align}\label{eq:bellman}
    V_p(s) = \max_{a \in \mathcal{A}_x(s)} \Big\{ R_p(s,a) - R_{p'}(s,a) + \gamma(s) \cdot \textstyle\sum_{s'} \P_x(s' \mid s,a)\, V_p(s') \Big\},
\end{align}
whereas the \acl{RG} replaces the hard maximisation at routing states by an entropy-regularized maximisation over mixed routing decisions, whose value is the soft game value~\eqref{eq:soft-bellman}. 
Any player policies induce a backward trajectory measure $\mu_x$ on trajectories $\tau=(\tau_0, \tau_1, \dots)$ with a game-specific cumulative discount $d_t(\tau)$ up to time $t$.
The \textbf{discounted occupation measure}
\begin{align}\label{eq:attribution}
    \Gamma_x(v) \coloneqq \E_{\tau \sim \mu_x}\!\left[ \textstyle\sum_t d_t(\tau)\,\1\{\tau_t = v\} \right]
\end{align}
quantifies how much weight the trajectories going through a neuron have.

\subsection{The Stopping Game}\label{sec:stopping}

With each neuron $i$ in layer $l$, we associate two states $(l,i,+)$ and $(l,i,-)$, where the sign encodes which player is in turn. 
The biases become the immediate payoffs for continuing; under continuation, weight magnitudes drive the environment transitions with discount $\gamma_i^{(l)} = \sum_j |W_{ij}^{(l)}|$ (positive- and negative-weight predecessors follow $\P \propto W^{(l,\pm)}/\gamma$). 
The players' expected payoffs realise the novel Stopping Decomposition, separating positive and negative paths to each neuron activation (Lemma~\ref{lem:parity-path}). The gradient is then the difference of two \emph{non-negative} occupation measures (Theorem~\ref{thm:gradient}). Appendix~\ref{app:dc-decomp} compares this decomposition to the \ac{DC} decomposition. A worked example, the full state space, and the skip-connection and max-pooling extensions are in Appendix~\ref{app:relu-net-game}. 
The following Theorem combines Theorem \ref{thm:forward-stopping} and \ref{thm:deriv-occ} into a concise statement. 

\begin{theorem}[Gradient representation]\label{thm:gradient}
The Stopping Game allows a subgame-perfect Nash equilibrium, unique up to tie-breaking at zero-activation neurons.
Under the stop-at-tie-breaks equilibrium the discounted occupation measure $\Gamma_x(s_{i,p}^{(l)}) \ge 0$ equals the output derivative with respect to the corresponding \emph{Stopping Decomposition} activation. 
The gradient of $f$ with respect to the input is then the difference $\Gamma_x(s_{i,+}^{(0)}) - \Gamma_x(s_{i,-}^{(0)})$.
\end{theorem}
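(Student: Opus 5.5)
The proof goes by backward induction over layers, in three steps: existence of the equilibrium, identification of the game value with the Stopping Decomposition activations, and a forward (input-ward) recursion for the occupation measure that matches the chain rule.

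\textbf{Equilibrium and value.} The game has finite horizon $L$ and finitely many states $(l,i,\pm)$. So a subgame-perfect equilibrium exists, constructed from the layer-$0$ terminal payoffs upward via the Bellman recursion~\eqref{eq:bellman}.

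At state $(l,i,p)$ the player compares two options:
\begin{itemize}
    \item stopping, which yields payoff $0$;
    \item continuing, which yields the bias payoff plus the $\gamma_i^{(l)}$-discounted expectation over predecessors.
\end{itemize}
Predecessors are drawn with probabilities $W^{(l,\pm)}/\gamma_i^{(l)}$, and a negative-weight edge switches the turn. The discount cancels the normalisation, so the continuation value is $\sum_j W^{(l,+)}_{ij} V(l-1,j,p) - \sum_j W^{(l,-)}_{ij} V(l-1,j,p) + b_i^{(l)}$, taking $V(\cdot,\cdot,-p)=-V(\cdot,\cdot,p)$ by the zero-sum structure. By induction, $V_+(l,i) = \max(0, z_i^{(l)}) = a_i^{(l)}$, which is Theorem~\ref{thm:forward-stopping}.

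The maximiser is unique whenever $z_i^{(l)}\neq 0$. The only freedom is the tie at zero-activation neurons, which gives uniqueness up to tie-breaking. For the Stopping Decomposition I would invoke Lemma~\ref{lem:parity-path}: the per-player expected payoffs split the activation into the positive-parity and negative-parity path sums.

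\textbf{Occupation measure equals the derivative.} Fix the stop-at-ties policy and define $\Gamma$ by~\eqref{eq:attribution} with cumulative discount $d_t=\prod_{k<t}\gamma_{\tau_k}$. The trajectory starts at the output state with $\Gamma=1$. Mass flows from $(l,i,p)$ only if the player continues, i.e.\ exactly when $z_i^{(l)}>0$ under stop-at-ties. Multiplying the transition probability by $\gamma_i^{(l)}$ gives
\[
\Gamma(l-1,j,q)=\sum_i \1\{z_i^{(l)}>0\}\Big(W^{(l,+)}_{ij}\,\Gamma(l,i,q)+W^{(l,-)}_{ij}\,\Gamma(l,i,-q)\Big).
\]
Every term is non-negative, so $\Gamma\ge0$.

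This is exactly the chain rule for the derivative of $f$ with respect to the two decomposition activations. Each activation feeds $z^{(l)}$ through $W^{(l,+)}$ on the same sign and through $W^{(l,-)}$ on the swapped sign. ReLU's derivative is taken as $\1\{z>0\}$, the convention selected by stop-at-ties. Induction from layer $L$ down then gives Theorem~\ref{thm:deriv-occ}.

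\textbf{Gradient and main obstacle.} At layer $0$, the input $x_j$ enters the $+$ stream with coefficient $+1$ and the $-$ stream with $-1$. Hence $\partial f/\partial x_j=\Gamma(0,j,+)-\Gamma(0,j,-)$. Equivalently, subtracting the two recursions shows that $\Gamma(\cdot,+)-\Gamma(\cdot,-)$ satisfies the ordinary backprop recursion with $W=W^+-W^-$.

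The step I expect to be delicate is aligning the bookkeeping of Lemma~\ref{lem:parity-path}: which player's payoff is attached to which sign, and whether the discount is applied before or after the turn switch. The recursion for $\Gamma$ must match the derivative of the \emph{decomposition} activations, not just of $a^{(l)}$. I would first verify the single-layer case explicitly, then lift it by induction. Skip connections and max-pooling would follow the same pattern, with branch probabilities times discount equal to $1$.
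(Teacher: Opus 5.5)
Your argument is correct once one misstatement is dropped (see the second paragraph). Its skeleton is the paper's: backward induction for the equilibrium and player values, then matching an occupation-measure recursion to the chain rule. Two steps are organised differently, though.

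\begin{itemize}
\item \emph{The occupation--derivative identity.} The paper proves it by induction on the layer gap, using a \emph{first-step} decomposition $\Gamma_u(v)=\1[z_j^{(m)}>0]\sum_{h,r}W_{jh}^{(m,q\oplus r)}\Gamma_{s_{h,r}}(v)$. It fixes the target state and moves the root, which yields the full Jacobian $\partial a^{(m,p\oplus q)}_{\mathrm{stop},j}/\partial a^{(l,k)}_{\mathrm{stop},i}$ between arbitrary layers and for both output signs. You fix the root at the output and propagate mass input-ward (a last-step decomposition). That is literally reverse-mode backprop, and it is all the main-text statement needs.
\item \emph{The input gradient.} The paper perturbs the $+$ stream with the $-$ stream frozen. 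It therefore needs the cross term $\partial a^{(m,-)}_{\mathrm{stop},j}/\partial a^{(l,+)}_{\mathrm{stop},i}=\Gamma_u(s^{(l)}_{i,-})$ (root $u=s^{(m)}_{j,+}$) from its general Jacobian. Your observation that $\Gamma(\cdot,+)-\Gamma(\cdot,-)$ obeys the ordinary backprop recursion with $W=W^+-W^-$ and starts at $1$ bypasses the decomposition derivatives entirely. It is the more economical argument.
\item \emph{The per-player values.} The paper simply carries $\tilde V_p^{\pi^\star}(s^{(l)}_{i,q})=\1[z_i^{(l)}>0]\,z_i^{(l,p\oplus q)}$ through the same induction. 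Routing this through Lemma~\ref{lem:parity-path} also works, but it forces you to expand the game value as a trajectory sum, bias stubs included.
\end{itemize}

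Two remarks in your write-up are wrong as stated.

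First, with base case $\Gamma(L,u,+)=1$, $\Gamma(L,u,-)=0$, your recursion computes $\partial a^{(L,+)}_{\mathrm{stop},u}/\partial a^{(l,q)}_{\mathrm{stop},i}$, not a derivative of $f$. The derivative of $f=a^{(L,+)}_{\mathrm{stop}}-a^{(L,-)}_{\mathrm{stop}}$ with respect to $a^{(l,q)}_{\mathrm{stop},i}$ is $\Gamma(l,i,q)-\Gamma(l,i,q')$, which can be negative.

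Second, under the convention $a^{(0,\pm)}=x^\pm$, the input does not enter with coefficients $+1$ and $-1$ simultaneously: at most one of $x_j^\pm$ moves with $x_j$. Moving along $(+1,-1)$ would change $a^{(0,+)}-a^{(0,-)}$ by $2$. Your first justification reaches the right formula only because it also drops the $a^{(L,-)}$ contribution, so the two errors cancel. Discard that sentence and keep the subtraction argument, which is complete on its own.
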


Modeling $\alpha\beta$-\ac{LRP} or attention modules via the \ac{SG} would require activation-dependent transitions. The \ac{RG} moves activation-dependence into the players' policies, opening a continuous family of deformations.

\subsection{The Routing Game}\label{sec:routing}

In the \ac{RG} we model every neuron by three game states: an activation state $s^{(l,\mathrm{act})}_{j,q}$ and two linear routing states $s^{(l,\mathrm{lin})}_{j,q,+}$ and $s^{(l,\mathrm{lin})}_{j,q,-}$, the $+$ stream state and the $-$ stream state, modelling the positive part $W^{(l,+)}$ and the negative part $W^{(l,-)}$ of the affine layer respectively.
At a \emph{linear routing state} $s^{(l,\mathrm{lin})}_{j,q,\sigma}$ for neuron $j$ in layer $l$ on sign stream $\sigma \in \{+,-\}$ with player $q$ in turn, the active player mixes over predecessors $i$ under a policy $\pi_{q}(s_{j,q,\sigma}^{(l,\text{lin})})$.
Writing the game in log space turns the active player $r$'s advantage $R_r$ of moving predecessor $i$ into an additive action--value,
\begin{align}\label{eq:q-value}
    R_r\!\bigl(s^{(l,\mathrm{lin})}_{j,q,\sigma}, i\bigr) = \begin{cases} \log W_{ij}^{(l,\sigma)} + V_r\!\bigl(s^{(l-1,\mathrm{act})}_{i,q}\bigr), & r=q, \ l \ge 2,\\[0.2em] \log \bigl[x_i W_{ij}^{(1)}\bigr]^{\sigma}, & r = q, \ l=1, \\
    -\infty, & r \neq q,
    \end{cases}
\end{align}
where $V_r(\cdot)$ denotes the \emph{game value} at the successor activation state from perspective of player $r$. At each linear routing subgame we regularise the players' policies with Shannon entropy, which yields the following equilibrium value and Gibbs best response:
\begin{align}
    V_q\!\bigl(s^{(l,\mathrm{lin})}_{j,q,\sigma}\bigr) &= \tau \log \textstyle\sum_i \exp\!\bigl(R_q(s^{(l,\mathrm{lin})}_{j,q,\sigma}, i)/\tau\bigr), \label{eq:soft-bellman}\\
    \pi_{q}(s_{j,q,\sigma}^{(l,\text{lin})})(i) &\propto \exp\!\bigl(R_q(s^{(l,\mathrm{lin})}_{j,q,\sigma}, i)/\tau\bigr). \label{eq:gibbs-policy}
\end{align}
At $\tau>0$ the game admits a subgame-perfect equilibrium (SPNE) whose occupation measure at $\tau=1$ is the $\alpha\beta$-\ac{LRP} attribution (Theorem~\ref{thm:lrp-recovery}). In practice we use $\tau$ as a \emph{myopic} control: each linear subgame applies the $\tau$-scaled Gibbs policy to its own action values, but inherits game values from the soft Bellman equations at $\tau{=}1$. We do not re-solve at $\tau$ downstream: that would change the forward activations, while we want trajectories that explain the network's actual forward pass. The myopic setting at $\tau\neq 1$ is a controlled temperature deformation of the occupation measure.
Low $\tau$ concentrates routing on the dominant pathway; high $\tau$ spreads mass uniformly and exposes architectural priors rather than input-specific features. A worked example of the local routing dynamics is given in Appendix Figure~\ref{fig:Routing-Game-illustration}; the full game definition is in Appendix~\ref{app:routing-formal}.

\section{Unification and Controls}\label{sec:controls}

\subsection{Locating \texorpdfstring{$\alpha\beta$-\ac{LRP}}{alpha-beta LRP} and \texorpdfstring{LRP-$\epsilon$}{epsilon LRP} inside the routing family}

The $\alpha\beta$-\ac{LRP} rule~\citep{Bach2015} is parameterized by $\alpha,\beta \in \RR$ and the stabilization parameter $\epsilon \geq 0$, with $\alpha-\beta = 1$ for conservation (attributions sum to $f(x)$). Relevance of neuron $j$ is split into a positive-weight fraction (scaled by $\alpha$) and a negative-weight fraction (scaled by $\beta$), each normalised by its own denominator:
\begin{align}\label{eq:lrp-recovery}
    R_i^{(l-1)} = \textstyle\sum_j \Bigl( \alpha \tfrac{[a_i^{(l-1)} W_{ij}^{(l)}]^+}{\Sigma_j^{(l,+)} + \epsilon} - \beta \tfrac{[a_i^{(l-1)} W_{ij}^{(l)}]^-}{\Sigma_j^{(l,-)} + \epsilon} \Bigr) R_j^{(l)}, \quad \Sigma_j^{(l,\pm)} = \sum_k [a_k^{(l-1)} W_{kj}^{(l)}]^\pm,
\end{align}
where we initialize $R^{(L)} = f(x)$.
Let the cumulative discount for a backward trajectory track sign branches $\sigma_t\in\{+,-\}$ via $d_{t+1} = 2\alpha\,d_t$ when $\sigma_t=+$ and $d_{t+1} = -2\beta\,d_t$ when $\sigma_t=-$.
Observe that the the terms $\frac{[a_,^{(l-1) W_{ij}^{(l)}}]^\sigma}{\sum_j^{(l,\sigma)} + \epsilon}$ in \eqref{eq:lrp-recovery} are reflected in the Gibbs policy \eqref{eq:gibbs-policy} under the assumption $R_r\!\bigl(s^{(l,\mathrm{lin})}_{j,q,\sigma}, i\bigr) = \log \left(  W^{(l, \sigma)}_{ij}a^{(l-1)}_i \right)$.
The following Theorem merges Theorem \ref{thm:routing-forward} and \ref{thm:ab-recovery} into a concise statement.
The definition and proofs are in Appendix~\ref{app:routing-formal}.

\begin{theorem}[$\alpha\beta$-\ac{LRP} representation]\label{thm:lrp-recovery}
At $\tau>0$, the Gibbs policies of Eq.~\eqref{eq:gibbs-policy} and stopping decisions according to neuron activations yield SPNEs of the entropy-regularised \ac{RG}, unique up to tie-breaking at zero-activation neurons. Under the stop-at-tie-breaks equilibrium at $\tau=1$ the $\alpha\beta$-\ac{LRP} attribution map in~\eqref{eq:lrp-recovery} equals $R_i^{(l)} = \Gamma_x(s_{i,+}^{(l)}) - \Gamma_x(s_{i,-}^{(l)})$.
\end{theorem}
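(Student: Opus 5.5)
The proof has two parts, mirroring the split into Theorem~\ref{thm:routing-forward} and Theorem~\ref{thm:ab-recovery}: an equilibrium/value statement established by backward induction over layers, then an identification of the occupation measure with the $\alpha\beta$-\ac{LRP} recursion~\eqref{eq:lrp-recovery} by forward induction from the output.

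\textbf{Step 1: equilibrium.} The game has a finite horizon (one layer per round), so I proceed by backward induction from $l=1$ upward. At a linear routing state, the entropy-regularised one-shot problem $\max_{\pi}\sum_i \pi(i)R_q(s,i)+\tau H(\pi)$ is strictly concave on the simplex. Its unique maximiser is the Gibbs policy~\eqref{eq:gibbs-policy}, and its value is the log-sum-exp~\eqref{eq:soft-bellman}. The opponent's actions carry $R=-\infty$, so the opponent has no profitable deviation.

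At an activation state, the choice is binary: stop, or continue into the stream states. I would show inductively that at $\tau=1$ the continuation value equals $\log$ of the Stopping Decomposition activation. Concretely, the $+$ stream has value $\log\sum_i W^{(l,+)}_{ij}a_i^{(l-1)}=\log\Sigma_j^{(l,+)}$, and analogously for the $-$ stream. Combined with the bias and turn switch, the continue payoff exceeds the stop payoff exactly when the pre-activation is positive. Hence the prescribed stopping rule is the strict best response except when the activation is zero, which gives uniqueness up to those ties. Subgame perfection follows because every subgame is solved by the same induction.

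\textbf{Step 2: occupation measure.} At $\tau=1$ the Gibbs weights become
\[
\pi(i)=\frac{W^{(l,\sigma)}_{ij}a^{(l-1)}_i}{\Sigma^{(l,\sigma)}_j},
\]
using the value identity from Step 1. At $l=1$ this is direct from~\eqref{eq:q-value}. The $\epsilon$ term I would handle as the extra stop action with payoff $\log\epsilon$, which adds $\epsilon$ to the partition function.

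Next, I write $\Gamma_x$ at layer $l-1$ through the Markov property. The mass flowing into neuron $j$ with sign stream $\sigma$ is redistributed according to $\pi$, with the discount multiplied by $2\alpha$ or $-2\beta$. I would check that the branching into the $\pm$ streams at an activation state has probability $1/2$ each, so that the factor $2$ cancels it and leaves $\alpha$ or $-\beta$.

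Finally, I track the signed difference $\Gamma_x(s^{(l)}_{i,+})-\Gamma_x(s^{(l)}_{i,-})$. The turn swap on the $-$ stream, combined with the sign of $d$, must produce exactly the term $-\beta[\cdot]^-/(\Sigma^-+\epsilon)$ in~\eqref{eq:lrp-recovery}. Induction from $R^{(L)}=f(x)$, which is the initial mass at the root, then closes the argument.

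\textbf{Main obstacle.} The delicate step is this sign bookkeeping. Player turns, stream signs $\sigma$ and discount signs all interact, and the per-player measures must combine into precisely the $\alpha$/$\beta$ split rather than an average of the two. I would first verify the scheme on a single layer with one positive and one negative weight, then state the induction invariant explicitly: the signed occupation at layer $l$ equals $R^{(l)}$. Zero-activation neurons are covered by the stop-at-tie choice, which gives them zero occupation and so matches their vanishing LRP terms.
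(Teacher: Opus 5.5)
Your overall architecture (backward induction for the SPNE and for the $\tau=1$ values, then an output-to-input induction on the occupation measure) matches the paper's. The sign bookkeeping you flag as the main obstacle, however, fails as you set it up.

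You use the signed branch discount $-2\beta$ on the $-$ stream \emph{and} read off the attribution as the player difference $\Gamma_x(s_{i,+})-\Gamma_x(s_{i,-})$. The $-$ branch also swaps the player label, so your one-step recursion is
\[
\Gamma_x\bigl(s^{(l-1)}_{i,r}\bigr)=\sum_j\Bigl(\alpha\,\pi^\star_{j,+}(i)\,\Gamma_x\bigl(s^{(l)}_{j,r}\bigr)-\beta\,\pi^\star_{j,-}(i)\,\Gamma_x\bigl(s^{(l)}_{j,r'}\bigr)\Bigr).
\]
Subtracting $r=-$ from $r=+$ turns the second term into $-\beta\,\pi^\star_{j,-}(i)\bigl(\Gamma_x(s^{(l)}_{j,-})-\Gamma_x(s^{(l)}_{j,+})\bigr)=+\beta\,\pi^\star_{j,-}(i)\,\Gamma^{(l)}_j$, where $\Gamma^{(l)}_j=\Gamma_x(s^{(l)}_{j,+})-\Gamma_x(s^{(l)}_{j,-})$. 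The turn swap and the discount sign cancel, so you obtain the rule with weights $\alpha\pi_{j,+}+\beta\pi_{j,-}$ rather than $\alpha\pi_{j,+}-\beta\pi_{j,-}$.

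Your proposed single-layer test exposes this. Start with root mass on $s_{j,+}$ and one negative-weight predecessor $i$. Its mass arrives at $s_{i,-}$ with weight $-\beta$, so the difference is $+\beta$, whereas \eqref{eq:lrp-recovery} requires $-\beta$. The main-text description of $d_t$ suggests exactly this combination, but the appendix proof does not use it.

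The fix is what the paper does in Theorem~\ref{thm:ab-recovery}. Keep all branch discounts non-negative, $2\alpha$ on the $+$ stream and $2\beta$ on the $-$ stream as in \eqref{eq:routing-edge-discount}. Then $\Gamma_x$ is a genuine non-negative measure, and the minus sign in front of $\beta$ comes solely from the turn swap when you take the player difference. Equivalently (Remark~\ref{rem:signed-discount-alt}), you may keep $-2\beta$, but then you must use the sum $\Gamma_x(s_{i,+})+\Gamma_x(s_{i,-})$. With either change the rest of your Step 2 goes through.

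Two smaller corrections:
\begin{itemize}
\item The Step 1 invariant is $V_p(s^{(l,\mathrm{act})}_{j,p})=\log a^{(l)}_j$ for the \emph{ordinary} activation, with stream values $\log\Sigma^{(l,\sigma)}_j$ built from ordinary $a^{(l-1)}_i$. It is not the log of a Stopping-Decomposition activation. This is exactly what makes the $\tau=1$ Gibbs weights equal $W^{(l,\sigma)}_{ij}a^{(l-1)}_i/\Sigma^{(l,\sigma)}_j$.
\item The zero occupation of dead neurons comes from the factor $a^{(l-1)}_i=0$ in the upstream Gibbs weight, since occupation is recorded on arrival, before any stop decision. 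It does not come from the stop-at-tie rule.
\end{itemize}
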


\subsection{Vision Transformers via a detached-logit oracle}

Two \ac{ViT} subtleties break conservative transport: LayerNorm coupling and the data-dependent QK-softmax. LayerNorm statistics are detached so the layer becomes a pointwise linear map~\citep{Ali2022, Wrobel2024}; the QK-softmax block is replaced by an \emph{oracle game} emitting a detached reference policy, with occupation-measure trajectories flowing only through the Value route (a full gradient fallback through the attention block degrades localisation, Appendix~\ref{app:abl-gradient-fallback}). Equivariant Reynolds averaging optionally suppresses tokenisation-grid artefacts~\citep{Wrobel2024}. Full KL formulation, oracle embedding, and occupation-measure theorem are in Appendix~\ref{app:attention-subgame}.

\subsection{Analytical modes and their game-theoretic duals}\label{sec:modes}

Game-theoretic concepts like risk aversion and entropy regularisation map back to novel modes of the backward rules for probing and tuning explanation properties. Temperature $\tau$ is the focus mode, $\lambda$ the activation-side risk-averse mode, $\lambda_{\mathrm{sm}}$ its oracle-game variant on the attention logits, and the architecture-dependent fan-in entropy-sparsification mode $\lambda_{\mathrm{ent}}$ is deferred to Appendix~\ref{app:entropy-mode}.

\begin{center}
\small
\setlength{\tabcolsep}{5pt}
\renewcommand{\arraystretch}{1.0}
\begin{tabularx}{\textwidth}{@{}>{\raggedright\arraybackslash}p{0.22\textwidth}>{\raggedright\arraybackslash}X>{\raggedright\arraybackslash}p{0.26\textwidth}@{}}
\toprule
\textbf{Mode} & \textbf{Effect on the game} & \textbf{Property} \\
\midrule
Temperature $\tau$ & change of entropy regularisation in linear states & sharp / diffuse \\
Risk aversion $\lambda$ & replaces game values by lower-confidence bounds & noise / fragility \\
Stream balance $(\alpha,\beta)$ & path weights of RG's occupation measure & support / oppose \\
Oracle risk aversion $\lambda_{\mathrm{sm}}$ & replaces game values by lower-confidence bounds & stable attention \\
Backward gate $\tilde{\Phi}$ & noise on player observations & gate smoothing \\
\ac{LRP} stabiliser $\varepsilon$ & adds outside option $\perp_{\mathrm{lin}}$ with $\log(\varepsilon)$ payoff & damps low-inflow neurons \\
\bottomrule
\end{tabularx}
\end{center}

Risk aversion replaces player game values by mean--variance lower-confidence bounds $V_\lambda = \mu + \lambda\sigma$ (with $\lambda < 0$, estimated by \ac{ADF} variance propagation~\citep{gast2018lightweight}), in the spirit of confidence-bound exploration~\citep{auer2002finite, zhang2021risk}; full-risk and forward-temperature variants are ablated in Appendices~\ref{app:abl-full-risk} and~\ref{app:abl-tau-forward}. The noisy-observation gate $\tilde\Phi$ replaces the hard ReLU gate by $\Phi(z)\cdot z$ via Legendre duality (Appendix~\ref{app:noisy-gate-mode}); each smooth activation admits the analogous dual---Softplus via Shannon entropy, \acs{GELU} via $\phi\circ\Phi^{-1}$ (Appendix~\ref{app:softplus-stopping-game}). Classical $\alpha\beta$-\ac{LRP} parameters fall under the same framework: $(\alpha,\beta)$ weight the paths of the occupation measure, and the \ac{LRP} stabiliser $\varepsilon$ adds an outside option $\perp_{\mathrm{lin}}$ with log-score $\log(\varepsilon)$ at every linear state, leaking weak-evidence neurons into the cemetery while leaving strong-evidence ones near-unaffected (Theorem~\ref{thm:ab-recovery}).
\section{Parameter Randomisation as a Trajectory Diagnostic}\label{sec:hellinger-sanity}

The cascading-randomisation sanity check of~\citet{Adebayo2018sanity} asks whether an explanation meaningfully depends on the learned parameters, and answers by comparing attribution maps in pixel space. But the object of interest is the backward calculation itself---a distribution $\Pi$ over graph trajectories---and the attribution map is only its pixel projection.
The \textbf{Hellinger distance} $\mathrm{H}(\Pi_A, \Pi_B)$ is computable in a single backward pass (Appendix~\ref{app:path-divergence}) and measures the routing logic directly rather than its visual projection. This section probes the representation claim: parameter randomisation should move $\Pi$ even when its pixel projection remains stable. We remark that $\mathrm{H}=0$ iff $\Pi_A = \Pi_B$, and the metric takes its maximal value $\mathrm{H}=1$ iff their supports are disjoint.

\subsection{Randomisation changes trajectories, but not necessarily maps}\label{sec:hellinger-cascading}

\begin{figure}[t]
\centering
\input{figures/plots/sanity_check/hellinger_live_plots.inc.tex}
\caption{Hellinger trajectory distance under cascading parameter randomisation~\citep{Adebayo2018sanity}. Solid: plain $\mathrm{H}$; dotted: $\mathrm{H}_{\mathrm{live}}$, conditioned on survival to an ordinary input-layer state (Appendix~\ref{sec:lmp-conditioned-survival}). Thin horizontal reference lines mark the fully-random-model values from Table~\ref{tab:double-random-hellinger} (solid: plain $\mathrm{H}$; dotted: $\mathrm{H}_{\mathrm{surv}}$), in the colour of the corresponding game.}
\label{fig:hellinger-sanity}
\end{figure}

Under cascading randomisation (layers cumulatively re-initialised output$\to$input), both \ac{SG} and \ac{RG} diverge from the first randomised layer and plateau near the fully-random-model reference of Table~\ref{tab:double-random-hellinger} (Figure~\ref{fig:hellinger-sanity}). The reference itself sits below the ceiling $\mathrm{H}=1$ because two fully randomised models still share some short trajectories: both may stop early at structurally dead neurons or at the same inputs and route mass to the common cemetery state. Such short paths have a higher chance of coinciding across models than full trajectories reaching the input layer. Conditioning on survival removes this shared cemetery mass and gives the strict ceiling $\mathrm{H}_{\mathrm{surv}} \approx 1$ (per-game values, the conditioning, and the floor mechanism are detailed in Appendix~\ref{app:path-divergence}). Trajectory- and map-level readouts are per-image orthogonal (Spearman $|\rho|\le 0.2$ between $\mathrm{H}$ and pixel-level Gradient or $\alpha\beta$-\ac{LRP} similarity, Appendix~\ref{sec:per-image-orthogonality}): images with similar attribution maps need not have similar calculations.

\subsection{Pixel-level randomisation: what the map hides}\label{sec:pixel-sanity}

Cascading randomisation on the maps reproduces \citet{Adebayo2018sanity, sixt2020whenexplanationslie}: some methods retain map structure under full randomisation. On ResNet-50, LRP-$\gamma$ retains Spearman-$|\rho|{=}0.906$ and HOG-$r{=}0.939$, DeepLift $0.664$/$0.722$ (Table~\ref{tab:sanity-check}, Appendix~\ref{app:sanity-heatmap}). Low temperature and risk aversion make the \ac{SG} and \ac{RG} attribution maps more model sensitive. Read against \S\ref{sec:hellinger-cascading}, the interpretation flips: the maps do not fail the check because the underlying calculation survives---$\mathrm{H}$ shows it has collapsed---but because the pixel projection washes out the collapse.

\section{Experiments}\label{sec:experiments}

\paragraph{Setup.}
We evaluate \ac{RG} and \ac{SG} against standard backward baselines on VGG-16~\citep{Simonyan2015}, ResNet-50~\citep{He2016resnet}, and ViT-B/16~\citep{Dosovitskiy2021vit} with torchvision ImageNet-1K checkpoints~\citep{paszke2019pytorch,Deng2009imagenet,Russakovsky2015ilsvrc}, on a 50-class ImageNet-S subset~\citep{gao2022luss} at $224{\times}224$ (566-image test split). ImageNet-S provides the pixel masks required by \ac{AttrLoc}, \ac{PG}, and \ac{Sel}; ImageNet boxes over-credit mass anywhere inside the box. All metrics are computed through Quantus~\citep{hedstrom2023quantus}: localisation (\acs{AttrLoc}, \acs{PG}, Top-$k$ Intersection (TK); $\uparrow$), faithfulness (Pixel Flipping area under the curve (PF$_5$, PF$_{20}$), \acs{Sel}; $\downarrow$), robustness (Maximum Sensitivity (MaxS), Average Sensitivity (AvgS); $\downarrow$). \ac{RG} analytical modes reported alongside each configuration: temperature $\tau$, activation-side risk aversion $\lambda$ (softmax-shift variant $\lambda_{\mathrm{sm}}$ on \ac{ViT} attention logits), \ac{LRP} stabiliser $\varepsilon$, equivariant-Reynolds sample count $N$.
Methods, grids, and checkpoint URLs are in Appendix~\ref{app:baseline-methods}; metric definitions and \ac{ADF}-forward details in Appendix~\ref{app:metrics}. The full ResNet-50 evaluation is deferred to an additional exhaustive appendix Table~\ref{tab:baselines-resnet50}.

\paragraph{Protocol.}
Methods are evaluated under a two-stage protocol that treats baselines and our RG/SG differently, for reasons of statistical validity.
Stage~1 runs a grid search on a 50-image validation split (one image per class). Stage~2 picks, for each method, the \textbf{single validation-selected mode} with the best localisation rank-sum over $\{\text{AL}, \text{PG}, \text{TK}\}$, and evaluates it on the 566-image test split; for \ac{RG} / \ac{RG}+Eq we additionally show the $\alpha\beta$-\ac{LRP} anchor $(\alpha,\beta,\varepsilon,\tau){=}(2,1,0.5,1)$ as a theorem-driven reference (Theorem~\ref{thm:lrp-recovery}). 
\emph{Baselines.} No baseline is kept at ``published defaults'' when a grid exists. A larger evaluation of more validation-set-selected modes is in Appendix~\ref{app:baselines} (Tables~\ref{tab:baselines-vit}, \ref{tab:baselines-vgg16}, \ref{tab:baselines-resnet50}).
\emph{RG / SG.} The RG and SG expose a substantially larger configuration space: the cross-product of $\tau, \lambda, \lambda_{\mathrm{sm}}, \lambda_{\mathrm{ent}}, \sigma^2, \varepsilon$ and $(\alpha,\beta)$ spans approximately $35{,}000$ combinations. Rather than performing an underpowered exhaustive search over ~35k configurations, we report (i) the theorem-fixed $\alpha\beta$-LRP anchor and (ii) one mode selected by isolated one-factor sweeps on the validation split.
\begin{table}[t]
\centering
\caption{Evaluation of gradient-based, \ac{LRP}-based, and attention-based attribution methods on ViT-B/16 (ImageNet-S). \ac{RG} controls: $\tau$ temperature, $\lambda_{\mathrm{ent}}$ entropy penalty, $\lambda$ risk aversion, $\lambda_{\mathrm{sm}}$ softmax-shift risk aversion, $(\alpha,\beta)$ stream balance, $\varepsilon$ \ac{LRP} stabiliser, $N$ equivariant-Reynolds sample count. Rows are grouped into three blocks by mechanism; perturbation-based baselines (LIME, KernelSHAP)(Appendix~\ref{app:baselines}). \textcolor{red}{\textbf{Red}}: best value of the column across the whole table. \textcolor{black!60}{\textbf{Gray}}: block-best. The first row of each \ac{RG} / \ac{RG}+Eq sub-block is the $\alpha{=}2,\beta{=}1,\varepsilon{=}0.5,\tau{=}1$ \textbf{$\alpha\beta$-\ac{LRP} anchor} (Theorem~\ref{thm:lrp-recovery}), followed by exactly one best-localisation row (rank-sum over \{AL, PG, TK\}) --- our contribution.}
\label{tab:eval-vit}
\resizebox{\textwidth}{!}{%
\begin{tabular}{@{}l l c c c c c c c c c@{}}
\toprule
Method & Config & AL $\uparrow$ & PG $\uparrow$ & TK $\uparrow$ & PF$_5$ $\downarrow$ & PF$_{20}$ $\downarrow$ & Sel.\ $\downarrow$ & MaxS $\downarrow$ & AvgS $\downarrow$ & $t$/img \\
\midrule
Gradient & -- & 0.424 & 0.403 & 0.415 & 8.529 & 8.262 & 5.990 & 2.181 & 1.570 & \textcolor{black!60}{\textbf{0.07}} \\
\midrule
SmGrad & $N_{\text{sample}}$=30, $\sigma$=0.2 & 0.427 & 0.551 & 0.469 & 8.376 & 7.908 & 5.818 & \textcolor{black!60}{\textbf{0.467}} & \textcolor{black!60}{\textbf{0.386}} & 1.52 \\
\midrule
IntGrad & $N_{\mathrm{steps}}$=50 & 0.458 & 0.553 & 0.497 & 8.401 & 7.791 & 5.270 & 0.886 & 0.823 & 2.56 \\
\midrule
DeepLift & -- & \textcolor{black!60}{\textbf{0.570}} & \textcolor{black!60}{\textbf{0.804}} & \textcolor{black!60}{\textbf{0.721}} & \textcolor{black!60}{\textbf{8.207}} & \textcolor{black!60}{\textbf{7.303}} & 4.972 & 0.608 & 0.621 & 0.13 \\
\midrule
DAVE & $N$=50 & 0.480 & 0.660 & 0.616 & 8.285 & 7.817 & \textcolor{black!60}{\textbf{4.378}} & 0.830 & 0.759 & 2.60 \\
\specialrule{1.2pt}{2pt}{2pt}
\multirow{2}{*}{LRP-$\varepsilon$} & $\varepsilon$=1 & 0.631 & 0.809 & 0.768 & 8.257 & 7.411 & 4.483 & 39.746 & 31.540 & 0.14 \\
 & $\varepsilon$=0.5 & 0.597 & 0.740 & 0.728 & 8.230 & 7.312 & 4.502 & 40.961 & 49.734 & 0.13 \\
\midrule
LRP-$\gamma$ & $\varepsilon$=0.25, $\gamma$=0.1 & 0.519 & 0.673 & 0.666 & 8.258 & 7.344 & 4.400 & 14.314 & 8.776 & 0.29 \\
\midrule
LRP-$|z|$ & -- & 0.398 & 0.449 & 0.419 & 8.568 & 8.402 & 6.384 & 64.926 & 156.622 & 0.09 \\
\midrule
AttnLRP & -- & 0.552 & 0.719 & 0.685 & 8.225 & 7.330 & 4.533 & 33.217 & 23.742 & 0.14 \\
\midrule
TIBAV & -- & 0.485 & 0.600 & 0.540 & 8.389 & 8.092 & 5.071 & 0.806 & 0.587 & 0.16 \\
\midrule
GAE & -- & 0.540 & 0.760 & 0.613 & \textcolor{red}{\textbf{8.101}} & 7.518 & 4.457 & 0.500 & 0.438 & \textcolor{black!60}{\textbf{0.06}} \\
\midrule
\multirow{2}{*}{RG} & -- & 0.555 & 0.770 & 0.755 & 8.176 & \textcolor{red}{\textbf{7.000}} & 4.781 & \textcolor{red}{\textbf{0.203}} & \textcolor{red}{\textbf{0.154}} & 0.16 \\
 & $\tau$=0.5, $\lambda_{\mathrm{sm}}$=-10, $\sigma^2$=2 & \textcolor{red}{\textbf{0.709}} & 0.848 & \textcolor{red}{\textbf{0.797}} & 8.244 & 7.414 & 4.981 & 0.546 & 0.570 & 0.25 \\
\midrule
\multirow{2}{*}{RG+Eq} & $N$=50 & 0.551 & 0.780 & 0.751 & 8.375 & 7.601 & \textcolor{red}{\textbf{4.123}} & 0.579 & 0.508 & 7.17 \\
 & $\tau$=0.5, $N$=50 & 0.640 & \textcolor{red}{\textbf{0.860}} & 0.796 & 8.377 & 7.617 & 4.474 & 0.846 & 0.830 & 7.39 \\
\specialrule{1.2pt}{2pt}{2pt}
AR & $w_r$=0.25 & \textcolor{black!60}{\textbf{0.526}} & \textcolor{black!60}{\textbf{0.720}} & \textcolor{black!60}{\textbf{0.674}} & \textcolor{black!60}{\textbf{8.437}} & \textcolor{black!60}{\textbf{7.970}} & \textcolor{black!60}{\textbf{4.567}} & \textcolor{black!60}{\textbf{0.239}} & \textcolor{black!60}{\textbf{0.234}} & \textcolor{red}{\textbf{0.02}} \\
\bottomrule
\end{tabular}
}%
\end{table}

\begin{table}[t]
\centering
\caption{Evaluation on VGG-16 (ImageNet-S); layout, conventions, \ac{RG} anchor semantics, and selection protocol are identical to Table~\ref{tab:eval-vit}. }
\label{tab:eval-vgg16}
\resizebox{\textwidth}{!}{%
\begin{tabular}{@{}l l c c c c c c c c c@{}}
\toprule
Method & Config & AL $\uparrow$ & PG $\uparrow$ & TK $\uparrow$ & PF$_5$ $\downarrow$ & PF$_{20}$ $\downarrow$ & Sel.\ $\downarrow$ & MaxS $\downarrow$ & AvgS $\downarrow$ & $t$/img \\
\midrule
Gradient & -- & 0.513 & 0.719 & 0.684 & 14.102 & 10.715 & 5.299 & 2.309 & 0.958 & 0.06 \\
\midrule
SmGrad & $N_{\text{sample}}$=30, $\sigma$=0.2 & 0.526 & \textcolor{black!60}{\textbf{0.827}} & \textcolor{black!60}{\textbf{0.796}} & 15.744 & 11.613 & 4.952 & \textcolor{red}{\textbf{0.350}} & 0.219 & 0.78 \\
\midrule
IntGrad & $N_{\mathrm{steps}}$=50 & 0.545 & 0.784 & 0.696 & 14.452 & 10.511 & 5.293 & 1.275 & 0.975 & 1.30 \\
\midrule
\multirow{2}{*}{DeepLift} & $\gamma$=0.25, rule=gamma & \textcolor{black!60}{\textbf{0.614}} & 0.797 & 0.705 & 14.537 & 9.459 & \textcolor{black!60}{\textbf{3.662}} & 0.541 & 0.476 & 0.16 \\
 & -- & 0.574 & 0.774 & 0.722 & \textcolor{black!60}{\textbf{12.897}} & \textcolor{black!60}{\textbf{9.106}} & 4.659 & 4.218 & 1.048 & \textcolor{black!60}{\textbf{0.04}} \\
\midrule
SG & $\lambda$=-2, $\sigma^2$=1, $\tilde\Phi$ & 0.613 & 0.735 & 0.688 & 16.996 & 13.248 & 4.756 & 1.631 & \textcolor{red}{\textbf{0.173}} & 0.36 \\
\specialrule{1.2pt}{2pt}{2pt}
LRP-$\varepsilon$ & $\varepsilon$=1 & 0.630 & 0.780 & \textcolor{black!60}{\textbf{0.727}} & 12.448 & 7.731 & 4.035 & 0.760 & 0.896 & \textcolor{black!60}{\textbf{0.05}} \\
\midrule
LRP-$\gamma$ & $\varepsilon$=0.25, $\gamma$=0.1 & 0.610 & \textcolor{black!60}{\textbf{0.840}} & 0.689 & \textcolor{red}{\textbf{12.404}} & \textcolor{red}{\textbf{7.385}} & \textcolor{red}{\textbf{3.502}} & 0.606 & 0.536 & 0.15 \\
\midrule
\multirow{2}{*}{RG} & -- & 0.609 & 0.751 & 0.689 & 14.037 & 9.425 & 3.573 & 3.902 & \textcolor{black!60}{\textbf{0.435}} & 0.13 \\
 & $\lambda$=-1, $\sigma^2$=10 & \textcolor{red}{\textbf{0.655}} & 0.763 & 0.661 & 15.756 & 11.048 & 4.001 & \textcolor{black!60}{\textbf{0.572}} & 0.449 & 0.39 \\
\specialrule{1.2pt}{2pt}{2pt}
LCAM & -- & 0.538 & 0.813 & 0.804 & \textcolor{black!60}{\textbf{16.012}} & \textcolor{black!60}{\textbf{12.478}} & \textcolor{black!60}{\textbf{5.152}} & 0.567 & 0.659 & \textcolor{red}{\textbf{0.02}} \\
\midrule
GCAM++ & -- & \textcolor{black!60}{\textbf{0.555}} & \textcolor{red}{\textbf{0.846}} & \textcolor{red}{\textbf{0.838}} & 17.301 & 13.728 & 5.697 & \textcolor{black!60}{\textbf{0.504}} & \textcolor{black!60}{\textbf{0.580}} & 0.02 \\
\bottomrule
\end{tabular}
}%
\end{table}

\paragraph{ViT-B/16 (Table~\ref{tab:eval-vit}).}
Under the localisation-selected single-row protocol, the \ac{RG} family beats every prior transformer-specific backward baseline---AttnLRP~\citep{achtibat2024attnlrp}, DAVE~\citep{Wrobel2024}, \ac{TIBAV}~\citep{Chefer2021transformer}, \ac{GAE}~\citep{Chefer2021genatt} across all metrics except $\text{PF}_5$, hold by GAE. The selected mode ($\tau{=}0.5, \lambda_{\mathrm{sm}}{=}{-}10, \sigma^2{=}2$) beats all baselines without the slower Reynolds averaging; \ac{RG}+Eq averages $N{=}50$ input-equivariant samples to shave token-grid artefacts (Appendix~\ref{app:qualitative}) for a small Selectivity / PG gain.
ADF adds modest overhead (\ac{RG}: $0.25$s vs $0.16$s base), still below sampling baselines like SmGrad or IntGrad; Reynolds averaging is the only expensive variant, exceeding even perturbation methods like LIME (Appendix~\ref{app:baselines-quantitative}).
The same modes pass the cascading-randomisation sanity check at the trajectory level (\S\ref{sec:hellinger-sanity}), whereas older transformer-\ac{LRP} attribution maps remain visually similar to their randomised counterparts. Table~\ref{tab:vit-duals} isolates three ViT-specific mode behaviours: $\tau$ for focus boosting localisation of RG (a); $\lambda_{\mathrm{sm}}$ as the attention-side localisation mode (b); $\tau$ for focus boosting localisation of RG+Eq (c). On ViT, $\lambda$ is inert (Appendix Table~\ref{tab:vit-lambda-inert}): attention routing dominates GELU routing.

\paragraph{Cross-dataset transfer (Pascal VOC).} A natural worry with the validation-tuned modes is that the chosen $\tau / \lambda_{\mathrm{sm}} / \sigma^2$ are overfitted to the ImageNet-S 50-class subset. We test transfer on a per-image disjoint split of Pascal VOC 2012 trainval (val:test $=$ 1{:}5, seed $42$, $2{,}849$ test (image, foreground-class) samples evaluated against pixel-accurate VOC2012 segmentation masks): \ac{RG} and \ac{RG}+Eq use the \emph{same} mode parameters as Table~\ref{tab:eval-vit} without any VOC-side re-tuning, while every baseline gets a fresh per-method validation grid on the new pool. This tests whether the selected modes capture model-specific rather than dataset-specific structure: the localisation lead carries over (Table~\ref{tab:eval-vit-voc}); the only localisation column where \ac{RG}/\ac{RG}+Eq does not lead is \ac{PG}, where \ac{GAE}~\citep{Chefer2021genatt} wins --- but \ac{PG} is the noisiest of the three localisation metrics by construction (binary indicator). Full protocol, per-method search ranges, and the VOC$\,\to\,$ImageNet-S target mapping are in Appendix~\ref{app:voc-cross-data}.

\paragraph{Higher $\tau$ spreads attribution into context.}
The Quantus-optimal modes are the sharpest ones: they place mass on the smallest patch subset that preserves the prediction and aligns with the segmentation mask. As $\tau$ increases, the same equilibrium spreads routing over more predecessors; the object remains in focus, but the heatmap also highlights environmental features the network used. These broader maps can be visually more informative, but on ViT-B/16 the localisation metrics (AL, PG, TK) worsen monotonically with $\tau$ above the localisation-selected setting, because they reward concentrated mass; the faithfulness/robustness side is non-monotonic, and on ResNet-50 the high-$\tau$ regime ($\tau\!\geq\!2$) actually attains the tightest MaxS / PF$_5$ at a localisation cost (Appendix Table~\ref{tab:sweeps-resnet-rg}). High-$\tau$ is the clearest example of the gap between metric-optimal and perception-optimal explanations within the same game.

\begin{table}[t]
\centering
\caption{Three \ac{RG} analytical modes on ViT-B/16, one varied at a time. \textbf{(a)} $\tau$: macro--micro focus. \textbf{(b)} $\lambda_{\mathrm{sm}}$ at fixed $\tau{=}0.5$: the attention-side localisation mode. \textbf{(c)} $\tau$ under \ac{RG}+Eq ($N{=}50$ input-equivariant samples): same $\tau$ monotonicity as base \ac{RG}, with more stable robustness. The activation-side $\lambda$ is absent because it is inert on ViT (Table~\ref{tab:vit-lambda-inert}) though important on VGG-16 (Table~\ref{tab:vgg-lambda-sweep}); full per-architecture sweeps in Appendix~\ref{app:dual-sweeps}.}\label{tab:vit-duals}
\small
\begin{minipage}[t]{0.32\textwidth}\centering
{\footnotesize (a) $\tau$ (ViT-B/16)}\\[2pt]
\resizebox{\linewidth}{!}{\begin{tabular}{@{}ccccc@{}}\toprule
$\tau$ & AL$\uparrow$ & PG$\uparrow$ & PF$_{20}\downarrow$ & MaxS$\downarrow$ \\\midrule
1.0 & 0.551 & 0.776 & \textbf{7.23} & 0.438 \\
0.7 & 0.623 & 0.800 & 7.37 & \textbf{0.266} \\
0.5 & \textbf{0.664} & \textbf{0.832} & 7.44 & 0.449 \\\bottomrule
\end{tabular}}
\end{minipage}\hfill
\begin{minipage}[t]{0.32\textwidth}\centering
{\footnotesize (b) $\lambda_{\mathrm{sm}}$ at $\tau{=}0.5$ (ViT-B/16)}\\[2pt]
\resizebox{\linewidth}{!}{\begin{tabular}{@{}ccccc@{}}\toprule
$\lambda_{\mathrm{sm}}$ & AL$\uparrow$ & PG$\uparrow$ & PF$_{20}\downarrow$ & MaxS$\downarrow$ \\\midrule
$0$    & 0.664 & 0.832 & \textbf{7.44} & \textbf{0.449} \\
$-5$   & 0.649 & 0.820 & 7.51 & 0.544 \\
$-10$  & \textbf{0.708} & \textbf{0.846} & 7.50 & 0.628 \\\bottomrule
\end{tabular}}
\end{minipage}\hfill
\begin{minipage}[t]{0.32\textwidth}\centering
{\footnotesize (c) $\tau$ under $\mathrm{RG}{+}\mathrm{Eq}$ (ViT-B/16)}\\[2pt]
\resizebox{\linewidth}{!}{\begin{tabular}{@{}ccccc@{}}\toprule
$\tau$ & AL$\uparrow$ & PG$\uparrow$ & PF$_{20}\downarrow$ & MaxS$\downarrow$ \\\midrule
1.0 & 0.551 & 0.780 & 7.60 & \textbf{0.579} \\
0.7 & 0.610 & 0.840 & \textbf{7.54} & 0.662 \\
0.5 & \textbf{0.640} & \textbf{0.860} & 7.62 & 0.846 \\\bottomrule
\end{tabular}}
\end{minipage}
\end{table}

\begin{figure}[t]
\centering
\setlength{\tabcolsep}{2pt}
\renewcommand{\arraystretch}{0.3}
\resizebox{\textwidth}{!}{
\begin{tabular}{@{}cc *{2}{c} @{\hspace{16pt}} cc *{2}{c}@{}}
\multicolumn{4}{c}{\scriptsize (a) Airliner: $\tau$, base \ac{RG}($\alpha\beta$)}
 & \multicolumn{4}{c}{\scriptsize (c) ImageNet-S val sample: $\tau$, \ac{RG}+Eq ($N{=}50$)} \\[1pt]
{\scriptsize Orig.} & {\scriptsize Mask} & {\scriptsize 1.0} & {\scriptsize 0.5}
 & {\scriptsize Orig.} & {\scriptsize Mask} & {\scriptsize 1.0} & {\scriptsize 0.5} \\[2pt]
\includegraphics[width=0.085\textwidth]{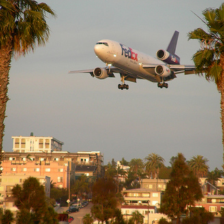}
 & \includegraphics[width=0.085\textwidth]{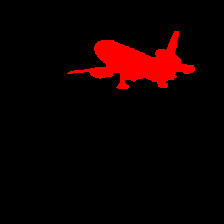}
 & \includegraphics[width=0.085\textwidth]{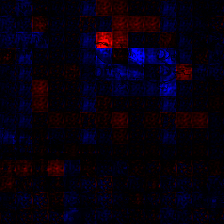}
 & \includegraphics[width=0.085\textwidth]{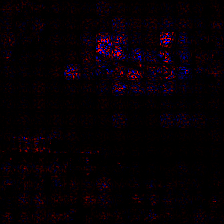}
 & \includegraphics[width=0.085\textwidth]{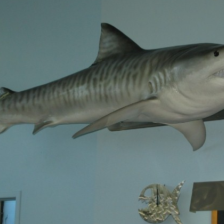}
 & \includegraphics[width=0.085\textwidth]{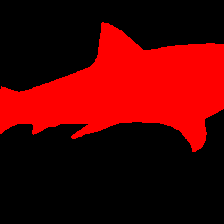}
 & \includegraphics[width=0.085\textwidth]{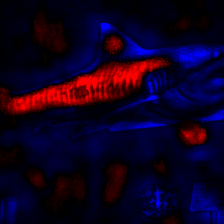}
 & \includegraphics[width=0.085\textwidth]{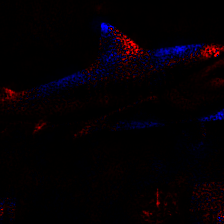} \\
\end{tabular}
}
\caption{Temperature sweeps aligned with Table~\ref{tab:vit-duals}(a) and (c) show the focus of attribution at lower temperature. Reynold averaging (left) averages out patch artifacts.}
\label{fig:qualitative-sweeps}
\end{figure}

\paragraph{VGG-16 and ResNet-50 (Table~\ref{tab:eval-vgg16}; ResNet-50 in Appendix Table~\ref{tab:baselines-resnet50}).}
On the CNNs, the anchor again matches strong \ac{LRP}-style baselines. VGG-16's localisation-selected novel \ac{RG} mode adds the activation-side risk-averse mode at $\sigma^2{=}10$, $\lambda{=}{-}1$. ResNet-50 barely moves on localisation (skip connections already stabilise it), but the same activation-side mode is its main robustness lever.

\section{Discussion and Conclusion}\label{sec:discussion}

The Stopping and Routing Games unify gradient and $\alpha\beta$-\ac{LRP} attribution as equilibrium quantities on per-neuron states. Rule variants---temperature $\tau$, risk aversion $\lambda$, stream balance $(\alpha,\beta)$, \ac{LRP} stabiliser $\varepsilon$, smooth gate $\tilde\Phi$---arise as principled deformations of the game definition, and attention modules drop in via an oracle routing policy. This also places robustness-motivated Softplus replacement~\citep{Dombrowski2020} inside the same picture: entropy regularisation of the activation-state stop/continue choice recovers Softplus-network gradients (Appendix~\ref{app:softplus-stopping-game}). The games are representation theorems, not mechanistic inevitabilities: the architecture does not by itself determine one game, policy class, or discount, and other games may give other insights.

The modes improve the attribution metrics considered here, but the scope of that evidence is specific. Human-mask agreement is only one evaluation axis, and sharper routing can remove network-relevant evidence outside the mask. Within this scope, one adapted \ac{ViT}-B/16 $\alpha\beta$-\ac{LRP} configuration outperforms prior transformer-specific backward methods across localisation metrics and other attribution desiderata. In this setting, trajectories stable to small input perturbations tend to end at object-pixel states. Whether this transfers to other modalities, and whether metric gains translate to downstream task gains, remains future work.

The trajectory distribution $\Pi$ is the object of study; the attribution map is only its pixel projection. The Hellinger distance $\mathrm{H}(\Pi_A, \Pi_B)$ is computable in one backward pass (Appendix~\ref{app:path-divergence}) and answers the parameter-insensitivity sanity check~\citep{Adebayo2018sanity, sixt2020whenexplanationslie} at the calculation level: $\mathrm{H}$ detects randomisation collapse even when projected maps remain similar. More broadly, it asks how to compare the calculations of two models, or of one model under varying inputs. Natural uses include trajectory-measure clustering for glocal aggregation and input-space similarity maps that compare strategic routing rather than heatmaps. Their empirical utility will require calculation-level benchmarks, but first results are promising: $\mathrm{H}$ separates intra- from inter-class image pairs and increases monotonically with Gaussian input noise (Appendix~\ref{sec:lmp-experiments}).

\nocite{*}
\bibliographystyle{plainnat}
\bibliography{references}

\clearpage
\appendix
\section{Notation and Conventions}\label{app:notation-identities}

Throughout our formulations, we employ a compact notation to manage the adversarial relationship between the two players: $P^+$ and $P^-$. Let $p, q, w \in \{+, -\}$ denote player labels, turn labels, or stream sign labels.
\begin{itemize}
    \item \textbf{Player Complement:} $p'$ denotes the opponent of player $p$. So $+' = -$ and $-' = +$.
    \item \textbf{Kronecker Delta:} $\delta_{p,q} = 1$ if $p=q$ else $0$.
    \item \textbf{Sign Operator:} $\xi_{p,q} = 2\delta_{p,q} - 1$. This evaluates to $+1$ if $p=q$ and $-1$ if $p \neq q$.
    \item \textbf{XOR Operator:} $p \oplus q$ is defined as $+$ if $p=q$ and $-$ if $p \neq q$. This operator is commutative and associative. Note the identities: $p \oplus p = +$, $p \oplus + = p$, $p \oplus - = p'$, and $p' \oplus q = (p \oplus q)'$.
    \item \textbf{Softplus and sigmoid at temperature $\theta$.} For $\theta > 0$ we write
    \begin{align}
        \sigma_\theta(z) \;\coloneqq\; \theta\,\log\!\bigl(1 + e^{z/\theta}\bigr),
        \qquad
        \sigma_\theta'(z) \;\coloneqq\; \frac{1}{1 + e^{-z/\theta}},
    \end{align}
    for the Softplus activation at temperature $\theta$ and its derivative, the sigmoid at the same temperature. We will also use the binary Shannon entropy of the Gibbs weight,
    \begin{align}
        H_\theta(z) \;\coloneqq\; -\sigma_\theta'(z)\log\sigma_\theta'(z) - \bigl(1 - \sigma_\theta'(z)\bigr)\log\bigl(1 - \sigma_\theta'(z)\bigr).
    \end{align}
    All three quantities approach the hard ReLU regime as $\theta \to 0$: $\sigma_\theta \to \mathrm{ReLU}$, $\sigma_\theta' \to \1[\,\cdot > 0]$, and $H_\theta \to 0$.
    \item \textbf{Scalar positive and negative parts.} For a scalar $u \in \RR$, $u^+ \coloneqq \max\{u, 0\}$ and $u^- \coloneqq \max\{-u, 0\}$, so that $u = u^+ - u^-$.
    \item \textbf{ReLU and its derivative.} For $z \in \RR$, $\mathrm{ReLU}(z) \coloneqq \max\{z, 0\}$. We fix the boundary convention $\mathrm{ReLU}'(0) \coloneqq 0$; equivalently, the indicator $\1[z > 0]$ used throughout returns $0$ at $z = 0$.
    \item \textbf{Argmax over equal values.} For any finite tuple $(x_1, \dots, x_m)$, $\arg\max_k x_k \coloneqq \min\{k : x_k = \max_l x_l\}$, the smallest index attaining the maximum, and we write $\max\{x_1, \dots, x_m\} = x_{k^\star}$ with $k^\star = \arg\max_k x_k$.
\end{itemize}
The downstream consequences of the ReLU-at-zero and max-tie conventions for the games defined in Appendices~\ref{app:relu-net-game} and~\ref{app:routing-formal} are revisited there.

Throughout the paper we reserve the word \emph{trajectory} for sequences of game states $\tau = (\tau_0, \tau_1, \dots)$ on the game graph (Appendix~\ref{app:relu-net-game}, \ref{app:routing-formal}); for sequences of indices of neurons in the network we use \emph{path} instead, to distinguish between the two concepts.

\subsection{Table of Notation}\label{app:notation}

Table~\ref{tab:notation} provides a summary of the game parameters and states used across the different game variants discussed in this work.

\begin{table}[H]
\centering
\caption{Summary of Game Parameters and State Space.}\label{tab:notation}
\begin{tabular}{@{}l p{0.78\textwidth}@{}}
\toprule
\textbf{Symbol} & \textbf{Description} \\
\midrule
$p, q, r \in \{+, -\}$ & Player labels, turn labels, or stream sign labels \\
$p'$ & Complementary player/sign (opponent of $p$) \\
$p \oplus q$ & XOR operator ($+$ if $p=q$, $-$ otherwise) \\
$\xi_{p,q}$ & Sign operator ($+1$ if $p=q$, $-1$ otherwise) \\
$\mathcal{S}_{act}, \mathcal{S}_{lin}$ & Activation and Linear state spaces in the \ac{RG} \\
$\pi^\star$ & Optimal policy (state argument distinguishes Gibbs route choice and stop/continue) \\
$R_p(s)$ & Individual expected payoff for player $p$ at state $s$ \\
$U_p(s)$ & Advantage $R_p(s) - R_{p'}(s)$ \\
$V_p(s)$ & Game value (payoff difference $R_p - R_{p'}$) \\
$\gamma_i^{(l)}$ & Per-neuron weight-magnitude discount, $\gamma_i^{(l)} = \sum_j (W_{ij}^{(l,+)} + W_{ij}^{(l,-)})$ \\
$\gamma_{\mathrm{O}}$ & Oracle structural discount ($\gamma_{\mathrm{O}}=2$ for sign-branch and addition splits; $\gamma_{\mathrm{O}}=1$ for max-pool) \\
$\tilde{\gamma}$ & Modified trajectory integral discount ($2\alpha, -2\beta$ for LRP recovery) \\
$\Gamma$ & Discounted occupation measure (gradient trajectory integral) \\
$\tau$ & Temperature parameter for entropy regularization (linear-state Gibbs) \\
$\alpha, \beta$ & Stream-balance weights of the $\alpha\beta$-\ac{LRP} family ($\alpha-\beta=1$); routed via the trajectory discount $\tilde\gamma$ \\
$\varepsilon$ & \ac{RG} outside-option / \ac{LRP} stabiliser at linear states (log-score $\log\varepsilon$ on the cemetery option $\perp_{\mathrm{lin}}$) \\
$\eta$ & Mixing-decomposition parameter $\eta\in[0,1]$ ($\eta{=}1$ convex, $\eta{=}0$ concave; cf.\ Appendix~\ref{app:dc-decomp}) \\
$\theta$ & Entropy-regularisation temperature for the activation-state stop/continue choice in the Softplus variant of \ac{SG}; recovers Softplus networks with parameter $\beta = 1/\theta$ (Appendix~\ref{app:softplus-stopping-game}) \\
$\lambda$ & Activation-side risk-aversion parameter for \ac{ADF} variance propagation at activation states \\
$\lambda_{\mathrm{sm}}$ & Softmax-shift risk-aversion parameter on the attention-logit oracle (\acs{ViT}, Appendix~\ref{app:vit-extension}) \\
$\lambda_{\mathrm{ent}}$ & Fan-in entropy-sparsification (entropy-bias) parameter on linear-state Gibbs policies \\
$\sigma^2$ & \ac{ADF} input-variance scale used to estimate the propagated activation variance \\
$\sigma_\theta(z)$ & Softplus activation $\sigma_\theta(z) = \theta\log(1+e^{z/\theta})$ at temperature $\theta$ (Appendices~\ref{app:dc-decomp}, \ref{app:softplus-stopping-game}) \\
$\Phi, \phi$ & Standard normal \ac{CDF} $\Phi(z) = \int_{-\infty}^{z}\phi(y)\,\mathrm{d}y$ and density $\phi(y) = (2\pi)^{-1/2}\,e^{-y^2/2}$; used in the soft gate $\tilde\Phi$ and probit regulariser $\mathcal{H}_\Phi(\pi) = \phi(\Phi^{-1}(\pi))$ (Appendix~\ref{app:noisy-gate-mode}) \\
$\tilde\Phi$ & Smooth (noisy-observation) gate $\tilde\Phi(z) = \Phi(z)$ replacing the hard ReLU gate (\acs{GELU} extension, Appendix~\ref{app:gelu-noisy-gate}) \\
$\sigma$ (probit) & Perception-noise std dev weighting $\mathcal{H}_\Phi$ in the noisy-gate dual (\S\ref{app:noisy-gate-mode}, Eq.~\ref{eq:noisy-gate-weighted}); $\sigma{=}1$ recovers $\tilde\Phi$, $\sigma{\to}0$ the hard ReLU gate \\
$N$ & Number of input-equivariant samples in \ac{RG}+Eq Reynolds averaging \\
\bottomrule
\end{tabular}
\end{table}

\newpage
\section{Network Decompositions}\label{app:dc-decomp}

An ordinary network activation is a single signed scalar that collapses two opposing flows---the contributions that \emph{support} the eventual prediction and the contributions that \emph{oppose} it---into one number, hiding their individual magnitudes. We want to lay them bare: at every neuron and every layer, we want a non-negative pair that exposes how the positive and negative contributions accumulate from the input through the network on a global scale, while still recovering the original activation as their difference.

\subsection{Network Decomposition}\label{app:network-decomp-def}

Throughout we consider a feedforward network of $L$ layers with non-linearity $g \colon \RR \to \RR$ (e.g.\ $g = \mathrm{ReLU}$ or $g = \sigma_\theta$) and a \emph{single scalar output} at the final layer $L$, $f(x) = a^{(L)} \in \RR$ (for multi-class networks, fix the target logit and treat the remaining architecture as a single-output network).

The idea of a network decomposition is to replace every single-neuron activation by a pair of non-negative scalars, propagated through a sequence of non-negative linear maps and pairwise activation functions $f \colon \RR_{\ge 0}^2 \to \RR_{\ge 0}^2$ that operate on neuron-pairs instead of single neurons. We first record this structural setup, then identify those decompositions that recover the original network as the difference of the decomposed activations.

\begin{definition}[Decomposition]\label{def:decomposition}
A \emph{decomposition} of a feedforward network with $L$ layers is a calculation scheme that propagates non-negative activation pairs $(a_i^{(l,+)}, a_i^{(l,-)}) \in \RR^2_{\ge 0}$ from input to output using only \textbf{non-negative linear maps}---matrices with non-negative entries together with non-negative bias vectors---and \textbf{pairwise activation functions} $f \colon \RR^2_{\ge 0} \to \RR^2_{\ge 0}$, $(z^+, z^-) \mapsto (a^+, a^-) = f(z^+, z^-)$, applied at every neuron. Activations are indexed by $l \in \{0,\dots,L\}$ and affine parameters by $l \in \{1,\dots,L\}$; the layer-$0$ pair is initialised by $(a_k^{(0,+)}, a_k^{(0,-)}) \coloneqq (x_k^+, x_k^-)$, and at each affine layer $l$ a non-negative linear map sends the layer-$(l-1)$ activation pair to the layer-$l$ pre-activation pair $(z_i^{(l,+)}, z_i^{(l,-)}) \in \RR^2_{\ge 0}$ to which $f$ is then applied. We call $(a_i^{(l,+)}, a_i^{(l,-)})$ the \emph{decomposed activation} and $(z_i^{(l,+)}, z_i^{(l,-)})$ the \emph{decomposed pre-activation} at neuron $i$ in layer $l$.
\end{definition}

\paragraph{Canonical non-negative linear map.}
Throughout this paper we instantiate the non-negative linear map at layer $l$ by the element-wise positive/negative decomposition of the original network's weights and biases,
\begin{align}
    W^{(l,+)} \coloneqq (W^{(l)})^+, \qquad W^{(l,-)} \coloneqq (W^{(l)})^-, \qquad b^{(l,+)} \coloneqq (b^{(l)})^+, \qquad b^{(l,-)} \coloneqq (b^{(l)})^-,
\end{align}
producing the pair of pre-activations at neuron $i$,
\begin{align}
    z_i^{(l,+)} &\coloneqq b_i^{(l,+)} + \sum_j W_{ij}^{(l,+)} a_j^{(l-1,+)} + \sum_j W_{ij}^{(l,-)} a_j^{(l-1,-)}, \label{eq:split-pos}\\
    z_i^{(l,-)} &\coloneqq b_i^{(l,-)} + \sum_j W_{ij}^{(l,+)} a_j^{(l-1,-)} + \sum_j W_{ij}^{(l,-)} a_j^{(l-1,+)}. \label{eq:split-neg}
\end{align}

\begin{definition}[Valid decomposition]\label{def:valid-decomposition}
A decomposition (Definition~\ref{def:decomposition}) is \emph{valid} for the network with non-linearity $g$ if the propagated activation pairs satisfy
\begin{align}\label{eq:network-decomposition-recovery}
    a_i^{(l,+)} - a_i^{(l,-)} \;=\; g\!\bigl(z_i^{(l)}\bigr) \;=\; a_i^{(l)}
\end{align}
for every layer $l$ and neuron $i$.
\end{definition}

When~\eqref{eq:network-decomposition-recovery} holds at layer $l-1$, the canonical non-negative linear map of Definition~\ref{def:decomposition} yields the identity $z_i^{(l,+)} - z_i^{(l,-)} = b_i^{(l)} + \sum_j (W_{ij}^{(l,+)} - W_{ij}^{(l,-)}) (a_j^{(l-1,+)} - a_j^{(l-1,-)}) = z_i^{(l)}$, so the layer-$l$ pre-activation pair already encodes the original network's pre-activation through its difference.

To make this map formal, we leverage the fact that ReLU networks compute continuous piecewise linear (CPWL) functions~\citep{Arora2018understanding}, which admit a Difference-of-Convex (DC) decomposition~\citep{hiddenMono}.
We use a one-parameter family, the \textbf{Mixing decomposition}, which linearly interpolates between the convex decomposition at $\eta=1$ and the concave decomposition at $\eta=0$ (Theorem~\ref{thm:dc-extreme-conv-conc}).
This appendix fixes the decomposition notation used throughout the rest of the appendices, defines both the novel Stopping and well-known Mixing Decompositions induced by it, and ends with the parity decomposition that the Stopping Decomposition makes visible at the level of paths through the network.
The Stopping Decomposition takes its name from the closely tied Stopping Game (\ac{SG}), which we rigorously define in Appendix~\ref{app:relu-net-game}.

\subsection{Stopping, Mixing, and Softplus Decompositions}\label{app:decompositions}

We instantiate part 2 of Definition~\ref{def:decomposition} with three useful non-negative \emph{decomposition functions} $f \colon \RR_{\ge 0}^2 \to \RR_{\ge 0}^2$. The \textbf{Stopping decomposition function} is
\begin{align}\label{eq:stop-relu}
    f_{\mathrm{stop}}(z^+, z^-) \;\coloneqq\; \bigl(\,\1\!\bigl[z^+ - z^- > 0\bigr]\, z^+,\;\; \1\!\bigl[z^+ - z^- > 0\bigr]\, z^-\,\bigr).
\end{align}
For any $\eta \in [0,1]$, the \textbf{Mixing decomposition function} is
\begin{align}\label{eq:dc-relu}
    f_{\mathrm{mix}}^{\,\eta}(z^+, z^-) \;\coloneqq\; \bigl(\,\eta \max\{z^+, z^-\} + (1-\eta) z^+,\;\; \eta\, z^- + (1-\eta)\min\{z^+, z^-\}\,\bigr).
\end{align}
For any temperature $\theta > 0$, the \textbf{Softplus decomposition function} replaces the hard indicator $\1[\cdot > 0]$ of~\eqref{eq:stop-relu} by the smooth sigmoid $\sigma_\theta'(z) \coloneqq 1/(1+e^{-z/\theta})$ as the gating factor on the activations and adds, on the $+$ stream, the binary Shannon entropy $H_\theta(z) \coloneqq -\sigma_\theta'(z)\log\sigma_\theta'(z) - (1-\sigma_\theta'(z))\log(1-\sigma_\theta'(z))$ of the resulting Gibbs weight:
\begin{align}\label{eq:softplus-decomp}
    f_{\mathrm{soft}}^{\,\theta}(z^+, z^-) \;\coloneqq\; \bigl(\,\sigma_\theta'(z^+ - z^-)\, z^+ + \theta\, H_\theta(z^+ - z^-),\;\; \sigma_\theta'(z^+ - z^-)\, z^-\,\bigr).
\end{align}
Both output components of $f_{\mathrm{soft}}^{\,\theta}$ stay non-negative because $\sigma_\theta', H_\theta \ge 0$ and $z^\pm \ge 0$, and in the limit $\theta \to 0$ we have $f_{\mathrm{soft}}^{\,\theta} \to f_{\mathrm{stop}}$: $\sigma_\theta'(z) \to \1[z > 0]$ and $H_\theta(z) \to 0$.

When such a decomposition function $f_\bullet$ is applied at every neuron and propagated by the canonical non-negative linear map~\eqref{eq:split-pos}--\eqref{eq:split-neg} of Definition~\ref{def:decomposition}, we denote the resulting layer-$l$ activations by $a_{\bullet,i}^{(l,\pm)}$, i.e.\
\begin{align}
    \bigl(a_{\mathrm{stop},i}^{(l,+)}, a_{\mathrm{stop},i}^{(l,-)}\bigr) &\;=\; f_{\mathrm{stop}}\!\bigl(z_i^{(l,+)}, z_i^{(l,-)}\bigr), \\
    \bigl(a_{\mathrm{mix},i}^{(l,+,\eta)}, a_{\mathrm{mix},i}^{(l,-,\eta)}\bigr) &\;=\; f_{\mathrm{mix}}^{\,\eta}\!\bigl(z_i^{(l,+)}, z_i^{(l,-)}\bigr), \\
    \bigl(a_{\mathrm{soft},i}^{(l,+,\theta)}, a_{\mathrm{soft},i}^{(l,-,\theta)}\bigr) &\;=\; f_{\mathrm{soft}}^{\,\theta}\!\bigl(z_i^{(l,+)}, z_i^{(l,-)}\bigr),
\end{align}
with the input-layer initialisation
\begin{align}\label{eq:input-split-convention}
    a_{\mathrm{stop},k}^{(0,\pm)} = a_{\mathrm{mix},k}^{(0,\pm,\eta)} = a_{\mathrm{soft},k}^{(0,\pm,\theta)} = x_k^{\pm}.
\end{align}

The next theorem shows that $f_{\mathrm{stop}}$ and $f_{\mathrm{mix}}^{\,\eta}$ yield valid decompositions (Definition~\ref{def:valid-decomposition}) of the ReLU network, and that $f_{\mathrm{soft}}^{\,\theta}$ yields a valid decomposition of the Softplus network with parameter $1/\theta$. We refer to the resulting decompositions as the \textbf{Stopping Decomposition}, the \textbf{Mixing Decomposition}, and the \textbf{Softplus Decomposition} respectively.
\begin{theorem}[Layerwise Recovery of the Original Activations]\label{thm:decomp-recovery}
Let the activations be propagated layerwise by the canonical non-negative linear map of Definition~\ref{def:decomposition}, initialised by~\eqref{eq:input-split-convention}, and updated by either the Stopping decomposition function $f_{\mathrm{stop}}$~\eqref{eq:stop-relu}, the Mixing decomposition function $f_{\mathrm{mix}}^{\,\eta}$~\eqref{eq:dc-relu}, or the Softplus decomposition function $f_{\mathrm{soft}}^{\,\theta}$~\eqref{eq:softplus-decomp}. Then for every layer $l$ and neuron $i$,
\begin{align}\label{eq:decomp-recovery-stop-mix}
    a_{\mathrm{stop},i}^{(l,+)} - a_{\mathrm{stop},i}^{(l,-)}
    =
    a_{\mathrm{mix},i}^{(l,+,\eta)} - a_{\mathrm{mix},i}^{(l,-,\eta)}
    =
    a_i^{(l)} = \relu\!\bigl(z_i^{(l)}\bigr),
\end{align}
and, for every $\theta > 0$,
\begin{align}\label{eq:decomp-recovery-softplus}
    a_{\mathrm{soft},i}^{(l,+,\theta)} - a_{\mathrm{soft},i}^{(l,-,\theta)}
    =
    \sigma_\theta\!\bigl(z_i^{(l)}\bigr),
    \qquad
    \sigma_\theta(z) \coloneqq \theta\log(1 + e^{z/\theta}),
\end{align}
the Softplus activation at temperature $\theta$ (i.e., parameter $\beta = 1/\theta$ in the standard $\sigma_\beta(x) = \tfrac{1}{\beta}\log(1+e^{\beta x})$ form). In particular, $f_{\mathrm{stop}}$ and $f_{\mathrm{mix}}^{\,\eta}$ yield valid decompositions (Definition~\ref{def:valid-decomposition}) of the ReLU network, and $f_{\mathrm{soft}}^{\,\theta}$ yields a valid decomposition of the Softplus network with parameter $1/\theta$.
\end{theorem}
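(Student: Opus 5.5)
The proof is an induction on the layer index $l$. It rests on the observation already recorded after Definition~\ref{def:valid-decomposition}: if the difference identity holds at layer $l-1$, the canonical non-negative linear map~\eqref{eq:split-pos}--\eqref{eq:split-neg} gives $z_i^{(l,+)} - z_i^{(l,-)} = z_i^{(l)}$. The base case $l=0$ is immediate from~\eqref{eq:input-split-convention}, since $x_k^+ - x_k^- = x_k = a_k^{(0)}$.

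For the inductive step it therefore suffices to prove a purely pointwise algebraic statement about each decomposition function. For all $z^+, z^- \ge 0$ with $z = z^+ - z^-$, the difference of the two output components must equal $\relu(z)$ for $f_{\mathrm{stop}}$ and $f_{\mathrm{mix}}^{\,\eta}$, and $\sigma_\theta(z)$ for $f_{\mathrm{soft}}^{\,\theta}$. Combining this with the induction hypothesis closes the step.

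The pointwise identities are checked as follows.
\begin{itemize}
\item \textbf{Stopping.} The difference is $\1[z>0](z^+ - z^-) = \1[z>0]\,z = \relu(z)$, using the convention $\relu'(0)=0$ so that the case $z=0$ also gives $0$.
\item \textbf{Mixing.} The difference equals $\eta(\max\{z^+,z^-\} - z^-) + (1-\eta)(z^+ - \min\{z^+,z^-\})$. Both brackets equal $\max\{z^+ - z^-, 0\}$, so the convex combination is $\relu(z)$ for every $\eta \in [0,1]$.
\item \textbf{Softplus.} Write $s = \sigma_\theta'(z)$. The difference is $s z + \theta H_\theta(z)$, and we must show this equals $\theta\log(1+e^{z/\theta})$. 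Using $\log s = -\log(1+e^{-z/\theta})$ and $\log(1-s) = -z/\theta - \log(1+e^{-z/\theta})$, we get
\begin{align}
\theta H_\theta(z) = \theta\log\bigl(1+e^{-z/\theta}\bigr) + (1-s)\,z .
\end{align}
Hence $s z + \theta H_\theta(z) = z + \theta\log(1+e^{-z/\theta}) = \theta\log(e^{z/\theta}+1)$, as required. This is the familiar Legendre/entropy identity: softplus is the entropy-regularised maximum of $\{0, z\}$.
\end{itemize}

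We must also verify that each $f_\bullet$ maps $\RR^2_{\ge 0}$ into $\RR^2_{\ge 0}$, so that the scheme qualifies as a decomposition in the sense of Definition~\ref{def:decomposition}. This is already noted for $f_{\mathrm{soft}}^{\,\theta}$, and it is immediate for $f_{\mathrm{stop}}$ and $f_{\mathrm{mix}}^{\,\eta}$. Non-negativity of the pre-activation pair is preserved because the canonical map has non-negative entries and non-negative biases.

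For the Softplus case, the induction hypothesis must read $a^{(l-1,+)} - a^{(l-1,-)} = \sigma_\theta(z^{(l-1)})$, which is the Softplus network's activation. The argument is otherwise identical. The only step that needs care is the Softplus entropy computation, in particular keeping track of the $1-s$ terms; everything else is bookkeeping.
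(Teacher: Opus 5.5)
Your proposal is correct and follows the paper's proof essentially step for step: the same induction on $l$, the same use of $z_i^{(l,+)} - z_i^{(l,-)} = z_i^{(l)}$ from the canonical map, and the same pointwise checks for $f_{\mathrm{stop}}$ and $f_{\mathrm{mix}}^{\,\eta}$. The only cosmetic difference is that you verify $\theta H_\theta(z) = \sigma_\theta(z) - \sigma_\theta'(z)\,z$ by directly computing $\log s$ and $\log(1-s)$, whereas the paper reads it off the Legendre--Fenchel (Gibbs variational) form of the binary log-sum-exp over $\{0,z\}$; both are valid.
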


\begin{proof}
We argue by induction on the layer index $l$, treating each decomposition under its corresponding network: ReLU activations $a_j^{(l-1)} = \relu(z_j^{(l-1)})$ for $f_{\mathrm{stop}}$ and $f_{\mathrm{mix}}^{\,\eta}$, Softplus activations $a_j^{(l-1)} = \sigma_\theta(z_j^{(l-1)})$ for $f_{\mathrm{soft}}^{\,\theta}$.

\textbf{Base case ($l=0$):} By~\eqref{eq:input-split-convention},
\begin{align}
    a_{\mathrm{stop},k}^{(0,+)} - a_{\mathrm{stop},k}^{(0,-)}
    =
    a_{\mathrm{mix},k}^{(0,+,\eta)} - a_{\mathrm{mix},k}^{(0,-,\eta)}
    =
    a_{\mathrm{soft},k}^{(0,+,\theta)} - a_{\mathrm{soft},k}^{(0,-,\theta)}
    =
    x_k^+ - x_k^-
    =
    x_k
    =
    a_k^{(0)}.
\end{align}
\textbf{Inductive step:} Under either inductive hypothesis the canonical non-negative linear map gives
\begin{align}
    z_i^{(l,+)} - z_i^{(l,-)}
    =
    b_i^{(l)} + \sum_j W_{ij}^{(l)} a_j^{(l-1)}
    =: z_i^{(l)},
\end{align}
the pre-activation of the corresponding network at layer $l$.

\emph{Stopping.}
\(
    a_{\mathrm{stop},i}^{(l,+)} - a_{\mathrm{stop},i}^{(l,-)}
    = \1\{z_i^{(l)} > 0\}\bigl(z_i^{(l,+)} - z_i^{(l,-)}\bigr)
    = \relu(z_i^{(l)})
    = a_i^{(l)}.
\)

\emph{Mixing.} Writing $u = z_i^{(l,+)}$, $v = z_i^{(l,-)}$,
\(
    a_{\mathrm{mix},i}^{(l,+,\eta)} - a_{\mathrm{mix},i}^{(l,-,\eta)}
    = \eta(\max\{u,v\} - v) + (1-\eta)(u - \min\{u,v\})
    = \relu(u - v)
    = \relu(z_i^{(l)})
    = a_i^{(l)}.
\)

\emph{Softplus.} Write $p \coloneqq \sigma_\theta'(z) = 1/(1+e^{-z/\theta})$ for the Gibbs weight. Then
\begin{align}
    a_{\mathrm{soft},i}^{(l,+,\theta)} - a_{\mathrm{soft},i}^{(l,-,\theta)}
    &= \sigma_\theta'\!\bigl(z_i^{(l)}\bigr)\bigl(z_i^{(l,+)} - z_i^{(l,-)}\bigr) + \theta\, H_\theta\!\bigl(z_i^{(l)}\bigr) \\
    &= p\, z_i^{(l)} + \theta\, H_\theta\!\bigl(z_i^{(l)}\bigr) \\
    &= p\, z_i^{(l)} + \bigl(\sigma_\theta(z_i^{(l)}) - p\, z_i^{(l)}\bigr)
    \quad \text{(applying the identity below)} \\
    &= \sigma_\theta\!\bigl(z_i^{(l)}\bigr) = a_i^{(l)}.
\end{align}
The identity $\theta H_\theta(z) = \sigma_\theta(z) - p z$, with $p = \sigma_\theta'(z)$, is the Legendre--Fenchel form of the binary log-sum-exp at temperature $\theta$ over the support $\{0, z\}$:
$\sigma_\theta(z) = \theta\log(e^{0/\theta} + e^{z/\theta}) = \mathbb{E}_{\pi^\star}[a] + \theta\, H(\pi^\star) = p\, z + \theta\, H_\theta(z)$, with $\pi^\star(z) = p$ and $\pi^\star(0) = 1 - p$; rearranging gives the displayed substitution.
\end{proof}

While the Stopping Decomposition activations are not monotone functions on the entire input space, they decompose the linear piece of the ReLU network on each linear region into monotone parts.

\begin{lemma}[Monotonicity of the Stopping Decomposition on linear regions]\label{lem:stop-monotone}
On each linear region of the ReLU network---a maximal subset of input space on which all gate indicators $\1[z_j^{(l)} > 0]$ are constant---each Stopping Decomposition activation $a_{\mathrm{stop},i}^{(l,\pm)}$ is a non-negative-linear (i.e.\ affine with non-negative coefficients) function of the input pair $(x^+, x^-) \in \RR^{2 n_0}_{\ge 0}$, hence monotone non-decreasing in each coordinate $x_k^+$ and $x_k^-$.
\end{lemma}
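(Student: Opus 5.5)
We argue by induction on the layer index $l$, with the following strengthened hypothesis. On a fixed linear region $\mathcal{R}$, for every neuron $i$ in layer $l$, each of $a_{\mathrm{stop},i}^{(l,+)}$ and $a_{\mathrm{stop},i}^{(l,-)}$ has the form $c + \sum_k \bigl(u_k x_k^+ + w_k x_k^-\bigr)$, where the constant $c$ and the coefficients $u_k, w_k$ are all non-negative and depend only on $\mathcal{R}$.

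Monotonicity in each coordinate is then immediate, since every partial derivative is a non-negative coefficient. Throughout, $(x^+,x^-)$ is treated as an independent argument in $\RR^{2n_0}_{\ge 0}$. The gates, however, are fixed by the region of the actual input $x = x^+ - x^-$, so the statement is understood for pairs whose difference lies in $\mathcal{R}$.

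\textbf{Base case ($l=0$).} By~\eqref{eq:input-split-convention} we have $a_{\mathrm{stop},k}^{(0,\pm)} = x_k^\pm$. Each is a coordinate projection: constant $0$ and a single unit coefficient, so the hypothesis holds trivially.

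\textbf{Inductive step, linear map.} Assume the claim at layer $l-1$ and look at the pre-activation pair~\eqref{eq:split-pos}--\eqref{eq:split-neg}.
\begin{itemize}
\item Each $z_i^{(l,\pm)}$ is a non-negative bias $b_i^{(l,\pm)}$ plus a combination of the layer-$(l-1)$ activations $a_{\mathrm{stop},j}^{(l-1,\pm)}$.
\item The coefficients of that combination are entries of $W^{(l,+)}$ and $W^{(l,-)}$, all of which are non-negative.
\item The class of non-negative affine functions of $(x^+,x^-)$ is closed under non-negative linear combination and under adding non-negative constants.
\end{itemize}
Hence both $z_i^{(l,+)}$ and $z_i^{(l,-)}$ are again non-negative affine functions on $\mathcal{R}$.

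\textbf{Inductive step, gate.} Now apply $f_{\mathrm{stop}}$ from~\eqref{eq:stop-relu}. By Theorem~\ref{thm:decomp-recovery}, the canonical map gives $z_i^{(l,+)} - z_i^{(l,-)} = z_i^{(l)}$. The gate $\1[z_i^{(l,+)} - z_i^{(l,-)} > 0]$ therefore equals $\1[z_i^{(l)} > 0]$, which by definition of a linear region is a constant $g_i^{(l)} \in \{0,1\}$ on $\mathcal{R}$. It follows that $a_{\mathrm{stop},i}^{(l,\pm)} = g_i^{(l)} z_i^{(l,\pm)}$. This is either identically zero or equal to $z_i^{(l,\pm)}$, and in both cases it is non-negative affine, which closes the induction.

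\textbf{Main subtlety.} The delicate point is this gate step. The indicator in $f_{\mathrm{stop}}$ is written in terms of the decomposed pair $(z^+,z^-)$, not the original pre-activation, so one might worry it varies within $\mathcal{R}$ in a way the original network's gates do not. This is ruled out precisely by Theorem~\ref{thm:decomp-recovery}, which identifies the difference $z_i^{(l,+)} - z_i^{(l,-)}$ with $z_i^{(l)}$ at every layer, so the decomposed gate coincides with the network's gate and is constant on $\mathcal{R}$. The convention $\1[0>0]=0$ matches the boundary definition of the linear region, so no issue arises at the boundary.
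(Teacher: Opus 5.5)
Your proof is correct and follows the paper's argument: induction on $l$, with coordinate projections as the base case, closure of non-negative affine functions under the canonical non-negative linear map, and constancy of the gate on the linear region. The only difference is that you make explicit, via Theorem~\ref{thm:decomp-recovery}, that the gate $\1[z_i^{(l,+)} - z_i^{(l,-)} > 0]$ in $f_{\mathrm{stop}}$ coincides with the network gate $\1[z_i^{(l)} > 0]$, a step the paper uses without comment.
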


\begin{proof}
Induction on $l$. At $l = 0$, $a_{\mathrm{stop},k}^{(0,\pm)} = x_k^\pm$ is itself a coordinate of $(x^+, x^-)$. At layer $l \ge 1$, the canonical non-negative linear map~\eqref{eq:split-pos}--\eqref{eq:split-neg} has $W^{(l,\pm)}, b^{(l,\pm)} \ge 0$, so $z_i^{(l,\pm)}$ is a non-negative-linear combination of the layer-$(l-1)$ decomposed activations and, by the inductive hypothesis, of $(x^+, x^-)$. On the linear region the gate $\1[z_i^{(l)} > 0] \in \{0,1\}$ is constant, so $a_{\mathrm{stop},i}^{(l,\pm)} = \1[z_i^{(l)} > 0]\, z_i^{(l,\pm)}$ remains non-negative-linear in $(x^+, x^-)$.
\end{proof}

\begin{remark}[Softplus Decomposition]\label{rem:soft-monotone}
The Softplus network has no linear regions, but a partial monotonicity holds neuron-by-neuron. Using the identity $\theta H_\theta(z) = \sigma_\theta(z) - \sigma_\theta'(z) z$ and $\sigma_\theta'' \ge 0$, $z^\pm \ge 0$,
\begin{align}
    \tfrac{\partial}{\partial z_i^{(l,+)}} a_{\mathrm{soft},i}^{(l,+,\theta)}
    &= \sigma_\theta'(z_i^{(l)}) + \sigma_\theta''(z_i^{(l)})\, z_i^{(l,-)} \;\ge\; 0,
    &
    \tfrac{\partial}{\partial z_i^{(l,-)}} a_{\mathrm{soft},i}^{(l,+,\theta)}
    &= -\sigma_\theta''(z_i^{(l)})\, z_i^{(l,-)} \;\le\; 0,
\end{align}
so the $+$ decomposed activation is non-decreasing in $z^+$ and non-increasing in $z^-$; the $-$ decomposed activation $a_{\mathrm{soft},i}^{(l,-,\theta)} = \sigma_\theta'(z_i^{(l)})\, z_i^{(l,-)}$ is non-decreasing in $z^+$ but is \emph{not} monotone in $z^-$ once $z_i^{(l,-)} > \theta/(1-\sigma_\theta'(z_i^{(l)}))$, where the closing of the soft gate outpaces the growth of $z^-$. Lemma~\ref{lem:stop-monotone} therefore does not lift to the Softplus decomposed activations.
\end{remark}

At $\eta=1$ the Mixing decomposition realises the standard \emph{convex} \ac{DC} representation of the network output as a difference of convex functions (Theorem~\ref{thm:dc-extreme-conv-conc}); intermediate $\eta$ linearly mixes between this convex extreme and $f_{\mathrm{mix}}^{\,0}$. At $\eta = 0$, $f_{\mathrm{mix}}^{\,0}$ is the symmetric arithmetic dual but its layerwise components carry no per-layer convexity/concavity guarantee under our input convention; see Remark~\ref{rem:eta-0-concavity}.

\begin{theorem}[$\eta=1$ convex characterisation of the Mixing decomposition]\label{thm:dc-extreme-conv-conc}
Under the input convention~\eqref{eq:input-split-convention}, the $\eta=1$ Mixing decomposed activations are convex functions of the network input $x$ at every layer $l$ and neuron $i$:
\begin{align}
    a_{\mathrm{mix},i}^{(l,+,1)}(x) \;=\; \max\{z_i^{(l,+)}(x),\, z_i^{(l,-)}(x)\}
    \quad \text{and} \quad
    a_{\mathrm{mix},i}^{(l,-,1)}(x) \;=\; z_i^{(l,-)}(x)
    \quad \text{are both convex.}
\end{align}
Consequently the network output $a_i^{(l)} = a_{\mathrm{mix},i}^{(l,+,1)} - a_{\mathrm{mix},i}^{(l,-,1)}$ is presented as a difference of convex functions, recovering the standard \ac{DC} representation~\citep{Arora2018understanding, hiddenMono}.
\end{theorem}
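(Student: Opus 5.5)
The proof is an induction on the layer index $l$ establishing the joint statement that both $a_{\mathrm{mix},i}^{(l,+,1)}$ and $a_{\mathrm{mix},i}^{(l,-,1)}$ are convex functions of $x$ for every neuron $i$. It uses only three closure properties of convex functions: non-negative linear combinations of convex functions are convex, adding a constant preserves convexity, and the pointwise maximum of convex functions is convex. The difference identity is then read off from Theorem~\ref{thm:decomp-recovery}.

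\emph{Base case ($l=0$).} By the input convention~\eqref{eq:input-split-convention}, $a_{\mathrm{mix},k}^{(0,+,1)} = x_k^+ = \max\{x_k,0\}$ and $a_{\mathrm{mix},k}^{(0,-,1)} = x_k^- = \max\{-x_k,0\}$. Each is a maximum of two affine functions of $x$, hence convex.

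\emph{Inductive step.} Assume all layer-$(l-1)$ decomposed activations are convex in $x$. By the canonical non-negative linear map~\eqref{eq:split-pos}--\eqref{eq:split-neg}, each of $z_i^{(l,+)}$ and $z_i^{(l,-)}$ is a constant (the entry $b_i^{(l,\pm)}$) plus a combination of layer-$(l-1)$ decomposed activations with coefficients from $W^{(l,+)}$ and $W^{(l,-)}$. Since $W^{(l,\pm)}$ and $b^{(l,\pm)}$ are entrywise non-negative, both pre-activations are convex in $x$. Evaluating $f_{\mathrm{mix}}^{\,1}$ from~\eqref{eq:dc-relu} at $\eta=1$ gives
\[
(a_{\mathrm{mix},i}^{(l,+,1)}, a_{\mathrm{mix},i}^{(l,-,1)}) = \bigl(\max\{z_i^{(l,+)}, z_i^{(l,-)}\},\, z_i^{(l,-)}\bigr),
\]
which is exactly the explicit form in the statement. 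The first component is a maximum of two convex functions and the second is convex by the previous sentence, which closes the induction.

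\emph{Difference-of-convex conclusion.} Theorem~\ref{thm:decomp-recovery} with $\eta=1$ gives $a_{\mathrm{mix},i}^{(l,+,1)} - a_{\mathrm{mix},i}^{(l,-,1)} = a_i^{(l)}$, so each activation, and in particular the output $f(x) = a^{(L)}$, is presented as a difference of two convex functions.

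There is no real obstacle. The only point needing care is bookkeeping: convexity must be carried for \emph{both} streams simultaneously. The reason is that the negative-weight part $W^{(l,-)}$ feeds $a^{(l-1,-)}$ into $z^{(l,+)}$ and $a^{(l-1,+)}$ into $z^{(l,-)}$, so a one-stream induction would not close. One should also check that the output layer is treated by the same map, so the final scalar inherits the representation. The contrast with $\eta=0$ (Remark~\ref{rem:eta-0-concavity}) is that the analogous argument there would need $\min$ to preserve concavity while the non-negative map mixes in the convex input parts $x^\pm$. That mixing is why the claim is stated only for $\eta=1$.
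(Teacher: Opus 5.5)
Your proposal is correct and follows essentially the same route as the paper: induction on $l$, with $x_k^\pm$ convex at the base, and convexity propagated through the non-negative linear map~\eqref{eq:split-pos}--\eqref{eq:split-neg} and the pointwise maximum. Your explicit remark that both streams must be carried jointly, and your appeal to Theorem~\ref{thm:decomp-recovery} for the difference identity, only spell out steps the paper's short proof leaves implicit.
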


\begin{proof}
Induction on $l$. At $l = 0$, $a_{\mathrm{mix},k}^{(0,\pm,1)} = x_k^\pm$ is convex. At layer $l \ge 1$, by~\eqref{eq:split-pos}--\eqref{eq:split-neg} each $z_i^{(l,\pm)}$ is a non-negative-coefficient linear combination of the (convex by hypothesis) layer-$(l-1)$ decomposed activations plus a constant, hence convex; the maximum of two convex functions is convex.
\end{proof}

\begin{remark}[On the $\eta=0$ extreme]\label{rem:eta-0-concavity}
The dual \ac{DC} representation writes the network output as $u(x) - v(x)$ with $v$ concave, and $f_{\mathrm{mix}}^{\,0}$ sets $a_{\mathrm{mix},i}^{(l,-,0)} = \min\{z^{(l,+)}, z^{(l,-)}\}$ in the role of $v$. This is concave only when both decomposed pre-activations $z^{(l,\pm)}$ are concave (or affine) in $x$, since $\min$ of two concave (or affine) functions is concave but $\min$ of two convex functions is in general neither. Under the input convention $a^{(0,\pm)} = x^\pm$ the $\eta=0$ decomposed pre-activations $z^{(l,\pm)}$ are propagated as convex functions of $x$ from layer $1$ onward (e.g.\ for scalar $x$ with $W^{(1,+)}=2$, $W^{(1,-)}=1$, $b^{(1)}=0$ one obtains $\min\{z^{(1,+)},z^{(1,-)}\}(x) = |x|$, which is convex), so layerwise concavity at $\eta=0$ is not attained in our setup. $f_{\mathrm{mix}}^{\,0}$ still produces a valid arithmetic decomposition $a_i^{(l)} = a_{\mathrm{mix},i}^{(l,+,0)} - a_{\mathrm{mix},i}^{(l,-,0)}$, and the network-level concave \ac{DC} representation can be recovered by replacing the input convention with an affine one ($a^{(0)} = x$, treated as a single unsplit input) at the cost of losing the $\pm$ structure.
\end{remark}

\subsection{Parity Path Decomposition of the Split Streams}\label{app:parity-path}

We can decompose the Stopping Decomposition activations into small contributions coming from individual paths through the network: each non-negative decomposed activation at neuron $i$ is the sum, over all backward paths from $i$ to the input, of the absolute weight product along the path times either the positive part $x_{P_0}^+$ or the negative part $x_{P_0}^-$ of the input pixel where the path terminates, with the choice between $x_{P_0}^+$ and $x_{P_0}^-$ determined by the parity of the number of negative weights traversed.

\begin{lemma}[Path view on the Stopping Decomposition]\label{lem:parity-path}
Fix a feedforward ReLU network with weights $W^{(l)}$ (biases ignored; see remark at end of proof). For neuron $i$ in layer $l \ge 1$, call a \emph{backward path} from $i$ to the input layer a list of neuron indices
\begin{align}
    P = (P_l, P_{l-1}, \dots, P_0), \qquad P_l = i,
\end{align}
with $P_t$ indexing a neuron in layer $t$. The edges of $P$ use the weights $W^{(t)}_{P_t, P_{t-1}}$ for $t = 1, \dots, l$. Define
\begin{align}
    w(P) \coloneqq \prod_{t=1}^{l} W^{(t)}_{P_t, P_{t-1}},
    \qquad
    |w|(P) \coloneqq \prod_{t=1}^{l} \bigl| W^{(t)}_{P_t, P_{t-1}} \bigr|,
\end{align}
the parity sign
\begin{align}
    \mathrm{sgn}(P) \in \{+,-\},
    \qquad
    \mathrm{sgn}(P) = + \iff \#\bigl\{t : W^{(t)}_{P_t, P_{t-1}} < 0\bigr\} \text{ is even,}
\end{align}
and the \emph{gate factor} (1 iff every gate on $P$ at intermediate layers $1,\dots,l-1$ is open)
\begin{align}
    G_{<l}(P) \coloneqq \prod_{t=1}^{l-1} \1\!\bigl[z_{P_t}^{(t)} > 0\bigr].
\end{align}
Then the decomposed pre-activations~\eqref{eq:split-pos}--\eqref{eq:split-neg} admit the path-sum representation
\begin{align}
    z_i^{(l,+)} &= \sum_{P:\, P_l = i} G_{<l}(P)\, |w|(P)\, x_{P_0}^{\mathrm{sgn}(P)}, \label{eq:parity-plus}\\
    z_i^{(l,-)} &= \sum_{P:\, P_l = i} G_{<l}(P)\, |w|(P)\, x_{P_0}^{\mathrm{sgn}(P)'}. \label{eq:parity-minus}
\end{align}
\end{lemma}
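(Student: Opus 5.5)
We argue by induction on the layer index $l\ge 1$, proving \eqref{eq:parity-plus} and \eqref{eq:parity-minus} together, with biases set to zero throughout. Two elementary identities drive the whole argument. For any scalar weight $W$ and sign $\sigma\in\{+,-\}$,
\[
W^{\sigma}=|W|\,\1[\mathrm{sgn}(W)=\sigma],
\]
where a zero weight contributes $0$ to both sides. For the scalar parts of the input we have $(x^{p})$ with $p\in\{+,-\}$, and flipping the label $p\mapsto p'$ swaps $x^+$ and $x^-$.

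\textbf{Base case $l=1$.} Here \eqref{eq:split-pos} reads $z_i^{(1,+)}=\sum_k W^{(1,+)}_{ik}x_k^+ + W^{(1,-)}_{ik}x_k^-$. A path is $P=(i,k)$, the gate factor $G_{<1}$ is an empty product equal to $1$, and $|w|(P)=|W_{ik}|$. If $W_{ik}>0$ then $\mathrm{sgn}(P)=+$ and only the first term survives, giving $|W_{ik}|x_k^+$. If $W_{ik}<0$ then $\mathrm{sgn}(P)=-$ and only the second term survives, giving $|W_{ik}|x_k^-$. Either way the contribution is $|W_{ik}|\,x_k^{\mathrm{sgn}(P)}$. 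Paths through zero weights contribute $0$, whatever convention is used for their parity. The $-$ stream \eqref{eq:split-neg} is handled identically and yields $x_k^{\mathrm{sgn}(P)'}$.

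\textbf{Inductive step.} Assume the representation holds at layer $l-1\ge 1$. By definition of $f_{\mathrm{stop}}$, $a_{\mathrm{stop},j}^{(l-1,\pm)}=\1[z_j^{(l-1)}>0]\,z_j^{(l-1,\pm)}$, where we use $z_j^{(l-1,+)}-z_j^{(l-1,-)}=z_j^{(l-1)}$ from Theorem~\ref{thm:decomp-recovery}. Substituting the hypothesis gives
\[
a_{\mathrm{stop},j}^{(l-1,\rho)}=\sum_{Q:\,Q_{l-1}=j}\1[z_j^{(l-1)}>0]\,G_{<l-1}(Q)\,|w|(Q)\,x_{Q_0}^{\mathrm{sgn}(Q)\oplus\rho}, \qquad \rho\in\{+,-\},
\]
using $p\oplus+=p$ and $p\oplus-=p'$. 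The prefactor $\1[z_j^{(l-1)}>0]\,G_{<l-1}(Q)$ equals $G_{<l}(P)$ for the extended path $P=(i,Q)$.

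Now insert this into \eqref{eq:split-pos}. The term $W^{(l,+)}_{ij}a_j^{(l-1,+)}$ is nonzero only when $W_{ij}>0$, in which case $\mathrm{sgn}(P)=\mathrm{sgn}(Q)$. The term $W^{(l,-)}_{ij}a_j^{(l-1,-)}$ is nonzero only when $W_{ij}<0$, in which case $\mathrm{sgn}(P)=\mathrm{sgn}(Q)'=\mathrm{sgn}(Q)\oplus-$. In both cases the summand becomes $G_{<l}(P)\,|w|(P)\,x_{P_0}^{\mathrm{sgn}(P)}$. Summing over $j$ and $Q$ is the same as summing over all paths $P$ with $P_l=i$, which is exactly \eqref{eq:parity-plus}. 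Equation \eqref{eq:parity-minus} follows by the same computation with every label flipped by $'$.

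\textbf{Main obstacle.} The substance is light. The only care needed is in the sign bookkeeping with the $\oplus$ identities and in handling the boundary cases. Zero weights vanish in both formulas regardless of the parity assigned to them. Closed gates at $z=0$ give $0$ under the convention $\1[0>0]=0$, matching $f_{\mathrm{stop}}$. If biases are present, each $b_j^{(t)\pm}$ acts as an extra path that terminates at layer $t$ rather than at the input, with its payoff entering the stream according to the parity accumulated up to that layer. This explains why the statement sets biases aside.
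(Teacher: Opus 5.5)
Your proof is correct and follows the paper's argument essentially step for step. It uses induction on $l$ with a base case split by the sign of $W^{(1)}_{ik}$, and an inductive step where the gate $\1[z_j^{(l-1)}>0]$ from $f_{\mathrm{stop}}$ upgrades $G_{<l-1}(Q)$ to $G_{<l}(P)$, followed by a case split on $\mathrm{sgn}(W^{(l)}_{ij})$ that preserves or flips parity. Writing both streams at once via $x_{Q_0}^{\mathrm{sgn}(Q)\oplus\rho}$ only compresses the paper's ``swap the two streams'' step, and your bias remark matches the paper's stub-path remark.
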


\begin{proof}
We prove~\eqref{eq:parity-plus}--\eqref{eq:parity-minus} by induction on $l$.

\textbf{Base case ($l=1$):} By the input convention~\eqref{eq:input-split-convention}, $a_{\mathrm{stop},k}^{(0,+)} = x_k^+$ and $a_{\mathrm{stop},k}^{(0,-)} = x_k^-$. Using~\eqref{eq:split-pos} with biases omitted,
\begin{align}
    z_i^{(1,+)}
    = \sum_k \bigl( W_{ik}^{(1,+)} x_k^+ + W_{ik}^{(1,-)} x_k^- \bigr).
\end{align}
Backward paths with $P_1 = i$ and $P_0 = k$ are in bijection with predecessors $k$; each has $|w|(P) = |W^{(1)}_{ik}|$, $\mathrm{sgn}(P) = +$ iff $W^{(1)}_{ik} > 0$, and $G_{<1}(P) = 1$ (empty product). Splitting by the sign of $W^{(1)}_{ik}$:
\begin{itemize}
    \item $W^{(1)}_{ik} > 0$: $W_{ik}^{(1,+)} = |W^{(1)}_{ik}|$ contributes $|W^{(1)}_{ik}| x_k^+$ with $\mathrm{sgn}(P) = +$, matching $|w|(P)\, x_{P_0}^{\mathrm{sgn}(P)}$.
    \item $W^{(1)}_{ik} < 0$: $W_{ik}^{(1,-)} = |W^{(1)}_{ik}|$ contributes $|W^{(1)}_{ik}| x_k^-$ with $\mathrm{sgn}(P) = -$, again matching.
\end{itemize}
The argument for $z_i^{(1,-)}$ is identical with the parity swapped, establishing~\eqref{eq:parity-minus}.

\textbf{Inductive step ($l \ge 2$):} The decomposition recursion~\eqref{eq:split-pos} takes \emph{Stopping Decomposition} activations from layer $l-1$ as inputs,
\begin{align}
    z_i^{(l,+)}
    =
    \sum_j \bigl(
        W_{ij}^{(l,+)}\, a_{\mathrm{stop},j}^{(l-1,+)}
        + W_{ij}^{(l,-)}\, a_{\mathrm{stop},j}^{(l-1,-)}
    \bigr),
\end{align}
and $a_{\mathrm{stop},j}^{(l-1,\pm)} = \1[z_j^{(l-1)} > 0]\, z_j^{(l-1,\pm)}$. Hence, at each predecessor $j$, the gate indicator $\1[z_j^{(l-1)} > 0]$ multiplies the inductive representation of $z_j^{(l-1,\pm)}$, upgrading $G_{<l-1}(P')$ to $G_{\le l-1}(P')$ for every sub-path $P'$ ending at $j$.

Every backward path $P$ with $P_l = i$ and $P_{l-1} = j$ is uniquely obtained by prepending $i$ to a sub-path $P' = (P'_{l-1}, \dots, P'_0)$ with $P'_{l-1} = j$; the prepended edge uses $W^{(l)}_{ij}$, contributing $|W^{(l)}_{ij}|$ to $|w|(P)$ and either preserving or flipping parity depending on $\mathrm{sgn}(W^{(l)}_{ij})$. Crucially, the new gate factor $G_{<l}(P)$ for $P$ equals $G_{\le l-1}(P')$: up to layer $l-1$ the gates are those along $P'$, and there is no gate at layer $l$ in the $G_{<l}$ product.

\emph{Case $W^{(l)}_{ij} > 0$.} Then $W_{ij}^{(l,+)} = |W^{(l)}_{ij}|$ and $W_{ij}^{(l,-)} = 0$. Only $W_{ij}^{(l,+)}\, a_{\mathrm{stop},j}^{(l-1,+)}$ contributes, and the new edge preserves parity, $\mathrm{sgn}(P) = \mathrm{sgn}(P')$. Using the induction hypothesis under the updated gate factor,
\begin{align}
    W_{ij}^{(l,+)}\, a_{\mathrm{stop},j}^{(l-1,+)}
    &= |W^{(l)}_{ij}| \sum_{P':\, P'_{l-1} = j} G_{\le l-1}(P')\, |w|(P')\, x_{P'_0}^{\mathrm{sgn}(P')} \notag\\
    &= \sum_{P:\, P_l = i,\, P_{l-1} = j} G_{<l}(P)\, |w|(P)\, x_{P_0}^{\mathrm{sgn}(P)}.
\end{align}
\emph{Case $W^{(l)}_{ij} < 0$.} Then $W_{ij}^{(l,+)} = 0$ and $W_{ij}^{(l,-)} = |W^{(l)}_{ij}|$. Only $W_{ij}^{(l,-)}\, a_{\mathrm{stop},j}^{(l-1,-)}$ contributes, and the new edge flips parity, $\mathrm{sgn}(P) = \mathrm{sgn}(P')'$, so $\mathrm{sgn}(P') = \mathrm{sgn}(P)'$. Hence
\begin{align}
    W_{ij}^{(l,-)}\, a_{\mathrm{stop},j}^{(l-1,-)}
    &= |W^{(l)}_{ij}| \sum_{P':\, P'_{l-1} = j} G_{\le l-1}(P')\, |w|(P')\, x_{P'_0}^{\mathrm{sgn}(P')'} \notag\\
    &= \sum_{P:\, P_l = i,\, P_{l-1} = j} G_{<l}(P)\, |w|(P)\, x_{P_0}^{\mathrm{sgn}(P)}.
\end{align}
\emph{Case $W^{(l)}_{ij} = 0$.} Both terms vanish and no path through $j$ is prepended.

Summing over $j$ yields~\eqref{eq:parity-plus}. The same argument with positive/negative decomposed activations swapped gives~\eqref{eq:parity-minus}.

\textbf{Signed recovery.} Using $x_k^+ - x_k^- = x_k$ and $\mathrm{sgn}(P) \in \{+,-\}$,
\begin{align}
    z_i^{(l,+)} - z_i^{(l,-)}
    &= \sum_P G_{<l}(P)\, |w|(P)\, \bigl(x_{P_0}^{\mathrm{sgn}(P)} - x_{P_0}^{\mathrm{sgn}(P)'}\bigr) \\
    &= \sum_P G_{<l}(P)\, \mathrm{sgn}(P)\, |w|(P)\, x_{P_0} \\
    &= \sum_P G_{<l}(P)\, w(P)\, x_{P_0},
\end{align}
which is the standard active-path-sum representation of $z_i^{(l)}$ (paths through an inactive intermediate neuron contribute zero).

\textbf{Biases.} Adjoining a bias ``stub path'' at each intermediate layer $l'$---starting from $(l,i)$, terminating at the bias node of neuron $j'$ at layer $l'$, with weight product taken over the $l - l'$ edges above and an appropriate gate factor $G_{<l}$ restricted to those edges---handles the bias terms by the same parity rule; the argument above goes through verbatim after absorbing these stub contributions into the path sums.
\end{proof}

\newpage
\section{Stopping Game}\label{app:relu-net-game}\label{app:adf-game}

This appendix realizes the activations of the Stopping Decomposition~\eqref{eq:stop-relu} as player-specific game values of a backward two-player Stopping Game on an extended network graph and demonstrates that gradients of the original network can be interpreted as integrals over game trajectories.
Section~\ref{app:softplus-stopping-game} adapts the Softplus-network modification of \citet{Gaubert2025} to the Softplus Decomposition of Appendix~\ref{app:dc-decomp}, using the same entropy regularisation at activation states that \ac{RG} applies at routing states (Appendix~\ref{app:routing-formal}); this places the curvature-based robust-explanation programme of \citet{Dombrowski2020} inside our framework.

\paragraph{Relation to the network games of \citet{Gaubert2025}.}
The Stopping Game and Softplus game below adapt the games of Gaubert and Vlassopoulos to a \emph{Stopping Decomposition} and \emph{Softplus Decomposition} view, respectively. We keep their state-space structure---per-neuron $\pm$ states, the bias as the continuation and network input as final payoff, transitions driven by the positive and negative parts of the weights, a cemetery state, and a stop-versus-continue action. Two changes are load-bearing for attribution:
\begin{enumerate}
    \item \textbf{Separate player-specific payoffs.} The original formulation tracks only one game value, which equals $\pm a_i^{(l)}$ of the original network. We additionally maintain the non-negative payoff per player, and for each state its expectation over future paths. The original game value is then recovered as the difference of these two player-specific quantities.
    \item \textbf{Terminal \ac{SG} values split into the positive and negative input parts.} In the original formulation the game values at the input layer are the signed scalars $\pm x_k$. We replace these by the non-negative pair $x_k^+ = \max(x_k,0)$ and $x_k^- = \max(-x_k,0)$, which in the \ac{SG} keeps every player-specific payoff non-negative. This exposes the parity trajectory decomposition of Lemma~\ref{lem:parity-path}: every backward trajectory carries a sign given by the parity of the number of negative weights along it, and the two Stopping Decomposition activations collect trajectories with even parity paired to one input part and odd parity paired to the other.
\end{enumerate}

\begin{figure}[H]
    \centering
    \includegraphics[width=\textwidth]{figures/ReLUNetGame.png}
    \caption{Stopping Game: trajectory distribution and local stopping decisions on a toy subnetwork. \emph{Left:} trajectory distribution visualized on the network graph.  
    \emph{Right:} local game around $v_{2,+}^{(2)}$. Player $P^+$ decides whether to stop or continue. Under continuation, transitions are to $v_{1,+}^{(1)}$ with probability $7/10$, to $v_{2,+}^{(1)}$ with probability $2/10$ (both turn unchanged), and to $v_{3,-}^{(1)}$ with probability $1/10$ (turn switches to $P^-$, reflecting the negative sign of the weight). At $v_{3,-}^{(1)}$, continuing yields expected payoffs $R^-{=}40$ and $R^+{=}100$, giving local game value $V^- = R^- - R^+ = -60 < 0$; therefore $P^-$ stops, shown by the grayed-out half-neuron. Correspondingly, in the left panel the trajectories terminate at that state (the cemetery state is omitted). Since $P^-$ stops in $v_{3,-}^{(1)}$, traversals to this state contribute zero continuation payoff to both players. Hence, at $s_{2,+}^{(2, \mathrm{act})}$ the positive player has larger expected payoff under continuation than his opponent and chooses to continue.}
    \label{fig:Stopping-Game-illustration}
\end{figure}

\subsection{Game Definition}\label{app:adf-game-def}

\subsubsection{State Space and Recursion}\label{app:adf-recursion}

\begin{definition}[Stopping Game State Space]\label{def:adf-game-states} The game is played on the following states:
\begin{itemize}
    \item \textbf{Activation States:} $s_{i,q}^{(l, \mathrm{act})}$ where layer $l \in \{2, \dots, L\}$ and player $q \in \{+,-\}$ is in turn at neuron $i$.
    \item \textbf{Terminal States:} $s_{k,q}^{(1)}$ where $k$ is an input dimension and $q \in \{+,-\}$ is the final turn label.
    \item \textbf{Cemetery State:} $\perp$, an absorbing state reached upon stopping.
\end{itemize}
We adopt the convention that, when a transition kernel or recursion at layer $l$ references the predecessor state $s_{j,r}^{(l-1, \mathrm{act})}$, the symbol denotes the activation state at layer $l-1 \ge 2$ as defined above and, in the boundary case $l = 2$, is interpreted as the terminal state $s_{j,r}^{(1)}$.
\end{definition}

\begin{definition}[Payoffs, Transitions, and Value Recursion]\label{def:adf-recursion}
Fix an input $x$.

\textbf{Payoffs.} The player-specific \emph{payoff} $R_p$ at activation states is paid out depending on the action chosen by the active player, and paid out at terminal and cemetery states without prerequisites:
\begin{itemize}
    \item Cemetery: $R_p(\perp) \coloneqq 0$.
    \item Active state: $R_p(s_{i,q}^{(l, \mathrm{act})}, \mathrm{stop}) \coloneqq 0$ (stopping pays nothing) and $R_p(s_{i,q}^{(l, \mathrm{act})}, \mathrm{cont}) \coloneqq b_i^{(l, p \oplus q)}$ (continuation pays a positive bias to the active player $q$---covered by the case $p \oplus q = +$---and the absolute value of a negative bias to its opponent $q'$---covered by the case $p \oplus q = -$).
    \item Terminal: $R_p(s_{k,q}^{(1)}) \coloneqq x_k^{p \oplus q}$ (at terminal states the payoff depends on state only---the game ends here. A positive bias is paid to the active player $q$---covered by the case $p \oplus q = +$---whereas the absolute value of a negative input is paid to its opponent $q'$---covered by the case $p \oplus q = -$).
\end{itemize}

\textbf{Transition kernel.} Stopping at state $s^{(l, \text{act})}_{i,p}$ deterministically sends the game to the cemetery; continuation transports backward by one layer with transition probabilities depending on the fraction of the respective absolute network weight compared to the sum of absolute incoming weights at neuron $i$ of layer $l$. In case of a positive weight, the active player $p$ stays in turn, in case of a negative weight, the turn switches to its opponent $p'$. 
\begin{align}\label{eq:adf-kernel-stop}
    \P\bigl(\perp \,\big|\, s_{i,q}^{(l, \mathrm{act})}, \mathrm{stop}\bigr) \coloneqq 1,
\end{align}
\begin{align}\label{eq:adf-kernel}
    \P\bigl(s_{j,r}^{(l-1, \mathrm{act})} \,\big|\, s_{i,q}^{(l, \mathrm{act})}, \mathrm{cont}\bigr) \coloneqq \frac{W_{ij}^{(l, q \oplus r)}}{\gamma_i^{(l)}},
    \qquad
    \gamma_i^{(l)} \coloneqq \sum_j \bigl(W_{ij}^{(l,+)} + W_{ij}^{(l,-)}\bigr),
\end{align}
where $\gamma_i^{(l)}$ is the local structural discount that restores unit row-sum. For understanding the idea of the occupation measure, we remark that those $\gamma^{(l)}_i$ are going to weight the trajectories in the occupation measure.

\textbf{Policy.} A pure, i.e., deterministic policy $\pi = (\pi_+, \pi_-)$ assigns each active state $s_{i,q}^{(l, \mathrm{act})}$ a binary stop/continue decision by the player in turn,
\begin{align}
    \pi_q\bigl(s_{i,q}^{(l, \mathrm{act})}\bigr) \in \{0, 1\},
    \qquad
    1 = \mathrm{cont},\ 0 = \mathrm{stop}.
\end{align}
\textbf{Player-specific action and game values.} For each policy $\pi$, the player-specific \textbf{action value} $\tilde{Q}_p^\pi(a, s)$ is the expected discounted payoff for player $p$ starting at $s$ with action $a$ and following $\pi$ thereafter, defined by the standard Bellman equation
\begin{align}\label{eq:adf-q-bellman}
    \tilde{Q}_p^\pi(a, s) \coloneqq R_p(s, a) + \gamma(s)\, \textstyle\sum_{s'} \P\bigl(s' \,\big|\, s, a\bigr)\, \tilde{V}_p^\pi(s'),
\end{align}
where $\tilde{V}_p^\pi$ is the player-specific game value defined below. We use tildes to reserve $V_p^\pi$ for the advantage further down, which already carries a subtracted $\tilde{V}_{p'}^\pi$. At active states~\eqref{eq:adf-q-bellman} specializes, using~\eqref{eq:adf-kernel-stop}--\eqref{eq:adf-kernel}, to
\begin{align}\label{eq:adf-q-stop}
    \tilde{Q}_p^\pi\bigl(\mathrm{stop},\, s_{i,q}^{(l, \mathrm{act})}\bigr) &= 0 + 1 \cdot \tilde{V}_p^\pi(\perp) = 0 \\ \label{eq:adf-q-cont}
    \tilde{Q}_p^\pi\bigl(\mathrm{cont},\, s_{i,q}^{(l, \mathrm{act})}\bigr)
    &= b_i^{(l, p \oplus q)} + \gamma_i^{(l)} \textstyle\sum_{s'} \P\bigl(s' \,\big|\, s_{i,q}^{(l, \mathrm{act})}, \mathrm{cont}\bigr)\, \tilde{V}_p^\pi(s') \\
    &= b_i^{(l, p \oplus q)} + \sum_{j,r} W_{ij}^{(l, q \oplus r)}\, \tilde{V}_p^\pi\bigl(s_{j,r}^{(l-1, \mathrm{act})}\bigr).
\end{align}
The \textbf{player-specific game value} $\tilde{V}_p^\pi(s)$ is the action value averaged by the policy. At terminal and cemetery states it equals the local payoff there:
\begin{align}\label{eq:adf-v-terminal}
    \tilde{V}_p^\pi\bigl(s_{k,q}^{(1)}\bigr) \coloneqq x_k^{p \oplus q},
    \qquad
    \tilde{V}_p^\pi(\perp) \coloneqq 0.
\end{align}
At active states,
\begin{align}\label{eq:adf-v-active}
    \tilde{V}_p^\pi\bigl(s_{i,q}^{(l, \mathrm{act})}\bigr) \coloneqq \pi_q\bigl(s_{i,q}^{(l, \mathrm{act})}\bigr)\, \tilde{Q}_p^\pi\bigl(\mathrm{cont},\, s_{i,q}^{(l, \mathrm{act})}\bigr) + \bigl(1 - \pi_q\bigl(s_{i,q}^{(l, \mathrm{act})}\bigr)\bigr)\, \tilde{Q}_p^\pi\bigl(\mathrm{stop},\, s_{i,q}^{(l, \mathrm{act})}\bigr).
\end{align}
Since $\pi_q$ is binary,~\eqref{eq:adf-v-active} selects either $\tilde{Q}_p^\pi(\mathrm{cont}, \cdot)$ or $\tilde{Q}_p^\pi(\mathrm{stop}, \cdot) = 0$.

\textbf{Game value.} The quantity the player in turn maximises at each active state is the advantage of player $p$ at $s$,
\begin{align}\label{eq:adf-advantage}
    V_p^\pi(s) \coloneqq \tilde{V}_p^\pi(s) - \tilde{V}_{p'}^\pi(s).
\end{align}
\end{definition}

Figure~\ref{fig:Stopping-Game-illustration} shows a worked example of the induced trajectory distribution and the local stop/continue logic on a small toy network.

\begin{theorem}[Forward Equivalence for Stopping Game]\label{thm:forward-stopping}
The game admits a Subgame Perfect Nash Equilibrium (SPNE) where both players follow the same pure strategy:
\begin{align} \label{eq:adf-spne-policy}
    \pi_q^{\star}\bigl(s_{i,q}^{(l, \mathrm{act})}\bigr) = \1\bigl[z_i^{(l)} > 0\bigr].
\end{align}
Under this equilibrium, the player-specific game values and the game values satisfy
\begin{align}\label{eq:adf-spne-values}
    \tilde{V}_p^{\pi^\star}\bigl(s_{i,q}^{(l, \mathrm{act})}\bigr) = a_{\mathrm{stop},i}^{(l, p \oplus q)},
    \qquad
    V_p^{\pi^\star}\bigl(s_{i,q}^{(l, \mathrm{act})}\bigr) = \xi_{p,q}\, a_i^{(l)}.
\end{align}
\end{theorem}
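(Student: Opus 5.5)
The proof goes by backward induction over the finite game tree, layer by layer from the terminal states upward. The claim to carry through the induction is: at every layer $l$, under $\pi^\star$, the player-specific values satisfy $\tilde V_p^{\pi^\star}(s_{i,q}^{(l,\mathrm{act})}) = a_{\mathrm{stop},i}^{(l,p\oplus q)}$, and $\pi^\star$ is a best response of the player in turn at every subgame rooted at layer $l$. Subgame perfection then follows because the horizon is finite and each active state is a single-mover decision node. A one-shot deviation check at each node, given optimal play below, is therefore enough.

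\textbf{Base case.} Base case: the terminal values in \eqref{eq:adf-v-terminal} are $x_k^{p\oplus q}$. This matches the input convention \eqref{eq:input-split-convention} with the layer-$0$/$1$ index shift adopted in Definition~\ref{def:adf-game-states}, so $\tilde V_p(s_{k,q}^{(1)}) = a_{\mathrm{stop},k}^{(0,p\oplus q)}$.

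\textbf{Inductive step: continuation value.} Assume the claim for all predecessor states. Insert it into \eqref{eq:adf-q-cont}:
\begin{align*}
\tilde Q_p(\mathrm{cont}, s_{i,q}^{(l,\mathrm{act})}) = b_i^{(l,p\oplus q)} + \sum_{j,r} W_{ij}^{(l,q\oplus r)}\, a_{\mathrm{stop},j}^{(l-1,p\oplus r)}.
\end{align*}
Re-index by $w = q\oplus r$, so that $p\oplus r = (p\oplus q)\oplus w$, using associativity of $\oplus$. Take $p\oplus q = +$. The sum becomes $\sum_j W_{ij}^{(l,+)}a_{\mathrm{stop},j}^{(l-1,+)} + W_{ij}^{(l,-)}a_{\mathrm{stop},j}^{(l-1,-)}$, which together with the bias is exactly $z_i^{(l,+)}$ in \eqref{eq:split-pos}. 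For $p\oplus q=-$ the same computation yields $z_i^{(l,-)}$ from \eqref{eq:split-neg}. Hence
\begin{align*}
\tilde Q_p(\mathrm{cont},\cdot) = z_i^{(l,p\oplus q)}.
\end{align*}

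\textbf{Inductive step: equilibrium action.} The player $q$ in turn compares advantages. Continuing gives
\begin{align*}
\tilde Q_q(\mathrm{cont}) - \tilde Q_{q'}(\mathrm{cont}) = z_i^{(l,+)} - z_i^{(l,-)} = z_i^{(l)},
\end{align*}
by the identity following Definition~\ref{def:valid-decomposition}. Stopping gives $0$. Continuing is therefore strictly optimal iff $z_i^{(l)}>0$, and the player is indifferent at $z_i^{(l)}=0$. Under the stop-at-tie rule this is exactly \eqref{eq:adf-spne-policy}.

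\textbf{Closing the induction.} Plugging the policy into \eqref{eq:adf-v-active} gives $\tilde V_p = \1[z_i^{(l)}>0]\, z_i^{(l,p\oplus q)} = a_{\mathrm{stop},i}^{(l,p\oplus q)}$ by \eqref{eq:stop-relu}. The advantage is then
\begin{align*}
V_p = a_{\mathrm{stop},i}^{(l,p\oplus q)} - a_{\mathrm{stop},i}^{(l,p'\oplus q)}.
\end{align*}
If $p=q$ this equals $a^{(l,+)}_{\mathrm{stop},i}-a^{(l,-)}_{\mathrm{stop},i}=a_i^{(l)}$ by Theorem~\ref{thm:decomp-recovery}; otherwise it is the negative. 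This gives $\xi_{p,q}a_i^{(l)}$.

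\textbf{Main obstacle.} The delicate step is the bookkeeping of the XOR labels, so that the turn switch on negative weights exactly reproduces the cross terms $W^{(l,-)}a^{(l-1,\mp)}$ of \eqref{eq:split-pos}--\eqref{eq:split-neg}. A second point needs care: the game is not zero-sum in the player-specific payoffs. I would justify explicitly that the player in turn optimises the advantage \eqref{eq:adf-advantage}, and that the opponent never acts at that node. Deviations by the opponent only occur at its own turn states, where the same argument applies, so no joint deviation needs to be considered.
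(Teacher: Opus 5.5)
Your proof is correct and follows essentially the same backward induction as the paper: terminal payoffs as the base case, a continuation advantage equal to $z_i^{(l)}$, the stop-at-tie policy $\1[z_i^{(l)}>0]$, and then $\tilde V_p = \1[z_i^{(l)}>0]\,z_i^{(l,p\oplus q)} = a_{\mathrm{stop},i}^{(l,p\oplus q)}$. The only difference is ordering. The paper derives the advantage from the inductive hypothesis on the signed values $V_q(s_{j,r}^{(l-1,\mathrm{act})}) = \xi_{q,r}\,a_j^{(l-1)}$ and computes $\tilde Q_p(\mathrm{cont},\cdot) = z_i^{(l,p\oplus q)}$ separately afterwards. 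You instead compute $\tilde Q_p(\mathrm{cont},\cdot)$ once and take the difference via Theorem~\ref{thm:decomp-recovery}, which is equivalent and slightly more economical.
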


\begin{remark}[Uniqueness of the SPNE up to tie-breaking] \label{rem:adf-spne-uniqueness}
    It is easy to see in the following proof that the SPNE \eqref{eq:adf-spne-policy} is unique up to tie-breaking at states where continuing yields a zero payoff---representing neurons of preactivation zero. We refer to the policy \eqref{eq:adf-spne-policy} by \textbf{stop-at-tie-breaks} equilibrium.
\end{remark}

\begin{proof}
We proceed by backward induction on the layer index $l$.

\textbf{Base Case ($l=1$):} By~\eqref{eq:adf-v-terminal}, $\tilde{V}_p^{\pi^\star}(s_{k,q}^{(1)}) = x_k^{p \oplus q} = a_{\mathrm{stop},k}^{(1, p \oplus q)}$ (input convention). The game value is
\begin{align}
    V_p^{\pi^\star}\bigl(s_{k,q}^{(1)}\bigr)
    = x_k^{p \oplus q} - x_k^{p' \oplus q}
    = \xi_{p,q}\, x_k,
\end{align}
using $x_k^+ - x_k^- = x_k$ and $x_k^- - x_k^+ = -x_k$.

\textbf{Inductive Step:} Assume the subgames starting at states up to layer $l-1$ admit the claimed SPNE and ~\eqref{eq:adf-spne-values} holds at layer $l-1$. Consider an active state $s_{i,q}^{(l, \mathrm{act})}$. The active player $q$ chooses $a \in \{\mathrm{stop}, \mathrm{cont}\}$ to maximise the advantage $\tilde{Q}_q^{\pi^\star}(a, s_{i,q}^{(l, \mathrm{act})}) - \tilde{Q}_{q'}^{\pi^\star}(a, s_{i,q}^{(l, \mathrm{act})})$. For $a=\mathrm{stop}$ this advantage is $0$ by~\eqref{eq:adf-q-stop}; for $a=\mathrm{cont}$, using~\eqref{eq:adf-q-cont},
\begin{align}
    \tilde{Q}_q^{\pi^\star}\bigl(\mathrm{cont}, s_{i,q}^{(l, \mathrm{act})}\bigr) - \tilde{Q}_{q'}^{\pi^\star}\bigl(\mathrm{cont}, s_{i,q}^{(l, \mathrm{act})}\bigr)
    &= \bigl(b_i^{(l, q \oplus q)} - b_i^{(l, q' \oplus q)}\bigr) + \sum_{j,r} W_{ij}^{(l, q \oplus r)}\, V_q^{\pi^\star}\bigl(s_{j,r}^{(l-1, \mathrm{act})}\bigr) \\
    &= \bigl(b_i^{(l,+)} - b_i^{(l,-)}\bigr) + \sum_{j,r} W_{ij}^{(l, q \oplus r)}\, V_q^{\pi^\star}\bigl(s_{j,r}^{(l-1, \mathrm{act})}\bigr).
\end{align}
By the inductive hypothesis $V_q^{\pi^\star}(s_{j,r}^{(l-1, \mathrm{act})}) = \xi_{q,r}\, a_j^{(l-1)}$, with $\xi_{q,q}=+1$ and $\xi_{q,q'}=-1$,
\begin{align}
    \tilde{Q}_q^{\pi^\star}\bigl(\mathrm{cont}, s_{i,q}^{(l, \mathrm{act})}\bigr) - \tilde{Q}_{q'}^{\pi^\star}\bigl(\mathrm{cont}, s_{i,q}^{(l, \mathrm{act})}\bigr)
    &= b_i^{(l)} + \sum_j \bigl( W_{ij}^{(l, q \oplus q)}\, \xi_{q,q}\, a_j^{(l-1)} \notag\\
    &\hphantom{= b_i^{(l)} + \sum_j \bigl(\,} {} + W_{ij}^{(l, q \oplus q')}\, \xi_{q,q'}\, a_j^{(l-1)} \bigr) \\
    &= b_i^{(l)} + \sum_j \bigl( W_{ij}^{(l,+)}\, a_j^{(l-1)} - W_{ij}^{(l,-)}\, a_j^{(l-1)} \bigr) \\
    &= b_i^{(l)} + \sum_j W_{ij}^{(l)}\, a_j^{(l-1)} = z_i^{(l)}.
\end{align}
Thus, player $q$ prefers \emph{continue} in case $z_i^{(l)} > 0$ and prefers \emph{stop} in case $z_i^{(l)} < 0$, and is indifferent at $z_i^{(l)} = 0$, where we tie-break in favour of \emph{stop}. The rule $\pi_q^\star(s_{i,q}^{(l, \mathrm{act})}) = \1[z_i^{(l)} > 0]$ together with the player policies of earlier layers thus constitutes an SPNE of the subgame starting at state $s_{i,q}^{(l ,\text{act})}$.

To verify~\eqref{eq:adf-spne-values}, substitute $\pi_q^\star$ into~\eqref{eq:adf-v-active} and use the inductive hypothesis $\tilde{V}_p^{\pi^\star}(s_{j,r}^{(l-1, \mathrm{act})}) = a_{\mathrm{stop},j}^{(l-1, p \oplus r)}$ in~\eqref{eq:adf-q-cont}:
\begin{align}
    \tilde{V}_p^{\pi^\star}\bigl(s_{i,q}^{(l, \mathrm{act})}\bigr)
    &= \1\bigl[z_i^{(l)} > 0\bigr]\, \tilde{Q}_p^{\pi^\star}\bigl(\mathrm{cont}, s_{i,q}^{(l, \mathrm{act})}\bigr) \\
    &= \1\bigl[z_i^{(l)} > 0\bigr] \left[ b_i^{(l, p \oplus q)} + \sum_j \bigl( W_{ij}^{(l,+)}\, a_{\mathrm{stop},j}^{(l-1, p \oplus q)} + W_{ij}^{(l,-)}\, a_{\mathrm{stop},j}^{(l-1, p \oplus q')} \bigr) \right].
\end{align}
If $p=q$, the bracket is $b_i^{(l,+)} + \sum_j \bigl( W_{ij}^{(l,+)}\, a_j^{(l-1,+)} + W_{ij}^{(l,-)}\, a_j^{(l-1,-)} \bigr) = z_i^{(l,+)}$.
If $p=q'$, the bracket is $b_i^{(l,-)} + \sum_j \bigl( W_{ij}^{(l,+)}\, a_j^{(l-1,-)} + W_{ij}^{(l,-)}\, a_j^{(l-1,+)} \bigr) = z_i^{(l,-)}$.
In both cases, $\tilde{V}_p^{\pi^\star}(s_{i,q}^{(l, \mathrm{act})}) = \1[z_i^{(l)} > 0]\, z_i^{(l, p \oplus q)} = a_{\mathrm{stop},i}^{(l, p \oplus q)}$. Finally,
\begin{align}
    V_p^{\pi^\star}\bigl(s_{i,q}^{(l, \mathrm{act})}\bigr)
    = a_{\mathrm{stop},i}^{(l, p \oplus q)} - a_{\mathrm{stop},i}^{(l, (p \oplus q)')}
    = \xi_{p,q}\, a_i^{(l)}.
    \qed
\end{align}
\end{proof}

\subsubsection{Skip Connections and Max Pooling}\label{app:skip-connections}

Modern architectures such as ResNets, Vision Transformers, and CNNs with pooling contain structural operators beyond single-parent affine maps. To handle these natively in the Stopping Game, we add an \textbf{Addition Oracle} for skip connections and a deterministic \textbf{Max-Pooling Decision State} for pooling, mirroring the analogous constructions in the Routing Game (Appendix~\ref{app:routing-skip}).

\begin{definition}[Addition Oracle]\label{def:addition-subgame}
For a residual node $z = x + y$, the \emph{addition state} $s_{z,q}^{\mathrm{add}}$ has active player $q \in \{+,-\}$ and operand states $s_{x,q}, s_{y,q}$ at the two summands. The transition kernel is the uniform Oracle
\begin{align}
    \P\bigl(s_{x,q} \,\big|\, s_{z,q}^{\mathrm{add}}\bigr) \coloneqq \tfrac{1}{2},
    \qquad
    \P\bigl(s_{y,q} \,\big|\, s_{z,q}^{\mathrm{add}}\bigr) \coloneqq \tfrac{1}{2},
\end{align}
with player $q$ retained at the chosen operand. The immediate payoff is $R_p(s_{z,q}^{\mathrm{add}}) \coloneqq 0$ for both $p \in \{+,-\}$, and the structural discount is $\gamma_{\mathrm{O}} \coloneqq 2$.
\end{definition}

\begin{proposition}[Forward Equivalence for Addition]\label{prop:addition-equivalence}
The player-specific state value at the addition state equals the sum of the operand state values,
\begin{align}
    \tilde V_p^{\pi}\bigl(s_{z,q}^{\mathrm{add}}\bigr)
    \;=\;
    \tilde V_p^{\pi}(s_{x,q}) + \tilde V_p^{\pi}(s_{y,q}).
\end{align}
Consequently, if the operand states recover the Stopping-Decomposition activations---$\tilde V_p^{\pi^\star}(s_{x,q}) = x^{(p\oplus q)}$ and $\tilde V_p^{\pi^\star}(s_{y,q}) = y^{(p\oplus q)}$---then $\tilde V_p^{\pi^\star}(s_{z,q}^{\mathrm{add}}) = z^{(p\oplus q)}$, preserving the forward equivalence of Theorem~\ref{thm:forward-stopping} across the residual sum.
\end{proposition}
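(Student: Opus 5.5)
The plan is to apply the player-specific Bellman equation~\eqref{eq:adf-q-bellman} at the addition state, using the data fixed in Definition~\ref{def:addition-subgame}. The addition state is not a decision state: there is no stop option and no policy choice, only the uniform oracle transition. So the value there is simply the action value of its single (oracle) action, and the kernel alone determines the recursion:
\begin{align}
    \tilde V_p^{\pi}\bigl(s_{z,q}^{\mathrm{add}}\bigr)
    = R_p\bigl(s_{z,q}^{\mathrm{add}}\bigr) + \gamma_{\mathrm{O}} \textstyle\sum_{s'} \P\bigl(s' \,\big|\, s_{z,q}^{\mathrm{add}}\bigr)\, \tilde V_p^{\pi}(s').
\end{align}
Next I substitute $R_p = 0$, $\gamma_{\mathrm{O}} = 2$ and the two transition probabilities $\tfrac12$. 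Because the turn label $q$ is retained, the only successors are $s_{x,q}$ and $s_{y,q}$. This gives $2\cdot\tfrac12\bigl(\tilde V_p^\pi(s_{x,q}) + \tilde V_p^\pi(s_{y,q})\bigr)$, which is the first claim. The identity holds for every policy $\pi$, since the oracle's kernel and payoff do not depend on $\pi$.

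For the consequence, I plug in the hypothesis that the operand values under $\pi^\star$ are the Stopping-Decomposition activations $x^{(p\oplus q)}$ and $y^{(p\oplus q)}$. I then need to check that their sum is the correct decomposed activation of the residual node. The decomposition of $z = x + y$ should be taken componentwise: $z^{(\pm)} \coloneqq x^{(\pm)} + y^{(\pm)}$. This is a non-negative linear map with identity coefficients, so it fits Definition~\ref{def:decomposition}, and it satisfies $z^{(+)} - z^{(-)} = x + y = z$, so validity in the sense of Definition~\ref{def:valid-decomposition} is preserved. Hence $\tilde V_p^{\pi^\star}(s^{\mathrm{add}}_{z,q}) = z^{(p\oplus q)}$. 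Taking the difference over $p$ and $p'$ then gives $V_p^{\pi^\star} = \xi_{p,q} z$, so the inductive invariant of Theorem~\ref{thm:forward-stopping} passes through the residual sum. Downstream activation states can therefore use the addition state as a predecessor exactly as they would a layer-$(l-1)$ activation state.

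The computation itself is trivial; the only care needed is in the conventions. First, the structural discount $\gamma_{\mathrm{O}} = 2$ has to play the role of $\gamma(s)$ in~\eqref{eq:adf-q-bellman}, just as $\gamma_i^{(l)}$ does at activation states, and it must exactly cancel the uniform $\tfrac12$. Second, the decomposed activation of a residual sum has to be defined as the componentwise sum of the operands' pairs, which the proof should state explicitly. I expect the identification of the addition oracle as a policy-free state, so that no SPNE argument is required at it, to be the main point needing justification rather than any real obstacle.
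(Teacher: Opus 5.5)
Your proposal is correct and follows the paper's proof essentially verbatim. You evaluate the Bellman recursion at $s_{z,q}^{\mathrm{add}}$ with $R_p=0$, the uniform $\tfrac12$ kernel and $\gamma_{\mathrm{O}}=2$ to obtain the sum, then use the componentwise additivity $z^{(p\oplus q)} = x^{(p\oplus q)} + y^{(p\oplus q)}$ of the decomposed activations for the consequence; your extra remarks (the oracle state being policy-free, and the check $V_p^{\pi^\star} = \xi_{p,q}\, z$) are correct details that the paper leaves implicit.
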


\begin{proof}
Apply the Bellman recursion~\eqref{eq:adf-q-bellman}--\eqref{eq:adf-v-active} at $s_{z,q}^{\mathrm{add}}$ with $R_p = 0$, the uniform Oracle kernel, and $\gamma_{\mathrm{O}} = 2$ from Definition~\ref{def:addition-subgame}:
\begin{align}
    \tilde V_p^{\pi}\bigl(s_{z,q}^{\mathrm{add}}\bigr)
    \;=\; 0 + \gamma_{\mathrm{O}} \cdot \Bigl(\tfrac{1}{2}\,\tilde V_p^{\pi}(s_{x,q}) + \tfrac{1}{2}\,\tilde V_p^{\pi}(s_{y,q})\Bigr)
    \;=\; \tilde V_p^{\pi}(s_{x,q}) + \tilde V_p^{\pi}(s_{y,q}),
\end{align}
which is the first identity. Substituting the operand assumption and using $z^{(p\oplus q)} = x^{(p\oplus q)} + y^{(p\oplus q)}$ from the additive convention~\eqref{eq:split-pos}--\eqref{eq:split-neg} of Appendix~\ref{app:dc-decomp} yields the second.
\end{proof}

\paragraph{On the uniform $1/2$ split.}
The $(\tfrac{1}{2}, \tfrac{1}{2})$ Oracle with $\gamma_{\mathrm{O}} = 2$ is not forced by the architecture; it is the simplest choice inside a one-parameter family of conservation-preserving Oracles $(p, 1-p)$ with matching discounts $(1/p, 1/(1-p))$. Any such split preserves the forward equivalence of Proposition~\ref{prop:addition-equivalence}, since the constraint the forward pass imposes is that the player-specific value at an addition state reproduces the forward activation, and this pins the product (routing probability)$\times$(discount) to $1$ on each branch but leaves either factor free. The uniform split is the maximum-entropy choice---the one that imposes \emph{no prior} on which summand carries more responsibility---whereas an activation-proportional Oracle $p \propto |x|$ would commit \emph{a priori} to a magnitude-based attribution for the skip connection, which we deliberately avoid because sign and magnitude are already handled on the linear-routing side of the game. The \emph{backward} occupation measure does depend on the split: under an activation-proportional Oracle the residual branch receives a larger share of the backward mass whenever its activation dominates, and conversely for the convolutional branch. All statements of Theorems~\ref{thm:gradient} and~\ref{thm:lrp-recovery} hold verbatim for any conservation-preserving $(p, 1-p)$ split, since the proofs only use that routing probability times Oracle discount equals $1$ on each branch.

Max pooling is simpler, because the forward operator already selects a single predecessor.

\begin{definition}[Max-Pooling Decision State]\label{def:maxpool-subgame}
For a pooled scalar output $z = \max\{x_1, \dots, x_m\}$, the \emph{pooling state} $s_{z,q}^{\max}$ has active player $q \in \{+, -\}$ and admissible action set $\mathcal{A}(s_{z,q}^{\max}) = \{1, \dots, m\}$ indexing the operand states $s_{x_1,q}, \dots, s_{x_m,q}$. Under action $k$, the transition is deterministic,
\begin{align}
    \P\bigl(s_{x_k,q} \,\big|\, s_{z,q}^{\max}, k\bigr) \coloneqq 1,
\end{align}
with player $q$ retained at the chosen operand. The immediate payoff is $R_p(s_{z,q}^{\max}, k) \coloneqq 0$ for both $p \in \{+,-\}$, and the structural discount is $\gamma_{\mathrm{O}} \coloneqq 1$.
\end{definition}

\paragraph{Tie-breaking at the pooling state.}
Whenever multiple operands $x_k$ attain the maximum, the active player is indifferent over the corresponding actions, and any pure best response over the tied indices yields the same forward value but a different equilibrium policy and hence a different occupation measure on the operand states. The \emph{stop-at-tie-breaks equilibrium}, fixed by the boundary conventions of Appendix~\ref{app:notation-identities}, resolves this by the convention $k^\star \coloneqq \min\{k : x_k = \max_l x_l\}$, i.e., the active player picks the smallest index among the operands attaining the maximal payoff. All gradient and occupation-measure statements below are stated for this equilibrium.

\begin{proposition}[Forward Equivalence for Max Pooling]\label{prop:maxpool-equivalence}
Assume the operand states recover the Stopping-Decomposition activations, $\tilde V_p^{\pi^\star}(s_{x_k,q}) = x_k^{(p\oplus q)}$ for all $k$, with non-negative operand activations $x_k \ge 0$. Then at the SPNE of the subgame rooted at $s_{z,q}^{\max}$,
\begin{align}
    \tilde V_p^{\pi^\star}\bigl(s_{z,q}^{\max}\bigr) \;=\; z^{(p\oplus q)},
\end{align}
so the pooling state inherits the same Stopping-Decomposition identity as its winning operand and the forward equivalence of Theorem~\ref{thm:forward-stopping} extends across the pooling node.
\end{proposition}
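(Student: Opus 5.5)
The plan is a one-step backward-induction argument at the pooling node, reusing the structure of the inductive step in the proof of Theorem~\ref{thm:forward-stopping}. By hypothesis, the subgames rooted at the operand states $s_{x_k,q}$ are already in SPNE, with player-specific values $\tilde V_p^{\pi^\star}(s_{x_k,q}) = x_k^{(p\oplus q)}$. It therefore suffices to do two things: determine the active player's best response at $s_{z,q}^{\max}$, and evaluate the Bellman recursion~\eqref{eq:adf-q-bellman} there.

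\textbf{Step 1 (action values).} For an action $k$, Definition~\ref{def:maxpool-subgame} gives zero immediate payoff, discount $\gamma_{\mathrm{O}}=1$ and a deterministic transition. Hence
\[
\tilde Q_p^{\pi^\star}\bigl(k, s_{z,q}^{\max}\bigr) = \tilde V_p^{\pi^\star}(s_{x_k,q}) = x_k^{(p\oplus q)} .
\]
The active player $q$ maximises the advantage $\tilde Q_q - \tilde Q_{q'}$. Using $q\oplus q = +$ and $q'\oplus q = -$, this advantage equals
\[
x_k^{(+)} - x_k^{(-)} = x_k
\]
by the valid-decomposition identity of Theorem~\ref{thm:decomp-recovery}. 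The point to check is that this holds for both $q=+$ and $q=-$. When $q=-$, the roles of the two streams swap in both terms simultaneously, which is exactly the identity $\xi_{q,q}=+1$ already used in Theorem~\ref{thm:forward-stopping}. So the advantage-maximising action is $k \in \arg\max_k x_k$, and under the smallest-index tie-break convention the player picks $k^\star$.

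\textbf{Step 2 (SPNE and values).} The pooling state has no stop option and only player $q$ acts there. Choosing $k^\star$, combined with the operand SPNEs, is therefore a best response in every subgame, and the combined profile is an SPNE of the subgame rooted at $s_{z,q}^{\max}$. Substituting $k^\star$ into the Bellman recursion gives
\[
\tilde V_p^{\pi^\star}\bigl(s_{z,q}^{\max}\bigr) = x_{k^\star}^{(p\oplus q)} .
\]
It remains to identify this with $z^{(p\oplus q)}$. For a max-pool node the Stopping Decomposition naturally sets $z^{(\pm)} \coloneqq x_{k^\star}^{(\pm)}$, i.e.\ it passes through the winning operand's pair. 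Then $z^{(+)} - z^{(-)} = x_{k^\star} = \max_k x_k = z$, so the valid-decomposition identity~\eqref{eq:network-decomposition-recovery} continues past the pooling node. The non-negativity assumption $x_k \ge 0$ makes $z\ge 0$ consistent with a ReLU-type activation, so the next affine layer and Theorem~\ref{thm:forward-stopping} apply unchanged. The game value follows as
\[
V_p^{\pi^\star} = \xi_{p,q}\, z .
\]

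\textbf{Main obstacle.} The only genuinely delicate point is definitional rather than computational. Max pooling is not covered by the canonical non-negative linear map, so we must make explicit that the decomposed pair at a pooling node is the pair of the argmax operand. The alternative, e.g.\ separate maxima of the $+$ and $-$ streams, would break the difference identity. Once that convention is fixed, and ties are resolved as in the tie-breaking paragraph above, the remaining verification is the short computation in Steps 1 and 2.
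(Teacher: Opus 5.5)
Your proposal is correct and follows the paper's proof: the active player's advantage at each operand is $x_k$, so it selects $k^\star$, and the deterministic, unit-discount, zero-payoff Bellman step gives $\tilde V_p^{\pi^\star}(s_{z,q}^{\max}) = x_{k^\star}^{(p\oplus q)} = z^{(p\oplus q)}$. You also state explicitly that the decomposed pair at a pooling node is the winning operand's pair, under the same tie-break. The paper uses this convention only implicitly, and it is the right one, since tied operands can carry different $\pm$ pairs.
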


\begin{proof}
The active player $q$ chooses the operand that maximises its game value $V_q^{\pi^\star} = \tilde V_q^{\pi^\star} - \tilde V_{q'}^{\pi^\star}$. By the operand assumption, $V_q^{\pi^\star}(s_{x_k,q}) = \xi_{q,q}\, x_k = x_k$ for every $k$, so on non-negative inputs $V_q^{\pi^\star}$ is monotone in $x_k$ and the maximiser is $k^\star \in \arg\max_k x_k$ (ties resolved by a fixed implementation rule, e.g.\ first index). The Bellman recursion~\eqref{eq:adf-q-bellman}--\eqref{eq:adf-v-active} with $R_p = 0$, the deterministic transition, and $\gamma_{\mathrm{O}} = 1$ from Definition~\ref{def:maxpool-subgame} gives
\begin{align}
    \tilde V_p^{\pi^\star}\bigl(s_{z,q}^{\max}\bigr) \;=\; 0 + 1 \cdot \tilde V_p^{\pi^\star}(s_{x_{k^\star},q}) \;=\; x_{k^\star}^{(p\oplus q)}.
\end{align}
Since $z = x_{k^\star}$ on non-negative operands, the right-hand side equals $z^{(p\oplus q)}$.
\end{proof}

\subsection{Discounted Occupation Measures and Gradient Representations}\label{app:occ-deriv}

To connect game trajectories with neural network gradients, we define the \textbf{discounted occupation measure}. This measure quantifies the expected influence of a neuron on the game's outcome, weighted by the discount factors accumulated along all possible game trajectories (including those created by skip connections and max-pooling nodes).

\begin{definition}[Game trajectory and trajectory distribution]\label{def:trajectory-distribution}
Since the network has a single scalar output, the game starts from the unique output state $u$ at layer $L$. A \emph{game trajectory} rooted at $u$ is a finite sequence of game states
\begin{align}
    \tau \;=\; (\tau_0, \tau_1, \dots, \tau_T), \qquad \tau_0 = u,
\end{align}
that follows the admissible transitions of the game (Definition~\ref{def:adf-recursion} together with the addition/max-pooling Oracle splits of Definitions~\ref{def:addition-subgame}--\ref{def:maxpool-subgame}) until it terminates either at a layer-$1$ terminal state $\tau_T = s_{k,r}^{(1)}$ (reaching an input dimension $k$) or at the cemetery state $\tau_T = \perp$ (reached the first time a player chooses \emph{stop}). The set of all such trajectories is the trajectory \emph{ground set} $\mathcal{T}_u$. Composing the transition kernel~\eqref{eq:adf-kernel-stop}--\eqref{eq:adf-kernel}, the Oracle splits, and a (possibly stochastic) policy $\pi$ at every activation state defines a Markov kernel on game states, which together with the deterministic start at $u$ induces a probability distribution $\mu_u^\pi$ on $\mathcal{T}_u$. Trajectories that pass through closed gates---neurons $j$ with $\1[z_j^{(l)} > 0] = 0$---are admissible elements of $\mathcal{T}_u$, but under the equilibrium policy $\pi^\star$ the player at the activation state stops there, so any trajectory continuing past a closed gate carries $\mu_u^{\pi^\star}$-probability zero. We write $\mu_u \coloneqq \mu_u^{\pi^\star}$ for the equilibrium trajectory distribution.
\end{definition}

\begin{definition}[Discounted Occupation Measure]\label{def:occupation}
Let $\mu_u$ be the equilibrium trajectory distribution of Definition~\ref{def:trajectory-distribution}.
Define the cumulative discount
\begin{align}
    d_t(\tau) \coloneqq \prod_{m=0}^{t-1} \gamma(\tau_m),
\end{align}
where $\gamma(\tau_m) = \gamma_{\mathrm{O}} = 2$ for addition Oracle states, $\gamma(\tau_m) = \gamma_{\mathrm{O}} = 1$ for max-pooling states, and $\gamma(\tau_m) = \gamma_i^{(l)}$ for continuation actions in linear layers (per-neuron weight-magnitude discount). The \emph{discounted occupation measure} of state $v$ starting from $u$ is
\begin{align}
    \Gamma_{u}(v) \coloneq \E_{\tau \sim \mu_u}\left[\sum_t d_t(\tau)\,\1\{\tau_t = v\}\right].
\end{align}
\end{definition}

\begin{theorem}[Activation--Occupation Correspondence]\label{thm:deriv-occ}
Fix a neuron state $u = s_{j,q}^{(m, \mathrm{act})}$. Under the stop-at-tie-breaks equilibrium fixed by the boundary conventions of Appendix~\ref{app:notation-identities} (uniquely fixing the SPNE at ReLU gates with $z = 0$ and at max-pooling states with multiple maximisers), the discounted occupation measures exactly represent, in a multi-trajectory ReLU network (e.g., ResNet, ViT, or a CNN with max pooling), the local Stopping-Decomposition and scalar activation gradients:
\begin{enumerate}
    \item \textbf{Stopping-Decomposition Gradient:}
    \begin{align}
        \frac{\partial a_{\mathrm{stop},j}^{(m, p \oplus q)}}{\partial a_{\mathrm{stop},i}^{(l, k)}}
        \;=\;
        \Gamma_u\!\bigl(s_{i, p \oplus k}^{(l, \mathrm{act})}\bigr).
    \end{align}
    \item \textbf{Network Activation Gradient.} The ordinary-network gradient of $a_j^{(m)}$ with respect to the scalar activation $a_i^{(l)}$ is
    \begin{align}
        \frac{\partial a_j^{(m)}}{\partial a_i^{(l)}}
        \;=\;
        \xi_{q, +}\bigl(\Gamma_u(s_{i, +}^{(l, \mathrm{act})}) - \Gamma_u(s_{i, -}^{(l, \mathrm{act})})\bigr).
    \end{align}
\end{enumerate}
\end{theorem}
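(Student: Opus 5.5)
Both identities are linear recursions in the layer gap, so I plan to prove them by induction on $m-l$: I will show that the discounted occupation measure obeys the same backward recursion as the Jacobian of the Stopping Decomposition. The main tool is the explicit form of the equilibrium values from Theorem~\ref{thm:forward-stopping}. At the stop-at-tie-breaks SPNE, $\tilde V_p^{\pi^\star}(s_{j,q}^{(m,\mathrm{act})}) = a_{\mathrm{stop},j}^{(m,p\oplus q)} = \1[z_j^{(m)}>0]\,z_j^{(m,p\oplus q)}$. Here $z_j^{(m,p\oplus q)} = b_j^{(m,p\oplus q)} + \sum_{i,r} W_{ji}^{(m,q\oplus r)} a_{\mathrm{stop},i}^{(m-1,p\oplus r)}$, which is just the continuation Bellman equation~\eqref{eq:adf-q-cont}.

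I will differentiate this with the gate held fixed, i.e.\ on the linear region, using Lemma~\ref{lem:stop-monotone}. This gives the one-step Jacobian
\[
\partial a_{\mathrm{stop},j}^{(m,p\oplus q)}/\partial a_{\mathrm{stop},i}^{(m-1,k)} = \1[z_j^{(m)}>0]\,W_{ji}^{(m,q\oplus r)} \quad\text{with } r = p\oplus k .
\]
On the game side, the one-step occupation measure is computed directly. Starting from $u$, the player continues with probability $\pi^\star_q(u) = \1[z_j^{(m)}>0]$, moves to $s_{i,r}^{(m-1,\mathrm{act})}$ with probability $W_{ji}^{(m,q\oplus r)}/\gamma_j^{(m)}$, and picks up the discount $d_1 = \gamma_j^{(m)}$. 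So $\Gamma_u(s_{i,r}^{(m-1)}) = \1[z_j^{(m)}>0]\,W_{ji}^{(m,q\oplus r)}$, and the $\gamma$'s cancel. This settles the base case of item~1, once the index bookkeeping is right: the target state label $p\oplus k$ must match $r$. I will check that $q\oplus(p\oplus k)$ reproduces the stream label of the derivative, so the XOR identities of Appendix~\ref{app:notation-identities} have to be used carefully here.

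For the inductive step I will use the Markov property to decompose $\Gamma_u$ by first step: $\Gamma_u(v) = \sum_{w} \Gamma_u(w)\,\Gamma_w(v)$, summed over the layer-$(m-1)$ states $w$. This holds because discounts multiply along trajectories and each trajectory visits layer $m-1$ at most once, since layers are rounds. The Jacobian satisfies the chain rule over layer $m-1$. Matching the two sums term by term, with the induction hypothesis applied at each intermediate $w = s_{i',r}$, gives item~1; the main work is tracking how the turn label composes. Skip connections and max pooling are handled the same way. An addition state contributes routing probability $\tfrac12$ times discount $2$, which equals $1 = \partial z/\partial x$ by Proposition~\ref{prop:addition-equivalence}. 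A max-pool state contributes a deterministic move to $k^\star$ with discount $1$, which equals the indicator derivative under the first-index convention of Proposition~\ref{prop:maxpool-equivalence}. Because these operators can make layer visits non-unique, the decomposition should be indexed by topological depth in the computational DAG rather than by layer.

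Item~2 then follows from the chain rule through the decomposition. Since $a_i^{(l)} = a_{\mathrm{stop},i}^{(l,+)} - a_{\mathrm{stop},i}^{(l,-)}$, a perturbation of $a_i^{(l)}$ enters as $+\delta$ on the $+$ stream or $-\delta$ on the $-$ stream, and these give the same derivative along the linear region. Similarly $a_j^{(m)} = a_{\mathrm{stop},j}^{(m,+)} - a_{\mathrm{stop},j}^{(m,-)}$. I will express both in terms of $p\oplus q$, apply item~1 to each, and collect terms into $\Gamma_u(s_{i,+}) - \Gamma_u(s_{i,-})$ with the overall sign $\xi_{q,+}$ coming from which player starts at $u$.

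I expect the main obstacle to be making the derivative well defined and matching it exactly to the equilibrium measure. Three things are involved: differentiating the non-smooth gate (handled by restricting to the linear region, with $\mathrm{ReLU}'(0)=0$ matching stop-at-ties), the consistent XOR bookkeeping of turn versus stream labels, and justifying that the perturbation of the $\pm$ pair is consistent across streams.
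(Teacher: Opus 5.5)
Your proposal is correct and follows essentially the same route as the paper. Both arguments induct on the layer gap and match the Jacobian of the Stopping-Decomposition recursion (gate frozen, $\mathrm{ReLU}'(0)=0$) against the first-step occupation recursion $\Gamma_u(v)=\1[z_j^{(m)}>0]\sum_{h,r}W_{jh}^{(m,q\oplus r)}\Gamma_{s_{h,r}}(v)$; addition nodes use routing $\tfrac12$ times discount $2$, max-pooling uses a unit-discount move, and item~2 comes from perturbing one stream and applying item~1.

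The only differences are minor. You start the induction at gap one, whereas the paper starts at the trivial gap-zero case $\Gamma_{s_{i,r}}(s_{i,p\oplus k})=\1\{r=p\oplus k\}$. Your suggestion to index by topological depth in the DAG is a reasonable tightening of the paper's ``layer gap'' phrasing once skip connections are present.
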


\begin{proof}
We establish Part 1 by backward induction on the layer gap $m - l$ and derive Part 2 from it.

\textbf{Part 2.} By Theorem~\ref{thm:forward-stopping}, $a_i^{(l)} = a_{\mathrm{stop},i}^{(l,+)} - a_{\mathrm{stop},i}^{(l,-)}$ and $a_j^{(m)} = a_{\mathrm{stop},j}^{(m,+)} - a_{\mathrm{stop},j}^{(m,-)}$. Differentiating $a_j^{(m)}$ along the $+$ direction (perturb $a_{\mathrm{stop},i}^{(l,+)}$, keep $a_{\mathrm{stop},i}^{(l,-)}$ fixed) and applying Part 1 at $k=+$,
\begin{align}
    \frac{\partial a_j^{(m)}}{\partial a_i^{(l)}}
    &= \frac{\partial a_{\mathrm{stop},j}^{(m,+)}}{\partial a_{\mathrm{stop},i}^{(l,+)}}
    - \frac{\partial a_{\mathrm{stop},j}^{(m,-)}}{\partial a_{\mathrm{stop},i}^{(l,+)}}
    = \Gamma_u\!\bigl(s_{i, q}^{(l, \mathrm{act})}\bigr) - \Gamma_u\!\bigl(s_{i, q'}^{(l, \mathrm{act})}\bigr)
    = \xi_{q,+}\bigl(\Gamma_u(s_{i,+}^{(l, \mathrm{act})}) - \Gamma_u(s_{i,-}^{(l, \mathrm{act})})\bigr),
\end{align}
using $q \oplus + = q$ and $q' \oplus + = q'$ in the second equality.

\textbf{Part 1.} The decomposition recursion of Theorem~\ref{thm:forward-stopping} writes the upper-layer Stopping-Decomposition activation as
\begin{align}\label{eq:adf-astop-recursion}
    a_{\mathrm{stop},j}^{(m, p \oplus q)}
    = \1\!\bigl[z_j^{(m)} > 0\bigr]\!\left[
        b_j^{(m, p \oplus q)}
        + \sum_h \Bigl( W_{jh}^{(m,+)}\, a_{\mathrm{stop},h}^{(m-1, p \oplus q)} + W_{jh}^{(m,-)}\, a_{\mathrm{stop},h}^{(m-1, p \oplus q')} \Bigr)
    \right].
\end{align}

\emph{Base case ($m = l$).} If $u = s_{i,r}^{(l, \mathrm{act})}$ already lies at the target neuron and decomposed activation, then
\begin{align}
    \frac{\partial a_{\mathrm{stop},i}^{(l, p \oplus r)}}{\partial a_{\mathrm{stop},i}^{(l, k)}}
    = \delta_{p \oplus r,\, k}
    = \1\{r = p \oplus k\}
    = \Gamma_{s_{i,r}^{(l, \mathrm{act})}}\!\bigl(s_{i, p \oplus k}^{(l, \mathrm{act})}\bigr).
\end{align}
\emph{Inductive step (linear routing, $m > l$).} Differentiating~\eqref{eq:adf-astop-recursion} with respect to $a_{\mathrm{stop},i}^{(l, k)}$ (the bias $b_j^{(m, p \oplus q)}$ does not depend on lower-layer activations) and substituting the inductive hypothesis at layer $m-1$,
\begin{align}
    \frac{\partial a_{\mathrm{stop},j}^{(m, p \oplus q)}}{\partial a_{\mathrm{stop},i}^{(l, k)}}
    &= \1\!\bigl[z_j^{(m)} > 0\bigr]
        \sum_h \Bigl( W_{jh}^{(m,+)}\, \Gamma_{s_{h, q}^{(m-1, \mathrm{act})}}\!\bigl(s_{i, p \oplus k}^{(l, \mathrm{act})}\bigr)
        + W_{jh}^{(m,-)}\, \Gamma_{s_{h, q'}^{(m-1, \mathrm{act})}}\!\bigl(s_{i, p \oplus k}^{(l, \mathrm{act})}\bigr) \Bigr).
\end{align}
On the game side, the one-step occupation recursion at $u = s_{j,q}^{(m, \mathrm{act})}$ uses transition probabilities $W_{jh}^{(m, q \oplus r)} / \gamma_j^{(m)}$ together with the matching cumulative-discount factor $\gamma_j^{(m)}$ and the equilibrium gate $\1[z_j^{(m)} > 0]$:
\begin{align}
    \Gamma_u\!\bigl(s_{i, p \oplus k}^{(l, \mathrm{act})}\bigr)
    = \1\!\bigl[z_j^{(m)} > 0\bigr] \sum_{h, r} W_{jh}^{(m, q \oplus r)}\, \Gamma_{s_{h, r}^{(m-1, \mathrm{act})}}\!\bigl(s_{i, p \oplus k}^{(l, \mathrm{act})}\bigr).
\end{align}
The two right-hand sides agree, completing the inductive step at linear routing layers.

\emph{Addition-node step.} For an addition node $z = x + y$, the additivity of the Stopping Decomposition from Appendix~\ref{app:dc-decomp} gives $a_{\mathrm{stop},z}^{(p \oplus q)} = a_{\mathrm{stop},x}^{(p \oplus q)} + a_{\mathrm{stop},y}^{(p \oplus q)}$, so by the inductive hypothesis,
\begin{align}
    \frac{\partial a_{\mathrm{stop},z}^{(p \oplus q)}}{\partial a_{\mathrm{stop},i}^{(l, k)}}
    = \Gamma_{s_{x,q}}\!\bigl(s_{i, p \oplus k}^{(l, \mathrm{act})}\bigr) + \Gamma_{s_{y,q}}\!\bigl(s_{i, p \oplus k}^{(l, \mathrm{act})}\bigr).
\end{align}
The Oracle split of Definition~\ref{def:addition-subgame} sends mass $\tfrac12$ to each branch and applies the discount $\gamma_{\mathrm{O}} = 2$, so the occupation measure satisfies the matching identity
\begin{align}
    \Gamma_{s_{z,q}^{\mathrm{add}}}\!\bigl(s_{i, p \oplus k}^{(l, \mathrm{act})}\bigr)
    = 2 \cdot \tfrac{1}{2}\, \Gamma_{s_{x,q}}\!\bigl(s_{i, p \oplus k}^{(l, \mathrm{act})}\bigr)
    + 2 \cdot \tfrac{1}{2}\, \Gamma_{s_{y,q}}\!\bigl(s_{i, p \oplus k}^{(l, \mathrm{act})}\bigr).
\end{align}
\emph{Max-pooling step.} For a max-pooling node $z = \max\{x_1, \dots, x_m\}$ with winner $k^\star$, Proposition~\ref{prop:maxpool-equivalence} gives $a_{\mathrm{stop},z}^{(p \oplus q)} = a_{\mathrm{stop},x_{k^\star}}^{(p \oplus q)}$, and the inductive hypothesis yields
\begin{align}
    \frac{\partial a_{\mathrm{stop},z}^{(p \oplus q)}}{\partial a_{\mathrm{stop},i}^{(l, k)}}
    = \Gamma_{s_{x_{k^\star},q}}\!\bigl(s_{i, p \oplus k}^{(l, \mathrm{act})}\bigr).
\end{align}
The deterministic transition with unit discount in Definition~\ref{def:maxpool-subgame} gives the matching identity $\Gamma_{s_{z,q}^{\max}}(s_{i, p \oplus k}^{(l, \mathrm{act})}) = \Gamma_{s_{x_{k^\star},q}}(s_{i, p \oplus k}^{(l, \mathrm{act})})$. Hence, the correspondence also holds across skip connections and max-pooling nodes, completing the induction.
\end{proof}

\subsection{Softplus Variant: Entropy-Regularised Activation-State Policies}\label{app:softplus-stopping-game}

In this section we introduce the entropy regularisation that \citet{Gaubert2025} apply at activation states to model networks with Softplus instead of ReLU activations and similarily to the ReLU case adapt the game formulation to the Softplus Decomposition introduced in Appendix \ref{app:dc-decomp}. Similarily to before, the player-specific game values agree with the activations of the Softplus Decomposition and the gradient of the Softplus network is recovered by a linear combination of occupation measures in the Softplus game. We close with a comparison to \citet{Dombrowski2020}.

\subsubsection{Modified rule: entropy-regularised activation-state policy}\label{app:softplus-rules}

The technical tool throughout this section, and re-used by the Routing Game in Appendix~\ref{app:routing-formal} and the attention block of Appendix~\ref{app:vit-extension}, is the standard Legendre--Fenchel identity for Shannon entropy: for every $\theta>0$, $n\in\NN$, and payoff vector $r \in \RR^n$,
\begin{align}\label{eq:legendre-fenchel-app}
    \max_{\pi \in \Delta_n} \left\{ \sum_{i=1}^n \pi(i)\, r_i + \theta\, \mathcal{H}(\pi) \right\}
    = \theta\, \log \sum_{i=1}^n \exp\!\left(\frac{r_i}{\theta}\right),
    \qquad \pi^\star(i) \propto \exp\!\left(\frac{r_i}{\theta}\right),
\end{align}
with $\mathcal{H}(\pi) \coloneqq -\sum_i \pi(i)\log \pi(i)$; the unique maximiser is the Gibbs measure at inverse temperature $1/\theta$. Identity~\eqref{eq:legendre-fenchel-app} is the Fenchel conjugate of the (scaled) negative entropy; see~\citet{BoydVandenberghe2004} for a short textbook proof at $\theta=1$, which extends to general $\theta>0$ by the rescaling $\pi \mapsto \pi$, $r \mapsto r/\theta$.

Only the active player's stop/continue choice changes. At every activation state $s_{i,q}^{(l, \mathrm{act})}$, replace the binary maximisation~\eqref{eq:adf-v-active} by an entropy-regularised maximisation over mixed policies $\pi_q(s_{i,q}^{(l, \mathrm{act})}) \in \Delta(\{\mathrm{stop},\mathrm{cont}\})$ at temperature $\theta$. With $\tilde Q_q^\pi(\mathrm{stop}, \cdot) = 0$ and $\tilde Q_{q'}^\pi(\mathrm{stop}, \cdot) = 0$ (cf.~\eqref{eq:adf-q-stop}) and $\tilde Q_p^\pi(\mathrm{cont}, \cdot)$ as in~\eqref{eq:adf-q-cont}, the soft Bellman recursion for the active-player advantage is
\begin{align}\label{eq:softplus-soft-bellman}
    V_q^\pi\bigl(s_{i,q}^{(l, \mathrm{act})}\bigr)
    \;\coloneqq\;
    \max_{\nu \in \Delta(\{\mathrm{stop},\mathrm{cont}\})}\!\Bigl\{
        \sum_{a} \nu(a)\bigl(\tilde Q_q^\pi(a, s_{i,q}^{(l, \mathrm{act})}) - \tilde Q_{q'}^\pi(a, s_{i,q}^{(l, \mathrm{act})})\bigr)
        + \theta\, \mathcal{H}(\nu)
    \Bigr\}.
\end{align}
Applying~\eqref{eq:legendre-fenchel-app} to the binary action set $\{\mathrm{stop},\mathrm{cont}\}$ with payoffs $0$ and $\tilde Q_q^\pi(\mathrm{cont}, s_{i,q}^{(l, \mathrm{act})}) - \tilde Q_{q'}^\pi(\mathrm{cont}, s_{i,q}^{(l, \mathrm{act})})$ (the active-player Q-difference under continuation) immediately yields the unique optimiser as the Gibbs policy at inverse temperature $1/\theta$,
\begin{align}\label{eq:softplus-policy}
    \pi_q^\star\!\bigl(\mathrm{cont} \,\big|\, s_{i,q}^{(l, \mathrm{act})}\bigr)
    \;=\;
    \frac{\exp\!\bigl(\bigl[\tilde Q_q^\pi(\mathrm{cont}, s_{i,q}^{(l, \mathrm{act})}) - \tilde Q_{q'}^\pi(\mathrm{cont}, s_{i,q}^{(l, \mathrm{act})})\bigr]/\theta\bigr)}{1 + \exp\!\bigl(\bigl[\tilde Q_q^\pi(\mathrm{cont}, s_{i,q}^{(l, \mathrm{act})}) - \tilde Q_{q'}^\pi(\mathrm{cont}, s_{i,q}^{(l, \mathrm{act})})\bigr]/\theta\bigr)},
\end{align}
and the corresponding closed-form soft advantage
\begin{align} \label{eq:softplus-soft-value}
    V_q^\pi\bigl(s_{i,q}^{(l, \mathrm{act})}\bigr)
    &=\;
    \theta\log\!\Bigl(1 + \exp\!\bigl(\bigl[\tilde Q_q^\pi(\mathrm{cont}, s_{i,q}^{(l, \mathrm{act})}) - \tilde Q_{q'}^\pi(\mathrm{cont}, s_{i,q}^{(l, \mathrm{act})})\bigr]/\theta\bigr)\Bigr) \\
    &=
    \sigma_\theta\!\bigl(\tilde Q_q^\pi(\mathrm{cont}, s_{i,q}^{(l, \mathrm{act})}) - \tilde Q_{q'}^\pi(\mathrm{cont}, s_{i,q}^{(l, \mathrm{act})})\bigr).
\end{align}
The transition kernel~\eqref{eq:adf-kernel-stop}--\eqref{eq:adf-kernel}, the payoffs $R_p$, the player-specific Bellman equations~\eqref{eq:adf-q-bellman}--\eqref{eq:adf-q-cont} for $\tilde Q_p^\pi$ and the player-specific value recursion~\eqref{eq:adf-v-active} for $\tilde V_p^\pi$ (now with $\pi$ no longer pure) are unchanged. The hard-ReLU game of Definition~\ref{def:adf-recursion} is the $\theta \to 0$ limit, in which~\eqref{eq:softplus-policy} collapses to the deterministic $\1[\tilde Q_q^\pi(\mathrm{cont},\cdot) - \tilde Q_{q'}^\pi(\mathrm{cont},\cdot) > 0]$ rule.

\subsubsection{\texorpdfstring{Forward equivalence: $\tilde V_p^{\pi^\star}$ recovers the Softplus Decomposition}{Forward equivalence: tilde V recovers the Softplus Decomposition}}\label{app:softplus-forward}

\begin{theorem}[Softplus forward equivalence]\label{thm:softplus-forward}
The entropy-regularised Stopping Game of~\eqref{eq:softplus-soft-bellman} admits a unique SPNE whose activation-state policy is the Gibbs policy~\eqref{eq:softplus-policy}. Under this equilibrium, the player-specific game values and the soft game value at every activation state $s_{i,q}^{(l, \mathrm{act})}$ satisfy
\begin{align}\label{eq:softplus-spne-tildes}
    \tilde V_p^{\pi^\star}\!\bigl(s_{i,q}^{(l, \mathrm{act})}\bigr)
    \;=\;
    a_{\mathrm{soft},i}^{(l, p \oplus q,\,\theta)},
    \qquad
    V_q^{\pi^\star}\!\bigl(s_{i,q}^{(l, \mathrm{act})}\bigr)
    \;=\;
    \xi_{q,q}\, \sigma_\theta\!\bigl(z_i^{(l)}\bigr)
    \;=\;
    \sigma_\theta\!\bigl(z_i^{(l)}\bigr),
\end{align}
where $a_{\mathrm{soft},i}^{(l, \pm,\,\theta)}$ are the Softplus-Decomposition activations~\eqref{eq:softplus-decomp} of the Softplus network with activation $\sigma_\theta(z) = \theta\log(1+e^{z/\theta})$. In particular, the soft game value at each activation state equals the corresponding Softplus activation, and the player-specific values are exactly the two decomposed activations whose difference recovers it.
\end{theorem}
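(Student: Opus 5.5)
The proof mirrors the backward induction of Theorem~\ref{thm:forward-stopping}, with the hard gate replaced by the Gibbs gate from~\eqref{eq:softplus-policy}. The inductive hypothesis at layer $l-1$ is that~\eqref{eq:softplus-spne-tildes} holds at every activation state (terminal states at the boundary). It also assumes the subgame rooted there has the Gibbs policies as its unique SPNE.

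\textbf{Base case.} At the terminal states $s_{k,q}^{(1)}$ nothing changes: by~\eqref{eq:adf-v-terminal}, $\tilde V_p(s_{k,q}^{(1)}) = x_k^{p\oplus q} = a_{\mathrm{soft},k}^{(0,p\oplus q,\theta)}$ by the input convention~\eqref{eq:input-split-convention}. The advantage is $\xi_{p,q} x_k$ exactly as before. Since terminal states carry no entropy regularisation, the hypothesis concerns only the signed value $\xi_{q,r}\,a_j^{(l-1)}$ of predecessors. At layer $l-1 \ge 2$ this uses $a_j^{(l-1)} = \sigma_\theta(z_j^{(l-1)})$, so the claimed identity must be read as $V_q = \xi_{q,q}\,\sigma_\theta(z)$ for the active player and $V_{q'} = -\sigma_\theta(z)$ for the opponent.

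\textbf{Inductive step, part 1.} I compute the continuation Q-difference $\Delta \coloneqq \tilde Q_q(\mathrm{cont},s) - \tilde Q_{q'}(\mathrm{cont},s)$ at $s = s_{i,q}^{(l,\mathrm{act})}$. The computation in the proof of Theorem~\ref{thm:forward-stopping} goes through verbatim: it only uses~\eqref{eq:adf-q-cont}, which is unchanged, and the inductive identity $V_q(s_{j,r}^{(l-1)}) = \xi_{q,r}a_j^{(l-1)}$. This gives $\Delta = z_i^{(l)}$, the Softplus-network pre-activation. The Legendre--Fenchel identity~\eqref{eq:legendre-fenchel-app} on $\{\mathrm{stop},\mathrm{cont}\}$ with payoffs $(0,\Delta)$ then yields uniqueness of the best response, namely the Gibbs weight $\sigma_\theta'(z_i^{(l)})$, by strict concavity of the entropy-regularised objective. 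Combined with the inductive uniqueness of the continuation subgames, this gives a unique SPNE via the usual one-shot-deviation/backward-induction argument on the finite-horizon game. The soft advantage is $\sigma_\theta(z_i^{(l)})$ by~\eqref{eq:softplus-soft-value}.

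\textbf{Inductive step, part 2.} The player-specific values are obtained by plugging the mixed policy into~\eqref{eq:adf-v-active}: $\tilde V_p(s) = \sigma_\theta'(z_i^{(l)})\,\tilde Q_p(\mathrm{cont},s)$. The bracket computation of Theorem~\ref{thm:forward-stopping} gives $\tilde Q_p(\mathrm{cont},s) = z_i^{(l,p\oplus q)}$, so $\tilde V_p = \sigma_\theta' z_i^{(l,p\oplus q)}$.

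\textbf{Main obstacle.} This last formula does \emph{not} match~\eqref{eq:softplus-decomp} on the $+$ stream, which carries the extra term $\theta H_\theta(z_i^{(l)})$. So the obstacle is where the entropy bonus is booked. I would resolve it by stating explicitly that, in the soft game, the entropy reward $\theta\mathcal H(\nu)$ earned by the active player is credited to that player's own payoff. Concretely, $\tilde V_q$ gains $+\theta H_\theta(z_i^{(l)})$ and $\tilde V_{q'}$ gains nothing. This is consistent with~\eqref{eq:softplus-soft-bellman}, which adds entropy to the advantage. Then $\tilde V_q = \sigma_\theta' z^{(l,+)} + \theta H_\theta = a_{\mathrm{soft}}^{(l,+,\theta)}$ and $\tilde V_{q'} = \sigma_\theta' z^{(l,-)} = a_{\mathrm{soft}}^{(l,-,\theta)}$, matching $p\oplus q$ in both cases. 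The difference is $\sigma_\theta(z_i^{(l)})$, as Theorem~\ref{thm:decomp-recovery} already verifies via $\theta H_\theta = \sigma_\theta - \sigma_\theta' z$. It also has to be checked that with this convention the opponent's advantage at $s$ is $-\sigma_\theta(z_i^{(l)})$, so that the hypothesis $\xi_{q,r}$-form propagates to the next layer. That check is immediate from the two displayed values.
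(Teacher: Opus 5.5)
Your proposal is correct and follows the paper's proof essentially step for step. The shared steps are: backward induction from the terminal base case; the verbatim reuse of the Q-difference computation, giving $z_i^{(l)}$; the Legendre--Fenchel step, yielding the Gibbs gate $\sigma_\theta'(z_i^{(l)})$ and soft value $\sigma_\theta(z_i^{(l)})$; and the same fix for the $+$-stream mismatch, crediting $\theta H_\theta(z_i^{(l)})$ to the active player only. The paper likewise treats that booking as part of the game's semantics and defers uniqueness to Remark~\ref{rem:softplus-uniqueness}, so your explicit statement of the convention and of the opponent-sign check is simply a more transparent rendering of the same argument.
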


\begin{proof}
By backward induction in $l$, mirroring the proof of Theorem~\ref{thm:forward-stopping}.

\textbf{Base case (terminal layer, $l=1$).} The states $s_{k,q}^{(1)}$ are terminal, not activation states, so the soft-Bellman identity for $V_q^{\pi^\star}$ in~\eqref{eq:softplus-spne-tildes} does not apply at $l=1$; only the terminal-payoff identity for $\tilde V_p^{\pi^\star}$ has to be checked. By~\eqref{eq:adf-v-terminal} and the input convention~\eqref{eq:input-split-convention}, $\tilde V_p^{\pi^\star}(s_{k,q}^{(1)}) = x_k^{p\oplus q} = a_{\mathrm{soft},k}^{(0, p\oplus q, \theta)}$, with no Softplus gate at the input. Differencing the two player copies recovers the signed input $V_q^{\pi^\star}(s_{k,q}^{(1)}) = \tilde V_q^{\pi^\star} - \tilde V_{q'}^{\pi^\star} = x_k^+ - x_k^- = x_k$, identical to the base case of Theorem~\ref{thm:forward-stopping}.

\textbf{Inductive step ($l \ge 2$).} Assume~\eqref{eq:softplus-spne-tildes} holds at layer $l-1$, so by Theorem~\ref{thm:decomp-recovery} the propagated activations are $a_j^{(l-1)} = \sigma_\theta(z_j^{(l-1)})$ and the decomposed activations obey $a_{\mathrm{soft},j}^{(l-1, +,\theta)} - a_{\mathrm{soft},j}^{(l-1, -,\theta)} = \sigma_\theta(z_j^{(l-1)})$. Substituting into the continuation Bellman equation~\eqref{eq:adf-q-cont}, the Q-difference of~\eqref{eq:softplus-policy} computes to
\begin{align}
    \tilde Q_q^{\pi^\star}(\mathrm{cont}, s_{i,q}^{(l, \mathrm{act})}) - \tilde Q_{q'}^{\pi^\star}(\mathrm{cont}, s_{i,q}^{(l, \mathrm{act})})
    &= \bigl(b_i^{(l, +)} - b_i^{(l, -)}\bigr) \notag\\
    &\quad + \sum_{j,r} W_{ij}^{(l, q\oplus r)}\bigl(\tilde V_q^{\pi^\star} - \tilde V_{q'}^{\pi^\star}\bigr)\!\bigl(s_{j,r}^{(l-1,\mathrm{act})}\bigr) \\
    &= b_i^{(l)} + \sum_{j,r} W_{ij}^{(l, q\oplus r)}\, \xi_{q,r}\, \sigma_\theta\!\bigl(z_j^{(l-1)}\bigr) \notag\\
    &= b_i^{(l)} + \sum_j W_{ij}^{(l)}\, a_j^{(l-1)} \;=\; z_i^{(l)},
\end{align}
using $\sum_r W_{ij}^{(l, q\oplus r)}\xi_{q,r} = W_{ij}^{(l)}$ exactly as in the proof of Theorem~\ref{thm:forward-stopping}. Hence, the Gibbs policy is $\pi_q^\star(\mathrm{cont}\mid \cdot) = \sigma_\theta'(z_i^{(l)})$, and the closed-form soft value~\eqref{eq:softplus-soft-value} gives $V_q^{\pi^\star}(s_{i,q}^{(l, \mathrm{act})}) = \sigma_\theta(z_i^{(l)})$, the second identity of~\eqref{eq:softplus-spne-tildes}.

For the first identity, expand $\tilde V_p^{\pi^\star}$ from its definition~\eqref{eq:adf-v-active} under the Gibbs policy. For the active player ($p = q$, so $p \oplus q = +$), the entropy-regularised optimisation~\eqref{eq:softplus-soft-bellman} attributes both the routed expectation and the entropy bonus to player $q$:
\begin{align}
    \tilde V_q^{\pi^\star}\!\bigl(s_{i,q}^{(l, \mathrm{act})}\bigr)
    \;=\; \pi_q^\star(\mathrm{cont})\, \tilde Q_q^{\pi^\star}(\mathrm{cont}, \cdot) + \theta\,\mathcal{H}(\pi_q^\star)
    \;=\; \sigma_\theta'(z_i^{(l)})\, z_i^{(l, +)} + \theta\, H_\theta(z_i^{(l)})
    \;=\; a_{\mathrm{soft},i}^{(l, +, \theta)},
\end{align}
where $\tilde Q_q^{\pi^\star}(\mathrm{cont}, \cdot) = z_i^{(l, +)}$ follows from the inductive hypothesis $\tilde V_q^{\pi^\star}(s_{j,r}^{(l-1, \mathrm{act})}) = a_{\mathrm{soft},j}^{(l-1, q\oplus r,\theta)}$ via the decomposition recursion~\eqref{eq:split-pos}. For the opponent ($p = q'$, so $p \oplus q = -$), the entropy bonus is absent (it is the active player's surplus from being allowed to mix) and only the routed expectation contributes:
\begin{align}
    \tilde V_{q'}^{\pi^\star}\!\bigl(s_{i,q}^{(l, \mathrm{act})}\bigr)
    \;=\; \pi_q^\star(\mathrm{cont})\, \tilde Q_{q'}^{\pi^\star}(\mathrm{cont}, \cdot)
    \;=\; \sigma_\theta'(z_i^{(l)})\, z_i^{(l, -)}
    \;=\; a_{\mathrm{soft},i}^{(l, -, \theta)},
\end{align}
again using $\tilde Q_{q'}^{\pi^\star}(\mathrm{cont}, \cdot) = z_i^{(l, -)}$ from the inductive hypothesis and~\eqref{eq:split-neg}. The two identities together close the induction.
\end{proof}

\begin{remark}[Uniqueness of the Softplus SPNE]\label{rem:softplus-uniqueness}
On Softplus networks without max-pooling nodes, the entropy-regularised activation-state Bellman~\eqref{eq:softplus-soft-bellman} is strictly concave on the binary action simplex, so its maximiser---the Gibbs policy~\eqref{eq:softplus-policy}---is unique. Consequently the SPNE of Theorem~\ref{thm:softplus-forward} is unique, no boundary tie-breaking is required, and the boundary conventions of Appendix~\ref{app:notation-identities} do not apply: the uniqueness comes from the entropy regularisation itself rather than from a tie-breaking convention. For Softplus networks that include max-pooling nodes the deterministic max-pooling decision states (Definition~\ref{def:maxpool-subgame}) re-introduce the tie-breaking issue, and we again invoke the stop-at-tie-breaks convention of Appendix~\ref{app:notation-identities}.
\end{remark}

\subsubsection{Gradient recovery on Softplus networks}\label{app:softplus-gradient}

The discounted occupation measure of Definition~\ref{def:occupation} is unchanged in form, but the trajectory measure $\mu_u^\theta$ is now generated by the Gibbs policies~\eqref{eq:softplus-policy} in place of the deterministic stop/continue rule.

\begin{theorem}[Softplus gradient recovery]\label{thm:softplus-gradient}
Under the SPNE of Theorem~\ref{thm:softplus-forward} on the Softplus network with activation $\sigma_\theta$, the discounted occupation difference recovers the Softplus-network gradient exactly as in Theorem~\ref{thm:deriv-occ}:
\begin{align}
    \frac{\partial a_j^{(m)}}{\partial a_i^{(l)}}
    \;=\;
    \xi_{q,+}\Bigl(\Gamma_u\bigl(s_{i,+}^{(l, \mathrm{act})}\bigr) - \Gamma_u\bigl(s_{i,-}^{(l, \mathrm{act})}\bigr)\Bigr),
    \qquad u = s_{j,q}^{(m, \mathrm{act})},
\end{align}
with $\Gamma_u$ now driven by $\mu_u^\theta$.
\end{theorem}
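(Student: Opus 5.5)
The plan is to prove the signed statement directly, by backward induction on the layer gap $m-l$. I would \emph{not} route it through Part~1 of Theorem~\ref{thm:deriv-occ}. Under the Softplus Decomposition, $a_{\mathrm{soft},j}^{(m,\pm,\theta)}$ depends on $z_j^{(m,+)}$ and $z_j^{(m,-)}$ not only through the gate $\sigma_\theta'(z_j^{(m)})$. It also picks up curvature terms $\sigma_\theta''(z_j^{(m)})\,z_j^{(m,\mp)}$ (Remark~\ref{rem:soft-monotone}) and the entropy term $\theta H_\theta$. So the decomposed-activation derivatives are not single occupation measures. These extra terms cancel only in the difference $a^{(m,+)}-a^{(m,-)} = \sigma_\theta(z^{(m)})$. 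The target quantity is therefore
\begin{align}
D_u(i) \coloneqq \Gamma_u\bigl(s_{i,+}^{(l,\mathrm{act})}\bigr) - \Gamma_u\bigl(s_{i,-}^{(l,\mathrm{act})}\bigr),
\end{align}
and the claim to establish is that $\xi_{q,+}D_u(i)$ obeys the same chain-rule recursion as $\partial a_j^{(m)}/\partial a_i^{(l)}$ in the Softplus network.

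\emph{Step 1 (occupation recursion).} At $u=s_{j,q}^{(m,\mathrm{act})}$, the trajectory measure $\mu_u^\theta$ continues with probability $\pi_q^\star(\mathrm{cont}\mid u)$. By Theorem~\ref{thm:softplus-forward}, the Q-difference computed in its proof equals $z_j^{(m)}$, so this probability is $\sigma_\theta'(z_j^{(m)})$. The trajectory then moves to $s_{h,r}^{(m-1,\mathrm{act})}$ with probability $W_{jh}^{(m,q\oplus r)}/\gamma_j^{(m)}$, and the cumulative discount gains the factor $\gamma_j^{(m)}$. Stopping sends the trajectory to $\perp$ and contributes nothing further. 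Hence, for $m>l$,
\begin{align}
\Gamma_u(v) = \sigma_\theta'\bigl(z_j^{(m)}\bigr)\sum_{h,r} W_{jh}^{(m,q\oplus r)}\,\Gamma_{s_{h,r}^{(m-1,\mathrm{act})}}(v).
\end{align}
The entropy bonus enters only the values $\tilde V$, not the kernel, so it plays no role here.

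\emph{Step 2 (turn-swap symmetry).} The equilibrium policy at $s_{h,r}$ depends only on $z_h$. The kernel depends only on $q\oplus r$. Consequently the bijection $r\mapsto r'$ on turn labels maps $\mu_{s_{h,r}}^\theta$ onto $\mu_{s_{h,r'}}^\theta$ and preserves discounts. This gives $\Gamma_{s_{h,r'}}(s_{i,k}) = \Gamma_{s_{h,r}}(s_{i,k'})$, and hence $D_{s_{h,-}}(i) = -D_{s_{h,+}}(i)$.

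\emph{Step 3 (induction).} The base case $m=l$ gives $D_{s_{i,q}}(i)=\xi_{q,+}$, matching $\partial a_i/\partial a_i = 1$. For the inductive step, take the difference of the Step~1 recursion at $v=s_{i,\pm}$. Step~2 together with $q\oplus r$ (which yields $+$ for $r=q$ and $-$ for $r=q'$) gives
\begin{align}
D_u(i) = \sigma_\theta'\bigl(z_j^{(m)}\bigr)\sum_h \bigl(W_{jh}^{(m,+)}-W_{jh}^{(m,-)}\bigr) D_{s_{h,q}}(i).
\end{align}
Multiplying by $\xi_{q,+}$ and applying the inductive hypothesis turns this into $\sigma_\theta'(z_j^{(m)})\sum_h W_{jh}^{(m)}\,\partial a_h^{(m-1)}/\partial a_i^{(l)}$. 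This is exactly the chain rule for $a_j^{(m)} = \sigma_\theta(z_j^{(m)})$. Addition and max-pooling nodes carry over verbatim from the proof of Theorem~\ref{thm:deriv-occ}: there, routing probability times discount equals $1$ on each branch, and all policies are still turn-symmetric, so Step~2 holds at these nodes too.

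The main obstacle is conceptual rather than computational. One has to see that the Part~1 route of Theorem~\ref{thm:deriv-occ} breaks down, and replace it with the symmetry argument of Step~2. Without Step~2, the $\pm$ bookkeeping does not collapse to the signed weight $W^{(m)}_{jh}$. Within Step~2, the point requiring care is that the Gibbs continuation probability is the same for both players at a given neuron; this follows from the second identity of Theorem~\ref{thm:softplus-forward}.
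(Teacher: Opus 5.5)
Your proof is correct, and it takes a genuinely different route from the paper. The paper's proof is a one-line transfer. It reruns the induction of Theorem~\ref{thm:deriv-occ}, first on decomposed activations (Part~1) and then by differencing (Part~2). It asserts that the only ReLU-specific ingredient is the gate factor $\1[z_j^{(m)}>0]$ in the linear-routing step, which becomes $\sigma_\theta'(z_j^{(m)})$ on the chain-rule side and, through the Gibbs continuation probability, on the occupation side.

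You instead prove the signed identity directly, and your induction on $D_u(i)$ never mentions decomposed activations. Its ingredients are the one-step occupation recursion and a global turn-swap symmetry of $\mu_u^\theta$ giving $\Gamma_{s_{h,r'}}(s_{i,k})=\Gamma_{s_{h,r}}(s_{i,k'})$.

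Your diagnosis of why this is needed is right. The ReLU proof also silently uses that the indicator gate has zero derivative when \eqref{eq:adf-astop-recursion} is differentiated. For the Softplus Decomposition the analogue of Part~1 is false. Already for one layer, $\partial a_{\mathrm{soft},j}^{(m,+,\theta)}/\partial a_{\mathrm{soft},h}^{(m-1,+,\theta)}$ differs from $\Gamma_{s_{j,+}}(s_{h,+})=\sigma_\theta'(z_j^{(m)})\,W^{(m,+)}_{jh}$ by the curvature term $\sigma_\theta''(z_j^{(m)})\,z_j^{(m,-)}\,W^{(m)}_{jh}$. The paper's sketch is therefore valid only if the differentiated recursion is read with the gate frozen as a constant multiplier. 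Equivalently, it holds only for the $\pm$ difference, where the curvature and entropy contributions cancel.

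The paper's route buys brevity and, in the ReLU case, the stronger component-wise statement of Part~1. Yours buys a self-contained argument for exactly the claimed signed statement. Your turn-swap lemma also makes explicit why the two-copy bookkeeping collapses to the signed weights $W^{(m)}_{jh}$. Because it needs no decomposition of the activations, it extends immediately to any activation-state policy that depends on the neuron only through $z$, such as the noisy gate $\Phi$ or the GELU variant.
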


\begin{proof}
The proof of Theorem~\ref{thm:deriv-occ} differentiates the activation-state recursion layer by layer; the only step that uses the hard ReLU policy is the indicator factor $\1[z_j^{(m)} > 0]$ on the right-hand side of the linear-routing inductive step. Under~\eqref{eq:softplus-policy} this factor is replaced by $\sigma_\theta'(z_j^{(m)})$, which is exactly the chain-rule factor at a Softplus gate. The occupation-side recursion picks up the same factor through the Gibbs continuation probability, so the two recursions remain identical and the base case is unchanged.
\end{proof}

This extends the gradient recovery part of the framework---which \citet{Gaubert2025} did not address---from ReLU to Softplus networks; together with Theorem~\ref{thm:softplus-forward} it shows that the entropy regularisation already used at the linear-routing states of \ac{RG} (Appendix~\ref{app:routing-formal}) translates seamlessly to the activation-state stop/continue choice of \ac{SG}, with $\theta$ controlling the smoothness of both the forward map and the recovered backward propagation.

\subsubsection{Relation to the curvature-based defence of Dombrowski et al.}\label{app:softplus-dombrowski}

\citet{Dombrowski2020} train Softplus networks from scratch as a defence against gradient-explanation manipulation. Their argument bounds the Frobenius norm of the network Hessian: replacing $\mathrm{ReLU}$, whose distributional second derivative $\delta(z)$ leaves $\Vert H \Vert_F$ unbounded, by $\sigma_\beta(x) = \tfrac{1}{\beta}\log(1+e^{\beta x})$ with $|\sigma_\beta''(z)| \le \beta/4$ yields a uniform bound on $\Vert H \Vert_F$, and the gradient theorem then bounds the maximal change of the gradient explanation under input perturbations.
Theorems~\ref{thm:softplus-forward}--\ref{thm:softplus-gradient} place this Softplus replacement inside our framework as a single change in the game: entropy-regularising the activation-state stop/continue policy at temperature $\theta = 1/\beta$. The Gibbs policy~\eqref{eq:softplus-policy} is a $C^\infty$ sigmoid in the input with Lipschitz constant controlled by $\theta$, and the recovered gradient inherits that smoothness with respect to perturbations of the player-specific values it consumes.

Firstly, this embedding in the game framework gives an other angle of the robustness of Softplus networks: Given the entropy regularisation the players need strong evidence to change their stopping or continuing policy.
Secondly, the embedding yields a trajectory distribution representing the networks calculation and the Hellinger response of \S\ref{sec:hellinger-cascading}, can yield bounds that are sharper or differently targeted than the bounds on the Lipschietz constant of \citet{Dombrowski2020}. To study this however lies beyond the scope of this work and is left for future work.
\newpage
\section{Routing Game: Log-Space Formulation}\label{app:routing-formal}

This appendix makes the two-player payoff structure of the Routing Game explicit. The recursion used in Section~\ref{sec:routing} is the logarithmic representation of an entropy-regularized two-player game, and admits a Gibbs subgame-perfect equilibrium for every $\tau>0$.

\subsection{Exact Routing Kernel Behind \texorpdfstring{$\alpha\beta$-\ac{LRP}}{alpha-beta LRP}}\label{app:routing-kernel}

The linear routing subgames are entropy-regularized one-step maximization problems whose closed form is given by the Legendre--Fenchel identity~\eqref{eq:legendre-fenchel-app} for Shannon entropy already stated in Appendix~\ref{app:softplus-rules}: with the temperature symbol of that identity instantiated to $\tau$, for every $\tau>0$, $n\in\NN$, and payoff vector $r \in \RR^n$, the entropy-regularised maximum equals $\tau\,\log\sum_i\exp(r_i/\tau)$ and is attained by the Gibbs distribution $\pi^\star(i) \propto \exp(r_i/\tau)$. We refer the reader to~\citet{BoydVandenberghe2004} for a textbook proof.

As in Section~\ref{sec:routing}, write
\begin{align}
    \ell(z) \coloneqq
    \begin{cases}
        \log z, & z > 0, \\
        -\infty, & z = 0, \\
        \omega, & z < 0,
    \end{cases}
    \qquad \omega < -\infty < 0.
\end{align}
Thus, zero routed mass is assigned the stopping value $-\infty$, while genuinely negative mass is assigned the strictly worse formal value $\omega$.
Moreover, we define the exponential function to evaluate to $0$ both on $-\infty$ and $\omega$.
\begin{align}
    \exp (\omega) \coloneq \exp(-\infty) \coloneqq 0
\end{align}
We remark that this definitions yield that for any $x \in \RR$
\begin{align}
    \exp\left( \ell(x) \right) = \text{ReLU}(x).
\end{align}
We use this fact for notational convenience. The ReLU-gating itself is fully modeled by the player policies at states modeling the ReLU gates---just like in the Stopping Game.

\paragraph{State Space and Game Mechanics.}
To correctly model the interaction between player turns and stream signs, we define a structured state space:
\begin{itemize}
    \item \textbf{Activation States:} $\mathcal{S}_{act}^{(l)} = \{ s_{j,p}^{(l, act)} \}$, where $p \in \{+,-\}$ is the player in turn.
    \item \textbf{Linear Routing States:} $\mathcal{S}_{lin}^{(l)} = \{ s_{j,q,\sigma}^{(l, lin)} \}$, where $q$ is the player turn and $\sigma \in \{+,-\}$ is the stream sign.
\end{itemize}
At $s_{j,p}^{(l, act)}$, the active player $p$ chooses between \emph{stop} and \emph{continue}. Under the \emph{continue} action, the environment's transition kernel is uniform over the two sign branches,
\begin{align}
    \P\bigl(s_{j,p,+}^{(l, lin)} \,\big|\, s_{j,p}^{(l, act)},\, \mathrm{continue}\bigr) = \tfrac{1}{2},
    \qquad
    \P\bigl(s_{j,p',-}^{(l, lin)} \,\big|\, s_{j,p}^{(l, act)},\, \mathrm{continue}\bigr) = \tfrac{1}{2},
\end{align}
where the first branch keeps player $p$ in control evaluating $\sigma=+$ weights and the second switches the turn to opponent $p'$ evaluating $\sigma=-$ weights. To compensate for the $1/2$ branching, the environment applies an Oracle structural discount of $\gamma_{\mathrm{O}} = 2$ (distinct from the per-neuron weight-magnitude discount $\gamma_i^{(l)}$ used in the Stopping Game).

\begin{definition}[Entropy-Regularized Two-Player Routing Game]\label{def:routing-recursion}
Fix an input $x$ and a stabiliser $\epsilon \ge 0$. At each activation state $s_{j,p}^{(l, act)}$, the player in turn chooses \emph{stop} or \emph{continue} with similar mechanics as in the Stopping Game (Definition~\ref{def:adf-recursion}) reflecting the open-closed state of a ReLU gate: stopping sends the game to the cemetery $\perp$ giving immediate payoff $\ell(0) = -\infty$---and continuation transitions backward with structural discount $\gamma=2$ and the uniform $1/2$ sign-branch kernel given above. Here happens the critical step: gains of the opponent on the  negative stream state $s_{j,p',-}^{(l, \text{lin})}$ shall be counted as losses of the active player $p$. At each linear routing state $s_{j,q,\sigma}^{(l, lin)}$, the player in turn instead chooses a mixed action $\nu \in \Delta(\mathrm{Pred}_j^{(l)} \cup \{\perp_{\mathrm{lin}}\})$ over the predecessors of neuron $j$ augmented by a fictitious \emph{outside option} $\perp_{\mathrm{lin}}$ which routes the game's mass to the cemetery with immediate payoff $\ell (\epsilon)$. Let $\mathcal{H}(\nu) \coloneqq -\sum_{a} \nu(a)\log \nu(a)$ be Shannon entropy over this augmented action set. At $\epsilon = 0$ the outside option carries $\ell (0) = -\infty$ and is dominated by every positive-mass predecessor, so the augmented game reduces to the plain Routing Game over $\mathrm{Pred}_j^{(l)}$.

Throughout, $\mathbf{V}(\cdot) = (V_p(\cdot), V_{p'}(\cdot))^\top$ denotes the game's backward-induction player-specific log-value function, defined jointly at linear routing states by~\eqref{eq:routing-linear-value} and at activation states by~\eqref{eq:routing-activation-value} below. The recursion is well-posed because the layered state graph is a finite DAG: the $l=1$ case of~\eqref{eq:routing-linear-action} needs no lower-layer $\mathbf{V}$, and every subsequent layer uses only $\mathbf{V}$ at strictly lower layers.

For a \textbf{linear routing state} at $l \geq 2$ the active player continuing to the $i$th neurons activation state receives an immediate payoff of the logarithm of the respective weight summed with the respective log-value function of the next state, whereas in the first layer the final payoff is payed out to the active player. The inactive player always obtains an immediate payoff of $\ell(0) = -\infty$. Let us denote by $R_r(s, a)$ player $r$s gain for the game value in case action $a$ actually happens, whereas $Q_r$ denotes the mixture of the differences between the payoffs of the opponents including the entropy regularization term given the mixed strategy chosen by the active player.
\begin{align}\label{eq:routing-linear-action}
    R_r \bigl( s_{j,q,\sigma}^{(l, lin)}, i \bigr)
    \coloneqq
    \begin{cases}
        \ell\!\bigl(W_{ij}^{(l,\sigma)}\bigr) + V_r\bigl(s_{i,q}^{(l-1, act)}\bigr), & r=q,\ l \ge 2,\\[0.35em]
        \ell\!\bigl([x_iW_{ij}^{(1)}]^\sigma\bigr), & r=q,\ l = 1,\\[0.35em]
        -\infty, & r=q'.
    \end{cases}
\end{align}
The outside option $\perp_{\mathrm{lin}}$ has log-score $R_r(s_{j,q,\sigma}^{(l,\mathrm{lin})}, \perp_{\mathrm{lin}}) \coloneqq \log \epsilon$ for $r = q$ and $-\infty$ for $r = q'$; it corresponds to routing the $\epsilon$-fraction of mass to the cemetery instead of to any predecessor.
If player $q$ uses mixed action $\nu$ at this state, the induced \emph{individual routed masses} are
\begin{align}\label{eq:optimal-linear}
    Q_r \bigl( s_{j,q,\sigma}^{(l, lin)};\nu \bigr)
    &\coloneqq
      \sum_a \nu(a)\, \ell \left( \exp \left( R_r \bigl( s_{j,q,\sigma}^{(l, lin)}, a \bigr) \right) - \exp \left( R_{r'} \bigl( s_{j,q,\sigma}^{(l, lin)}, a \bigr) \right) \right) \notag\\
    &\quad + \tau \mathcal{H}(\nu),
    \qquad r \in \{q,q'\},
\end{align}
which, as $\exp(\omega)=exp(-\infty)=0$ evaluates to
\begin{align}
    Q_q \bigl( s_{j,q,\sigma}^{(l, lin)};\nu \bigr)
    &=
      \sum_a \nu(a)\, R_q \bigl( s_{j,q,\sigma}^{(l, lin)}, a \bigr)
        + \tau \mathcal{H}(\nu) \\
    Q_{q'} \bigl( s_{j,q,\sigma}^{(l, lin)};\nu \bigr)
    &\in \left\{ -\infty, \omega \right\}.
\end{align}
If $W_{ij}^{(l,\sigma)} = 0$ for every predecessor $i$ (i.e.\ the stream-$\sigma$ part of the fan-in to neuron $j$ is identically zero), then $R_q(s_{j,q,\sigma}^{(l,\mathrm{lin})},i) = -\infty$ for all $i$ as well and $Q_q \bigl( s_{j,q,\sigma}^{(l, lin)};\nu \bigr) = \ell(\epsilon)$.
We define the corresponding equilibrium linear-game values by
\begin{align}\label{eq:routing-linear-value}
    V^\star_q \bigl( s_{j,q,\sigma}^{(l, lin)} \bigr)
        &\coloneqq
        \max_{\nu \in \Delta(\mathrm{Pred}_j^{(l)} \cup \{\perp_{\mathrm{lin}}\})}
        Q_q\bigl(s_{j,q,\sigma}^{(l, lin)};\nu\bigr) \nonumber \\
        &=
        \max_{\nu \in \Delta(\mathrm{Pred}_j^{(l)} \cup \{\perp_{\mathrm{lin}}\})}
        \left\{
            \sum_{a} \nu(a)\, R_q \bigl( s_{j,q,\sigma}^{(l, lin)}, a \bigr)
            + \tau \mathcal{H}(\nu)
        \right\} \nonumber \\
        &=
        \tau \cdot \ell \left( \epsilon^{ \frac{1}{\tau} } + \sum_i \exp\left( R_q(s_{j,q,\sigma}^{(l, lin)}, i) \right)^{ \frac{ 1 }{ \tau } } \right), \\
    V^\star_{q'} \bigl( s_{j,q,\sigma}^{(l, lin)} \bigr)
        &\coloneqq -\infty,
\end{align}
where we used (\ref{eq:legendre-fenchel-app}). The associated Gibbs policy assigns the outside option the probability $\epsilon / (\epsilon + \sum_a \exp R_q(\cdots, a))$ and distributes the remaining mass over predecessors in proportion to $\exp R_q$.

For an \textbf{activation state}, continuing exposes both sign branches. Let $s_+ \coloneqq s_{j,p,+}^{(l, \text{lin})}$ and $s_- \coloneqq s_{j,p',-}^{(l, \text{lin})}$. If player $p$ chooses \emph{continue} at $s_{j,p}^{(l, act)}$, the game transitions to $s_+$ or to $s_-$ with equal probability, while \emph{stop} sends the game to the cemetery with game value $V_r(\perp)=\ell(0)=-\infty$. Taking the expectation over the uniform sign-branch kernel ($\frac{1}{2}$ each) and respecting the discount $\gamma_O = 2$ interpretated in logarithmic space becomes
\begin{align}\label{eq:routing-activation-continue}
    Q_r \bigl( s_{j,p}^{(l, act)}, \mathrm{continue} \bigr)
    &\coloneqq
        \ell \left( \sum_{s' \in \{s_+, s_-\}} \P(s' \mid s_{j,p}^{(l, act)}) \cdot \gamma_{\mathrm{O}} \cdot \Bigl( \exp V_r(s') - \exp V_{r'}(s') \Bigr) \right) \nonumber \\
        &= \ell \left(
            \exp\left( V_r\left( s_+ \right) \right) - \exp\left( V_{r'}\left( s_- \right) \right)
        \right).
\end{align}

Observe that either of the two is in $\{ -\infty, \omega\}$.
On the other hand, stopping yields $-\infty$ for both.
The equilibrium activation-game values are
\begin{align}\label{eq:routing-activation-value}
    V_p(s_{j,p}^{(l, act)}) &\coloneqq \max\{-\infty, Q_p(s_{j,p}^{(l, act)}, \text{continue})\},
\end{align}
and as in the the players $p'$ formula for the inflowing payoff $Q_{p'}$ at continuation the difference in the extended logarith is sign flipped, we have
\begin{align}
        V_{p'}(s_{j,p}^{(l, act)}) \in \{ -\infty, \omega \}.
\end{align}
\end{definition}
Figure~\ref{fig:Routing-Game-illustration} shows a worked example of the local routing subgame.

\begin{figure}[ht]
    \centering
    \makebox[\textwidth][c]{\includegraphics[width=0.6\textwidth]{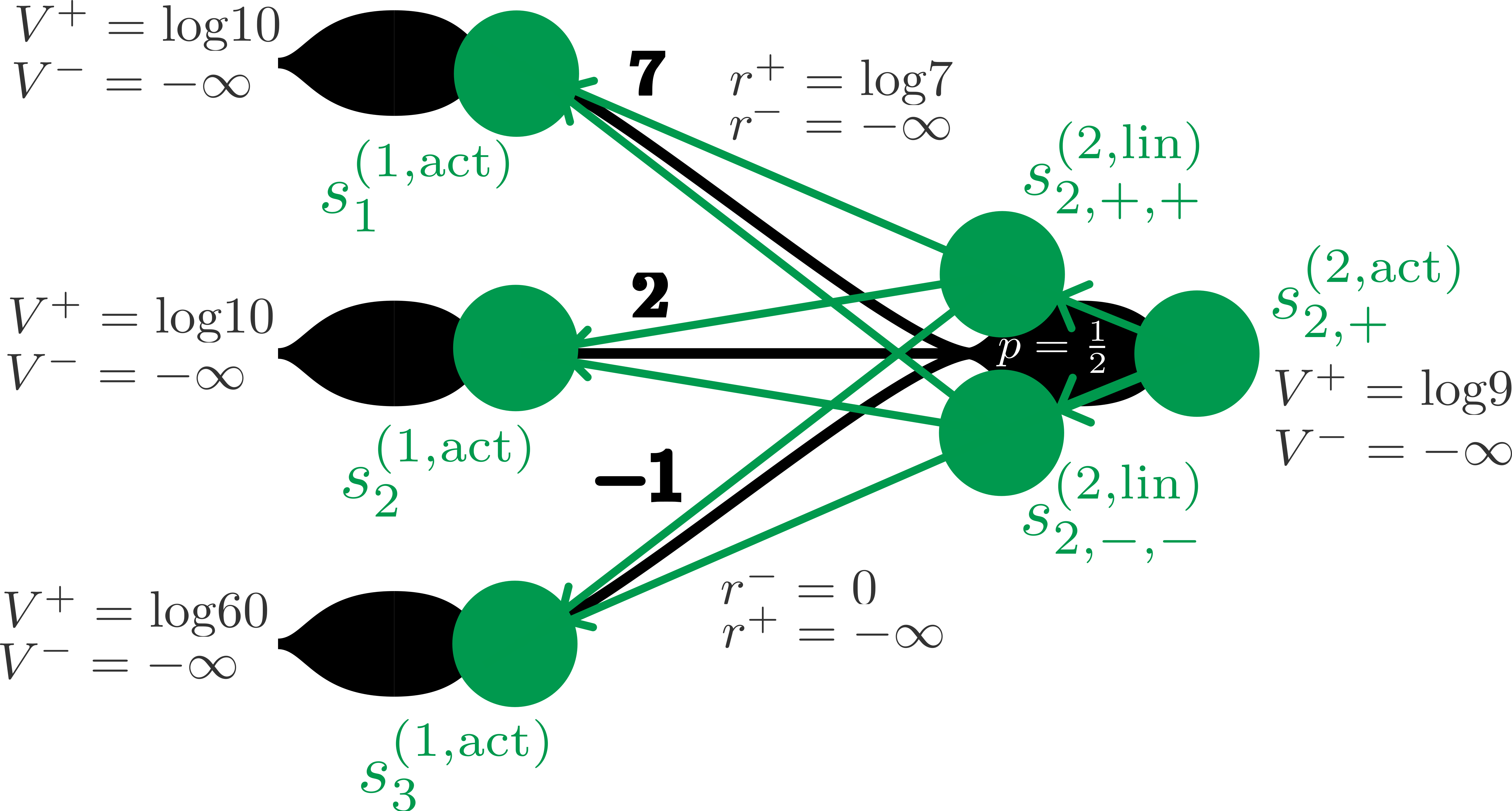}}
    \caption{Routing Game: local routing subgame around $s_{2,+}^{(2,\mathrm{act})}$. This uses the same toy subnetwork and numerical values as Figure~\ref{fig:Stopping-Game-illustration}. At this activation state, $P^+$ is in turn and continues because the game value is positive ($V^+ = \log 9 > 0$ in the depicted example). The game then branches with probability $1/2$ to $s_{2,+,+}^{(2,\mathrm{lin})}$ (turn stays with $P^+$) and with probability $1/2$ to $s_{2,-,-}^{(2,\mathrm{lin})}$ (turn switches to $P^-$). In each linear state, the active player samples a predecessor according to the Gibbs policy from the immediate log-payoff and game value. Example: along $s_{2,+,+}^{(2,\mathrm{lin})} \to s_{1,+}^{(1,\mathrm{act})}$, the player on the positive stream state gets immediate payoff $\log W^{(2,+)}_{2,3}=\log 7$ and $P^-$ gets $-\infty$. Edges with zero routed mass have payoff $\log W^{(2,-)}_{2,3}=\log 0 = -\infty$. For readability, the $\pm$ activation copies in layer~$1$ are collapsed, and the intermediate payoff $\log 2$ on the transition from position~$2$ to $2$ is omitted.}
    \label{fig:Routing-Game-illustration}
\end{figure}

In the following let us denote
\begin{align}\label{eq:routing-sigma-def}
    \Sigma_j^{(l,\sigma)}
    \coloneqq \epsilon +
    \begin{cases}
        \sum_i [x_iW_{ij}^{(1)}]^\sigma, & l = 1,\\[0.35em]
        \sum_i a_i^{(l-1)} W_{ij}^{(l,\sigma)}, & l \ge 2,
    \end{cases}
\end{align}
so that $z_j^{(l)} = \Sigma_j^{(l,+)} - \Sigma_j^{(l,-)}$.

\begin{theorem}[SPNE and Forward Equivalence of the Routing Game]\label{thm:routing-forward}
\leavevmode
\begin{description}
\item[\textnormal{(i) \emph{SPNE existence for every $\tau > 0$ and $\epsilon \ge 0$.}}]
For every $\tau > 0$ and every $\epsilon \ge 0$, the entropy-regularized two-player Routing Game of Definition~\ref{def:routing-recursion} (with linear-state entropy weight $\tau\,\mathcal{H}$ and outside-option stabiliser $\epsilon$) admits a subgame-perfect Nash equilibrium (see~\eqref{eq:routing-spne-linear}--\eqref{eq:routing-spne-stop} in the proof for the explicit form of the equilibrium policy). We write $\cdot^\star$ for equilibrium quantities throughout.

\item[\textnormal{(ii) \emph{Forward equivalence at the anchor $\tau=1$, $\epsilon\geq 0$.}}]
At $\tau=1$ and $\epsilon \geq 0$, the equilibrium quantities recover the network forward pass: for every layer $l$, neuron $j$, player labels $p,q,r$, and stream sign $\sigma\in\{+,-\}$,
\begin{align}
    V_r\bigl(s_{j,q,\sigma}^{(l, lin)}\bigr) &= \delta_{r,q} \, \ell(\Sigma_j^{(l,\sigma)}), \label{eq:routing-forward-linear}\\
    Q_r\left( s_{j,p}^{(l, act)}, \text{continue} \right) &= \ell \left( \xi_{r,p} \cdot z^{(l)}_j \right), \label{eq:routing-forward-continue} \\
    V_r\bigl(s_{j,p}^{(l, act)}\bigr) &= \ell\bigl( \delta_{r,p}\ \cdot a_j^{(l)} \bigr), \label{eq:routing-forward-act}\\
    \pi_p^\star\bigl(s_{j,p}^{(l, act)}\bigr) &= \1[z_j^{(l)} > 0]. \label{eq:routing-forward-stop}
\end{align}
\end{description}
\end{theorem}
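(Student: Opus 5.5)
Part (i) and part (ii) come out of a single backward induction over the finite layered DAG of Definition~\ref{def:routing-recursion}, in the same pattern as Theorem~\ref{thm:forward-stopping}. For (i) I build the equilibrium policy explicitly, layer by layer from $l=1$ upward. Assume the values $\mathbf{V}$ are already fixed at all states below layer $l$. Then each linear routing state $s_{j,q,\sigma}^{(l,\mathrm{lin})}$ is a one-shot, strictly concave maximisation of $\sum_a\nu(a)R_q(\cdot,a)+\tau\mathcal H(\nu)$ over the augmented simplex. By the Legendre--Fenchel identity~\eqref{eq:legendre-fenchel-app} its unique maximiser is the Gibbs policy $\nu^\star(a)\propto\exp(R_q(\cdot,a)/\tau)$, and the outside option gets weight $\epsilon^{1/\tau}$. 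The optimal value is the closed form in~\eqref{eq:routing-linear-value}. The degenerate case where every $R_q=-\infty$ and $\epsilon=0$ needs separate treatment: there any $\nu$ is optimal, and I fix the policy to be the outside option. At an activation state the player compares $Q_p(\cdot,\mathrm{continue})$ with $-\infty$ and continues iff the former is finite, with ties resolved to stop. A policy that is optimal in every subgame given optimal continuation play is an SPNE by the one-shot deviation principle on a finite horizon. This gives (i) for every $\tau>0$ and $\epsilon\ge0$.

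For (ii) I fix $\tau=1$ and prove~\eqref{eq:routing-forward-linear}--\eqref{eq:routing-forward-stop} by induction on $l$. I would also carry the claim $V_{p'}(s^{(l,\mathrm{act})}_{j,p})\in\{-\infty,\omega\}$, which is already recorded in the definition.

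\textbf{Base case} ($l=1$). The right-hand side of~\eqref{eq:routing-linear-value} at $\tau=1$ is $\ell(\epsilon+\sum_i[x_iW^{(1)}_{ij}]^\sigma)=\ell(\Sigma^{(1,\sigma)}_j)$, using $\exp\ell(z)=\relu(z)$. The opponent's value is $-\infty=\ell(0)$, so~\eqref{eq:routing-forward-linear} holds with the factor $\delta_{r,q}$.

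\textbf{Inductive step, linear states} ($l\ge2$). Substitute the hypothesis $V_q(s^{(l-1,\mathrm{act})}_{i,q})=\ell(a_i^{(l-1)})$ into~\eqref{eq:routing-linear-action}. This gives $\exp R_q(\cdot,i)=W^{(l,\sigma)}_{ij}a^{(l-1)}_i$, because $a^{(l-1)}_i\ge0$ and the product of $\ell$ with the nonnegative weight behaves correctly: $\ell(0)$ becomes mass $0$. Summing with $\epsilon$ yields $\ell(\Sigma^{(l,\sigma)}_j)$.

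\textbf{Inductive step, activation states.} Formula~\eqref{eq:routing-activation-continue} gives $Q_p(\cdot,\mathrm{continue})=\ell(\exp V_p(s_+)-\exp V_{p}(s_-))$. Here $s_-$ has $p'$ in turn, so I need the opponent-perspective value there. I read the definition's sign convention as "the opponent's gains count as the active player's losses", and I must pin that convention down so that the subtracted term is $\exp V_{p'}(s_-)=\Sigma^{(l,-)}_j$. With it, $Q_p=\ell(\Sigma^{+}-\Sigma^{-})=\ell(z^{(l)}_j)$, and for $r=p'$ the difference flips to $\ell(-z^{(l)}_j)$, which is~\eqref{eq:routing-forward-continue}. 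The activation rule then gives $\pi^\star=\1[z^{(l)}_j>0]$ with stop-at-ties: $\ell(z)$ is finite iff $z>0$, it equals $-\infty$ at $z=0$, and it equals $\omega$ for $z<0$. Hence $V_p=\ell(\relu(z))=\ell(a_j^{(l)})$ and $V_{p'}\in\{-\infty,\omega\}$, matching~\eqref{eq:routing-forward-act} and closing the induction.

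I expect the main obstacle to be the $\epsilon>0$ case. There $\Sigma$ includes $\epsilon$ on both streams, so $\Sigma^{+}-\Sigma^{-}$ only equals $z$ because the two $\epsilon$'s cancel. This holds exactly as long as both branches always carry the outside option. I would check this first, together with the consistency of the extended-arithmetic conventions for $\omega$ and $-\infty$ under the sign flip. The remaining steps are the routine Gibbs computations above.
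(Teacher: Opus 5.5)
Your proposal is correct and follows the paper's proof essentially step for step. For (i) you use a backward induction over the layered DAG, with the Legendre--Fenchel/Gibbs best response at linear states and a hard stop/continue comparison at activation states; for (ii) you induct at $\tau=1$ on $V_q(s^{(l-1,\mathrm{act})}_{i,q})=\ell(a_i^{(l-1)})$, obtaining $\ell(\Sigma_j^{(l,\sigma)})$ at linear states and $\ell(\pm z_j^{(l)})$ for the continuation advantages. The obstacles you flag are already settled by the definitions: $\epsilon$ enters both $\Sigma_j^{(l,+)}$ and $\Sigma_j^{(l,-)}$ by \eqref{eq:routing-sigma-def}, and \eqref{eq:routing-activation-continue} already subtracts $\exp V_{r'}(s_-)$, so no extra sign convention is needed. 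Your rule ``continue iff $Q_p$ is finite'' is also the consistent log-space form of the paper's $\1[Q_p>0]$, since stopping pays $\ell(0)=-\infty$ rather than $0$.
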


\begin{remark}[Uniqueness]\label{rem:routing-spne-uniqueness}
The linear-state Gibbs best response~\eqref{eq:routing-spne-linear} is the unique maximiser of the strictly concave entropy-regularised objective. At activation states the active player is indifferent whenever $Q_p(s_{j,p}^{(l,\mathrm{act})}, \mathrm{continue}) = 0$: both stopping and continuing are best responses, so multiple SPNEs can coexist under different tie-breaking. The theorem specifies the tie-breaker $\pi_p^\star = \1[Q_p(\cdot,\mathrm{continue}) > 0]$, which at $\tau=1$ reduces to $\1[z_j^{(l)} > 0]$. Just like in the \ac{SG} (see Remark \ref{rem:adf-spne-uniqueness}), we refer to this by SPNE by \textbf{stop-at-tie-breaks} equilibrium.
\end{remark}

\begin{proof}
\textbf{SPNE existence for every $\tau > 0$.} The layered state graph is a finite DAG: linear routing states at layer $l$ transition to activation states at layer $l-1$ (or to terminal input positions at $l=1$), and activation states at layer $l$ transition to linear routing states at the same layer $l$ or to the cemetery $\perp$. We construct the equilibrium by backward induction on the layer index $l$.

At layer $l=1$ the linear routing subgames have no unresolved game values, so the action values $Q_q$ in~\eqref{eq:routing-linear-action} are fully specified by the input $x$. At each linear state the active player solves $\max_{\nu \in \Delta(\mathrm{Pred}_j^{(1)})} \{ \sum_i \nu(i)\, Q_q(s_{j,q,\sigma}^{(1, lin)}, i) + \tau\, \mathcal{H}(\nu) \}$, which is strictly concave on the simplex. Identity~\eqref{eq:legendre-fenchel-app} yields the unique Gibbs best response, and simplifying $\exp \circ \ell$ in~\eqref{eq:routing-linear-action} gives the explicit form
\begin{align}\label{eq:routing-spne-linear}
    \pi_q^\star\!\bigl(i \,\big|\, s_{j,q,\sigma}^{(l, lin)}\bigr)
    \propto
    \begin{cases}
        \bigl(W_{ij}^{(l,\sigma)}\bigr)^{1/\tau}\, \exp\!\bigl( V_q(s_{i,q}^{(l-1, act)})/\tau \bigr), & l \ge 2, \\[0.25em]
        \bigl([x_i W_{ij}^{(1)}]^\sigma\bigr)^{1/\tau}, & l = 1,
    \end{cases}
\end{align}
with zero mass on routes of zero or wrong sign and closed-form value $V_q(s_{j,q,\sigma}^{(1,lin)}) = \tau \log \sum_i \exp(Q_q(s_{j,q,\sigma}^{(1,lin)},i)/\tau)$. With these values in hand, the activation-state subgame at layer $l=1$ is a binary max over $\{\mathrm{stop}, \mathrm{continue}\}$ with payoffs $0$ and~\eqref{eq:routing-activation-continue} respectively; the active player's best response is the pure stop/continue strategy
\begin{align}\label{eq:routing-spne-stop}
    \pi_p^\star\bigl(s_{j,p}^{(l, act)}\bigr)
    = \1\!\bigl[ Q_p(s_{j,p}^{(l, act)}, \mathrm{continue}) > 0 \bigr].
\end{align}

Assume the equilibrium has been constructed at all layers $\le l-1$, yielding definite values $V_r(s_{i,q}^{(l-1,act)})$. At layer $l$ the linear-state action values $Q_q$ in~\eqref{eq:routing-linear-action} are then specified, and the same strict-concavity argument gives the Gibbs best response~\eqref{eq:routing-spne-linear} with the analogous log-sum-exp value. The layer-$l$ activation-state best response is again the binary hard comparison~\eqref{eq:routing-spne-stop}. Because each subgame's best response is taken to be optimal given the game values from lower layers, the composed strategy profile is subgame-perfect. This proves existence of an SPNE for every $\tau > 0$. The argument is unchanged under the $\epsilon \ge 0$ outside-option augmentation of Definition~\ref{def:routing-recursion}: the linear-state action set gains one extra element $\perp_{\mathrm{lin}}$ with constant log-score $\log \epsilon$ (finite for $\epsilon > 0$, $-\infty$ for $\epsilon = 0$), the strict-concavity / Legendre--Fenchel step goes through verbatim on the augmented simplex, and the closed-form value picks up the extra term $\epsilon^{1/\tau}$ inside the log-sum-exp. Hence, an SPNE exists for every $(\tau, \epsilon) \in \RR_{>0} \times \RR_{\ge 0}$.

\textbf{Forward equivalence at $\tau = 1$.} We now specialise to $\tau = 1$ and show by induction on $l$ that the equilibrium quantities recover the network forward pass.

\textbf{Base case ($l=1$):} At the fused input layer,
\begin{align}
    Q_q \bigl( s_{j,q,\sigma}^{(1, lin)}, i \bigr) = \ell\!\bigl([x_iW_{ij}^{(1)}]^\sigma\bigr),
    \qquad
    Q_{q'} \bigl( s_{j,q,\sigma}^{(1, lin)}, i \bigr) = -\infty.
\end{align}
The active player $q$ therefore solves the entropy-regularized one-step problem
\begin{align}
    \max_{\nu \in \Delta}
    \left\{
        \sum_a \nu(a)\, Q_q \bigl( s_{j,q,\sigma}^{(1, lin)}, a \bigr)
        + \mathcal{H}(\nu)
    \right\}.
\end{align}
By~\eqref{eq:legendre-fenchel-app}, this subgame has a best response of Gibbs form~\eqref{eq:routing-spne-linear}, and the corresponding value is
\begin{align}
    V_q \bigl( s_{j,q,\sigma}^{(1, lin)} \bigr)
    &=
    \ell \left( \sum_a \exp\!\bigl(Q_q(s_{j,q,\sigma}^{(1, lin)}, a)\bigr) \right)
    =
    \ell \left( \sum_i [x_iW_{ij}^{(1)}]^\sigma + \epsilon \right)
    =
    \ell(\Sigma_j^{(1,\sigma)}).
\end{align}

At the activation state the active player's continuation advantage is
\begin{align}
    Q_p\left( s_{j,p}^{(l,\text{act})}, \text{continue} \right)
        = \ell \left( \Sigma_j^{(1,+)} - \Sigma_j^{(1,-)} \right)
        = \ell (z_j^{(l)}),
\end{align}
whereas for the opponent
\begin{align}
    Q_{p'}\left( s_{j,p}^{(l,\text{act})}, \text{continue} \right)
        = \ell \left( \Sigma_j^{(1,-)} - \Sigma_j^{(1,+)} \right)
        = \ell (-z_j^{(l)}),
\end{align}
confirming~\eqref{eq:routing-forward-continue}.
Hence, the best response is precisely~\eqref{eq:routing-spne-stop} and~\eqref{eq:routing-forward-act} holds.

\textbf{Inductive step ($l \ge 2$):} Assume $V_q(s_{i,q}^{(l-1, act)}) = \ell(a_i^{(l-1)})$ for all predecessors $i$ and both player labels $q$. Since standard activations satisfy $a_i^{(l-1)} \ge 0$, the linear routing-state action values become
\begin{align}
    R_q \bigl( s_{j,q,\sigma}^{(l, lin)}, i \bigr)
    &=
    \ell\!\bigl(W_{ij}^{(l,\sigma)}\bigr) + \ell\!\bigl(a_i^{(l-1)}\bigr)
    =
    \ell\!\bigl(a_i^{(l-1)} W_{ij}^{(l,\sigma)}\bigr).
\end{align}
Applying~\eqref{eq:legendre-fenchel-app} again to the linear routing subgame gives the Gibbs best response~\eqref{eq:routing-spne-linear} and
\begin{align}
    V_q \bigl( s_{j,q,\sigma}^{(l, lin)} \bigr)
    &=
    \ell \left( \sum_a \exp\!\bigl(Q_q(s_{j,q,\sigma}^{(l, lin)}, a)\bigr) \right)
    =
    \ell \left( \sum_i a_i^{(l-1)} W_{ij}^{(l,\sigma)} + \epsilon \right)
    =
    \ell(\Sigma_j^{(l,\sigma)}),
\end{align}
At the activation state the active player's continuation advantage is again $z_j^{(l)}$
\begin{align}
    Q_p \bigl( s_{j,p}^{(l, act)}, \text{continue} \bigr)
    &=
    \ell \left( \Sigma_j^{(l,+)} - \Sigma_j^{(l,-)} \right)
    =
    \ell(z_j^{(l)}),
\end{align}
whereas for the opponent
\begin{align}
    Q_{p'}\left( s_{j,p}^{(l,\text{act})}, \text{continue} \right)
        = \ell \left( \Sigma_j^{(1,-)} - \Sigma_j^{(1,+)} \right)
        = \ell (-z_j^{(l)}),
\end{align}
again confirming~\eqref{eq:routing-forward-continue}.
Thus, the best response is~\eqref{eq:routing-spne-stop}. Finally, the same three cases as in the base step give
\begin{align}
    V_r \bigl( s_{j,p}^{(l, act)} \bigr) = \ell\bigl(\1[z_j^{(l)} > 0]\, \Sigma_j^{(l, r \oplus p)}\bigr),
\end{align}
proving~\eqref{eq:routing-forward-act}. Since each subgame has an optimal action chosen by the player in turn, the resulting strategy profile is subgame-perfect. This completes the induction.
\end{proof}

\subsubsection{Log-Space Addition Subgame, Skip Connections, and Max Pooling}\label{app:routing-skip}

Modern architectures like ResNets, Vision Transformers, and CNNs with pooling contain structural operators beyond single-parent affine maps. To natively handle these in the log-space Routing Game, we define an \textbf{Addition Subgame} and a deterministic \textbf{Max-Pooling Subgame}.

\begin{definition}[Addition Oracle]\label{def:routing-addition}
For a residual node $z = x + y$, we introduce an intermediate addition state $s_{z,p}^{\mathrm{add}}$, where $p \in \{+,-\}$ is the active player. We denote the states modeling the output of the operands $x$ and $y$ with active player $p$, by $s_{x,p}$ and $s_{y,p}$ respectively. From $s_{z,p}^{\mathrm{add}}$, an unobserved Oracle forces a uniform transition:
\begin{itemize}
    \item With probability $1/2$, the game transitions to state $s_{x,p}$ (operand 1).
    \item With probability $1/2$, the game transitions to state $s_{y,p}$ (operand 2).
\end{itemize}
To maintain the conservation of the exponentiated value (activations), the environment applies an Oracle structural discount of $\gamma_{\mathrm{O}} = 2$. Distinct from the random transition following activation states, the active player $p$ retains the turn in both transitions.
\end{definition}

\begin{proposition}[Forward Value of Addition]\label{prop:routing-addition-val}
Under the Oracle split, applying the structural discount $\gamma_{\mathrm{O}} = 2$, the immediate log-return for player $r \in \{p, p'\}$ transitioning to an operand $k \in \{x, y\}$ is defined as:
\begin{align}
    R_r(s_{z,p}^{\mathrm{add}}, k) \coloneqq V_r(s_{k,p}).
\end{align}
The optimal log-value at the addition state is the log-expectation under the uniform Oracle kernel:
\begin{align}
    V_r(s_{z,p}^{\mathrm{add}}) &= \ell\!\left( \gamma_O \cdot \left( \tfrac{1}{2} \exp\!\left( R_r(s_{z,p}^{\mathrm{add}}, x) \right) + \tfrac{1}{2} \exp\!\left( R_r(s_{z,p}^{\mathrm{add}}, y) \right) \right) \right) \nonumber \\
    &= \ell\!\bigl( \exp(V_r(s_{x,p})) + \exp(V_r(s_{y,p})) \bigr).
\end{align}
If the operand states perfectly recover their respective forward activations, meaning $V_p(s_{x,p}) = \ell(a_x)$ and $V_p(s_{y,p}) = \ell(a_y)$, then $V_p(s_{z,p}^{\mathrm{add}}) = \ell(a_x + a_y) = \ell(a_z)$, maintaining exact forward consistency.
\end{proposition}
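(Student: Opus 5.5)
The plan is to verify the two displayed identities directly from the log-space conventions. Then I will obtain the forward-consistency consequence by substituting the operand hypothesis, mirroring Proposition~\ref{prop:addition-equivalence} for the Stopping Game. The only ingredients are the definition of the addition state (Definition~\ref{def:routing-addition}), the convention $\exp(\ell(x)) = \relu(x)$ together with $\exp(-\infty)=\exp(\omega)=0$, and the way the log-space recursion interprets a random transition with structural discount. For the latter I follow the activation-state continuation formula~\eqref{eq:routing-activation-continue}: pass to linear space by exponentiating, average under the kernel, multiply by $\gamma_{\mathrm{O}}$, and return via $\ell$.

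\textbf{First step.} Since the Oracle transition involves no choice by the active player, there is no maximisation and no entropy term; the value is simply the log of the discounted expected exponentiated return. With $R_r(s^{\mathrm{add}}_{z,p},k) = V_r(s_{k,p})$, we get $\exp(R_r(\cdot,k)) = \exp(V_r(s_{k,p}))$. Writing the kernel probabilities $\tfrac12$ and the discount $\gamma_{\mathrm{O}}=2$ gives the first line of the display, and the factor $2\cdot\tfrac12 = 1$ collapses it to $\ell(\exp V_r(s_{x,p}) + \exp V_r(s_{y,p}))$. I would also check the boundary cases. If both operand values are $-\infty$ or $\omega$, the sum is $0$ and $\ell(0)=-\infty$, which is consistent with the cemetery convention. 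For the inactive player, whose values lie in $\{-\infty,\omega\}$, the result is again $-\infty$, matching the claim $V_{p'}\in\{-\infty,\omega\}$ used elsewhere.

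\textbf{Second step (forward consistency).} Substitute $V_p(s_{x,p})=\ell(a_x)$ and $V_p(s_{y,p})=\ell(a_y)$, and use $\exp(\ell(a)) = \relu(a)$. This gives $V_p(s^{\mathrm{add}}_{z,p}) = \ell(\relu(a_x)+\relu(a_y))$. The main (mild) obstacle is identifying this with $\ell(a_x+a_y)$, which requires $\relu(a_x)+\relu(a_y)=a_x+a_y$, i.e.\ non-negative operands.

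I would handle this exactly as in the inductive step of Theorem~\ref{thm:routing-forward}, where the operands are post-ReLU activations (or outputs of earlier addition nodes, which are sums of non-negative quantities) and hence $a_x,a_y\ge 0$. I would state this as the standing assumption under which the operand states "recover their forward activations", since $\ell$ would otherwise map negative operands to $\omega$ and lose information. With $a_x,a_y\ge0$ we get $\relu(a_x)+\relu(a_y) = a_x+a_y = a_z$, which concludes. If signed operands must be allowed, the fallback is to route the residual through the $\pm$ stream structure, so that each stream carries the non-negative parts separately, as in the Stopping Game version.
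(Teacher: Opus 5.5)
Your proposal is correct and is essentially the paper's argument, which is just the displayed computation (the paper gives no separate proof): $\gamma_{\mathrm{O}}\cdot\tfrac12 = 1$ collapses the log-expectation, and substituting $\exp(\ell(a)) = a$ gives $\ell(a_x + a_y)$, exactly as in the Stopping-Game analogue Proposition~\ref{prop:addition-equivalence}. Your explicit restriction to $a_x, a_y \ge 0$ is the right reading: the paper uses $\exp(\ell(a)) = a$ without stating it, and the identity genuinely fails for a negative operand, which $\ell$ sends to $\omega$ and which then contributes $0$ to the sum.
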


\begin{definition}[Max-Pooling Decision State]\label{def:routing-maxpool}
For a pooled scalar output $z = \max\{x_1,\dots,x_m\}$ over non-negative activations, we introduce a pooling state $s_{z,p}^{\max}$, where $p \in \{+,-\}$ is the active player. At this state, player $p$ selects a predecessor from the full admissible set of operand states $\mathcal{A}(s_{z,p}^{\max}) = \{s_{x_1,p},\dots,s_{x_m,p}\}$. The state transitions deterministically to the chosen operand state $s_{x_k,p}$, and player $p$ retains the turn. Since there is no branch duplication, the Oracle structural discount is $\gamma_{\mathrm{O}} = 1$. The transition log-return for player $r \in \{p, p'\}$ is simply $R_r(s_{z,p}^{\max}, x_k) \coloneqq V_r(s_{x_k,p})$.
\end{definition}

\paragraph{Tie-breaking at the pooling state.}
Whenever multiple operands $x_k$ attain the maximum, the active player is indifferent over the corresponding actions, and any pure best response over the tied indices yields the same forward value but a different equilibrium policy and hence a different occupation measure on the operand states. The \emph{stop-at-tie-breaks equilibrium}, fixed by the boundary conventions of Appendix~\ref{app:notation-identities}, resolves this by the convention $k^\star \coloneqq \min\{k : x_k = \max_l x_l\}$, i.e., the active player picks the smallest index among the operands attaining the maximal payoff. All recovery and occupation-measure statements below are stated for this equilibrium.

\begin{proposition}[Forward Value of Max Pooling]\label{prop:routing-maxpool-val}
Let
\begin{align}
    k^\star \in \arg\max_{k \in \{1,\dots,m\}} a_{x_k}
\end{align}
(ties resolved by a fixed implementation rule), where $a_{x_k}$ are the forward activations. Assuming the operand states recover the activations $V_p(s_{x_k,p}) = \ell(a_{x_k})$, the active player $p$ maximizes their value by choosing $s_{x_{k^\star},p}$. At the equilibrium of the subgame rooted at $s_{z,p}^{\max}$, the value-maximizing deterministic choice yields:
\begin{align}
    V_p(s_{z,p}^{\max}) = \max_{k \in \{1,\dots,m\}} R_p(s_{z,p}^{\max}, x_k) = V_p(s_{x_{k^\star},p}) = \ell(a_{x_{k^\star}}) = \ell(a_z).
\end{align}
Thus, the pooled node inherits the winning routed mass exactly, ensuring forward consistency.
\end{proposition}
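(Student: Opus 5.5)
The statement to prove is Proposition~\ref{prop:routing-maxpool-val}. I would argue directly from Definition~\ref{def:routing-maxpool} and the conventions for $\ell$, in the same way as the Stopping-Game analogue, Proposition~\ref{prop:maxpool-equivalence}.

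\textbf{Step 1: the pooling state is a pure selection problem.} At $s_{z,p}^{\max}$ the active player picks one operand. There is no entropy term and no $\tfrac12$ branching, and $\gamma_{\mathrm{O}}=1$. The transition is deterministic, so the log-space Bellman value of choosing $x_k$ is just $R_p(s_{z,p}^{\max},x_k)=V_p(s_{x_k,p})$. The equilibrium value is therefore $V_p(s_{z,p}^{\max})=\max_k R_p(s_{z,p}^{\max},x_k)$, attained by a pure best response. Subgame perfection holds automatically: the operand subgames are already in equilibrium by hypothesis, and the choice at the root is optimal given their values.

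\textbf{Step 2: identify the maximiser.} By hypothesis $R_p(s_{z,p}^{\max},x_k)=\ell(a_{x_k})$. I would then check that $\ell$ is monotone non-decreasing on $[0,\infty)$ in the extended order. For $a>0$, $\ell(a)=\log a$ is strictly increasing. At $a=0$, $\ell(0)=-\infty$ lies below every finite value. The value $\omega$ never occurs, because the operands are non-negative activations.

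It follows that $\arg\max_k \ell(a_{x_k})\supseteq\arg\max_k a_{x_k}$. Equality holds when the maximum is positive. If all operands are zero, every index ties at $-\infty$. In either case the tie-breaking convention $k^\star=\min\{k: a_{x_k}=\max_l a_{x_l}\}$ selects a maximiser of both. Hence $V_p(s_{z,p}^{\max})=V_p(s_{x_{k^\star},p})=\ell(a_{x_{k^\star}})=\ell(\max_k a_{x_k})=\ell(a_z)$.

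\textbf{Step 3: opponent perspective (optional).} Under the chosen action, the opponent's value is $V_{p'}(s_{x_{k^\star},p})\in\{-\infty,\omega\}$. This is consistent with~\eqref{eq:routing-forward-act}.

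\textbf{Main obstacle.} The only delicate point is the degenerate all-zero (tied $-\infty$) case and the monotonicity of $\ell$ there. I would handle it by invoking the fixed smallest-index convention, so that the selected $k^\star$ coincides with the forward argmax even when log-values cannot discriminate between operands.
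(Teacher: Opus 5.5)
Your argument is correct and follows the paper's own reasoning, which is given inline in the statement and mirrors the proof of Proposition~\ref{prop:maxpool-equivalence}: a deterministic selection with $\gamma_{\mathrm{O}}=1$, $R_p(s_{z,p}^{\max},x_k)=V_p(s_{x_k,p})=\ell(a_{x_k})$, monotonicity of $\ell$ on non-negative activations, and smallest-index tie-breaking. One sharpening: $\ell$ is strictly increasing on $[0,\infty)$ in the extended order, so $\arg\max_k \ell(a_{x_k})=\arg\max_k a_{x_k}$ holds exactly and not merely as an inclusion (in the all-zero case both are the full index set), which means the degenerate case you flag needs no separate treatment.
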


\subsection{Backward Path Integral and \texorpdfstring{$\alpha\beta$-\ac{LRP} Recovery}{alpha-beta LRP Recovery}}\label{app:routing-attribution}

We now show that the projected discounted occupation measure induced by the SPNE on the layered routing graph realises the standard $\alpha\beta$-\ac{LRP}-$\epsilon$ relevance vector. To keep the game side and the attribution side strictly separate, we use distinct symbols throughout this subsection:
\begin{itemize}
    \item the \emph{game-side} object is the discounted occupation measure $\Gamma_x(\,\cdot\,)$ on the states of the routing graph (Section~\ref{sec:routing}), now driven by the SPNE trajectory measure $\mu_x$ rooted at $s_{u,+}^{(L,\mathrm{act})}$ with unit initial mass: $\Gamma_x(s_{u,+}^{(L,\mathrm{act})}) = 1$ and $\Gamma_x(s_{j,p}^{(L,\mathrm{act})}) = 0$ for every other output activation state. With non-negative branch discounts (cf.~\eqref{eq:routing-edge-discount}) $\Gamma_x$ is itself non-negative; we then form the per-neuron difference of the two player copies,
    \begin{align}\label{eq:routing-Gamma-pair}
        \Gamma^{(l)}_j \;\coloneqq\;
        \Gamma_x\!\bigl(s_{j,+}^{(l,\mathrm{act})}\bigr)
        \;-\;
        \Gamma_x\!\bigl(s_{j,-}^{(l,\mathrm{act})}\bigr).
    \end{align}
    \item the \emph{attribution-side} object is the standard $\alpha\beta$-\ac{LRP}-$\epsilon$ relevance vector $R^{(l)}$ of~\citet{Bach2015}, a deterministic backward DP on the network's linear layers seeded by a chosen output mass $R_u^{(L)}$ and recursing via~\eqref{eq:lrp-eps-recursion} below.
\end{itemize}
The recovery theorem (Theorem~\ref{thm:ab-recovery}) shows that, under the SPNE policy and the following trajectory discount~\eqref{eq:routing-edge-discount}, the per-neuron sum $\Gamma^{(l)}_j$ satisfies the same recursion as $R^{(l)}_j$, hence equals it.

\paragraph{Trajectory discount.}
We track sign-dependent branch discounts governed by the stream-balance mode $(\alpha, \beta)$ ($\alpha-\beta=1$). Let $\tau = (\tau_0,\tau_1,\dots)$ be a backward trajectory on linear routing , activation, addition or max-pooling states. Define the per-state discount $\gamma$ as a non-negative function of the entered state,
\begin{align}\label{eq:routing-edge-discount}
    \gamma(s) \coloneqq
    \begin{cases}
        2\alpha, & s = s_{j,q,+}^{(r, lin)} \text{ for some } j,q,r,\\
        2\beta, & s = s_{j,q,-}^{(r, lin)} \text{ for some } j,q,r,\\
        1, & \text{otherwise (activation, addition Oracle, max-pool).}
    \end{cases}
\end{align}
The cumulative discount along the trajectory up to time $t$ is the product
\begin{align}\label{eq:routing-discount}
    d_t(\tau) \coloneqq \prod_{m=1}^{t} \gamma(\tau_m),
    \qquad d_0(\tau) = 1 \text{ (empty product).}
\end{align}
The signed minus of the $\alpha\beta$-\ac{LRP} recursion is produced by the difference~\eqref{eq:routing-Gamma-pair} between the two player copies.

\begin{remark}[Choice of starting player]\label{rem:starting-player}
Rooting $\mu_x$ at $s_{u,-}^{(L,\mathrm{act})}$ instead of $s_{u,+}^{(L,\mathrm{act})}$ swaps the two output-layer occupations and therefore negates $\Gamma^{(l)}_j$ at every layer. We fix the canonical convention that the game starts at the $+$-state.
\end{remark}

\begin{remark}[Equivalent signed-discount sum formulation]\label{rem:signed-discount-alt}
The same recovery theorem can be phrased with the signed trajectory discount $\gamma(s_{j,q,+}^{(r,\mathrm{lin})}) = 2\alpha$, $\gamma(s_{j,q,-}^{(r,\mathrm{lin})}) = -2\beta$ and the per-neuron \emph{sum} $\Gamma^{(l)}_j = \Gamma_x(s_{j,+}^{(l,\mathrm{act})}) + \Gamma_x(s_{j,-}^{(l,\mathrm{act})})$ in place of~\eqref{eq:routing-Gamma-pair}. We have chosen the non-negative-discount version because it is easier to comprehend---$\Gamma_x$ remains a genuine non-negative occupation measure, and the LRP sign is generated by an explicit algebraic difference rather than absorbed into the trajectory weights.
\end{remark}

\paragraph{Attribution-side: \texorpdfstring{$\alpha\beta$-\ac{LRP}-$\epsilon$}{alpha-beta-LRP-epsilon} relevance.}
Independently of the game, the standard $\alpha\beta$-\ac{LRP}-$\epsilon$ relevance vector $R^{(l)}$ is defined recursively from $R^{(L)}_j = R_u^{(L)}\,\delta_{j,u}$ at the output by
\begin{align}\label{eq:lrp-eps-recursion}
    R^{(l-1)}_i \;=\; \sum_j \left(
        \alpha \frac{[a_i^{(l-1)} W_{ij}^{(l)}]^+}{\epsilon + \Sigma_j^{(l,+)}}
        -
        \beta \frac{[a_i^{(l-1)} W_{ij}^{(l)}]^-}{\epsilon + \Sigma_j^{(l,-)}}
    \right) R^{(l)}_j,
\end{align}
where $\Sigma_j^{(l,\pm)} = \sum_k [a_k^{(l-1)} W_{kj}^{(l)}]^\pm$ (and the fused input layer uses $a^{(0)} = x$).

\begin{theorem}[Recovery of Multi-Path $\alpha\beta$-\ac{LRP}]\label{thm:ab-recovery}
For every $\epsilon \ge 0$, under the stop-at-tie-breaks equilibrium fixed by the boundary conventions of Appendix~\ref{app:notation-identities} (uniquely fixing the SPNE at activation states with $z = 0$ and at max-pooling states with multiple maximisers) at the anchor temperature $\tau = 1$ and stabiliser $\epsilon$, with the cumulative trajectory discount~\eqref{eq:routing-discount} and the occupation measure $\Gamma_j^{(l)}$ from~\eqref{eq:routing-Gamma-pair}:
\begin{enumerate}
    \item \textbf{Linear Map.} The per-neuron sum $\Gamma^{(l)}_j$ of~\eqref{eq:routing-Gamma-pair} satisfies
    \begin{align}\label{eq:lrp-eps-recovery}
        \Gamma^{(l-1)}_i
        \;=\; \sum_j \left(
            \alpha \frac{[a_i^{(l-1)} W_{ij}^{(l)}]^+}{\epsilon + \Sigma_j^{(l,+)}}
            -
            \beta \frac{[a_i^{(l-1)} W_{ij}^{(l)}]^-}{\epsilon + \Sigma_j^{(l,-)}}
        \right) \Gamma^{(l)}_j.
    \end{align}
    Because~\eqref{eq:lrp-eps-recovery} and~\eqref{eq:lrp-eps-recursion} share their computational step, $\Gamma^{(l)}_j \cdot R_u^{(L)} = R^{(l)}_j$ at every layer; the same formula covers the fused input layer with $a^{(0)} = x$. At $\epsilon = 0$ this reduces to the conservative $\alpha\beta$-\ac{LRP} redistribution; at $\epsilon > 0$ the formula coincides with the standard $\alpha\beta$-\ac{LRP}-$\epsilon$ rule of~\citet{Bach2015}.
    \item \textbf{Residual Addition.} Let $z = x_{\mathrm{op}} + y_{\mathrm{op}}$ be an addition node with addition state $s^{\mathrm{add}}_{z,p}$ and operand states $s_{x_{\mathrm{op}},p}, s_{y_{\mathrm{op}},p}$ in the notation of Definition~\ref{def:routing-addition}. For every player label $p \in \{+,-\}$,
    \begin{align}
        \Gamma_x\!\bigl(s_{x_{\mathrm{op}},p}\bigr) \;=\; \tfrac12\,\Gamma_x\!\bigl(s^{\mathrm{add}}_{z,p}\bigr),
        \qquad
        \Gamma_x\!\bigl(s_{y_{\mathrm{op}},p}\bigr) \;=\; \tfrac12\,\Gamma_x\!\bigl(s^{\mathrm{add}}_{z,p}\bigr),
    \end{align}
    so the operand pair carries the full addition-state mass with no duplication.
    \item \textbf{Max Pooling.} For a pooled output $z = \max\{x_1,\dots,x_m\}$ with winner $k^\star$ and pooling state $s_{z,p}^{\max}$ (Definition~\ref{def:routing-maxpool}), the deterministic value-maximising transition concentrates all mass on the winner: for every player label $p \in \{+,-\}$,
    \begin{align}
        \Gamma_x\!\bigl(s_{x_{k^\star},p}\bigr) \;=\; \Gamma_x\!\bigl(s_{z,p}^{\max}\bigr),
        \qquad
        \Gamma_x\!\bigl(s_{x_r,p}\bigr) \;=\; 0 \text{ for } r \neq k^\star.
    \end{align}
\end{enumerate}
\end{theorem}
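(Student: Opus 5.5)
The plan is to compute the occupation measure $\Gamma_x$ by its one-step forward (flow) recursion on the layered DAG. For every non-root state $s$,
\[
\Gamma_x(s)=\gamma(s)\sum_{s^-}\Gamma_x(s^-)\,\P^{\pi^\star}(s\mid s^-),
\]
which follows from Definition~\eqref{eq:routing-discount} because $d_t$ multiplies by $\gamma$ of the entered state. I will then read off the transition probabilities of the stop-at-tie-breaks SPNE at $\tau=1$ from Theorem~\ref{thm:routing-forward}. No new game-theoretic argument is needed: part (i) of that theorem fixes the policies, and part (ii) gives the values $V_q(s^{(l-1,\mathrm{act})}_{i,q})=\ell(a_i^{(l-1)})$ that enter the Gibbs weights.

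\textbf{Step 1 (activation to linear states).} At $s^{(l,\mathrm{act})}_{j,p}$ the equilibrium continues with probability $\1[z_j^{(l)}>0]$ by~\eqref{eq:routing-forward-stop}. It then branches with probability $\tfrac12$ each to $s^{(l,\mathrm{lin})}_{j,p,+}$, entered with discount $2\alpha$, and to $s^{(l,\mathrm{lin})}_{j,p',-}$, entered with discount $2\beta$. Hence
\[
\Gamma_x(s_{j,p,+}^{(l,\mathrm{lin})})=\alpha\,\1[z_j^{(l)}>0]\,\Gamma_x(s_{j,p}^{(l,\mathrm{act})}),
\qquad
\Gamma_x(s_{j,p',-}^{(l,\mathrm{lin})})=\beta\,\1[z_j^{(l)}>0]\,\Gamma_x(s_{j,p}^{(l,\mathrm{act})}).
\]

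\textbf{Step 2 (linear to activation states).} At $s^{(l,\mathrm{lin})}_{j,q,\sigma}$ the Gibbs policy~\eqref{eq:routing-spne-linear} at $\tau=1$, together with $V_q(s_{i,q}^{(l-1,\mathrm{act})})=\log a_i^{(l-1)}$ and $a_i^{(l-1)}\ge0$, gives
\[
\P(s^{(l-1,\mathrm{act})}_{i,q}\mid s^{(l,\mathrm{lin})}_{j,q,\sigma})=\frac{[a_i^{(l-1)}W_{ij}^{(l)}]^\sigma}{\epsilon+\Sigma_j^{(l,\sigma)}}.
\]
The remaining mass $\epsilon/(\epsilon+\Sigma_j^{(l,\sigma)})$ goes to the outside option $\perp_{\mathrm{lin}}$ and is absorbed, so it never reappears in any $\Gamma_x$. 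The player label $q$ is retained, and activation states are entered with discount $1$. Combining with Step~1 and noting that $s_{\cdot,+,-}$ is reached from $s_{\cdot,-}^{\mathrm{act}}$, I obtain
\[
\Gamma_x(s_{i,\pm}^{(l-1,\mathrm{act})})=\sum_j\Bigl(\alpha\tfrac{[a_iW_{ij}]^+}{\epsilon+\Sigma_j^{+}}\Gamma_x(s_{j,\pm}^{(l,\mathrm{act})})+\beta\tfrac{[a_iW_{ij}]^-}{\epsilon+\Sigma_j^{-}}\Gamma_x(s_{j,\mp}^{(l,\mathrm{act})})\Bigr)\1[z_j^{(l)}>0].
\]
Subtracting the $-$ copy from the $+$ copy turns this into exactly~\eqref{eq:lrp-eps-recovery}, up to the gate factor.

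\textbf{Step 3 (gate factor, seeding, and input layer).} To remove the gate, I will argue that whenever $z_j^{(l)}\le0$ at a hidden layer, we have $a_j^{(l)}=0$. Then every transition into $s_{j,\cdot}^{(l,\mathrm{act})}$ carries a factor $[a_j W]^\sigma=0$, so $\Gamma_j^{(l)}=0=R_j^{(l)}$ and the indicator is immaterial. At the output, the root has unit mass; I will need $z_u^{(L)}>0$ or else treat the degenerate case in which both sides vanish. I expect this gate bookkeeping, together with the player-label tracking through the $p\mapsto p'$ switch on the negative branch, to be the main obstacle, because it is where the LRP sign and the stop-at-tie-breaks convention interact.

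The fused input layer $l=1$ is identical, with the Gibbs weights $[x_iW_{ij}^{(1)}]^\sigma$ taken from~\eqref{eq:routing-linear-action} and routed to terminal copies. Since $\Gamma^{(L)}_u=1$ and $R^{(L)}_u=R_u^{(L)}$, a downward induction with the shared recursion gives $\Gamma^{(l)}_jR_u^{(L)}=R^{(l)}_j$.

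\textbf{Parts 2 and 3.} For addition, the Oracle kernel sends probability $\tfrac12$ to each operand, which is entered with discount $1$ under~\eqref{eq:routing-edge-discount} and keeps the player label. The flow recursion then gives $\Gamma_x(s_{x_{\mathrm{op}},p})=\Gamma_x(s_{y_{\mathrm{op}},p})=\tfrac12\Gamma_x(s^{\mathrm{add}}_{z,p})$. For max pooling, Proposition~\ref{prop:routing-maxpool-val} and the smallest-index tie-break make the transition deterministic to $k^\star$ with discount $1$. This puts all mass on $s_{x_{k^\star},p}$ and zero on the other operands, since their only inflow is through $s^{\max}_{z,p}$ in the layered DAG.
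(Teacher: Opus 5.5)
Your proposal is correct and follows essentially the same route as the paper's proof. Both use the one-step occupation recursion through the $\tfrac12$ sign-branch with discounts $2\alpha,2\beta$, the $\tau{=}1$ Gibbs policy $[a_i^{(l-1)}W_{ij}^{(l)}]^\sigma/(\epsilon+\Sigma_j^{(l,\sigma)})$ with the $\epsilon$-mass absorbed by $\perp_{\mathrm{lin}}$, differencing of the two player copies, and the same kernel arguments for the addition and max-pooling nodes. Your explicit handling of the gate $\mathbf{1}[z_j^{(l)}>0]$ tightens a step the paper drops silently by working ``on the alive support''; note only that the root case $z_u^{(L)}\le 0$ is harmless for the seed $R_u^{(L)}=f(x)=0$, not for an arbitrary output mass.
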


\begin{proof}
We prove $R_u^{(L)} \cdot \Gamma^{(l)}_j = R^{(l)}_j$ at every layer $l$ by backward induction on $l$.

\emph{Base case ($l = L$):} The trajectory measure $\mu_x$ has unit mass at $s_{u,+}^{(L,\mathrm{act})}$ by definition, so $\Gamma_x(s_{u,+}^{(L,\mathrm{act})}) = 1$ and $\Gamma_x(s_{j,p}^{(L,\mathrm{act})}) = 0$ for every $(j,p) \neq (u,+)$. By~\eqref{eq:routing-Gamma-pair},
\begin{align}
    \Gamma^{(L)}_j \;=\; \Gamma_x\!\bigl(s_{j,+}^{(L,\mathrm{act})}\bigr) - \Gamma_x\!\bigl(s_{j,-}^{(L,\mathrm{act})}\bigr) \;=\; \delta_{j,u},
\end{align}
i.e.\ the difference is $1$ at the chosen output neuron $u$ and $0$ elsewhere. Multiplying by the output mass gives $R_u^{(L)} \cdot \Gamma^{(L)}_j = R_u^{(L)}\,\delta_{j,u} = R^{(L)}_j$, matching the LRP root specification of~\eqref{eq:lrp-eps-recursion}.

\emph{Inductive step ($l \Rightarrow l-1$):} Assume $R_u^{(L)} \cdot \Gamma^{(l)}_j = R^{(l)}_j$ for every neuron $j$ in layer $l$. From the SPNE construction in Theorem~\ref{thm:routing-forward} together with the activation-game value reduction $V_p(s_{j,p}^{(l,\mathrm{act})}) = \ell(a_j^{(l)})$ (which holds for every $\epsilon \ge 0$ because the additive $\epsilon$-shifts on the two sign branches cancel in the continuation advantage), the linear-state Gibbs best response on the alive support is
\begin{align}\label{eq:lrp-eps-policy}
    \pi_{j,\sigma}^\star(\perp_{\mathrm{lin}}) = \frac{\epsilon}{\epsilon + \Sigma_j^{(l,\sigma)}},
    \qquad
    \pi_{j,\sigma}^\star(i) = \frac{a_i^{(l-1)} W_{ij}^{(l,\sigma)}}{\epsilon + \Sigma_j^{(l,\sigma)}}.
\end{align}
A trajectory reaches the layer-$(l-1)$ activation state $s_{i,r}^{(l-1, \mathrm{act})}$ from a layer-$l$ activation state through one of two routes. By the activation-state continuation kernel of Definition~\ref{def:routing-recursion}, departing from $s_{j,p}^{(l, \mathrm{act})}$ the trajectory takes the $+$ branch to $s_{j,p,+}^{(l, \mathrm{lin})}$ with probability $\tfrac12$ (player $p$ retains the turn), and the $-$ branch to $s_{j,p',-}^{(l, \mathrm{lin})}$ with probability $\tfrac12$ (turn switches to opponent $p'$); from each linear state the policy~\eqref{eq:lrp-eps-policy} routes to predecessor $i$ with the player label of that linear state. Combining the two branch discount factors $2\alpha, 2\beta$ from~\eqref{eq:routing-edge-discount} with these $\tfrac12$-probabilities, the trajectory contributions to $\Gamma_x(s_{i,r}^{(l-1,\mathrm{act})})$ are
\begin{align}
    \Gamma_x\!\bigl(s_{i,r}^{(l-1, \mathrm{act})}\bigr)
    &=
    \sum_j \left(
        \tfrac12 (2\alpha)\, \pi_{j,+}^\star(i)\, \Gamma_x\!\bigl(s_{j,r}^{(l, \mathrm{act})}\bigr)
        \;+\;
        \tfrac12 (2\beta)\, \pi_{j,-}^\star(i)\, \Gamma_x\!\bigl(s_{j,r'}^{(l, \mathrm{act})}\bigr)
    \right) \nonumber \\
    &=
    \sum_j \Bigl(
        \alpha\, \pi_{j,+}^\star(i)\, \Gamma_x\!\bigl(s_{j,r}^{(l, \mathrm{act})}\bigr)
        \;+\;
        \beta\, \pi_{j,-}^\star(i)\, \Gamma_x\!\bigl(s_{j,r'}^{(l, \mathrm{act})}\bigr)
    \Bigr), \label{eq:routing-per-state-recursion}
\end{align}
where $r' \in \{+,-\}$ denotes the opposite player label and the cemetery branch $\perp_{\mathrm{lin}}$ contributes no mass to any predecessor.

Differencing~\eqref{eq:routing-per-state-recursion} over $r \in \{+,-\}$ (i.e.\ $r{=}+$ minus $r{=}-$) flips the role of $r$ and $r'$ between the two terms inside the bracket. Using $\Gamma_x(s_{j,r}^{(l,\mathrm{act})}) - \Gamma_x(s_{j,r'}^{(l,\mathrm{act})}) = +\Gamma^{(l)}_j$ when $r{=}+$, $r'{=}-$, and the same expression with the opposite sign when $r{=}-$, $r'{=}+$, the per-neuron difference at layer $l-1$ satisfies
\begin{align}
    \Gamma^{(l-1)}_i
    &=
    \Gamma_x\!\bigl(s_{i,+}^{(l-1, \mathrm{act})}\bigr) - \Gamma_x\!\bigl(s_{i,-}^{(l-1, \mathrm{act})}\bigr) \nonumber \\
    &=
    \sum_j \Bigl(
        \alpha\, \pi_{j,+}^\star(i)\, \bigl[\Gamma_x(s_{j,+}^{(l,\mathrm{act})}) - \Gamma_x(s_{j,-}^{(l,\mathrm{act})})\bigr]
        \;+\;
        \beta\, \pi_{j,-}^\star(i)\, \bigl[\Gamma_x(s_{j,-}^{(l,\mathrm{act})}) - \Gamma_x(s_{j,+}^{(l,\mathrm{act})})\bigr]
    \Bigr) \nonumber \\
    &=
    \sum_j \bigl(\alpha\, \pi_{j,+}^\star(i) - \beta\, \pi_{j,-}^\star(i)\bigr)\, \Gamma^{(l)}_j \nonumber \\
    &=
    \sum_j \left(
        \alpha \frac{[a_i^{(l-1)} W_{ij}^{(l)}]^+}{\epsilon + \Sigma_j^{(l,+)}}
        -
        \beta \frac{[a_i^{(l-1)} W_{ij}^{(l)}]^-}{\epsilon + \Sigma_j^{(l,-)}}
    \right) \Gamma^{(l)}_j,
\end{align}
where the third equality regroups the two square brackets as $+\Gamma^{(l)}_j$ and $-\Gamma^{(l)}_j$ respectively, and the fourth substitutes~\eqref{eq:lrp-eps-policy} together with $a_i^{(l-1)} W_{ij}^{(l,\sigma)} = [a_i^{(l-1)} W_{ij}^{(l)}]^\sigma$ for non-negative $a_i^{(l-1)}$. Multiplying both sides by $R_u^{(L)}$ and using the inductive hypothesis $R_u^{(L)} \cdot \Gamma^{(l)}_j = R^{(l)}_j$,
\begin{align}
    R_u^{(L)} \cdot \Gamma^{(l-1)}_i
    \;=\; \sum_j \left(
        \alpha \frac{[a_i^{(l-1)} W_{ij}^{(l)}]^+}{\epsilon + \Sigma_j^{(l,+)}}
        -
        \beta \frac{[a_i^{(l-1)} W_{ij}^{(l)}]^-}{\epsilon + \Sigma_j^{(l,-)}}
    \right) R^{(l)}_j
    \;=\; R^{(l-1)}_i.
\end{align}
 This closes the induction and proves $R_u^{(L)} \cdot \Gamma^{(l)}_j = R^{(l)}_j$ for every layer $l$ and neuron $j$.

\emph{Input layer ($l = 1$):} The same calculation applies with the fused first-layer masses $[x_i W_{ij}^{(1)}]^\sigma$ in place of $a_i^{(l-1)} W_{ij}^{(l,\sigma)}$, yielding the identical formula with $a^{(0)} = x$.

\emph{Addition node:} Let $z = x_{\mathrm{op}} + y_{\mathrm{op}}$ be an addition node. By Definition~\ref{def:routing-addition}, from $s^{\mathrm{add}}_{z,p}$ the Oracle transitions to $s_{x_{\mathrm{op}},p}$ and $s_{y_{\mathrm{op}},p}$ with probability $\tfrac12$ each, retaining player label $p$. Under the trajectory-discount convention~\eqref{eq:routing-edge-discount}--\eqref{eq:routing-discount}, addition Oracle transitions carry trajectory-discount factor $1$. Hence
\begin{align}
    \Gamma_x\!\bigl(s_{x_{\mathrm{op}},p}\bigr) \;=\; \tfrac12\,\Gamma_x\!\bigl(s^{\mathrm{add}}_{z,p}\bigr),
    \qquad
    \Gamma_x\!\bigl(s_{y_{\mathrm{op}},p}\bigr) \;=\; \tfrac12\,\Gamma_x\!\bigl(s^{\mathrm{add}}_{z,p}\bigr),
\end{align}
proving Item 3 of Theorem~\ref{thm:ab-recovery} for both player labels.

\emph{Max-pooling node:} Let $z = \max\{x_1,\dots,x_m\}$ with winner $k^\star$ and pooling state $s_{z,p}^{\max}$. By Definition~\ref{def:routing-maxpool}, the active player has the unique value-maximising action $x_{k^\star}$ at $s_{z,p}^{\max}$, so the transition to $s_{x_{k^\star},p}$ is deterministic, retains player label $p$, and~\eqref{eq:routing-edge-discount} assigns unit trajectory factor. Therefore $\Gamma_x(s_{x_{k^\star},p}) = \Gamma_x(s_{z,p}^{\max})$ and $\Gamma_x(s_{x_r,p}) = 0$ for $r \neq k^\star$, proving Item 4.
\end{proof}

\paragraph{Temperature mode.}
The local-temperature deformation that we use as a controlled policy mode (sharpening or flattening the linear-state Gibbs readout while leaving the player game values at their $\tau{=}1$ equilibrium) is treated as part of the analytical-modes appendix; see Appendix~\ref{app:routing-temp}.]
\newpage
\section{Extension: Vision Transformer Attention}\label{app:vit-extension}\label{app:attention-subgame}

This appendix develops the full state-space formulation for attention modules in the \ac{RG}, complementing the summary in Section~\ref{sec:controls}. The development proceeds in two stages. \S\ref{app:attention-kl-routing} extends the backward walk of Theorem~\ref{thm:ab-recovery} through the attention block by treating the softmax row $A_q$ as a fixed reference policy on the keys, derives the resulting Value-routing equilibrium from the Legendre--Fenchel transform of the KL divergence, and proves $\alpha\beta$-\ac{LRP} recovery for the linear map $Y = AV$ in the occupation-measure notation $\Gamma$ of Theorem~\ref{thm:ab-recovery}. \S\ref{app:attention-oracle} then identifies the fixed reference $A_q$ with the policy of an auxiliary entropy-regularised QK oracle player, closing the loop with the rest of the game-theoretic framework. The risk-averse attention variant is in \S\ref{app:risk-attn}; forward-pass detachments and gradient-hook conventions used by all ViT experiments are collected in \S\ref{app:vit-implementation}; the equivariant spatial averaging is in \S\ref{app:equivariant-averaging-section}.

\subsection{Attention as fixed-reference KL Value routing}\label{app:attention-kl-routing}

\subsubsection{Target rule: \texorpdfstring{$\alpha\beta$-\ac{LRP}}{LRP-alpha-beta} with detached logits}\label{app:attention-target-rule}

We first state the rule that the Routing-Game state-space construction below is designed to recover. We focus on a single attention head; the multi-head extension carries an additional head index $h$ and the recovery is per-head. Note that ViT-B/16 implements the V projection as a single $D \times D$ linear $V = X W_V$ followed by a reshape into $H$ heads (the per-head matrix $W_V^{(h)} \in \RR^{D \times d_h}$ is the head-$h$ column block of the full $W_V$), so one input embedding $X_{k,e}$ feeds V entries of every head and the per-head $R^X$ contributions sum at $X_{k,e}$ to give the total. The per-head description below applies unchanged. We omit the $\epsilon$ stabiliser inside the attention block --- the surrounding ReLU recursion of \S\ref{app:routing-attribution} carries its own $\epsilon$ on every linear-routing state and that mechanism transfers to the AV+W$_V$ composite without modification, but tracking it through the tensor indexing here only obscures the structure.

\paragraph{Forward pass.} The $V$ projection is applied per token: for each $k \in \{1,\dots,S\}$,
\begin{align}\label{eq:attn-V-forward}
    v_{k,d} \;=\; \sum_e W_{V,e,d}\, X_{k,e},
    \qquad d \in \{1,\dots,d_h\},
\end{align}
where the same matrix $W_V \in \RR^{d \times d_h}$ is applied independently to every token. The detached attention matrix $A \in \RR^{S \times S}_{\ge 0}$ (with $A_{q,k} \ge 0$ from softmax) then mixes Values across tokens \emph{independently per feature dimension} $d$: for each output token $q \in \{1,\dots,S\}$ and feature $d$,
\begin{align}\label{eq:attn-AV-forward}
    O_{q,d} \;=\; \sum_k A_{q,k}\, v_{k,d}
    \;=\; \sum_{k,e} A_{q,k}\, W_{V,e,d}\, X_{k,e}.
\end{align}

\paragraph{Composite linear map and $\sigma$-stream split.} With $A$ detached, the attention block reduces to a linear map $X \mapsto O$ with composite weights $W^{\mathrm{eff}}_{(k,e),\,(q,d)} = A_{q,k}\, W_{V,e,d}$. Since $A_{q,k} \ge 0$, the sign of the composite weight is the sign of $W_{V,e,d}$:
\begin{align}\label{eq:attn-composite-weight}
    \bigl[\,W^{\mathrm{eff}}_{(k,e),\,(q,d)}\,\bigr]^\sigma
    \;=\; A_{q,k}\, W_{V,e,d}^\sigma,
    \qquad \sigma \in \{+,-\}.
\end{align}
With non-negative inputs $X \ge 0$, the per-token $\sigma$-stream split of the Value tensor and the per-output-feature stream sum are
\begin{align}\label{eq:attn-Z-def}
    \tilde{v}_{k,d}^\sigma \;\coloneqq\; \sum_e W_{V,e,d}^\sigma\, X_{k,e},
    \qquad
    Z_{q,d}^\sigma \;\coloneqq\; \sum_{k} A_{q,k}\, \tilde{v}_{k,d}^\sigma
    \;=\; \sum_{k,e} A_{q,k}\, W_{V,e,d}^\sigma\, X_{k,e}.
\end{align}
Note that $\tilde{v}_{k,d}^\sigma$ is the $\sigma$-stream split inherited from $W_V$ (not the standard scalar positive part $[v_{k,d}]^+ = \max(0, v_{k,d})$); the two coincide only when $v_{k,d}$ has a single-signed predecessor decomposition.

\paragraph{Target rule.} Let $R^O_{q,d}$ denote the relevance arriving at the output entry $O_{q,d}$ from the surrounding LRP recursion. We define the target rule by applying the standard $\alpha\beta$-\ac{LRP} rule at each of the two linear stages of the composite block, with the $\sigma$-stream split $\tilde v_{k,d}^\sigma$ inherited from $W_V$ via~\eqref{eq:attn-composite-weight}.

\emph{AV mixing $O \to V$.} Applied to $O_{q,d} = \sum_k A_{q,k}\,v_{k,d}$, the rule gives the relevance at the Value entry $v_{k,d}$ as the weighted difference of two per-stream contributions:
\begin{align}\label{eq:attn-target-rule}
    R^V_{k,d} \;=\; \alpha\,R^{V,+}_{k,d} \;-\; \beta\,R^{V,-}_{k,d},
    \qquad
    R^{V,\sigma}_{k,d} \;\coloneqq\; \sum_q \frac{A_{q,k}\,\tilde v_{k,d}^\sigma}{Z_{q,d}^\sigma}\,R^O_{q,d},
    \quad \sigma \in \{+,-\}.
\end{align}

\emph{V projection $V \to X$.} Applied to $v_{k,d} = \sum_e W_{V,e,d}\,X_{k,e}$, the rule routes each $\sigma$-stream contribution $R^{V,\sigma}_{k,d}$ to the input dim $e$ via the matching $\sigma$-stream weight ratio $W_{V,e,d}^\sigma X_{k,e}/\tilde v_{k,d}^\sigma$:
\begin{align}\label{eq:attn-target-input-via-V}
    R^X_{k,e} \;=\; \sum_d \left(
        \alpha\, \frac{W_{V,e,d}^+\, X_{k,e}}{\tilde v_{k,d}^+}\,R^{V,+}_{k,d}
        \;-\;
        \beta\, \frac{W_{V,e,d}^-\, X_{k,e}}{\tilde v_{k,d}^-}\,R^{V,-}_{k,d}
    \right).
\end{align}

\emph{Aggregate.} Substituting~\eqref{eq:attn-target-rule} into~\eqref{eq:attn-target-input-via-V} and cancelling $\tilde v_{k,d}^\sigma$ in numerator and denominator expresses $R^X_{k,e}$ directly in terms of $R^O_{q,d}$:
\begin{align}\label{eq:attn-target-input}
    R^X_{k,e}
    \;=\; \sum_{q,d} \left(
        \alpha\, \frac{A_{q,k}\, W_{V,e,d}^+\, X_{k,e}}{Z_{q,d}^+}
        \;-\;
        \beta\, \frac{A_{q,k}\, W_{V,e,d}^-\, X_{k,e}}{Z_{q,d}^-}
    \right) R^O_{q,d},
\end{align}
which coincides with what the standard $\alpha\beta$-\ac{LRP} rule produces directly on the composite linear map $X \mapsto O$ with effective weight $W^{\mathrm{eff}}_{(k,e),(q,d)} = A_{q,k}\,W_{V,e,d}$. These formulas are the rule used in all RG experiments on ViT.

\subsubsection{State space}\label{app:attention-state-space}

We embed the attention block into the routing-graph state space of \S\ref{app:routing-formal} using the tuple-superscript notation $s_\bullet^{(\mathrm{att},\,\mathrm{Side},\,\mathrm{type})}$ throughout, where $\mathrm{Side} \in \{\mathrm{O}, \mathrm{V}, \mathrm{Q}, \mathrm{K}\}$ identifies the activation slot inside the attention block and $\mathrm{type} \in \{\mathrm{act}, \mathrm{lin}\}$ matches state types of \S\ref{app:routing-formal}. The state indices encode the tensor structure of the AV mixing and the $V$ projection, with $q,k \in \{1,\dots,S\}$ ranging over output/key tokens (sequence axis), $d \in \{1,\dots,d_h\}$ over the within-head feature axis of the focused head (i.e.\ the head index $h$ is fixed and $d$ runs over that head's $d_h$ feature dimensions), and $e \in \{1,\dots,D\}$ over the per-token input embedding dimensions.

\paragraph{States for the composite $AV{+}W_V$ block.} The explanatory walk through the attention block visits three state types beyond the surrounding ReLU-network's; all are parametrised by the player label in turn.
\begin{itemize}
    \item \textbf{Output activation states} $s_{q,d,p}^{(\mathrm{att},\,\mathrm{O},\,\mathrm{act})}$ at the head's output entry $O_{q,d}$, indexed by output token $q \in \{1,\dots,S\}$, within-head feature dim $d \in \{1,\dots,d_h\}$, and player $p \in \{+,-\}$. These are the entry points of the explanatory walk: the surrounding linear-layer recursion of \S\ref{app:routing-attribution} delivers occupation mass to $\{s_{q,d,p}^{(\mathrm{att},\,\mathrm{O},\,\mathrm{act})}\}_{q,d,p}$ from the layer behind the head.
    \item \textbf{Output sign-branch (linear) states} $s_{(q,d),p,\sigma}^{(\mathrm{att},\,\mathrm{O},\,\mathrm{lin})}$, indexed by $(q, d, p, \sigma)$. These mirror the linear routing states $s_{j,q,\sigma}^{(l,\mathrm{lin})}$ of Definition~\ref{def:routing-recursion} for the AV mixing at output entry $(q, d)$: \emph{this is where the QK logits enter the routing} via the reference $\mu_q = A_{q,\cdot}$. The active player picks a key token $k$ by a mixed action $\pi_{q,d,\sigma}^\star \in \Delta(\{1,\dots,S\})$ that combines the AV reference $\mu_q$ with the $\sigma$-stream Value mass $\tilde{v}_{k,d}^\sigma$.
    \item \textbf{V-projection linear states} $s_{(k,d),p,\sigma}^{(\mathrm{att},\,\mathrm{V},\,\mathrm{lin})}$, indexed by key token $k$, within-head feature dim $d$, player $p$, and sign $\sigma$. These are the linear-routing states of the per-token $V$ projection: the player picks an input dimension $e$ and the game value satisfies $V_p\!\bigl(s_{(k,d),p,\sigma}^{(\mathrm{att},\,\mathrm{V},\,\mathrm{lin})}\bigr) = \ell(\tilde{v}_{k,d}^\sigma)$ with $\tilde{v}_{k,d}^\sigma$ from~\eqref{eq:attn-Z-def}.
    \item \textbf{Input activation states} $s_{(k,e),p}^{(\mathrm{att},\,\mathrm{X},\,\mathrm{act})}$ at the input embedding entry $X_{k,e}$, indexed by token--embedding-dim pair $(k,e)$ and player $p$. $X$ lives upstream of the $V$ projection and carries the input-embedding axis $e$; the within-head feature axis $d$ enters first at the V projection's output downstream. They are the exit points of the attention block: the surrounding network's recursion resumes from the per-player occupation on $\{s_{(k,e),p}^{(\mathrm{att},\,\mathrm{X},\,\mathrm{act})}\}_{(k,e),p}$.
\end{itemize}

\paragraph{Transitions.} The trajectory through the attention block has three steps; all transitions inside the block carry trajectory discount $1$ except the sign-oracle split, which uses the same $\{2\alpha, 2\beta\}$ as a ReLU activation state of Definition~\ref{def:routing-recursion}. There is only \emph{one} sign-oracle split per output entry $(q,d)$, since $A_{q,k} \ge 0$ absorbs the AV mixing into the $W_V$ sign split.
\begin{enumerate}
    \item \emph{Sign-oracle split (output activation $\to$ output sign-branch, fixed $(q,d)$).} At $s_{q,d,p}^{(\mathrm{att},\,\mathrm{O},\,\mathrm{act})}$ an unobserved Oracle transitions uniformly to $s_{(q,d),p,+}^{(\mathrm{att},\,\mathrm{O},\,\mathrm{lin})}$ (player $p$ retains the turn, trajectory discount $2\alpha$) or to $s_{(q,d),p',-}^{(\mathrm{att},\,\mathrm{O},\,\mathrm{lin})}$ (turn switches to opponent $p'$, discount $2\beta$), each with probability $\tfrac12$. The feature index $d$ is preserved.
    \item \emph{Value-routing policy (output sign-branch $\to$ V-projection linear).} At $s_{(q,d),p,\sigma}^{(\mathrm{att},\,\mathrm{O},\,\mathrm{lin})}$ the active player picks a key token $k$ by the mixed action
    \begin{align}\label{eq:attn-V-routing-policy-O}
        \pi_{q,d,\sigma}^\star(k)
        \;=\; \frac{A_{q,k}\, \tilde{v}_{k,d}^\sigma}{Z_{q,d}^\sigma},
    \end{align}
    derived in \S\ref{app:attention-value-routing} as the equilibrium of a KL-regularised log-payoff problem against the reference $\mu_q = A_{q,\cdot}$. The trajectory transitions to $s_{(k,d),p,\sigma}^{(\mathrm{att},\,\mathrm{V},\,\mathrm{lin})}$ \emph{with the player label, the sign $\sigma$, and the within-head feature dim $d$ preserved}; the only thing changing is the sequence-axis index from output token $q$ to key token $k$. The trajectory-discount on this transition is $1$.
    \item \emph{V-projection routing (V-projection linear $\to$ input activation).} At $s_{(k,d),p,\sigma}^{(\mathrm{att},\,\mathrm{V},\,\mathrm{lin})}$ the active player picks an input dim $e$ by the standard linear-state Gibbs policy of Definition~\ref{def:routing-recursion} on the $\sigma$-stream weights $W_{V,e,d}^\sigma$:
    \begin{align}\label{eq:attn-V-routing-policy}
        \pi_{V,k,d,\sigma}^\star(e)
        \;=\; \frac{W_{V,e,d}^\sigma\, X_{k,e}}{\tilde{v}_{k,d}^\sigma}.
    \end{align}
    The trajectory transitions to $s_{(k,e),p}^{(\mathrm{att},\,\mathrm{X},\,\mathrm{act})}$ with the player label preserved and trajectory-discount factor $1$.
\end{enumerate}
The discounted occupation measure $\Gamma_x(\,\cdot\,)$ of Section~\ref{sec:routing} is evaluated at all four state types along the explanatory walk. The recovery theorem of \S\ref{app:attention-recovery} then compares the per-input-neuron player-difference of $\Gamma_x$ at the input activation states with the target rule~\eqref{eq:attn-target-input}.

\subsubsection{Legendre--Fenchel transform of the KL divergence}\label{app:kl-legendre-fenchel}

The Value-routing equilibrium below is a direct corollary of the convex-conjugate identity for the KL divergence; we record it as a stand-alone Lemma to make the inheritance explicit.

\begin{lemma}[Legendre--Fenchel transform of KL divergence]\label{lem:kl-legendre-fenchel}
Let $\mathcal{X}$ be a finite set, $\mu \in \Delta(\mathcal{X})$ a probability distribution with full support on $\mathcal{X}$, and $r : \mathcal{X} \to \RR$ a real-valued payoff. The KL-regularised linear maximisation
\begin{align}\label{eq:kl-lf-objective}
    F(r;\mu) \;\coloneqq\; \max_{\pi \in \Delta(\mathcal{X})}
    \Bigl\{ \E_\pi[r] \;-\; \mathrm{KL}(\pi \,\|\, \mu) \Bigr\}
\end{align}
has the following optimiser and optimum:
\begin{align}\label{eq:kl-lf-optimizer}
    \pi_r^\star(x) \;=\; \frac{\mu(x)\,\exp\!\bigl(r(x)\bigr)}{Z(r;\mu)},
    \qquad
    F(r;\mu) \;=\; \log Z(r;\mu) \;=\; \log\!\sum_{x \in \mathcal{X}} \mu(x)\,\exp\!\bigl(r(x)\bigr).
\end{align}
\end{lemma}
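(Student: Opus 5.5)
The plan is a direct Gibbs-variational computation: rewrite the KL-regularised objective as a constant minus a KL divergence to the tilted measure, then read off the maximiser and the optimum. Since $\mathcal{X}$ is finite and $\mu$ has full support, $Z(r;\mu) = \sum_x \mu(x)e^{r(x)}$ is finite and strictly positive. Hence $\pi_r^\star(x) \coloneqq \mu(x)e^{r(x)}/Z(r;\mu)$ is a well-defined probability distribution with full support on $\mathcal{X}$. This construction is the first step.

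Next, for an arbitrary $\pi \in \Delta(\mathcal{X})$, we expand $\mathrm{KL}(\pi\,\|\,\pi_r^\star) = \sum_x \pi(x)\log\pi(x) - \sum_x \pi(x)\log\mu(x) - \E_\pi[r] + \log Z(r;\mu)$, using the convention $0\log 0 = 0$. This step uses the full support of $\mu$ and $\pi_r^\star$, so every logarithm of $\mu$ or $\pi_r^\star$ is finite. Grouping the first two sums as $\mathrm{KL}(\pi\,\|\,\mu)$ yields the identity
\begin{align}
    \E_\pi[r] - \mathrm{KL}(\pi\,\|\,\mu) \;=\; \log Z(r;\mu) - \mathrm{KL}(\pi\,\|\,\pi_r^\star).
\end{align}
By Gibbs' inequality, $\mathrm{KL}(\pi\,\|\,\pi_r^\star)\ge 0$, with equality if and only if $\pi = \pi_r^\star$; this follows from Jensen's inequality for the strictly convex $-\log$. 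Therefore the maximum in~\eqref{eq:kl-lf-objective} equals $\log Z(r;\mu)$, and it is attained uniquely at $\pi_r^\star$, which is exactly~\eqref{eq:kl-lf-optimizer}.

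An alternative route is to reduce to the Shannon-entropy identity~\eqref{eq:legendre-fenchel-app} at $\theta=1$. Since $-\mathrm{KL}(\pi\,\|\,\mu) = \mathcal{H}(\pi) + \E_\pi[\log\mu]$, applying~\eqref{eq:legendre-fenchel-app} with payoff $r + \log\mu$ gives the optimum $\log\sum_x e^{r(x)+\log\mu(x)} = \log Z$ and the maximiser proportional to $\mu e^{r}$. I would present this as a remark, since the paper already invokes~\eqref{eq:legendre-fenchel-app}.

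The only delicate point is bookkeeping rather than depth. One has to handle distributions $\pi$ that put zero mass on some points, where the $0\log 0$ convention is needed. One also needs full support of $\mu$ so that neither $\mathrm{KL}(\pi\,\|\,\mu)$ nor $\log\pi_r^\star$ is infinite. Once these are checked, the identity in the display holds for all of $\Delta(\mathcal{X})$, and the argument closes.
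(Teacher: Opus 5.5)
Your proof is correct, but it takes a different route from the paper. The paper uses a Lagrange multiplier for the simplex constraint. It writes the first-order condition $r(x) - \log(\pi(x)/\mu(x)) - 1 - \lambda = 0$ at an interior point, solves for the Gibbs tilt, and fixes $\lambda$ by normalisation. It then substitutes $\pi_r^\star$ back into the objective to obtain $\log Z(r;\mu)$.

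That derivation has one advantage: it produces the form of the optimiser instead of requiring you to guess it. As written, however, it only identifies an interior stationary point. Two things are left implicit: that this point is the global maximiser over all of $\Delta(\mathcal{X})$, including boundary distributions with zeros, and that it is unique. Both rest on the strict concavity of $\pi \mapsto \E_\pi[r] - \mathrm{KL}(\pi\,\|\,\mu)$, which the paper does not state.

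Your identity $\E_\pi[r] - \mathrm{KL}(\pi\,\|\,\mu) = \log Z(r;\mu) - \mathrm{KL}(\pi\,\|\,\pi_r^\star)$ settles the optimal value, global optimality and uniqueness in one step, for every $\pi$ in the simplex, with no calculus and no interiority assumption. The paper's substitution step is just your identity evaluated at $\pi = \pi_r^\star$. Your version also delivers the uniqueness that Proposition~\ref{prop:attn-value-policy} later invokes.

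It extends, with only extended-real bookkeeping, to the specialisation~\eqref{eq:kl-lf-linearized}. There $r = \ell(u)$ may equal $-\infty$ and the optimiser lies on the boundary of the simplex, so the paper's interior first-order condition does not directly apply.

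Your alternative reduction to~\eqref{eq:legendre-fenchel-app} with payoff $r + \log\mu$ is also valid, since full support keeps $\log\mu$ finite. It runs in the opposite direction to the paper's own remark that the uniform-$\mu$ case collapses to the entropy identity.
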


Two specialisations are central below. When $r(x) = \ell(u(x))$ for some non-negative $u : \mathcal{X} \to \RR_{\geq 0}$ (so $\exp(r(x)) = u(x)$ with the convention $\exp(-\infty) \coloneqq 0$), the value collapses to the $\mu$-weighted linear sum and the optimiser becomes the $\mu$-tilted normalisation of $u$:
\begin{align}\label{eq:kl-lf-linearized}
    F\bigl(\ell(u);\,\mu\bigr) \;=\; \ell\!\Bigl(\sum_{x}\mu(x)\,u(x)\Bigr),
    \qquad
    \pi_{\ell(u)}^\star(x) \;=\; \frac{\mu(x)\,u(x)}{\sum_{x'}\mu(x')\,u(x')}.
\end{align}
When $\mu$ is uniform on $\mathcal{X}$, $\mathrm{KL}(\pi\,\|\,\mathrm{Unif}_\mathcal{X}) = \log|\mathcal{X}| - \mathcal{H}(\pi)$, so subtracting the KL penalty in~\eqref{eq:kl-lf-objective} flips into a $+\mathcal{H}$ entropy bonus:
\begin{align}
    \E_\pi[r] \;-\; \mathrm{KL}\!\bigl(\pi\,\|\,\mathrm{Unif}_\mathcal{X}\bigr)
    \;=\; \E_\pi[r] \;+\; \mathcal{H}(\pi) \;-\; \log|\mathcal{X}|.
\end{align}
Up to the additive constant $-\log|\mathcal{X}|$,~\eqref{eq:kl-lf-objective} therefore reduces to the entropy-regularised maximisation of~\eqref{eq:legendre-fenchel-app}, with value $\log\sum_{x}\exp(r(x))$ and optimiser $\pi^\star(x) \propto \exp(r(x))$.

\begin{proof}
Introduce a Lagrange multiplier $\lambda$ for the simplex constraint in~\eqref{eq:kl-lf-objective}. The first-order condition at a feasible interior point is
\begin{align}
    r(x) \;-\; \log\!\frac{\pi(x)}{\mu(x)} \;-\; 1 \;-\; \lambda \;=\; 0,
\end{align}
which solves to $\pi(x) = \mu(x)\,\exp\!\bigl(r(x) - 1 - \lambda\bigr)$. Imposing $\sum_x \pi(x) = 1$ pins $\exp(1 + \lambda) = Z(r;\mu)$, giving the Gibbs tilt~\eqref{eq:kl-lf-optimizer}. Substituting $\pi_r^\star$ back into the objective,
\begin{align}
    \E_{\pi_r^\star}[r] \;-\; \mathrm{KL}(\pi_r^\star \,\|\, \mu)
    &\;=\; \sum_x \pi_r^\star(x)\Bigl[\,r(x) \;-\; \log\!\frac{\pi_r^\star(x)}{\mu(x)}\,\Bigr] \notag\\
    &\;=\; \sum_x \pi_r^\star(x)\,\log Z(r;\mu)
    \;=\; \log Z(r;\mu),
\end{align}
where the second equality uses $\log(\pi_r^\star(x)/\mu(x)) = r(x) - \log Z(r;\mu)$ from~\eqref{eq:kl-lf-optimizer}.
\end{proof}

\subsubsection{Value-routing equilibrium}\label{app:attention-value-routing}

Fix an output entry $(q, d)$, a sign $\sigma \in \{+,-\}$, and the reference $\mu_q = A_{q,\cdot}$ from \S\ref{app:attention-state-space} (the \emph{same} $\mu_q$ for every $d$, since the AV mixing reuses one row across all feature dims). At the output sign-branch state $s_{(q,d),p,\sigma}^{(\mathrm{att},\,\mathrm{O},\,\mathrm{lin})}$ the active player $p$ solves the KL-regularised log-payoff maximisation
\begin{align}\label{eq:attn-value-routing}
    V_p\!\bigl(s_{(q,d),p,\sigma}^{(\mathrm{att},\,\mathrm{O},\,\mathrm{lin})}\bigr)
    \;\coloneqq\;
    \max_{\pi \in \Delta(\{1,\dots,S\})}
    \left\{
        \sum_{k=1}^S \pi(k)\, \ell\!\bigl(\tilde{v}_{k,d}^\sigma\bigr)
        \;-\;
        \mathrm{KL}\!\bigl(\pi \,\|\, \mu_q\bigr)
    \right\},
\end{align}
where $\tilde{v}_{k,d}^\sigma \ge 0$ is the $\sigma$-stream split of the Value entry $v_{k,d}$ from~\eqref{eq:attn-Z-def} and $\ell$ is the extended logarithm of \S\ref{app:routing-formal}. The payoff $\ell(\tilde{v}_{k,d}^\sigma)$ on action $k$ is the active-player value of the predecessor V-projection linear state, $V_p(s_{(k,d),p,\sigma}^{(\mathrm{att},\,\mathrm{V},\,\mathrm{lin})}) = \ell(\tilde{v}_{k,d}^\sigma)$, so combining with the AV log-weight $\ell(A_{q,k})$ gives composite payoff $\ell(A_{q,k}\, \tilde{v}_{k,d}^\sigma)$ at the $(q,d)$-output. Define the per-output-entry $\mu_q$-weighted Value mass
\begin{align}\label{eq:attn-z-def}
    Z_{q,d}^\sigma \;\coloneqq\; \sum_{m=1}^S \mu_q(m)\, \tilde{v}_{m,d}^\sigma.
\end{align}
The dependence on $d$ enters only through $\tilde{v}_{m,d}^\sigma$; the AV reference $\mu_q$ is shared across all feature dims.

\begin{proposition}[Value-side optimal policy]\label{prop:attn-value-policy}
If $Z_{q,d}^\sigma > 0$, the unique optimiser of~\eqref{eq:attn-value-routing} is
\begin{align}\label{eq:attn-value-policy}
    \pi_{q,d,\sigma}^\star(k)
    \;=\;
    \frac{\mu_q(k)\, \tilde{v}_{k,d}^\sigma}{Z_{q,d}^\sigma},
\end{align}
with active-player value $V_p(s_{(q,d),p,\sigma}^{(\mathrm{att},\,\mathrm{O},\,\mathrm{lin})}) = \ell(Z_{q,d}^\sigma)$ and inactive-player value $V_{p'}(s_{(q,d),p,\sigma}^{(\mathrm{att},\,\mathrm{O},\,\mathrm{lin})}) = -\infty$ as in Definition~\ref{def:routing-recursion}. If $Z_{q,d}^\sigma = 0$, no key contributes mass on stream $\sigma$ and $V_p(s_{(q,d),p,\sigma}^{(\mathrm{att},\,\mathrm{O},\,\mathrm{lin})}) = -\infty$.
\end{proposition}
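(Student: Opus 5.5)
This is a direct application of Lemma~\ref{lem:kl-legendre-fenchel}, in its linearised specialisation~\eqref{eq:kl-lf-linearized}, with $\mathcal{X} = \{1,\dots,S\}$, reference $\mu = \mu_q = A_{q,\cdot}$ and payoff $r(k) = \ell(\tilde v_{k,d}^\sigma)$. Since $\tilde v_{k,d}^\sigma \ge 0$ by~\eqref{eq:attn-Z-def} (non-negative $W_V^\sigma$ and $X \ge 0$), we have $\exp(r(k)) = \tilde v_{k,d}^\sigma$ under the convention $\exp(-\infty) = 0$. Substituting into~\eqref{eq:kl-lf-optimizer} gives $Z(r;\mu_q) = \sum_m \mu_q(m)\tilde v_{m,d}^\sigma = Z_{q,d}^\sigma$. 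This yields the value $\ell(Z_{q,d}^\sigma)$ and the tilted policy $\mu_q(k)\tilde v_{k,d}^\sigma / Z_{q,d}^\sigma$, which is~\eqref{eq:attn-value-policy}.

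The one technical point is that Lemma~\ref{lem:kl-legendre-fenchel} is stated for real-valued $r$, whereas here $r(k) = -\infty$ wherever $\tilde v_{k,d}^\sigma = 0$. I would handle this in two steps. First, any $\pi$ that puts positive mass on a key with $r(k) = -\infty$ has objective $-\infty$. Likewise, any $\pi$ with mass outside $\mathrm{supp}(\mu_q)$ has infinite KL, and softmax rows in fact have full support. So when $Z_{q,d}^\sigma > 0$, the maximisation reduces to the nonempty finite set $\mathcal{X}' = \{k : \tilde v_{k,d}^\sigma > 0\}$. There I apply the lemma with the renormalised reference $\mu_q|_{\mathcal{X}'}/\mu_q(\mathcal{X}')$. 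The renormalisation shifts the KL term by the constant $\log \mu_q(\mathcal{X}')$, and that shift is exactly absorbed when the restricted partition function is converted back into $Z_{q,d}^\sigma$. Second, for uniqueness: on $\Delta(\mathcal{X}')$ the objective is strictly concave, because $\mathrm{KL}(\cdot\,\|\,\mu)$ is strictly convex and the payoff term is linear. Every competitor outside $\Delta(\mathcal{X}')$ is strictly worse, since its objective is $-\infty$. The optimiser is therefore unique.

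The inactive-player value $V_{p'} = -\infty$ follows as in Definition~\ref{def:routing-recursion}. The inactive player's log-payoffs are all $-\infty$, so its routed mass is $\exp(-\infty) = 0$. In the degenerate case $Z_{q,d}^\sigma = 0$, every key has $\tilde v_{k,d}^\sigma = 0$ (given full support of $\mu_q$). Every feasible $\pi$ then has objective $-\infty$, so the value is $-\infty = \ell(0)$, consistent with the stopping convention.

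I expect the $-\infty$ bookkeeping in the restriction step to be the only real obstacle. I would state it explicitly rather than appeal to the real-valued lemma directly.
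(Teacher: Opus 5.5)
Your proposal is correct and follows the paper's proof. The paper likewise applies the linearised specialisation \eqref{eq:kl-lf-linearized} of Lemma~\ref{lem:kl-legendre-fenchel} with $\mu=\mu_q$ and $u(k)=\tilde v_{k,d}^\sigma$, and uses the same $-\infty$ conventions for the inactive player and for the case $Z_{q,d}^\sigma=0$. Your additional steps make explicit two points the paper passes over: the restriction to $\{k:\tilde v_{k,d}^\sigma>0\}$ with a renormalised reference (needed because the lemma is stated for real-valued $r$), and the strict-concavity argument for uniqueness (which the lemma does not itself assert).
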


\begin{proof}
Apply the logit specialisation~\eqref{eq:kl-lf-linearized} of Lemma~\ref{lem:kl-legendre-fenchel} with $\mathcal{X} = \{1,\dots,S\}$, $\mu = \mu_q$, and $u(k) = \tilde{v}_{k,d}^\sigma$, so that the payoff $r(k) = \ell(\tilde{v}_{k,d}^\sigma)$ has $\exp r(k) = \tilde{v}_{k,d}^\sigma$. Equation~\eqref{eq:kl-lf-linearized} immediately gives the optimiser $\pi_{q,d,\sigma}^\star$ of~\eqref{eq:attn-value-policy} and value $\ell(Z_{q,d}^\sigma)$ when $Z_{q,d}^\sigma > 0$. The inactive-player value $-\infty$ is the Routing-Game convention of Definition~\ref{def:routing-recursion}, inherited from the $r = q'$ branch of~\eqref{eq:routing-linear-action}. If $Z_{q,d}^\sigma = 0$ then $\mu_q(k)\,\tilde{v}_{k,d}^\sigma = 0$ for every $k$ and the supremum of~\eqref{eq:attn-value-routing} over the simplex is $-\infty$, consistent with $\ell(0) = -\infty$.
\end{proof}

\paragraph{Specialisation to the softmax reference.}
Setting $\mu_q = A_{q,\cdot}$ in~\eqref{eq:attn-value-policy} yields the closed-form per-$(q,d,\sigma)$ policies used in the \ac{RG}'s backward walk through the attention block,
\begin{align}\label{eq:attn-branch-policy}
    \pi_{q,d,+}^\star(k) \;=\; \frac{A_{q,k}\, \tilde{v}_{k,d}^+}{Z_{q,d}^+},
    \qquad
    \pi_{q,d,-}^\star(k) \;=\; \frac{A_{q,k}\, \tilde{v}_{k,d}^-}{Z_{q,d}^-},
\end{align}
with $Z_{q,d}^\sigma = \sum_m A_{q,m}\, \tilde{v}_{m,d}^\sigma$ from~\eqref{eq:attn-z-def}. The numerical specification $\mu_q = A_{q,\cdot}$ does not require the oracle interpretation of \S\ref{app:attention-oracle}; it is enough that the cached softmax row is treated as a fixed input to~\eqref{eq:attn-value-routing}.

\subsubsection{\texorpdfstring{$\alpha\beta$-\ac{LRP}}{alpha-beta-LRP} recovery for the attention block}\label{app:attention-recovery}

We now show that, under the trajectory through the four state types of \S\ref{app:attention-state-space} and the trajectory discount of~\eqref{eq:routing-edge-discount}, the player-difference of $\Gamma_x$ at the \emph{input activation states} (the predecessors of the V-projection linear states, where the $W_V$ weights actually fire) recovers exactly the target rule~\eqref{eq:attn-target-input} of \S\ref{app:attention-target-rule}. We keep the game side and the attribution side strictly separate:
\begin{itemize}
    \item the \emph{game-side} object is the discounted occupation measure $\Gamma_x(\,\cdot\,)$ of Section~\ref{sec:routing} evaluated at the four attention-block state types of \S\ref{app:attention-state-space}. Following the per-neuron difference convention~\eqref{eq:routing-Gamma-pair} of the surrounding Routing-Game recursion, we form the per-output-entry difference $\Gamma^{O}_{q,d}$ at the entry point (defined in~\eqref{eq:attn-Gamma-O} below) and the per-input-neuron difference at the exit
    \begin{align}\label{eq:attn-Gamma-X}
        \Gamma^{X}_{k,e} \;\coloneqq\; \Gamma_x\!\bigl(s_{(k,e),+}^{(\mathrm{att},\,\mathrm{X},\,\mathrm{act})}\bigr) - \Gamma_x\!\bigl(s_{(k,e),-}^{(\mathrm{att},\,\mathrm{X},\,\mathrm{act})}\bigr).
    \end{align}
    \item the \emph{attribution-side} object is the standard $\alpha\beta$-\ac{LRP} relevance vector $R^{X}_{k,e}$ given by the input-level target rule~\eqref{eq:attn-target-input}, seeded by $R^{O}_{q,d}$ from the surrounding linear-layer recursion above the head.
\end{itemize}

\begin{theorem}[$\alpha\beta$-\ac{LRP} recovery for attention]\label{thm:attn-ab-recovery}
Under the SPNE of \S\ref{app:attention-state-space} with reference $\mu_q = A_{q,\cdot}$ and the trajectory discount of~\eqref{eq:routing-edge-discount} restricted to the (sole) sign-oracle split at $s^{(\mathrm{att},\,\mathrm{O},\,\mathrm{act})}$, the per-input-neuron player-difference $\Gamma^{X}_{k,e}$ from~\eqref{eq:attn-Gamma-X} satisfies
\begin{align}\label{eq:attn-ab-recovery}
    \Gamma^{X}_{k,e}
    \;=\;
    \sum_{q,d} \left(
        \alpha\, \frac{A_{q,k}\, W_{V,e,d}^+\, X_{k,e}}{Z_{q,d}^+}
        \;-\;
        \beta\, \frac{A_{q,k}\, W_{V,e,d}^-\, X_{k,e}}{Z_{q,d}^-}
    \right) \Gamma^{O}_{q,d},
\end{align}
where the per-output-entry input difference is
\begin{align}\label{eq:attn-Gamma-O}
    \Gamma^{O}_{q,d} \;\coloneqq\; \Gamma_x\!\bigl(s_{q,d,+}^{(\mathrm{att},\,\mathrm{O},\,\mathrm{act})}\bigr) \;-\; \Gamma_x\!\bigl(s_{q,d,-}^{(\mathrm{att},\,\mathrm{O},\,\mathrm{act})}\bigr),
\end{align}
with $\Gamma^{O}_{q,d} = R^{O}_{q,d}$ by inductive hypothesis from the surrounding recursion. The right-hand side is exactly the target rule~\eqref{eq:attn-target-input}, hence $\Gamma^{X}_{k,e} = R^{X}_{k,e}$. Aggregating over $e$ at fixed $(k,d)$ collects the relevance routed through the Value entry $v_{k,d}$:
\begin{align}\label{eq:attn-Gamma-V-corollary}
    \Gamma^{V}_{k,d}
    \;=\;
    \sum_q \left( \alpha\, \frac{A_{q,k}\, \tilde{v}_{k,d}^+}{Z_{q,d}^+} \;-\; \beta\, \frac{A_{q,k}\, \tilde{v}_{k,d}^-}{Z_{q,d}^-} \right) \Gamma^{O}_{q,d}
    \;=\; R^{V}_{k,d},
\end{align}
recovering the per-Value-entry formula~\eqref{eq:attn-target-rule}.
\end{theorem}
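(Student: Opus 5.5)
The plan is to mirror the linear-map step of the proof of Theorem~\ref{thm:ab-recovery}: write down the occupation recursion across the three transitions of \S\ref{app:attention-state-space}, then take the player difference. Since every transition is a fixed Markov step with a known discount, $\Gamma_x$ at each downstream state is a linear combination of $\Gamma_x$ at its upstream states. The weights are transition probability times the discount $\gamma$ of the entered state from~\eqref{eq:routing-edge-discount}. There is one branching per output entry, so no trajectory visits an attention-block state twice, and the occupation reduces to a sum of single-step contributions.

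\textbf{Step 1 (sign-oracle split).} From $s_{q,d,p}^{(\mathrm{att},\mathrm{O},\mathrm{act})}$, each branch has probability $\tfrac12$. The $+$ branch carries discount $2\alpha$ and the $-$ branch carries $2\beta$. This gives
\begin{align}
\Gamma_x\bigl(s_{(q,d),r,+}^{(\mathrm{att},\mathrm{O},\mathrm{lin})}\bigr)=\alpha\,\Gamma_x\bigl(s_{q,d,r}^{(\mathrm{att},\mathrm{O},\mathrm{act})}\bigr),\qquad \Gamma_x\bigl(s_{(q,d),r,-}^{(\mathrm{att},\mathrm{O},\mathrm{lin})}\bigr)=\beta\,\Gamma_x\bigl(s_{q,d,r'}^{(\mathrm{att},\mathrm{O},\mathrm{act})}\bigr).
\end{align}
The turn switch on the $-$ branch is exactly what produces $r'$.

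\textbf{Step 2 (Value routing).} Invoke Proposition~\ref{prop:attn-value-policy} with $\mu_q=A_{q,\cdot}$, which gives the policy~\eqref{eq:attn-branch-policy} with discount $1$. Player label, $\sigma$ and $d$ are preserved, so
\[
\Gamma_x\bigl(s_{(k,d),r,\sigma}^{(\mathrm{att},\mathrm{V},\mathrm{lin})}\bigr)=\sum_q \frac{A_{q,k}\tilde v^\sigma_{k,d}}{Z^\sigma_{q,d}}\,\Gamma_x\bigl(s_{(q,d),r,\sigma}^{(\mathrm{att},\mathrm{O},\mathrm{lin})}\bigr).
\]

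\textbf{Step 3 (V-projection routing).} Use~\eqref{eq:attn-V-routing-policy} with discount $1$:
\[
\Gamma_x\bigl(s_{(k,e),r}^{(\mathrm{att},\mathrm{X},\mathrm{act})}\bigr)=\sum_{d,\sigma}\frac{W^\sigma_{V,e,d}X_{k,e}}{\tilde v^\sigma_{k,d}}\,\Gamma_x\bigl(s_{(k,d),r,\sigma}^{(\mathrm{att},\mathrm{V},\mathrm{lin})}\bigr).
\]
Composing the three steps, the factor $\tilde v^\sigma_{k,d}$ cancels. This yields a per-player formula in which the $+$ stream carries $\alpha\,\Gamma_x(s_{q,d,r})$ and the $-$ stream carries $\beta\,\Gamma_x(s_{q,d,r'})$.

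Next, form the difference $r=+$ minus $r=-$, exactly as in the step leading to~\eqref{eq:routing-per-state-recursion}. The $\alpha$ term becomes $+\Gamma^O_{q,d}$ and the $\beta$ term becomes $-\Gamma^O_{q,d}$, which gives~\eqref{eq:attn-ab-recovery}. Comparing with~\eqref{eq:attn-target-input} and using the inductive identification $\Gamma^O_{q,d}=R^O_{q,d}$ yields $\Gamma^X=R^X$.

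For~\eqref{eq:attn-Gamma-V-corollary}, I would define $\Gamma^V_{k,d}$ as the $\alpha/\beta$-weighted player difference accumulated through $v_{k,d}$. I would then sum the Step~3 contributions over $e$ and use $\sum_e W^\sigma_{V,e,d}X_{k,e}=\tilde v^\sigma_{k,d}$, so the $e$-routing probabilities sum to one. This returns the Step~2 expression, which is~\eqref{eq:attn-target-rule}.

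\textbf{Expected obstacle: degenerate streams.} The main difficulty is bookkeeping at degenerate streams. If $\tilde v^\sigma_{k,d}=0$, the policy~\eqref{eq:attn-V-routing-policy} is undefined, and the cancellation in Step~3 needs care. I would handle this by noting that then $\pi^\star_{q,d,\sigma}(k)=0$, so no mass reaches that V-state. Similarly, when $Z^\sigma_{q,d}=0$ the value is $-\infty$ (Proposition~\ref{prop:attn-value-policy}), and the corresponding term is interpreted as $0$, matching the LRP convention.

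A second check is needed for the multi-head case: per-head contributions must be summed at the shared $X_{k,e}$ states. This is immediate, because occupation measures are additive over disjoint trajectory sets.
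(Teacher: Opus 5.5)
Your proposal is correct and follows the paper's proof. The paper composes the sign-oracle split, the Value-routing step and the V-projection step path by path into the per-player recursion \eqref{eq:attn-Gamma-X-recursion}, using the $\tfrac12\cdot 2$ and $\tilde v^\sigma_{k,d}$ cancellations, then takes the $r=+$ minus $r=-$ difference and aggregates over $e$ via $\sum_e W^\sigma_{V,e,d}X_{k,e}=\tilde v^\sigma_{k,d}$; routing through the intermediate lin-state occupations, as you do, is the same computation, and your added remarks on degenerate streams and on defining $\Gamma^V_{k,d}$ explicitly go slightly beyond what the paper spells out without changing the argument.
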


\begin{proof}
The inductive hypothesis carried over from the surrounding linear-layer recursion of \S\ref{app:routing-attribution} (applied at the layer immediately above the head) provides the output-side per-entry differences $\Gamma^{O}_{q,d} = R^{O}_{q,d}$ at every output position $(q,d)$.

\textbf{Per-state recursion at the input activation states.} Fix an input embedding $(k,e)$ and a player label $r \in \{+,-\}$. A trajectory reaches $s_{(k,e),r}^{(\mathrm{att},\,\mathrm{X},\,\mathrm{act})}$ from an output activation $s_{q,d,p}^{(\mathrm{att},\,\mathrm{O},\,\mathrm{act})}$ in three stages, with all transitions inside the attention block carrying their discounts and player-label switches as in the kernel of \S\ref{app:attention-state-space}. The path goes through one specific within-head feature dim $d \in \{1,\dots,d_h\}$ (an index along the focused head's $d_h$-wide feature axis): the sign-oracle is per $(q,d)$, the Value-routing policy $\pi_{q,d,\sigma}^\star$ depends on $d$ via $\tilde{v}_{k,d}^\sigma$, and the V-projection routing $\pi_{V,k,d,\sigma}^\star$ also depends on $d$. The $d$ index is dropped at the final transition into $X$, where the input embedding $X_{k,e}$ has indices $(k,e)$ and the within-head feature axis $d$ enters downstream at the V projection's output.
\begin{itemize}
    \item \emph{$\sigma = +$ branch (per $(q,d)$).} Sign-oracle takes $s_{q,d,p}^{(\mathrm{O},\mathrm{act})} \to s_{(q,d),p,+}^{(\mathrm{O},\mathrm{lin})}$ with probability $\tfrac12$ and trajectory discount $2\alpha$ (player retained). The Value-routing policy $\pi_{q,d,+}^\star(k)$ takes the trajectory to $s_{(k,d),p,+}^{(\mathrm{V},\mathrm{lin})}$ (player, sign, $d$ preserved, discount $1$). The V-projection routing $\pi_{V,k,d,+}^\star(e)$ takes the trajectory to $s_{(k,e),p}^{(\mathrm{X},\mathrm{act})}$ (player preserved, $d$ dropped, discount $1$). The composite per-$(q,d)$ contribution to $\Gamma_x(s_{(k,e),r}^{(\mathrm{X},\mathrm{act})})$ is
    \begin{align}
        \tfrac12 (2\alpha)\, \pi_{q,d,+}^\star(k)\, \pi_{V,k,d,+}^\star(e)\, \Gamma_x\!\bigl(s_{q,d,r}^{(\mathrm{O},\mathrm{act})}\bigr)
        &=\; \alpha\, \frac{A_{q,k}\, \tilde{v}_{k,d}^+}{Z_{q,d}^+}\,\frac{W_{V,e,d}^+\, X_{k,e}}{\tilde{v}_{k,d}^+}\, \Gamma_x\!\bigl(s_{q,d,r}^{(\mathrm{O},\mathrm{act})}\bigr) \\
        &=\; \alpha\, \frac{A_{q,k}\, W_{V,e,d}^+\, X_{k,e}}{Z_{q,d}^+}\, \Gamma_x\!\bigl(s_{q,d,r}^{(\mathrm{O},\mathrm{act})}\bigr),
    \end{align}
    where the $\tilde{v}_{k,d}^+$ in numerator and denominator cancel.
    \item \emph{$\sigma = -$ branch (per $(q,d)$).} Sign-oracle takes $s_{q,d,p}^{(\mathrm{O},\mathrm{act})} \to s_{(q,d),p',-}^{(\mathrm{O},\mathrm{lin})}$ with probability $\tfrac12$, discount $2\beta$, turn switched to opponent $p'$; the rest of the path preserves $p'$, ending at $s_{(k,e),p'}^{(\mathrm{X},\mathrm{act})}$. The contribution to $\Gamma_x(s_{(k,e),r}^{(\mathrm{X},\mathrm{act})})$ comes from sources with player $p = r'$ (so the switch yields $r$):
    \begin{align}
        \tfrac12 (2\beta)\, \pi_{q,d,-}^\star(k)\, \pi_{V,k,d,-}^\star(e)\, \Gamma_x\!\bigl(s_{q,d,r'}^{(\mathrm{O},\mathrm{act})}\bigr)
        \;=\; \beta\, \frac{A_{q,k}\, W_{V,e,d}^-\, X_{k,e}}{Z_{q,d}^-}\, \Gamma_x\!\bigl(s_{q,d,r'}^{(\mathrm{O},\mathrm{act})}\bigr).
    \end{align}
\end{itemize}
Summing over $(q,d)$ and over the two branches,
\begin{align}\label{eq:attn-Gamma-X-recursion}
    \Gamma_x\!\bigl(s_{(k,e),r}^{(\mathrm{att},\,\mathrm{X},\,\mathrm{act})}\bigr)
    \;=\; \sum_{q,d} \Bigl[
        \alpha\, \frac{A_{q,k}\, W_{V,e,d}^+\, X_{k,e}}{Z_{q,d}^+}\, \Gamma_x\!\bigl(s_{q,d,r}^{(\mathrm{O},\mathrm{act})}\bigr)
        \;+\;
        \beta\, \frac{A_{q,k}\, W_{V,e,d}^-\, X_{k,e}}{Z_{q,d}^-}\, \Gamma_x\!\bigl(s_{q,d,r'}^{(\mathrm{O},\mathrm{act})}\bigr)
    \Bigr],
\end{align}
where the $\tfrac12$ from the sign-oracle splits cancels against the $2$ in $2\alpha, 2\beta$.

\textbf{Player difference.} Differencing~\eqref{eq:attn-Gamma-X-recursion} for $r{=}+$ minus $r{=}-$ uses $\Gamma_x(s_{q,d,+}^{(\mathrm{O},\mathrm{act})}) - \Gamma_x(s_{q,d,-}^{(\mathrm{O},\mathrm{act})}) = \Gamma^{O}_{q,d}$ in the $+$ branch (player kept) and the opposite $-\Gamma^{O}_{q,d}$ in the $-$ branch (player switched):
\begin{align}
    \Gamma^{X}_{k,e}
    \;=\; \alpha\, \sum_{q,d} \frac{A_{q,k}\, W_{V,e,d}^+\, X_{k,e}}{Z_{q,d}^+}\, \Gamma^{O}_{q,d}
    \;-\; \beta\, \sum_{q,d} \frac{A_{q,k}\, W_{V,e,d}^-\, X_{k,e}}{Z_{q,d}^-}\, \Gamma^{O}_{q,d},
\end{align}
which is~\eqref{eq:attn-ab-recovery}.

\textbf{Target-rule identification.} The right-hand side matches~\eqref{eq:attn-target-input} verbatim. Since $\Gamma^{O}_{q,d} = R^{O}_{q,d}$, $\Gamma^{X}_{k,e} = R^{X}_{k,e}$. Aggregating over $e$ at fixed $(k,d)$ uses $\sum_e W_{V,e,d}^\sigma X_{k,e} = \tilde{v}_{k,d}^\sigma$, giving~\eqref{eq:attn-Gamma-V-corollary}, which matches~\eqref{eq:attn-target-rule}.
\end{proof}

\subsection{The reference as the policy of a QK oracle player}\label{app:attention-oracle}

\S\ref{app:attention-kl-routing} treated the softmax row $A_q$ as a fixed reference policy: numerical input to the Value-routing problem~\eqref{eq:attn-value-routing}, with no game-theoretic content of its own. We now show that $A_q$ is itself the optimal policy of an entropy-regularised one-shot control problem on the keys, played by an auxiliary \emph{oracle} agent whose only role is to fix the reference. This identification is what justifies calling the construction of \S\ref{app:attention-kl-routing} a two-player \emph{game}-theoretic recovery.

\subsubsection{Bilinear factorisation of QK scores}\label{app:attention-bilinear}

For one attention head with $S$ tokens, the detached query--key score is the bilinear form
\begin{align}\label{eq:attn-qk-bilinear}
    e_{qk} \;=\; \sum_{m=1}^{d_h} \frac{q_{qm}\, k_{km}}{\sqrt{d_h}}.
\end{align}
In our framework, $e_{qk}$ is the exp of game values (i.e., $e_{qk} = \exp(V_{qk})$) which in turn are the sum of exponentiated constituent log-form game values:
\begin{align}\label{eq:attn-qk-log-sum}
    e_{qk} \;=\; \sum_{m=1}^{d_h} \exp\!\left( \log q_{qm} + \log k_{km} - \frac{1}{2}\log d_h \right).
\end{align}
This lifts the log formulation of activations for the softmax: the softmax takes the activation $e_{qk}$ as its input, and its corresponding game value $V_{qk} = \log(e_{qk})$ naturally emerges from an addition-neuron structure over the feature dimensions.

\subsubsection{Oracle states for Q and K}\label{app:attention-oracle-states}

The QK pipeline introduces five further state types, parallel to the AV-side states of \S\ref{app:attention-state-space}; they live in the auxiliary \emph{oracle subgame} and do not carry explanatory occupation mass. Player labels are kept for notational uniformity but play no role in the recovery argument.
\begin{itemize}
    \item \textbf{QK-oracle decision states} $s_{q,p}^{(\mathrm{att},\,\mathrm{QK})}$, indexed by output token $q \in \{1,\dots,S\}$ and player $p \in \{+,-\}$. At $s_{q,p}^{(\mathrm{att},\,\mathrm{QK})}$ the oracle agent solves the entropy-regularised one-step problem~\eqref{eq:attn-qk-oracle} below and emits the softmax row $A_q$, which is handed to the Value-routing player at $s_{q,p,\sigma}^{(\mathrm{att},\,\mathrm{O},\,\mathrm{lin})}$ as the reference $\mu_q$ in~\eqref{eq:attn-value-routing}.
    \item \textbf{Query and Key activation states} $s_{q,p}^{(\mathrm{att},\,\mathrm{Q},\,\mathrm{act})}$ and $s_{k,p}^{(\mathrm{att},\,\mathrm{K},\,\mathrm{act})}$, indexed by tokens $q$ and $k$ respectively, and player $p$. These behave like usual activation states but \emph{without} the stopping option, as the Q and K projections lack non-linear activations.
    \item \textbf{Query and Key linear states} $s_{(q,m),p,\sigma}^{(\mathrm{att},\,\mathrm{Q},\,\mathrm{lin})}$ and $s_{(k,m),p,\sigma}^{(\mathrm{att},\,\mathrm{K},\,\mathrm{lin})}$, indexed by token, feature dimension $m \in \{1,\dots,d_h\}$, player $p$, and sign $\sigma \in \{+,-\}$. These states calculate the positive and negative projection parts as game values for $X W_Q$ and $X W_K$, respectively.
\end{itemize}

By the formulation in~\eqref{eq:attn-qk-log-sum}, the game value $V_{qk} = \log e_{qk}$ perfectly matches the mechanics of an addition neuron combined with concurrent trajectory splits. We map this explicitly:

\begin{itemize}
    \item \textbf{Feature-dimension split:} After the oracle agent selects a key $k$ at $s_{q,p}^{(\mathrm{att},\,\mathrm{QK})}$, an unobserved oracle uniformly samples a feature dimension $m \in \{1,\dots,d_h\}$. This acts exactly like the addition neuron: to maintain conservation of the exponentiated values, the environment applies a game value discount of $\gamma_{\mathrm{O}} = d_h$.
    \item \textbf{Multiplication via concurrent log-values:} To compute the product, the game splits deterministically so both branches are played. The trajectory visits both the Query activation $s_{q,p}^{(\mathrm{att},\,\mathrm{Q},\,\mathrm{act})}$ and the Key activation $s_{k,p}^{(\mathrm{att},\,\mathrm{K},\,\mathrm{act})}$. The $\log q_{qm}$ and $\log k_{km}$ are added as game values in the log form. Concurrently, a constant immediate log-payoff of $-\frac{1}{2}\log d_h$ is added to account for the scaling fraction.
    \item \textbf{Linear projection recovery:} Arriving at the activation states, the lack of a stop option forces a direct transition into the respective Query and Key linear states $s_{(q,m),p,\sigma}^{(\mathrm{att},\,\mathrm{Q},\,\mathrm{lin})}$ and $s_{(k,m),p,\sigma}^{(\mathrm{att},\,\mathrm{K},\,\mathrm{lin})}$, where the positive and negative projection parts are calculated as game values.
\end{itemize}

Below these Q and K linear states, the projections $X W_Q$ and $X W_K$ behave as standard linear layers governed by the recursion in \S\ref{app:routing-attribution}. Crucially, this happens \emph{strictly within the oracle subgame}. While the oracle game plays through all layers earlier than the attention module, these oracle game trajectories are not considered in the \ac{RG} occupation measure.

\subsubsection{Entropy-regularised QK problem}

The detached QK stage is modelled as an auxiliary single-step control problem played by the oracle agent at $s_{q,p}^{(\mathrm{att},\,\mathrm{QK})}$:
\begin{align}\label{eq:attn-qk-oracle}
    V_q^{\mathrm{QK}}
    \;\coloneqq\;
    \max_{\pi \in \Delta(\{1,\dots,S\})}
    \left\{
        \sum_{k=1}^S \pi(k)\, e_{qk}
        \;+\;
        \mathcal{H}(\pi)
    \right\}.
\end{align}

\begin{proposition}[QK oracle policy]\label{prop:attn-qk-oracle}
For every query token $q$, the unique optimiser of~\eqref{eq:attn-qk-oracle} is the softmax row
\begin{align}\label{eq:attn-qk-policy}
    \pi_q^{\mathrm{QK}\star}(k)
    \;=\;
    A_{qk}
    \;\coloneqq\;
    \frac{\exp(e_{qk})}{\sum_{m=1}^S \exp(e_{qm})},
\end{align}
and the optimal value is
\begin{align}\label{eq:attn-qk-value}
    V_q^{\mathrm{QK}} \;=\; \ell\!\left( \sum_{m=1}^S \exp(e_{qm}) \right).
\end{align}
\end{proposition}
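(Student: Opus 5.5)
The claim is a direct instance of the Legendre--Fenchel identity~\eqref{eq:legendre-fenchel-app} for Shannon entropy at temperature $\theta=1$, so the proof is essentially a specialisation.

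Fix a query token $q$. Take the finite action set $\{1,\dots,S\}$ and the payoff vector $r_k \coloneqq e_{qk}$. These payoffs are real because the detached bilinear score~\eqref{eq:attn-qk-bilinear} is finite. The objective in~\eqref{eq:attn-qk-oracle} is then exactly $\sum_k \pi(k)\, r_k + \mathcal{H}(\pi)$.

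First, I would establish uniqueness. The objective is linear plus the strictly concave $\mathcal{H}$ on the compact simplex, so it is strictly concave there and the maximiser is unique. This mirrors the argument in Theorem~\ref{thm:routing-forward}(i).

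Second, I would read off the optimiser from~\eqref{eq:legendre-fenchel-app}: $\pi^\star(k) \propto \exp(e_{qk})$. Normalising gives exactly the softmax row $A_{qk}$ of~\eqref{eq:attn-qk-policy}. As a cross-check, the same result follows from Lemma~\ref{lem:kl-legendre-fenchel} with uniform $\mu$, up to the additive constant $-\log S$ noted after that lemma.

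Third, I would compute the value. By~\eqref{eq:legendre-fenchel-app}, $V_q^{\mathrm{QK}} = \log\sum_m \exp(e_{qm})$. Since the argument of the logarithm is strictly positive, $\ell$ agrees with $\log$ there, which yields~\eqref{eq:attn-qk-value}.

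The only point needing care is a conceptual mismatch. The text around~\eqref{eq:attn-qk-log-sum} treats $e_{qk}$ as an \emph{exponentiated} game value, whereas~\eqref{eq:attn-qk-oracle} places $e_{qk}$ itself in the payoff slot. For the proposition, I would simply note that the payoff is taken to be the raw logit $e_{qk}$, so the identity applies verbatim. I would add a remark that the log-space reading only concerns how $e_{qk}$ is produced upstream, not the oracle's one-step problem.
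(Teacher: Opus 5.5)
Your proposal is correct and matches the paper's proof, which is exactly the specialisation of the entropy Legendre--Fenchel identity~\eqref{eq:legendre-fenchel-app} at $\theta=1$ with payoff $r(k)=e_{qk}$, reading off the softmax optimiser and the log-partition value. Your additions (strict concavity for uniqueness, the uniform-reference KL cross-check, and $\ell=\log$ on the strictly positive partition sum) are consistent refinements of that argument, not a different route.
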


\begin{proof}
Direct application of the entropy Legendre--Fenchel identity~\eqref{eq:legendre-fenchel-app} at temperature $\theta = 1$, with $n = S$ and payoff $r(k) = e_{qk}$: the unique optimiser is the Gibbs measure $\pi^\star(k) \propto \exp(e_{qk})$, normalising to $A_{qk}$ as in~\eqref{eq:attn-qk-policy}, and the optimal value is the log-partition $\ell(\sum_m \exp(e_{qm}))$ as in~\eqref{eq:attn-qk-value}.
\end{proof}

\subsection{Risk-Averse Attention via ADF}\label{app:risk-attn}

When \ac{ADF} variance propagation is active, let $s_{qk}$ denote the uncertainty score attached to attending from query token $q$ to key token $k$ (in the current implementation, $s_{qk} = \hat{\sigma}_k$ depends only on the key token). Risk-averse attention shifts the QK-logits in the direction of reduced uncertainty,
\begin{align}\label{eq:risk-attn-policy}
    \tilde e_{qk} = e_{qk} + \lambda_{\mathrm{sm}}\, s_{qk},
    \qquad \lambda_{\mathrm{sm}} \le 0,
\end{align}
yielding the risk-modified policy
\begin{align}\label{eq:risk-attn-policy-softmax}
    \tilde{\pi}_q(k) = \frac{\exp(\tilde e_{qk})}{\sum_{k'} \exp(\tilde e_{qk'})}.
\end{align}
The sign convention $\lambda_{\mathrm{sm}} \le 0$ is used throughout the paper and in all tables: more-negative values penalise high-uncertainty keys more strongly, so the shift always redirects relevance toward tokens whose QK-interactions are stable under input noise. The shift is computed in the AV-backward hook from cached pre-softmax logits, so the forward pass and model predictions remain unchanged.

The naming \emph{softmax-shift risk aversion} is a deliberate analogue of the activation-side risk aversion $\lambda$ but acts in a different direction: $\lambda$ is risk-averse on the value side (it routes through stable activations by lowering mean$-|\lambda|\sigma$), whereas $\lambda_{\mathrm{sm}}<0$ concentrates the routing policy on \emph{low}-uncertainty keys --- i.e.\ the policy becomes more confident, not less. We retain the ``risk-averse'' label because what is being avoided is uncertainty in the routed pathway, but the reader should note that the shift acts on the policy entropy, not on the player's utility curvature.

\subsection{Implementation details}\label{app:vit-implementation}

The forward and backward passes share three operator-level conventions that together preserve \ac{LRP} conservation through the non-linearities of the ViT and align the explanatory walk with the state-space construction of \S\ref{app:attention-state-space}. We refer to this collectively as the DAVE-style \emph{conditioned forward pass}~\citep{Wrobel2024}, augmented with the QK gradient detachment that the QK oracle subgame of \S\ref{app:attention-oracle} requires.

\paragraph{LayerNorm detachment.}\label{app:dave-conditioned}
Standard backpropagation through $\text{LN}(x) = \gamma (x - \mu(x))/\sigma(x) + \beta$ routes gradients through the data-dependent statistics $\mu(x), \sigma(x)$, which couple all tokens and violate \ac{LRP} conservation~\citep{Ali2022}. Following the detached-operator construction of~\citet{Wrobel2024}, we compute $\mu, \sigma$ from a detached copy $c = x.\text{detach}()$, so the backward pass sees only the pointwise linear map $x \mapsto \gamma x / \sigma(c) + \beta$ and conservation is preserved.

\paragraph{GELU gate detachment.}
GELU gates are conditioned analogously: the gate is implemented as $x \cdot \Phi(c)$ with $c = x.\text{detach}()$ and $\Phi$ the standard normal CDF, so the backward pass treats the gate as a pointwise scaling by the (constant) activation $\Phi(c)$.

\paragraph{Softmax detachment.}
Softmax outputs are fully detached. This matches the QK oracle interpretation of \S\ref{app:attention-oracle}: the cached softmax row $A_q$ is the equilibrium policy of an auxiliary entropy-regularised problem, supplied as a fixed input to the Value-routing player, and the cumulative trajectory discount~\eqref{eq:routing-discount} carries factor $\gamma = 0$ on every QK successor edge.

\paragraph{QK gradient detachment.}\label{app:attention-grad-detach}
The softmax and QK-matmul backward hooks return zero gradients, ensuring that the QK trajectory is used \emph{only} for computing the detached kernel $A_q$ and that no relevance flows through $Q$ or $K$. The explanatory walk therefore matches the occupation-measure construction of \S\ref{app:attention-recovery}: the QK oracle supplies the KL reference, and all explanatory mass through the attention block flows through the split Value tensor by~\eqref{eq:attn-branch-policy}.

All ViT experiments use these four conventions; together they realise the detached-operator semantics of~\citet{Wrobel2024} so that gradients flow only through the linear effective transformation.

\subsection{Equivariant Spatial Averaging}\label{app:equivariant-averaging-section}

\subsubsection{Motivation}

The $16{\times}16$ patch tokenization breaks translation equivariance. Following DAVE's transform-averaging strategy~\citep{Wrobel2024}, we average attributions over $N$ random spatial transformations (rotations ${\pm}10^\circ$, translations ${\pm}5\%$, horizontal flips) via the classical Reynolds-operator construction, canceling architecture-specific noise while preserving image-dependent structure. The conditioned forward pass of \S\ref{app:vit-implementation} separates operator variation from the effective transformation, in the same spirit as DAVE's conditioned-operator argument~\citep{Wrobel2024}.

\subsubsection{Implementation}\label{app:equivariant-averaging}

Equivariant averaging means computing explanations on transformed inputs, mapping each explanation back to the original image coordinates, and then averaging the aligned maps. Following the DAVE idea of transform-averaged attributions~\citep{Wrobel2024}, our implementation uses random rotation ($\pm 20^\circ$), translation ($\pm 10\%$ of image extent), and horizontal flip (prob.\ $0.5$); applies variance-preserving noise $x_t = (1{-}t)x + \sqrt{1{-}(1{-}t)^2}\varepsilon$ with $t \sim U[0, 0.5]$ and $\varepsilon \sim \mathcal{N}(0, I)$; computes the conditioned forward + backward pass; inverse-transforms the gradient; and averages the aligned maps. The final attribution is $\text{DAVE}(X) = \frac{1}{T}\sum_t \tau_t^{-1}(W_t) \odot X$.

\newpage
\section{Analytical Modes}\label{app:modes}

This appendix collects the formal derivations for the analytical modes that the game framework exposes on top of the $\alpha\beta$-\ac{LRP} anchor: the local temperature deformation $\tau$ on the linear-state Gibbs readout, the activation-side risk-aversion $\lambda$ via \ac{ADF} variance propagation, and the noisy-observation gate $\tilde\Phi$ on the player's stop/continue comparison. Each mode maps a concept in game space (focus, risk aversion) to an explicit modification of the backward propagation rule. The mode-by-mode empirical sweeps live in Appendix~\ref{app:dual-sweeps}.

\subsection{Temperature \texorpdfstring{$\tau$}{tau} as a Controlled Policy Deformation}\label{app:routing-temp}

Theorem~\ref{thm:routing-forward}(i) establishes an SPNE for every $\tau > 0$ when the entropy weight $\tau$ is used throughout the backward recursion. In practice, we deform $\tau$ only \emph{locally} at each linear subgame's in-round Gibbs readout,
\begin{align}\label{eq:routing-gibbs-general-tau}
    \pi_{j,\sigma}^{(\tau)\star}(i) \propto \exp\!\bigl(Q_q(s_{j,q,\sigma}^{(l, lin)}, i)/\tau\bigr),
\end{align}
while keeping the game values $V(s_{i,q}^{(l-1, act)})$ at their $\tau{=}1$ equilibrium---i.e.\ at the network activations recovered by Theorem~\ref{thm:routing-forward}(ii). The players are therefore \emph{unaware} of the $\tau$-deformation: their decisions rest on the Bellman equations at temperature 1 (the network forward pass), and only the local routing readout is perturbed. Under non-negative predecessor activations this gives $\pi_{j,\sigma}^{(\tau)\star}(i) \propto (a_i^{(l-1)} W_{ij}^{(l,\sigma)})^{1/\tau}$, so $\tau<1$ sharpens and $\tau>1$ flattens the route distribution; $\alpha\beta$-\ac{LRP} is recovered exactly only at $\tau=1$.

\subsection{Activation-Side Risk Aversion \texorpdfstring{$\lambda$}{lambda} via ADF Variance Propagation}\label{app:adf-variance}

The activation-side risk-aversion mode replaces the deterministic activation $a_i^{(l)} = \relu(z_i^{(l)})$ that the player observes by a mean--variance confidence bound $V_\lambda = \mu + \lambda\sigma$ ($\lambda<0$) computed from an \ac{ADF} variance propagation pass~\citep{gast2018lightweight}. We use full Gaussian moment matching through activation functions using $\E[\relu(X)]$ and $\mathrm{Var}[\relu(X)]$ for $X \sim \mathcal{N}(\mu, \sigma^2)$~\citep{gast2018lightweight}.

The propagated variance is initialised at the input by the scale $\sigma^2$ at every input neuron. Negative $\lambda$ shifts the player's observed game value at stopping layers (not linear layers!) by a lower-confidence amount, attenuating routes through high-variance neurons; this is the activation-side analogue of confidence-bound exploration in bandit / RL settings~\citep{auer2002finite, zhang2021risk}. The softmax-shift variant $\lambda_{\mathrm{sm}}$ on the attention-logit oracle (\acs{ViT}) is treated separately in Appendix~\ref{app:vit-extension}.

\subsection{Noisy-Observation Gate \texorpdfstring{$\tilde\Phi$}{Phi}}\label{app:noisy-gate-mode}

The noisy-observation mode replaces the player's pure stop/continue policy at every activation state by a \emph{mixed} policy. At $s_{i,q}^{(l, \text{act})}$, the player continues with probability
\begin{align}\label{eq:noisy-gate-defn}
    \tilde\Phi(z_i^{(l)}) \;\coloneqq\; \Phi(z_i^{(l)})
\end{align}
and stops with the complementary probability $1 - \Phi(z_i^{(l)})$, where $\Phi$ is the standard normal \ac{CDF}. The mixed policy~\eqref{eq:noisy-gate-defn} arises as the unique solution of an regularised maximisation that the player can solve directly. Define the probit regulariser
\begin{align}\label{eq:noisy-gate-regulariser}
    \mathcal{H}_\Phi(\pi) \;\coloneqq\; \phi\!\bigl(\Phi^{-1}(\pi)\bigr),
    \qquad \pi \in [0,1],
\end{align}
where $\phi(y) = (2\pi)^{-1/2}\,e^{-y^2/2}$ is the standard normal probability density and $\Phi^{-1}$ is the quantile function (inverse of the standard normal \ac{CDF}).

\begin{lemma}[Probit dual of the soft gate]\label{lem:noisy-gate-dual}
For every $z\in\RR$, the regularised maximisation
\begin{align}\label{eq:noisy-gate-objective}
    \max_{\pi \in [0,1]}\; \bigl\{\,\pi z \;+\; \mathcal{H}_\Phi(\pi)\,\bigr\}
\end{align}
admits the unique maximiser $\pi^\star = \Phi(z)$ and optimum value
\begin{align}\label{eq:noisy-gate-optimum}
    z\,\Phi(z) + \phi(z) \;=\; \E_{\varepsilon\sim\mathcal{N}(0,1)}\!\bigl[\max(z+\varepsilon,\,0)\bigr].
\end{align}
\end{lemma}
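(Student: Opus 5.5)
The plan is to treat the objective $g(\pi) \coloneqq \pi z + \phi(\Phi^{-1}(\pi))$ as a strictly concave function on $[0,1]$ and solve its first-order condition in closed form via the substitution $y = \Phi^{-1}(\pi)$. First I will record the boundary behaviour. Since $\Phi^{-1}(\pi)\to\mp\infty$ as $\pi\to 0,1$, we get $\mathcal{H}_\Phi(0)=\mathcal{H}_\Phi(1)=0$, so $g$ extends continuously to the closed interval and a maximiser exists.

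Next I will differentiate on $(0,1)$. With $y=\Phi^{-1}(\pi)$ we have $dy/d\pi = 1/\phi(y)$, and $\phi'(y) = -y\,\phi(y)$. The chain rule then gives
\begin{align}
\frac{d}{d\pi}\mathcal{H}_\Phi(\pi) = \frac{\phi'(y)}{\phi(y)} = -\Phi^{-1}(\pi).
\end{align}
Hence $g'(\pi) = z - \Phi^{-1}(\pi)$. This derivative is strictly decreasing, because $\Phi^{-1}$ is strictly increasing, so $g$ is strictly concave. Its unique zero is $\pi^\star = \Phi(z)\in(0,1)$. Moreover $g'\to+\infty$ at $0^+$ and $g' \to -\infty$ at $1^-$, so neither endpoint can be a maximiser. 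This establishes uniqueness of $\pi^\star$.

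Substituting back gives the optimum value $z\Phi(z) + \phi(\Phi^{-1}(\Phi(z))) = z\Phi(z)+\phi(z)$. For the identity with the smoothed ReLU, I will compute $\E[\max(z+\varepsilon,0)] = \int_{-z}^{\infty}(z+\varepsilon)\phi(\varepsilon)\,d\varepsilon$. The first term gives $z(1-\Phi(-z)) = z\Phi(z)$. For the second term, using $\int_{-z}^\infty \varepsilon\phi(\varepsilon)\,d\varepsilon = [-\phi(\varepsilon)]_{-z}^{\infty} = \phi(-z)=\phi(z)$.

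The only delicate point is the derivative of $\mathcal{H}_\Phi$ and its endpoint behaviour. Everything else is routine. As a sanity check, the value $z\Phi(z)+\phi(z)$ is exactly the convex conjugate structure: its derivative in $z$ is $\Phi(z)$, consistent with the envelope theorem.
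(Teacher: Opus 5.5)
Your proposal is correct and follows the paper's proof: differentiate $\mathcal{H}_\Phi$ using $\phi'(y)=-y\,\phi(y)$ and the inverse-function rule to get $-\Phi^{-1}(\pi)$, solve the first-order condition $z=\Phi^{-1}(\pi)$, and substitute back. Your strict-concavity and endpoint argument supplies the uniqueness justification that the paper only asserts, and your explicit integral verifies the Gaussian-smoothing identity that the paper calls standard.
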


\begin{proof}
Using $\phi'(y) = -y\,\phi(y)$ and the inverse-function derivative $(\Phi^{-1})'(\pi) = 1/\phi(\Phi^{-1}(\pi))$,
\begin{align}\label{eq:noisy-gate-deriv}
    \frac{\mathrm{d}\mathcal{H}_\Phi}{\mathrm{d}\pi}
    \;=\; \phi'\!\bigl(\Phi^{-1}(\pi)\bigr) \cdot \frac{1}{\phi(\Phi^{-1}(\pi))}
    \;=\; -\Phi^{-1}(\pi).
\end{align}
The first-order condition $z - \Phi^{-1}(\pi) = 0$ yields $\pi^\star = \Phi(z)$; substituting back and using $\phi(\Phi^{-1}(\Phi(z))) = \phi(z)$ gives the left-hand side of~\eqref{eq:noisy-gate-optimum}. The right-hand identity is the standard Gaussian smoothing $\E[\max(z+\varepsilon,0)] = z\Phi(z) + \phi(z)$.
\end{proof}

\begin{remark}[Optimal policy under perceived payoff noise]\label{rem:noisy-gate-perception}
The optimum $\pi^\star = \Phi(z_i^{(l)})$ admits an equivalent interpretation as the marginal of an optimal pure rule: the player perceives the continuation payoff at $s_{i,q}^{(l)}$ as the noisy quantity $z_i^{(l)} + \varepsilon$ with $\varepsilon \sim \mathcal{N}(0,1)$ independent across states, and applies the noiseless-optimal pure rule \emph{``continue iff the perceived payoff is strictly positive''} on this perception. Marginalising the rule over $\varepsilon$ at fixed $z_i^{(l)}$ recovers the continue probability,
\begin{align}\label{eq:noisy-gate-marginal}
    \P_{\varepsilon\sim\mathcal{N}(0,1)}\!\bigl(z_i^{(l)} + \varepsilon > 0\bigr) \;=\; \Phi(z_i^{(l)}).
\end{align}
The realised continuation payoff is the deterministic $z_i^{(l)}$ from the actual network forward pass: the noise $\varepsilon$ lives in the player's perception of the payoff while the environment pays out $z_i^{(l)}$ on continuation, so the expected backward Bellman comparison at $s_{i,q}^{(l)}$ is the soft mass $\Phi(z_i^{(l)}) \cdot z_i^{(l)}$ in place of the deterministic $\1[z_i^{(l)} > 0]\cdot z_i^{(l)}$.
\end{remark}

The regulariser $\mathcal{H}_\Phi$ is symmetric around $\pi = \tfrac12$ and peaks there at $\phi(0) = 1/\sqrt{2\pi}$ (where $\Phi^{-1}(\tfrac12) = 0$), tapering to $0$ at the boundary $\pi\in\{0,1\}$. It rewards \emph{indecisiveness} --- exactly the role Shannon entropy $\mathcal{H}(\pi)$ plays in the Softplus variant of \S\ref{app:softplus-stopping-game}, where the entropy bonus is the active player's surplus from being allowed to mix. The optimum $\pi^\star = \Phi(z) = \E_{\varepsilon\sim\mathcal{N}(0,1)}[\,\1(z+\varepsilon > 0)\,]$ is the hard ReLU gate averaged over a Gaussian shift of its threshold, aligning with the original stochastic-gate motivation for \acs{GELU}~\citep{Hendrycks2016gelu}. We remark that indeed the optimal value \eqref{eq:noisy-gate-optimum} carries the term $\phi(z)$, which in common GeLU implementations is dropped for computational simplicity.

\paragraph{Weighting the regulariser.} Adding a weight $\sigma > 0$ to $\mathcal{H}_\Phi$, the same first-order condition gives
\begin{align}\label{eq:noisy-gate-weighted}
    \arg\max_{\pi\in[0,1]} \bigl\{\pi z + \sigma\,\mathcal{H}_\Phi(\pi)\bigr\}
    \;=\; \Phi\!\bigl(z/\sigma\bigr),
\end{align}
which is the marginal of the pure rule $\1(z + \sigma\varepsilon > 0)$ under $\varepsilon \sim \mathcal{N}(0,1)$, i.e.\ perception noise of variance $\sigma^2$. As $\sigma\to 0$ the policy collapses to the hard ReLU gate $\1[z > 0]$; as $\sigma$ grows the policy is pulled toward $\Phi(0) = \tfrac12$, hedging more strongly against the perceived signal. We do not study $\sigma\neq 1$ further in this paper.

On \ac{ReLU} networks the forward pass still computes $\relu(z_i^{(l)})$ exactly, so the downstream game values inherited from Theorem~\ref{thm:forward-stopping} are unchanged: the player evaluates the soft gate $\tilde\Phi$ on the \emph{current} layer and reuses the original network activations for everything below.

\paragraph{Forward equivalence on \acs{GELU} networks.}\label{app:gelu-noisy-gate}
The forward pass of a \acs{GELU} network already computes $a_i^{(l)} = \Phi(z_i^{(l)}) \cdot z_i^{(l)}$, exactly the smooth-gate quantity that $\tilde\Phi$ defines. The player therefore consistently assumes $\tilde\Phi$ for \emph{future} action values too: the player-specific payoff at activation state $s_{i,q}^{(l, \mathrm{act})}$ propagates the smooth-gated $\Phi(z_i^{(l)})\cdot z_i^{(l,p\oplus q)}$ from layer $l-1$ instead of the hard-gated $\1[z_i^{(l)} > 0]\cdot z_i^{(l,p\oplus q)}$, and the forward equivalence $\tilde V_p^{\pi^\star}(s_{i,q}^{(l, \mathrm{act})}) = \Phi(z_i^{(l)})\cdot z_i^{(l,p\oplus q)}$ is the \acs{GELU} analogue of Theorem~\ref{thm:forward-stopping}. The same backward proof goes through with $\1[z > 0]$ replaced by $\Phi(z)$ at every layer, so the Stopping-Game gradient theorem (Theorem~\ref{thm:deriv-occ}) lifts to \acs{GELU} architectures by exactly this substitution. Structurally, \ac{ReLU}$\to$\acs{GELU} is the Gaussian-shock counterpart of the \ac{ReLU}$\to$Softplus smoothing of \S\ref{app:softplus-stopping-game}: the regularisers $\mathcal{H}_\Phi$ (Lemma~\ref{lem:noisy-gate-dual}) and $\mathcal{H}$ (\eqref{eq:legendre-fenchel-app}) are Fenchel conjugates of the \acs{GELU} and Softplus primitives respectively.

\newpage
\section{Comprehensive Baseline Comparison}\label{app:baselines}

This appendix collects everything needed to reproduce, audit, and visually inspect the per-method results reported in the main paper. It is organised into three parts:
\begin{itemize}
    \item \S\ref{app:metrics} \textbf{Evaluation Details}: the dataset and 50-class ImageNet-S subset, the validation/test split, the mode-selection grid, the preprocessing convention, the closed-form definitions of the localisation, faithfulness, and robustness metrics computed through Quantus~\citep{hedstrom2023quantus}, and the pretrained backbone provenance.
    \item \S\ref{app:baselines-quantitative} \textbf{Quantitative}: the full per-(method, configuration) table for every baseline and ours on \ac{ViT}-B/16, VGG-16, and ResNet-50; one row per (method, mode) with no deduplication or selection. Method blocks (gradient-based, \ac{LRP} / relevance-propagation, CAM, perturbation) are described once in \S\ref{app:baseline-methods} and then read across all three architectures.
    \item \S\ref{app:qualitative} \textbf{Qualitative}: side-by-side attribution heatmaps on six ImageNet-S images, comparing two \ac{RG}-family modes against the main-table baselines on \ac{ViT}-B/16, with raw evaluator-side normalisation.
\end{itemize}

\subsection{Evaluation Details}\label{app:metrics}

\subsubsection{Dataset, Splits, and Protocol}

We evaluate on a \emph{custom} 50-class subset of ImageNet-S~\citep{gao2022luss}, sampled once at the start of the project and held fixed across all experiments. The subset has two partitions: a \textbf{custom validation split of 50 images} used only for mode selection, and the full \textbf{test split} used for the numbers reported in the main and appendix baseline tables. The main-paper evaluation protocol has two stages.
\begin{enumerate}
    \item \textbf{Mode-selection grid on the custom 50-image validation split.} For every method we sweep the per-method ranges in Table~\ref{tab:main-search-grid}. The single configuration per method reported in Tables~\ref{tab:eval-vit} and~\ref{tab:eval-vgg16} is the validation winner under the localisation rank-sum criterion below. The larger quantitative appendix tables (Appendix~\ref{app:baselines-quantitative}) report the retained top configurations from these sweeps.
    \item \textbf{Final evaluation on the test split.} For each method we keep the \emph{single} configuration with the best localisation rank-sum over $\{\text{AL, PG, TK}\}$ on the validation split — plus, for \ac{RG} / \ac{RG}+Eq, the $\alpha\beta$-\ac{LRP} anchor $(\alpha,\beta,\varepsilon,\tau){=}(2,1,0.5,1)$ with all other parameters at their defaults as a theorem-driven reference configuration. These rows, together with the baselines picked under the same rule, are then evaluated on the full 566-image ImageNet-S test split; the numbers reported in the main tables are test-split numbers. Faithfulness- and robustness-best configurations are available in the exhaustive baselines tables (Appendix~\ref{app:baselines-quantitative}) for readers who prefer a different axis. The fan-in entropy-sparsification mode $\lambda_{\mathrm{ent}}$ is excluded from the main-body grid (it is helpful on VGG-16 but destabilises ResNet-50 and ViT-B/16) and treated separately in Appendix~\ref{app:entropy-mode}.
\end{enumerate}

\begin{table}[H]
\centering
\caption{Per-method validation search grid used for the main VGG-16 and ViT-B/16 tables. Architecture-specific methods are swept only where they appear in the corresponding main table.}
\label{tab:main-search-grid}
\small
\renewcommand{\arraystretch}{1.15}
\begin{tabular}{@{}l p{0.66\textwidth}@{}}
\toprule
Method & Grid \\
\midrule
Gradient & single config (no tunable params) \\
SmGrad & $(N, \sigma, \text{agg}) \in \{10\}\times\{0.05, 0.1, 0.15, 0.2, 0.3\}\times\{\text{mean}, \text{std}\}$ $\,\cup\,$ $\{(30, 0.1, \text{mean}), (30, 0.2, \text{mean})\}$ \quad (12 configs) \\
IntGrad & $N_{\text{steps}} \in \{5, 10, 20, 50, 100\}$ \quad (5 configs) \\
DeepLift & $\{\text{default}\} \cup \{\text{rule}=\gamma, \gamma\!\in\!\{0.1, 0.25, 0.5\}\} \cup \{\text{rule}=z^+\}$ \quad (5 configs) \\
DAVE & $N \in \{25, 50\}$ at paper-fixed (max\_angle$=20$, max\_shift\_frac$=0.1$, flip\_prob$=0.5$, noise\_t\_max$=0.5$, multiply\_input$=$True) \quad (2 configs; ViT-B/16 only) \\
SG & $\lambda \in \{-5, -2, -1, -0.3, 0\}$, $\sigma^2 \in \{1, 2, 5, 10\}$, with the noisy-gate mode enabled where selected \quad (VGG-16 only) \\
LRP-$\varepsilon$ & $\varepsilon \in \{0.01, 0.05, 0.1, 0.25, 0.5, 1, 2\}$ \quad (7 configs) \\
LRP-$\gamma$ & $(\varepsilon, \gamma) \in \{0.1, 0.25, 0.5\}\times\{0.1, 0.25, 0.5, 1, 2\}$ \quad (15 configs) \\
LRP-$|z|$ & single config \quad (ViT-B/16 only) \\
AttnLRP & pure-$\varepsilon$ (Linear/Conv only): $\varepsilon \in \{0.05, 0.1, 0.25, 0.5, 1.0\}$ \quad $\cup$ paper-faithful $\gamma$-composite~\citep{achtibat2024attnlrp}: anchor $(\gamma_{\text{conv}}{=}0.25, \gamma_{\text{lin}}{=}0.05, \varepsilon_{\text{attn}}{=}0.25)$ + univariate sweeps $\gamma_{\text{lin}}\!\in\!\{0.01, 0.025, 0.1, 0.25\}$, $\gamma_{\text{conv}}\!\in\!\{0.1, 0.5, 1.0\}$, $\varepsilon_{\text{attn}}\!\in\!\{0.05, 0.1, 0.5, 1.0\}$ \quad (17 configs; ViT-B/16 only) \\
TIBAV & single config \quad (ViT-B/16 only) \\
GAE & single config \quad (ViT-B/16 only) \\
RG & $\tau \in \{0.5, 0.6, 0.7, 0.8, 0.9, 1.0\}$, $\lambda \in \{-5, -2, -1, -0.3, 0\}$, $\lambda_{\mathrm{sm}} \in \{-50, -10, -5, -2, -1, 0\}$ (ViT-B/16 only), $\sigma^2 \in \{1, 2, 5, 10\}$, $\varepsilon \in \{0.1, 0.25, 0.5, 1.0\}$; the $\alpha\beta$-\ac{LRP} anchor is added as the theorem-driven reference row, not as a separate baseline method \\
RG+Eq & same \ac{RG} grid with equivariant Reynolds averaging $N=50$ \quad (ViT-B/16 only) \\
AR & residual-weight $w_r \in \{0.0, 0.25, 0.5, 0.75, 1.0\}$ \quad (5 configs; ViT-B/16 only) \\
LCAM & single config \quad (VGG-16 only) \\
GCAM++ & single config \quad (VGG-16 only) \\
\bottomrule
\end{tabular}
\end{table}

\subsubsection{Preprocessing Conventions}

Let $x \in \RR^{C \times H \times W}$ be an image, let $y$ be its target class, and let $a \in \RR^{H \times W}$ be the scalar attribution map produced by an explanation method after channel reduction. Write $d \coloneqq HW$ and let $\text{vec}(a) \in \RR^d$ denote the flattened map. For any binary segmentation mask $m \in \{0,1\}^{H \times W}$, write $\text{vec}(m) \in \{0,1\}^d$.

When a metric is configured with \texttt{abs=true} and \texttt{normalise=true}, the evaluator applies
\begin{align}
    \psi(a) = \frac{|\text{vec}(a)|}{\max_i |\text{vec}(a)_i|},
\end{align}
with the convention that the all-zero map remains zero. We denote the preprocessed attribution vector by $\bar{a}$.

\subsubsection{Localisation Metrics}

\paragraph{Attribution localisation.} The inside-mass ratio used in the best-practice LRP evaluation literature~\citep{Kohlbrenner2020}:
\begin{align}
    \text{AL}(a, m) = \frac{\sum_{i=1}^d \bar{a}_i m_i}{\sum_{i=1}^d \bar{a}_i + 10^{-9}}.
\end{align}

\paragraph{Pointing game.} Following the Excitation Backprop protocol~\citep{Zhang2018excitation}, the pointing-game config uses \texttt{abs=true} and no normalization. Let $M(a) \coloneqq \{i \in \{1, \dots, d\} : \bar{a}_i = \max_j \bar{a}_j\}$ be the entire set of maximizers. Then
\begin{align}
    \text{PG}(a, m) = \1\bigl\{M(a) \cap \{i : m_i = 1\} \neq \varnothing\bigr\}.
\end{align}
Ties are scored as a hit whenever at least one maximizer lies inside the object mask.

\paragraph{Top-$k$ intersection.} With $k = 1000$ and the same ranking permutation $\pi$:
\begin{align}
    \text{TKI}_{1000}(a, m) = \frac{1}{1000} \sum_{r=1}^{1000} m_{\pi_r}.
\end{align}

\subsubsection{Faithfulness Metrics}

\paragraph{Pixel-flipping area under the curve (AUC).} The main tables use \texttt{pixel\_flipping\_abs} with \texttt{features\_in\_step = 1568} and \texttt{perturb\_baseline = mean}. Let $\pi$ sort $\bar{a}$ in descending order. For $T \coloneqq \lceil d / 1568 \rceil$, define perturbed inputs $x^{(t)}$ by replacing features $\pi_{(t-1) \cdot 1568 + 1}, \dots, \pi_{\min\{t \cdot 1568, d\}}$ with the per-sample mean baseline. The evaluator computes
\begin{align}
    f_t \coloneqq F(x^{(t)})_y, \qquad t = 1, \dots, T,
\end{align}
and reports the trapezoidal area $\text{PF-AUC} = \text{trapz}\bigl(\{(t-1)/(T-1), f_t\}_{t=1}^T\bigr)$. Lower is better.

\subsubsection{Robustness Metrics}

\paragraph{Max sensitivity.} Following \citet{yeh2019fidelity}, the evaluator regenerates attributions on perturbed inputs and reports the maximum change in the attribution map under small input perturbations. Lower is better, as it indicates stability of the explanation to input noise.

\paragraph{Avg sensitivity.} Same set-up as Max Sensitivity but reports the \emph{mean} (rather than the worst-case) change in the attribution map across the perturbation samples; complementary to MaxS in that it is less dominated by outlier perturbations. Lower is better.

\subsubsection{Metric Configurations}\label{app:metric-configs}

All Quantus metrics use absolute-value attributions with min-max normalisation. Faithfulness metrics use raw logits (no softmax) to avoid saturation effects. Perturbation baseline is the channel-wise mean.

\begin{table}[H]
\centering
\caption{Quantus metric configurations.}
\label{tab:metric-configs}
\footnotesize
\begin{tabular}{l l l}
\toprule
Metric & Key parameters & Category \\
\midrule
Attribution Localisation & abs=T, normalise=T & Localisation \\
Pointing Game & abs=T & Localisation \\
Top-$k$ Intersection & abs=T, normalise=T & Localisation \\
Pixel Flipping & features\_in\_step=1568, AUC$_{20}$ (VGG) / full (ViT) & Faithfulness \\
Selectivity & features\_in\_step=1568, abs=T, normalise=T, perturb\_baseline=mean & Faithfulness \\
Max Sensitivity & nr\_samples=50, abs=T, normalise=T & Robustness \\
Avg Sensitivity & nr\_samples=50, abs=T, normalise=T & Robustness \\
\bottomrule
\end{tabular}
\end{table}

\subsubsection{Backbone and Dataset Provenance}\label{app:implementation-provenance}

All main-paper experiments use the pretrained torchvision checkpoints shipped with the library release used for evaluation. Specifically, VGG-16 uses \texttt{VGG16\_Weights.IMAGENET1K\_V1} (\url{https://download.pytorch.org/models/vgg16-397923af.pth}; recipe \url{https://github.com/pytorch/vision/tree/main/references/classification#alexnet-and-vgg}), ResNet-50 uses \texttt{ResNet50\_Weights.IMAGENET1K\_V2} (\url{https://download.pytorch.org/models/resnet50-11ad3fa6.pth}; recipe \url{https://github.com/pytorch/vision/issues/3995#issuecomment-1013906621}), and ViT-B/16 uses \texttt{ViT\_B\_16\_Weights.IMAGENET1K\_V1} (\url{https://download.pytorch.org/models/vit_b_16-c867db91.pth}; recipe \url{https://github.com/pytorch/vision/tree/main/references/classification#vit_b_16}). These are ImageNet-1K classifiers trained on the ImageNet / ILSVRC benchmark~\citep{Deng2009imagenet,Russakovsky2015ilsvrc}. The evaluation benchmark is ImageNet-S from~\citet{gao2022luss}; our project uses the fixed 50-class subset described above. Quantitative scoring is performed with Quantus~\citep{hedstrom2023quantus}.

\subsection{Quantitative}\label{app:baselines-quantitative}

This part presents the full evaluation results for all baseline attribution methods and all tested configurations.
Each row is one (method, mode) combination; no deduplication or selection is applied.
Metrics are unified across models: AL (Attribution Localisation~$\uparrow$), PG (Pointing Game~$\uparrow$), TK (Top-$k$ Intersection~$\uparrow$), PF$_5$/PF$_{20}$ (Pixel Flipping AUC at 5\%/20\%~$\downarrow$), Sel.\ (Selectivity~$\downarrow$).
Bold marks the best value per column.
``--'' indicates the metric was not computed for that configuration.
Inference time per image ($t$/img) is measured on an NVIDIA Titan~V.

\subsubsection{Method abbreviations and table blocks}\label{app:baseline-methods}

\paragraph{Gradient-based block.}
\textbf{Gradient} is the raw input-gradient saliency map~\citep{simonyan2014deep}. \textbf{Gr$\times$Inp} multiplies the input gradient by the input itself, following the standard reference baseline used in DeepLIFT-style comparisons~\citep{Shrikumar2016}. \textbf{SmGrad} is SmoothGrad, i.e.\ an average over noisy gradient maps~\citep{Smilkov2017}. \textbf{IntGrad} is Integrated Gradients~\citep{sundararajan2017axiomatic}. \textbf{DeepLift} propagates activation differences relative to a reference input~\citep{Shrikumar2017}. \textbf{DAVE} is the distribution-aware ViT gradient decomposition of~\citet{Wrobel2024}. \textbf{SG} is our \ac{SG}, included in this block because it recovers the gradient as a Stopping-Game occupation measure (Theorem~\ref{thm:gradient}).

\paragraph{LRP / relevance-propagation block.}
\textbf{LRP-$\varepsilon$} denotes the stabilised Layer-wise Relevance Propagation rule~\citep{Bach2015}. \textbf{LRP-$\gamma$} is the positive-weight-amplified LRP variant used in best-practice comparisons~\citep{Kohlbrenner2020}. \textbf{LRP-$|z|$} is the absolute-value / absLRP variant with a shared denominator~\citep{vukadin2024abslrp}. \textbf{AttnLRP} is transformer LRP with explicit attention-block propagation~\citep{achtibat2024attnlrp}. \textbf{RG} is our \ac{RG}. \textbf{RG+Eq} is the same method with equivariant Reynolds averaging, used mainly on ViT to suppress tokenisation-grid artefacts. These methods share the conservative relevance-transport viewpoint, which is why they occupy the same main-table block.

\paragraph{CAM / attention-map block.}
\textbf{LCAM} is LayerCAM, which combines class-activation evidence from multiple convolutional layers~\citep{Jiang2021layercam}. \textbf{GCAM} is Grad-CAM~\citep{Selvaraju2017gradcam}. \textbf{GCAM++} is Grad-CAM++~\citep{Chattopadhyay2018gradcampp}. \textbf{HiResCAM} is the high-resolution CAM variant of~\citet{Draelos2020hirescam}. This block is present only when such spatial CAM baselines are evaluated; in the ViT table it is omitted, and transformer relevance propagation remains in the previous block.

\paragraph{Perturbation block.}
\textbf{LIME} is Local Interpretable Model-agnostic Explanations~\citep{Ribeiro2016lime}. \textbf{KShap} denotes KernelSHAP, the model-agnostic weighted-regression estimator of SHAP values~\citep{Lundberg2017}. These methods explain predictions by perturbing the input and fitting a local surrogate, so they are separated from the backward-propagation methods above.

\subsubsection{ViT-B/16}\label{app:baselines-vit}
{\scriptsize
\begingroup
\footnotesize
\setlength{\tabcolsep}{1pt}
\begin{longtable}{@{}p{1.3cm} p{4.5cm} c c c c c c c c c c@{}}
\caption{ViT-B/16 (ImageNet-S): top-8 configurations per method, ranked by \ac{AttrLoc}. Abbreviations, block layout and colour conventions follow Table~\ref{tab:eval-vit}. The first row of each \ac{RG} / \ac{RG}+Eq sub-block is the $\alpha{=}2,\beta{=}1,\varepsilon{=}0.5,\tau{=}1$ $\alpha\beta$-\ac{LRP} anchor (Theorem~\ref{thm:lrp-recovery}). After dropping the dominated $\tau{\geq}2$ branch and restricting each method to its top-8 by \ac{AttrLoc}.}
\label{tab:baselines-vit} \\
\toprule
Method & Config & AL $\uparrow$ & PG $\uparrow$ & TK $\uparrow$ & PF$_5$ $\downarrow$ & PF$_{20}$ $\downarrow$ & PF$_{100}$ $\downarrow$ & Sel.\ $\downarrow$ & MaxS $\downarrow$ & AvgS $\downarrow$ & $t$/img \\
\midrule
\endfirsthead
\multicolumn{12}{c}{\small\itshape continued from previous page} \\
\toprule
Method & Config & AL $\uparrow$ & PG $\uparrow$ & TK $\uparrow$ & PF$_5$ $\downarrow$ & PF$_{20}$ $\downarrow$ & PF$_{100}$ $\downarrow$ & Sel.\ $\downarrow$ & MaxS $\downarrow$ & AvgS $\downarrow$ & $t$/img \\
\midrule
\endhead
\bottomrule
\endfoot
Gradient & -- & 0.42 & 0.40 & 0.41 & 8.53 & 8.26 & 5.33 & 5.99 & 2.18 & 1.57 & 0.07 \\
\midrule
Gr$\times$Inp & -- & 0.42 & 0.40 & 0.41 & 8.51 & 8.29 & 5.18 & 6.07 & 2.20 & 1.11 & 0.05 \\
\midrule
\multirow{3}{*}{SmGrad} & $N$=30, $\sigma$=0.1 & 0.41 & 0.47 & 0.40 & 8.43 & 8.06 & 5.12 & 5.96 & 0.66 & 0.54 & 1.52 \\
 & $N$=30, $\sigma$=0.2 & 0.43 & 0.55 & 0.47 & 8.38 & 7.91 & 4.83 & 5.82 & \textcolor{gray}{\textbf{0.47}} & 0.39 & 1.52 \\
 & $N$=30, $\sigma$=0.15 & 0.42 & 0.52 & 0.44 & 8.40 & 7.97 & 4.98 & 5.89 & 0.54 & \textcolor{gray}{\textbf{0.38}} & 1.45 \\
\midrule
\multirow{2}{*}{IntGrad} & $N$=20 & 0.47 & 0.58 & 0.52 & 8.39 & 7.93 & 4.30 & 5.65 & 1.24 & 0.76 & 1.01 \\
 & $N$=50 & 0.46 & 0.55 & 0.50 & 8.40 & 7.79 & 4.45 & 5.27 & 0.89 & 0.82 & 2.56 \\
\midrule
DeepLift & -- & \textcolor{gray}{\textbf{0.57}} & \textcolor{gray}{\textbf{0.80}} & \textcolor{gray}{\textbf{0.72}} & \textcolor{gray}{\textbf{8.21}} & \textcolor{gray}{\textbf{7.30}} & \textcolor{gray}{\textbf{3.44}} & 4.97 & 0.61 & 0.62 & 0.13 \\
\midrule
\multirow{2}{*}{DAVE} & $N$=50 & 0.47 & 0.72 & 0.54 & 8.35 & 7.82 & 4.60 & 4.85 & 0.77 & 0.77 & 2.57 \\
 & $N$=50 & 0.48 & 0.66 & 0.62 & 8.29 & 7.82 & 4.38 & \textcolor{gray}{\textbf{4.38}} & 0.83 & 0.76 & 2.60 \\
\specialrule{1.2pt}{2pt}{2pt}
\multirow{7}{*}{LRP-$\varepsilon$} & $\varepsilon$=0.1 & 0.49 & 0.55 & 0.56 & 8.31 & 7.78 & 4.23 & 5.37 & 45.71 & 16.42 & 0.14 \\
 & $\varepsilon$=0.5 & 0.60 & 0.75 & 0.72 & 8.29 & 7.46 & 3.66 & 5.05 & 76.85 & 40.96 & 0.14 \\
 & $\varepsilon$=0.01 & 0.43 & 0.47 & 0.46 & 8.57 & 8.29 & 5.17 & 5.76 & 52.45 & 6.40 & 0.14 \\
 & $\varepsilon$=0.1 & 0.49 & 0.55 & 0.56 & 8.42 & 7.76 & 4.28 & 5.00 & 50.69 & 29.03 & 0.13 \\
 & $\varepsilon$=0.25 & 0.55 & 0.66 & 0.67 & 8.27 & 7.41 & 3.85 & 4.62 & 106.15 & 43.36 & 0.13 \\
 & $\varepsilon$=0.5 & 0.60 & 0.74 & 0.73 & 8.23 & 7.31 & 3.70 & 4.50 & 40.96 & 49.73 & 0.13 \\
 & $\varepsilon$=1 & 0.63 & 0.81 & 0.77 & 8.26 & 7.41 & 3.80 & 4.48 & 39.75 & 31.54 & 0.14 \\
\midrule
\multirow{4}{*}{LRP-$\gamma$} & $\varepsilon$=0.25, $\gamma$=0.1 & 0.52 & 0.67 & 0.67 & 8.26 & 7.34 & 3.59 & 4.40 & 14.31 & 8.78 & 0.29 \\
 & $\varepsilon$=0.25, $\gamma$=0.25 & 0.50 & 0.65 & 0.65 & 8.27 & 7.35 & 3.69 & 4.43 & 4.00 & 0.92 & 0.29 \\
 & $\varepsilon$=0.25, $\gamma$=0.5 & 0.48 & 0.64 & 0.65 & 8.30 & 7.36 & 3.83 & 4.53 & 0.91 & 1.49 & 0.29 \\
 & $\varepsilon$=0.25, $\gamma$=1 & 0.47 & 0.63 & 0.63 & 8.35 & 7.44 & 3.96 & 4.68 & 2.13 & 3.88 & 0.29 \\
\midrule
\multirow{3}{*}{LRP-Box} & $\varepsilon$=0.25, $\gamma$=0.1, [-3,3] & 0.51 & 0.74 & 0.72 & 8.31 & 7.36 & 3.32 & 4.60 & 24.85 & 15.60 & 0.29 \\
 & $\varepsilon$=0.25, $\gamma$=0.25, [-3,3] & 0.49 & 0.72 & 0.71 & 8.29 & 7.26 & 3.10 & 4.63 & 1.42 & 7.31 & 0.29 \\
 & $\varepsilon$=0.25, $\gamma$=0.5, [-3,3] & 0.47 & 0.68 & 0.69 & 8.30 & 7.24 & \textcolor{red}{\textbf{3.02}} & 4.72 & 6.45 & 3.03 & 0.29 \\
\midrule
\multirow{2}{*}{RelLRP} & mn=0.1 & 0.42 & 0.40 & 0.41 & 8.51 & 8.29 & 5.18 & 6.07 & 1.63 & 1.13 & 0.04 \\
 & mn=0.5 & 0.42 & 0.40 & 0.41 & 8.51 & 8.29 & 5.18 & 6.07 & 1.56 & 1.33 & 0.04 \\
\midrule
LRP-$|z|$ & -- & 0.40 & 0.45 & 0.42 & 8.57 & 8.40 & 6.12 & 6.38 & 64.93 & 156.62 & 0.09 \\
\midrule
CP-LRP & -- & 0.39 & 0.39 & 0.37 & 8.51 & 8.20 & 5.12 & 6.09 & 1.81 & 1.93 & 0.05 \\
\midrule
AttnLRP & -- & 0.55 & 0.72 & 0.68 & 8.22 & 7.33 & 3.74 & 4.53 & 33.22 & 23.74 & 0.14 \\
\midrule
TIBAV & -- & 0.49 & 0.60 & 0.54 & 8.39 & 8.09 & 5.72 & 5.07 & 0.81 & 0.59 & 0.16 \\
\midrule
GAE & -- & 0.54 & 0.76 & 0.61 & \textcolor{red}{\textbf{8.10}} & 7.52 & 4.86 & 4.46 & 0.50 & 0.44 & 0.06 \\
\midrule
\multirow{10}{*}{RG} & -- & 0.55 & 0.78 & 0.74 & 8.29 & \textcolor{red}{\textbf{7.23}} & 3.23 & 4.80 & 0.44 & \textcolor{red}{\textbf{0.15}} & 0.14 \\
 & $\tau$=0.5, $\lambda_{\mathrm{sm}}$=-10, $\sigma^2$=2 & 0.71 & 0.85 & 0.80 & 8.31 & 7.50 & 3.74 & 4.98 & 0.63 & 0.57 & 0.31 \\
 & $\tau$=0.5, $\lambda$=-1, $\lambda_{\mathrm{sm}}$=-10, $\sigma^2$=2 & 0.71 & 0.85 & 0.80 & 8.31 & 7.50 & 3.74 & 4.98 & 0.63 & 0.53 & 0.31 \\
 & $\tau$=0.5, $\lambda$=-2, $\lambda_{\mathrm{sm}}$=-10, $\sigma^2$=2 & \textcolor{red}{\textbf{0.71}} & 0.85 & 0.80 & 8.31 & 7.50 & 3.74 & 4.98 & 0.63 & 0.56 & 0.31 \\
 & $\tau$=0.5, $\lambda$=-5, $\lambda_{\mathrm{sm}}$=-10, $\sigma^2$=2 & 0.71 & 0.85 & 0.80 & 8.30 & 7.50 & 3.75 & 4.99 & 0.64 & 0.56 & 0.31 \\
 & $\tau$=0.6, $\lambda_{\mathrm{sm}}$=-10, $\sigma^2$=2 & 0.69 & 0.83 & 0.81 & 8.33 & 7.52 & 3.72 & 4.92 & 0.51 & 0.44 & 0.31 \\
 & $\tau$=0.6, $\lambda$=-1, $\lambda_{\mathrm{sm}}$=-10, $\sigma^2$=2 & 0.69 & 0.83 & 0.81 & 8.32 & 7.51 & 3.70 & 4.92 & 0.52 & 0.44 & 0.31 \\
 & $\tau$=0.6, $\lambda$=-2, $\lambda_{\mathrm{sm}}$=-10, $\sigma^2$=2 & 0.69 & 0.84 & 0.81 & 8.32 & 7.50 & 3.69 & 4.93 & 0.53 & 0.47 & 0.31 \\
 & $\tau$=0.6, $\lambda$=-5, $\lambda_{\mathrm{sm}}$=-10, $\sigma^2$=2 & 0.69 & 0.84 & \textcolor{red}{\textbf{0.81}} & 8.31 & 7.50 & 3.70 & 4.95 & 0.54 & 0.48 & 0.31 \\
 & $\tau$=0.8 & 0.60 & 0.79 & 0.77 & 8.30 & 7.30 & 3.40 & 4.85 & \textcolor{red}{\textbf{0.23}} & 0.43 & 0.15 \\
\midrule
\multirow{9}{*}{RG+Eq} & $N$=50 & 0.55 & 0.78 & 0.75 & 8.38 & 7.60 & 4.61 & 4.12 & 0.58 & 0.51 & 7.17 \\
 & $\tau$=0.7, $N$=50 & 0.61 & 0.84 & 0.79 & 8.36 & 7.54 & 4.87 & 4.25 & 0.66 & 0.64 & 7.42 \\
 & $\tau$=0.8, $N$=50 & 0.59 & 0.80 & 0.78 & 8.37 & 7.58 & 4.75 & 4.16 & 0.61 & 0.64 & 7.40 \\
 & $\tau$=0.9, $N$=50 & 0.57 & 0.78 & 0.77 & 8.40 & 7.57 & 4.69 & \textcolor{red}{\textbf{4.11}} & 0.58 & 0.60 & 7.56 \\
 & $\tau$=0.5, $N$=50 & 0.64 & \textcolor{gray}{\textbf{0.86}} & 0.80 & 8.38 & 7.62 & 5.20 & 4.47 & 0.85 & 0.83 & 7.39 \\
 & $\tau$=0.6, $N$=50 & 0.63 & 0.80 & 0.80 & 8.39 & 7.60 & 4.98 & 4.36 & 0.74 & 0.68 & 7.63 \\
 & $\tau$=0.6, $\lambda_{\mathrm{ent}}$=0.1, $N$=50 & 0.57 & 0.80 & 0.76 & 8.42 & 7.62 & 5.24 & 4.50 & 0.72 & 0.69 & 8.76 \\
 & $\tau$=0.7, $\lambda_{\mathrm{ent}}$=0.1, $N$=50 & 0.57 & 0.82 & 0.76 & 8.37 & 7.56 & 5.01 & 4.38 & 0.66 & 0.66 & 8.80 \\
 & $\tau$=0.5, $\lambda_{\mathrm{ent}}$=0.1, $N$=50 & 0.58 & 0.82 & 0.75 & 8.38 & 7.71 & 5.45 & 4.55 & 0.84 & 0.82 & 8.73 \\
\specialrule{1.2pt}{2pt}{2pt}
RawAttn & -- & 0.35 & 0.38 & 0.27 & 8.50 & 8.18 & 5.87 & 5.38 & 0.91 & 0.77 & 0.01 \\
\midrule
\multirow{4}{*}{AR} & $w_r$=0 & \textcolor{gray}{\textbf{0.54}} & \textcolor{gray}{\textbf{0.72}} & \textcolor{gray}{\textbf{0.70}} & 8.45 & 7.97 & 5.13 & 4.61 & 0.27 & 0.28 & 0.02 \\
 & $w_r$=0.5 & 0.50 & 0.68 & 0.65 & \textcolor{gray}{\textbf{8.43}} & 7.99 & 4.99 & \textcolor{gray}{\textbf{4.50}} & 0.25 & 0.23 & 0.02 \\
 & $w_r$=0.25 & 0.53 & 0.72 & 0.67 & 8.44 & \textcolor{gray}{\textbf{7.97}} & 5.07 & 4.57 & \textcolor{gray}{\textbf{0.24}} & 0.23 & 0.02 \\
 & $w_r$=0.75 & 0.49 & 0.52 & 0.54 & 8.44 & 8.00 & \textcolor{gray}{\textbf{4.90}} & 4.52 & 0.26 & \textcolor{gray}{\textbf{0.22}} & 0.02 \\
\specialrule{1.2pt}{2pt}{2pt}
LIME & $N$=500, seg=80 & \textcolor{gray}{\textbf{0.51}} & \textcolor{red}{\textbf{0.94}} & \textcolor{gray}{\textbf{0.63}} & \textcolor{gray}{\textbf{8.21}} & \textcolor{gray}{\textbf{7.85}} & \textcolor{gray}{\textbf{5.70}} & 5.71 & 1.33 & 1.04 & 3.48 \\
\midrule
KShap & $N$=500, seg=80 & 0.44 & 0.72 & 0.55 & 8.33 & 8.06 & 6.28 & \textcolor{gray}{\textbf{5.50}} & \textcolor{gray}{\textbf{1.01}} & \textcolor{gray}{\textbf{0.83}} & 3.60 \\
\end{longtable}
\endgroup
}

\subsubsection{VGG-16}\label{app:baselines-vgg16}
{\scriptsize
\begingroup
\footnotesize
\setlength{\tabcolsep}{1pt}
\begin{longtable}{@{}p{1.3cm} p{4.5cm} c c c c c c c c c c@{}}
\caption{VGG-16 (ImageNet-S): top-8 configurations per method, ranked by \ac{AttrLoc}. Conventions as in Table~\ref{tab:baselines-vit}.}
\label{tab:baselines-vgg16} \\
\toprule
Method & Config & AL $\uparrow$ & PG $\uparrow$ & TK $\uparrow$ & PF$_5$ $\downarrow$ & PF$_{20}$ $\downarrow$ & PF$_{100}$ $\downarrow$ & Sel.\ $\downarrow$ & MaxS $\downarrow$ & AvgS $\downarrow$ & $t$/img \\
\midrule
\endfirsthead
\multicolumn{12}{c}{\small\itshape continued from previous page} \\
\toprule
Method & Config & AL $\uparrow$ & PG $\uparrow$ & TK $\uparrow$ & PF$_5$ $\downarrow$ & PF$_{20}$ $\downarrow$ & PF$_{100}$ $\downarrow$ & Sel.\ $\downarrow$ & MaxS $\downarrow$ & AvgS $\downarrow$ & $t$/img \\
\midrule
\endhead
\bottomrule
\endfoot
Gradient & -- & 0.51 & 0.72 & 0.68 & 14.10 & 10.72 & 4.79 & 5.30 & 2.31 & 0.96 & 0.06 \\
\midrule
Gr$\times$Inp & -- & 0.51 & 0.67 & 0.62 & 14.15 & 10.77 & 4.86 & 5.72 & 2.19 & 0.94 & 0.02 \\
\midrule
\multirow{3}{*}{SmGrad} & $N$=30, $\sigma$=0.1 & 0.53 & \textcolor{gray}{\textbf{0.83}} & 0.79 & 15.64 & 11.68 & 5.16 & 5.15 & 0.45 & 0.35 & 0.78 \\
 & $N$=30, $\sigma$=0.2 & 0.53 & 0.83 & 0.80 & 15.74 & 11.61 & 4.92 & 4.95 & \textcolor{gray}{\textbf{0.35}} & 0.22 & 0.78 \\
 & $N$=30, $\sigma$=0.15 & 0.53 & 0.81 & \textcolor{gray}{\textbf{0.80}} & 15.75 & 11.71 & 5.02 & 5.01 & 0.35 & 0.29 & 0.79 \\
\midrule
\multirow{2}{*}{IntGrad} & $N$=20 & 0.54 & 0.77 & 0.69 & 13.52 & 9.90 & 4.24 & 5.08 & 1.83 & 0.96 & 0.46 \\
 & $N$=50 & 0.54 & 0.78 & 0.70 & 14.45 & 10.51 & 4.39 & 5.29 & 1.27 & 0.98 & 1.30 \\
\midrule
\multirow{3}{*}{DeepLift} & -- & 0.57 & 0.77 & 0.72 & \textcolor{gray}{\textbf{12.90}} & \textcolor{gray}{\textbf{9.11}} & 3.73 & 4.66 & 4.22 & 1.05 & 0.04 \\
 & $\gamma$=0.25, rule=gamma & 0.61 & 0.80 & 0.70 & 14.54 & 9.46 & \textcolor{gray}{\textbf{3.45}} & \textcolor{gray}{\textbf{3.66}} & 0.54 & 0.48 & 0.16 \\
 & rule=z\_plus & 0.59 & 0.78 & 0.73 & 15.96 & 11.19 & 4.50 & 4.01 & 0.38 & 0.33 & 0.06 \\
\midrule
\multirow{9}{*}{SG} & $\lambda$=-2, $\sigma^2$=1, $\tilde\Phi$ & 0.61 & 0.74 & 0.69 & 17.00 & 13.25 & 5.72 & 4.76 & 1.63 & \textcolor{red}{\textbf{0.17}} & 0.36 \\
 & $\lambda$=-2, $\sigma^2$=1, $\lambda_{\mathrm{ent}}$=0.3, $\tilde\Phi$ & \textcolor{gray}{\textbf{0.62}} & 0.74 & 0.67 & 17.10 & 13.45 & 5.87 & 4.87 & 1.03 & 0.21 & 0.29 \\
 & $\lambda$=-2, $\sigma^2$=1, $\lambda_{\mathrm{ent}}$=0.3 & 0.62 & 0.71 & 0.62 & 17.27 & 14.40 & 7.62 & 5.29 & 1.28 & 0.22 & 0.38 \\
 & $\lambda$=-2, $\sigma^2$=1, $\lambda_{\mathrm{ent}}$=0.3, $\tilde\Phi$ & 0.62 & 0.72 & 0.65 & 17.19 & 13.67 & 6.00 & 4.96 & 1.17 & 0.23 & 0.38 \\
 & $\lambda$=-2, $\sigma^2$=1, $\lambda_{\mathrm{ent}}$=0.3 & 0.62 & 0.70 & 0.61 & 17.31 & 14.47 & 7.52 & 5.31 & 1.33 & 0.25 & 0.27 \\
 & $\lambda$=-2, $\sigma^2$=1, $\lambda_{\mathrm{ent}}$=0.2, $\tilde\Phi$ & 0.62 & 0.72 & 0.69 & 18.23 & 14.23 & 5.45 & 5.00 & 1.33 & 0.18 & 0.29 \\
 & $\lambda$=-3, $\sigma^2$=1, $\tilde\Phi$ & 0.62 & 0.70 & 0.66 & 18.32 & 14.52 & 5.50 & 5.09 & 1.41 & 0.18 & 0.27 \\
 & $\lambda$=-3, $\sigma^2$=1, $\lambda_{\mathrm{ent}}$=0.2, $\tilde\Phi$ & 0.62 & 0.70 & 0.66 & 18.36 & 14.60 & 5.59 & 5.17 & 1.18 & 0.19 & 0.32 \\
 & $\lambda$=-3, $\sigma^2$=1, $\lambda_{\mathrm{ent}}$=0.3, $\tilde\Phi$ & 0.62 & 0.69 & 0.65 & 18.39 & 14.66 & 5.66 & 5.22 & 1.30 & 0.22 & 0.32 \\
\specialrule{1.2pt}{2pt}{2pt}
\multirow{5}{*}{LRP-$\varepsilon$} & $\varepsilon$=0.1 & 0.58 & 0.66 & 0.69 & 14.42 & 9.95 & 4.18 & 3.55 & 2.33 & 1.20 & 0.04 \\
 & $\varepsilon$=0.5 & 0.59 & 0.70 & 0.74 & 14.45 & 9.87 & 3.37 & \textcolor{red}{\textbf{3.49}} & 2.08 & 0.83 & 0.04 \\
 & $\varepsilon$=0.01 & 0.53 & 0.70 & 0.64 & 15.00 & 11.23 & 4.86 & 5.81 & 1.27 & 1.23 & 0.05 \\
 & $\varepsilon$=0.25 & 0.58 & 0.78 & 0.69 & 13.06 & 8.93 & 3.43 & 4.62 & 1.10 & 0.94 & 0.05 \\
 & $\varepsilon$=1 & 0.63 & 0.78 & 0.73 & 12.45 & 7.73 & 2.76 & 4.04 & 0.76 & 0.90 & 0.05 \\
\midrule
\multirow{4}{*}{LRP-$\gamma$} & $\varepsilon$=0.25, $\gamma$=0.1 & 0.61 & 0.84 & 0.69 & \textcolor{red}{\textbf{12.40}} & \textcolor{red}{\textbf{7.39}} & \textcolor{red}{\textbf{2.58}} & 3.50 & 0.61 & 0.54 & 0.15 \\
 & $\varepsilon$=0.25, $\gamma$=0.5 & 0.60 & 0.80 & 0.69 & 14.10 & 9.19 & 3.27 & 3.57 & 0.42 & 0.39 & 0.16 \\
 & $\varepsilon$=0.25, $\gamma$=0.25 & 0.61 & 0.84 & 0.69 & 13.56 & 8.53 & 2.96 & 3.50 & 0.44 & 0.46 & 0.16 \\
 & $\varepsilon$=0.25, $\gamma$=1 & 0.59 & 0.80 & 0.69 & 14.47 & 9.59 & 3.52 & 3.65 & 0.36 & 0.32 & 0.16 \\
\midrule
\multirow{3}{*}{LRP-Box} & $\varepsilon$=0.25, $\gamma$=0.1, [-3,3] & 0.62 & 0.82 & 0.71 & 14.84 & 9.51 & 3.41 & 3.49 & 0.35 & 0.32 & 0.16 \\
 & $\varepsilon$=0.25, $\gamma$=0.25, [-3,3] & 0.62 & 0.84 & 0.72 & 15.35 & 10.08 & 3.75 & 3.56 & 0.41 & 0.29 & 0.15 \\
 & $\varepsilon$=0.25, $\gamma$=0.5, [-3,3] & 0.61 & 0.84 & 0.72 & 15.54 & 10.29 & 3.87 & 3.62 & \textcolor{red}{\textbf{0.29}} & \textcolor{gray}{\textbf{0.26}} & 0.15 \\
\midrule
\multirow{2}{*}{RelLRP} & mn=0.1 & 0.51 & 0.67 & 0.62 & 14.15 & 10.77 & 4.86 & 5.72 & 1.92 & 0.92 & 0.02 \\
 & mn=0.5 & 0.51 & 0.67 & 0.62 & 14.15 & 10.77 & 4.86 & 5.72 & 1.91 & 0.93 & 0.02 \\
\midrule
LRP-$|z|$ & -- & 0.59 & \textcolor{gray}{\textbf{0.86}} & \textcolor{gray}{\textbf{0.76}} & 14.63 & 10.57 & 4.29 & 5.17 & 0.76 & 0.76 & 0.04 \\
\midrule
CP-LRP & -- & 0.52 & 0.68 & 0.61 & 14.67 & 11.19 & 4.81 & 5.90 & 1.16 & 1.00 & 0.03 \\
\midrule
\multirow{10}{*}{RG} & -- & 0.61 & 0.75 & 0.69 & 14.04 & 9.43 & 3.48 & 3.57 & 3.90 & 0.44 & 0.13 \\
 & $\tau$=0.8 & 0.64 & 0.69 & 0.66 & 15.01 & 9.90 & 3.16 & 3.64 & 0.56 & 0.49 & 0.10 \\
 & $\tau$=0.7, $\lambda_{\mathrm{ent}}$=0.2 & 0.65 & 0.68 & 0.66 & 15.35 & 10.32 & 3.25 & 3.69 & 0.60 & 0.46 & 0.14 \\
 & $\tau$=0.7, $\lambda_{\mathrm{ent}}$=0.3 & 0.65 & 0.71 & 0.66 & 15.65 & 10.75 & 3.44 & 3.80 & 0.58 & 0.45 & 0.13 \\
 & $\tau$=0.7, $\lambda$=-0.3, $\lambda_{\mathrm{ent}}$=0.3 & 0.65 & 0.70 & 0.65 & 15.88 & 11.20 & 3.66 & 3.87 & 0.54 & 0.44 & 0.25 \\
 & $\lambda$=-1, $\sigma^2$=10 & 0.65 & 0.76 & 0.66 & 16.06 & 11.05 & 3.52 & 4.00 & 0.57 & 0.45 & 0.39 \\
 & $\lambda$=-1, $\sigma^2$=10 & \textcolor{red}{\textbf{0.65}} & 0.76 & 0.66 & 16.03 & 11.07 & 3.52 & 4.00 & 0.43 & 0.45 & 0.38 \\
 & $\lambda$=-1, $\sigma^2$=10 & 0.65 & 0.76 & 0.66 & 16.05 & 11.06 & 3.55 & 4.00 & 0.48 & 0.41 & 0.39 \\
 & $\lambda$=-1, $\sigma^2$=5 & 0.65 & 0.73 & 0.65 & 16.02 & 11.08 & 3.55 & 3.97 & 0.65 & 0.69 & 0.39 \\
 & $\lambda$=-1, $\sigma^2$=5 & 0.65 & 0.76 & 0.66 & 16.01 & 11.07 & 3.54 & 3.97 & 0.56 & 0.49 & 0.38 \\
\specialrule{1.2pt}{2pt}{2pt}
LCAM & -- & 0.54 & 0.81 & 0.80 & \textcolor{gray}{\textbf{16.01}} & \textcolor{gray}{\textbf{12.48}} & \textcolor{gray}{\textbf{5.81}} & \textcolor{gray}{\textbf{5.15}} & 0.57 & 0.66 & 0.02 \\
\midrule
GCAM & -- & 0.52 & 0.83 & 0.80 & 17.08 & 13.43 & 6.36 & 5.70 & 0.64 & 0.64 & 0.02 \\
\midrule
GCAM++ & -- & \textcolor{gray}{\textbf{0.55}} & \textcolor{gray}{\textbf{0.85}} & \textcolor{red}{\textbf{0.84}} & 17.30 & 13.73 & 6.26 & 5.70 & \textcolor{gray}{\textbf{0.50}} & \textcolor{gray}{\textbf{0.58}} & 0.02 \\
\midrule
HiResCAM & -- & 0.49 & 0.72 & 0.68 & 17.08 & 13.28 & 6.23 & 5.63 & 0.66 & 0.71 & 0.02 \\
\specialrule{1.2pt}{2pt}{2pt}
LIME & $N$=500, seg=80 & 0.06 & \textcolor{red}{\textbf{0.96}} & \textcolor{gray}{\textbf{0.48}} & 18.47 & 16.54 & 9.25 & 9.60 & \textcolor{gray}{\textbf{0.89}} & \textcolor{gray}{\textbf{0.88}} & 2.55 \\
\midrule
KShap & $N$=500, seg=80 & \textcolor{gray}{\textbf{0.43}} & 0.60 & 0.46 & \textcolor{gray}{\textbf{17.84}} & \textcolor{gray}{\textbf{13.89}} & \textcolor{gray}{\textbf{5.14}} & \textcolor{gray}{\textbf{5.46}} & 0.99 & 1.01 & 2.74 \\
\end{longtable}
\endgroup
}

\subsubsection{ResNet-50}\label{app:baselines-resnet50}
{\scriptsize
\begingroup
\footnotesize
\setlength{\tabcolsep}{1pt}
\begin{longtable}{@{}p{1.3cm} p{4.5cm} c c c c c c c c c c@{}}
\caption{ResNet-50 (ImageNet-S): top-8 configurations per method, ranked by \ac{AttrLoc}. Conventions as in Table~\ref{tab:baselines-vit}.}
\label{tab:baselines-resnet50} \\
\toprule
Method & Config & AL $\uparrow$ & PG $\uparrow$ & TK $\uparrow$ & PF$_5$ $\downarrow$ & PF$_{20}$ $\downarrow$ & PF$_{100}$ $\downarrow$ & Sel.\ $\downarrow$ & MaxS $\downarrow$ & AvgS $\downarrow$ & $t$/img \\
\midrule
\endfirsthead
\multicolumn{12}{c}{\small\itshape continued from previous page} \\
\toprule
Method & Config & AL $\uparrow$ & PG $\uparrow$ & TK $\uparrow$ & PF$_5$ $\downarrow$ & PF$_{20}$ $\downarrow$ & PF$_{100}$ $\downarrow$ & Sel.\ $\downarrow$ & MaxS $\downarrow$ & AvgS $\downarrow$ & $t$/img \\
\midrule
\endhead
\bottomrule
\endfoot
Gradient & -- & 0.50 & 0.60 & 0.58 & 6.31 & 5.98 & 3.54 & 4.89 & 2.25 & 1.39 & 0.03 \\
\midrule
Gr$\times$Inp & -- & 0.49 & 0.56 & 0.54 & 6.29 & 5.98 & 3.56 & 4.90 & 2.39 & 1.37 & 0.03 \\
\midrule
\multirow{4}{*}{SmGrad} & $N$=30, $\sigma$=0.1 & 0.51 & 0.74 & 0.70 & 6.24 & 5.95 & 3.49 & 4.55 & 0.31 & \textcolor{red}{\textbf{0.30}} & 1.02 \\
 & $N$=30, $\sigma$=0.15 & 0.52 & \textcolor{gray}{\textbf{0.80}} & 0.71 & 6.30 & 5.95 & 3.40 & 4.46 & \textcolor{red}{\textbf{0.30}} & 0.31 & 0.77 \\
 & $N$=30, $\sigma$=0.2 & 0.52 & 0.76 & \textcolor{gray}{\textbf{0.73}} & 6.22 & 5.85 & 3.26 & 4.41 & 0.40 & 0.33 & 0.79 \\
 & $N$=30, $\sigma$=0.15 & 0.52 & 0.76 & 0.72 & 6.28 & 5.92 & 3.37 & 4.46 & 0.36 & 0.55 & 0.91 \\
\midrule
IntGrad & $N$=20 & 0.52 & 0.68 & 0.60 & \textcolor{gray}{\textbf{6.11}} & 5.78 & 3.13 & 4.73 & 1.11 & 1.02 & 0.64 \\
\midrule
DeepLift & -- & \textcolor{gray}{\textbf{0.60}} & 0.72 & 0.69 & 6.13 & \textcolor{gray}{\textbf{5.16}} & \textcolor{gray}{\textbf{2.01}} & \textcolor{gray}{\textbf{4.15}} & 5.61 & 2.83 & 0.10 \\
\midrule
\multirow{4}{*}{SG} & $\lambda$=-2, $\sigma^2$=1 & 0.47 & 0.52 & 0.53 & 6.46 & 6.39 & 4.54 & 5.43 & 1.07 & 0.84 & 0.27 \\
 & $\lambda$=-2, $\sigma^2$=1, $\lambda_{\mathrm{ent}}$=0.3 & 0.47 & 0.52 & 0.53 & 6.46 & 6.40 & 4.54 & 5.43 & 1.13 & 0.86 & 0.27 \\
 & $\lambda$=-2, $\sigma^2$=1, $\tilde\Phi$ & 0.47 & 0.50 & 0.53 & 6.49 & 6.38 & 4.76 & 5.46 & 1.43 & 0.95 & 0.27 \\
 & $\lambda$=-2, $\sigma^2$=1, $\lambda_{\mathrm{ent}}$=0.3, $\tilde\Phi$ & 0.47 & 0.50 & 0.53 & 6.49 & 6.38 & 4.76 & 5.46 & 1.08 & 1.04 & 0.28 \\
\specialrule{1.2pt}{2pt}{2pt}
\multirow{5}{*}{LRP-$\varepsilon$} & $\varepsilon$=0.01 & 0.55 & 0.48 & 0.60 & 6.33 & 5.74 & 2.51 & 4.63 & 2.12 & 2.03 & 0.17 \\
 & $\varepsilon$=0.1 & 0.61 & 0.80 & 0.70 & 6.14 & 5.21 & 2.00 & 3.96 & 2.39 & 1.68 & 0.15 \\
 & $\varepsilon$=0.25 & 0.65 & 0.72 & 0.73 & 6.03 & 4.99 & 1.90 & 3.74 & 2.66 & 1.66 & 0.16 \\
 & $\varepsilon$=0.5 & 0.67 & 0.74 & 0.73 & 6.05 & 4.84 & 1.87 & \textcolor{red}{\textbf{3.58}} & 1.10 & 1.26 & 0.15 \\
 & $\varepsilon$=1 & 0.69 & 0.78 & 0.72 & 6.02 & 4.85 & 1.87 & 3.58 & 0.86 & 0.83 & 0.16 \\
\midrule
\multirow{9}{*}{LRP-$\gamma$} & $\varepsilon$=0.1, $\gamma$=0.5 & 0.68 & 0.82 & 0.76 & \textcolor{red}{\textbf{5.69}} & 4.53 & 1.83 & 3.74 & 3.30 & 2.53 & 0.40 \\
 & $\varepsilon$=0.1, $\gamma$=1 & 0.68 & \textcolor{gray}{\textbf{0.88}} & 0.78 & 5.72 & 4.51 & 1.79 & 3.75 & 1.69 & 1.68 & 0.39 \\
 & $\varepsilon$=0.1, $\gamma$=2 & 0.69 & 0.86 & \textcolor{red}{\textbf{0.78}} & 5.73 & 4.51 & 1.80 & 3.74 & 1.38 & 1.39 & 0.39 \\
 & $\varepsilon$=0.25, $\gamma$=0.25 & 0.66 & 0.76 & 0.73 & 5.72 & 4.57 & \textcolor{red}{\textbf{1.75}} & 3.73 & 7.14 & 4.62 & 0.38 \\
 & $\varepsilon$=0.25, $\gamma$=0.5 & 0.68 & 0.82 & 0.76 & 5.69 & 4.53 & 1.83 & 3.74 & 3.69 & 2.98 & 0.39 \\
 & $\varepsilon$=0.25, $\gamma$=1 & 0.68 & 0.88 & 0.78 & 5.72 & \textcolor{red}{\textbf{4.51}} & 1.79 & 3.75 & 2.21 & 1.76 & 0.39 \\
 & $\varepsilon$=0.25, $\gamma$=2 & 0.69 & 0.86 & 0.78 & 5.73 & 4.51 & 1.80 & 3.74 & 1.48 & 1.49 & 0.39 \\
 & $\varepsilon$=0.5, $\gamma$=1 & 0.68 & 0.88 & 0.78 & 5.72 & 4.51 & 1.79 & 3.75 & 1.93 & 2.02 & 0.39 \\
 & $\varepsilon$=0.5, $\gamma$=2 & 0.69 & 0.86 & 0.78 & 5.73 & 4.51 & 1.80 & 3.74 & 1.75 & 1.41 & 0.39 \\
\midrule
\multirow{2}{*}{RelLRP} & mn=0.1 & 0.49 & 0.56 & 0.54 & 6.29 & 5.98 & 3.56 & 4.90 & 1.70 & 1.39 & 0.03 \\
 & mn=0.5 & 0.49 & 0.56 & 0.54 & 6.29 & 5.98 & 3.56 & 4.90 & 1.58 & 1.57 & 0.03 \\
\midrule
\multirow{8}{*}{RG} & -- & 0.67 & 0.76 & 0.72 & 5.91 & 4.75 & 1.83 & 3.81 & 3.93 & 3.39 & 0.02 \\
 & $\tau$=0.8, $\lambda$=-0.3 & 0.67 & 0.72 & 0.71 & 5.88 & 4.76 & 1.77 & 3.88 & 1.20 & 0.89 & 0.15 \\
 & $\tau$=0.8, $\lambda$=-1 & 0.67 & 0.74 & 0.70 & 5.89 & 4.76 & 1.76 & 3.89 & 1.15 & 0.98 & 0.16 \\
 & $\tau$=0.8, $\lambda$=-2 & 0.67 & 0.70 & 0.70 & 5.89 & 4.78 & 1.78 & 3.92 & 1.10 & 0.88 & 0.19 \\
 & $\tau$=0.8 & 0.67 & 0.72 & 0.71 & 5.91 & 4.75 & 1.77 & 3.87 & 0.73 & \textcolor{gray}{\textbf{0.60}} & 0.02 \\
 & $\tau$=0.5, $\varepsilon$=1 & 0.68 & 0.68 & 0.68 & 5.99 & 4.98 & 1.88 & 4.00 & 1.55 & 1.18 & 0.02 \\
 & $\tau$=0.7, $\varepsilon$=1 & \textcolor{red}{\textbf{0.69}} & 0.76 & 0.71 & 5.95 & 4.87 & 1.80 & 3.92 & 0.77 & 0.80 & 0.02 \\
 & $\varepsilon$=1 & 0.69 & 0.78 & 0.73 & 5.92 & 4.79 & 1.82 & 3.85 & \textcolor{gray}{\textbf{0.70}} & 0.86 & 0.02 \\
\specialrule{1.2pt}{2pt}{2pt}
LIME & $N$=500, seg=80 & 0.39 & \textcolor{red}{\textbf{0.94}} & \textcolor{gray}{\textbf{0.56}} & 6.47 & 6.42 & \textcolor{gray}{\textbf{5.07}} & 5.50 & 2.24 & 1.69 & 2.13 \\
\midrule
KShap & $N$=500, seg=80 & \textcolor{gray}{\textbf{0.44}} & 0.64 & 0.48 & \textcolor{gray}{\textbf{6.41}} & \textcolor{gray}{\textbf{6.31}} & 5.25 & \textcolor{gray}{\textbf{5.30}} & \textcolor{gray}{\textbf{0.92}} & \textcolor{gray}{\textbf{0.90}} & 1.23 \\
\end{longtable}
\endgroup
}

\subsection{Qualitative}\label{app:qualitative}

\subsubsection{Dense per-method comparison}\label{app:qualitative-dense}

Figure~\ref{fig:qualitative} compares attribution maps for six ImageNet-S examples on ViT-B/16. Columns cover the original image, the ImageNet-S pixel mask, two \ac{RG}-family modes---RG$_{\text{loc}}$ (plain \ac{RG}) and \ac{RG}+Eq ($N{=}50$ equivariant samples)---and the main-table baselines Gradient, SmoothGrad ($N{=}30$, $\sigma{=}0.2$), Integrated Gradients ($N_{\mathrm{steps}}{=}50$), DeepLift, DAVE, LRP-$\gamma$ ($\gamma{=}0.1$, $\varepsilon{=}0.25$), and AttnLRP. Heatmaps are the raw evaluator-side \texttt{abs}-normalised maps, brightened by a uniform gain of $1.75$ for print legibility.

\begin{figure}[H]
\centering
\setlength{\tabcolsep}{0.6pt}
\renewcommand{\arraystretch}{0.3}
\resizebox{\textwidth}{!}{
\begin{tabular}{@{}r *{11}{c}@{}}
 & {\scriptsize Orig.}
 & {\scriptsize Mask}
 & {\scriptsize RG$_{\text{loc}}$}
 & {\scriptsize RG+Eq}
 & {\scriptsize Grad}
 & {\scriptsize SmGrad}
 & {\scriptsize IntGrad}
 & {\scriptsize DeepLift}
 & {\scriptsize DAVE}
 & {\scriptsize LRP-$\gamma$}
 & {\scriptsize AttnLRP} \\[2pt]
\rotatebox{90}{\scriptsize\hspace{2pt}Red fox (snow)}
 & \includegraphics[width=0.085\textwidth]{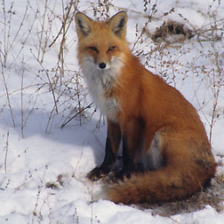}
 & \includegraphics[width=0.085\textwidth]{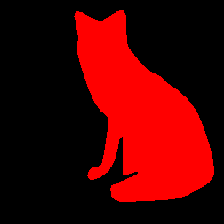}
 & \includegraphics[width=0.085\textwidth]{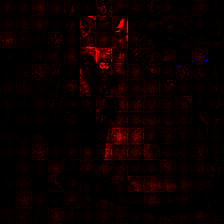}
 & \includegraphics[width=0.085\textwidth]{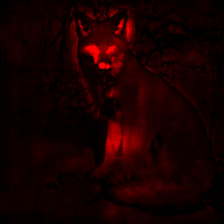}
 & \includegraphics[width=0.085\textwidth]{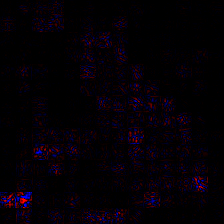}
 & \includegraphics[width=0.085\textwidth]{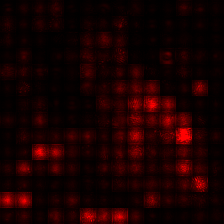}
 & \includegraphics[width=0.085\textwidth]{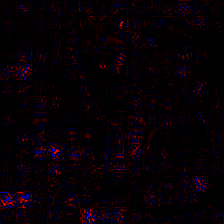}
 & \includegraphics[width=0.085\textwidth]{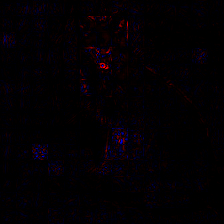}
 & \includegraphics[width=0.085\textwidth]{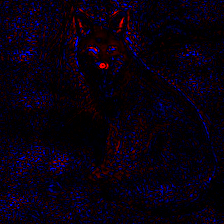}
 & \includegraphics[width=0.085\textwidth]{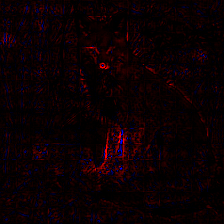}
 & \includegraphics[width=0.085\textwidth]{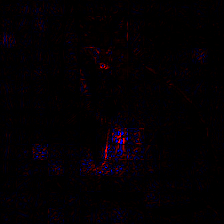} \\[1pt]
\rotatebox{90}{\scriptsize\hspace{6pt}Red fox}
 & \includegraphics[width=0.085\textwidth]{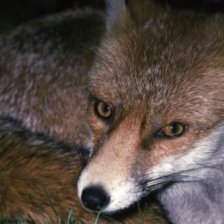}
 & \includegraphics[width=0.085\textwidth]{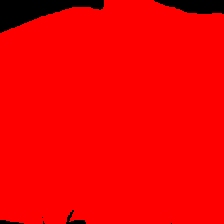}
 & \includegraphics[width=0.085\textwidth]{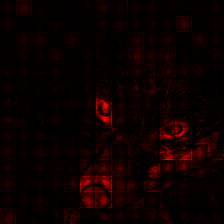}
 & \includegraphics[width=0.085\textwidth]{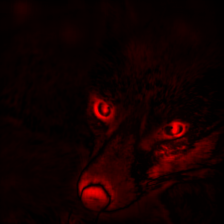}
 & \includegraphics[width=0.085\textwidth]{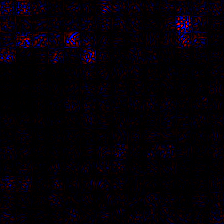}
 & \includegraphics[width=0.085\textwidth]{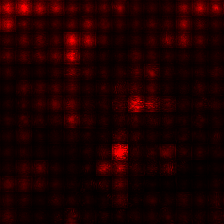}
 & \includegraphics[width=0.085\textwidth]{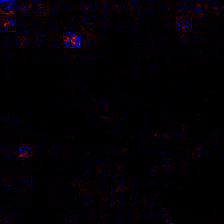}
 & \includegraphics[width=0.085\textwidth]{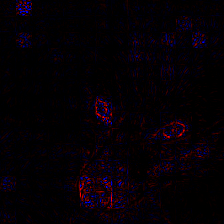}
 & \includegraphics[width=0.085\textwidth]{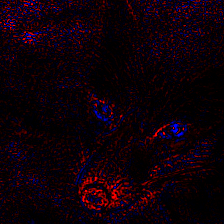}
 & \includegraphics[width=0.085\textwidth]{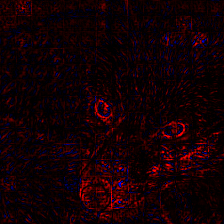}
 & \includegraphics[width=0.085\textwidth]{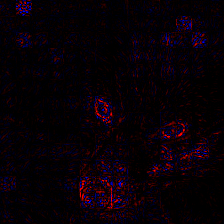} \\[1pt]
\rotatebox{90}{\scriptsize\hspace{9pt}Kuvasz}
 & \includegraphics[width=0.085\textwidth]{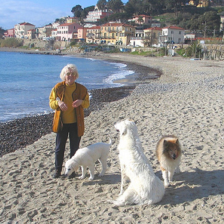}
 & \includegraphics[width=0.085\textwidth]{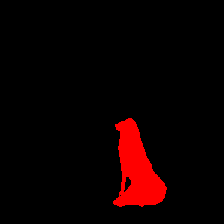}
 & \includegraphics[width=0.085\textwidth]{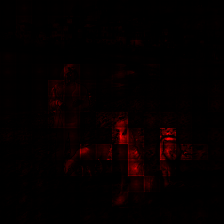}
 & \includegraphics[width=0.085\textwidth]{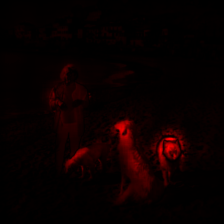}
 & \includegraphics[width=0.085\textwidth]{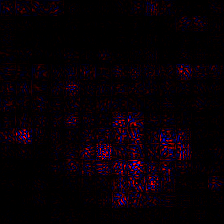}
 & \includegraphics[width=0.085\textwidth]{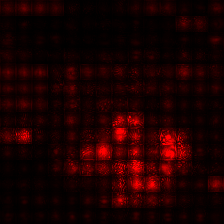}
 & \includegraphics[width=0.085\textwidth]{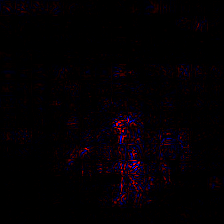}
 & \includegraphics[width=0.085\textwidth]{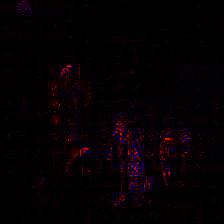}
 & \includegraphics[width=0.085\textwidth]{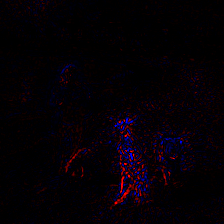}
 & \includegraphics[width=0.085\textwidth]{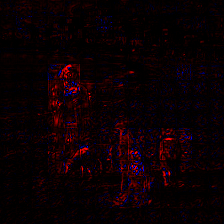}
 & \includegraphics[width=0.085\textwidth]{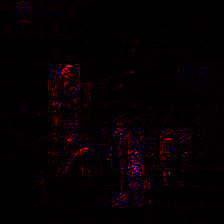} \\[1pt]
\rotatebox{90}{\scriptsize\hspace{7pt}Wild boar}
 & \includegraphics[width=0.085\textwidth]{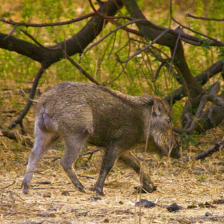}
 & \includegraphics[width=0.085\textwidth]{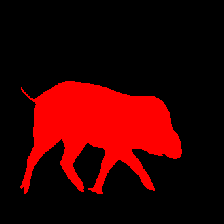}
 & \includegraphics[width=0.085\textwidth]{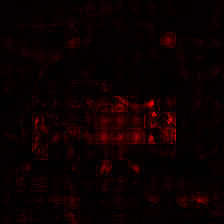}
 & \includegraphics[width=0.085\textwidth]{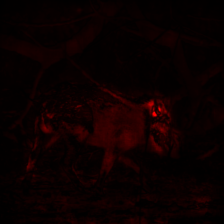}
 & \includegraphics[width=0.085\textwidth]{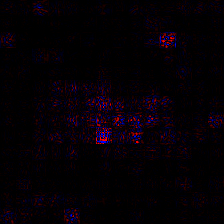}
 & \includegraphics[width=0.085\textwidth]{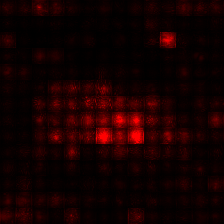}
 & \includegraphics[width=0.085\textwidth]{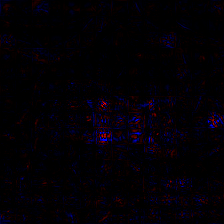}
 & \includegraphics[width=0.085\textwidth]{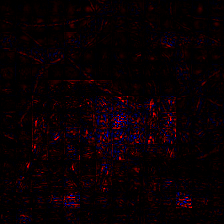}
 & \includegraphics[width=0.085\textwidth]{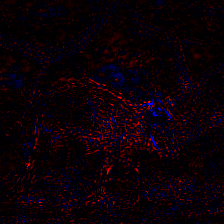}
 & \includegraphics[width=0.085\textwidth]{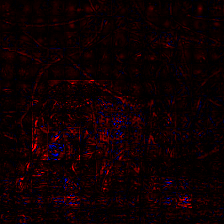}
 & \includegraphics[width=0.085\textwidth]{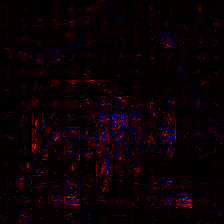} \\[1pt]
\rotatebox{90}{\scriptsize\hspace{5pt}Black bear}
 & \includegraphics[width=0.085\textwidth]{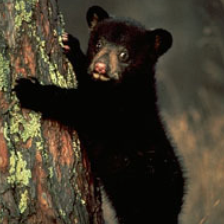}
 & \includegraphics[width=0.085\textwidth]{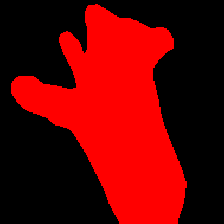}
 & \includegraphics[width=0.085\textwidth]{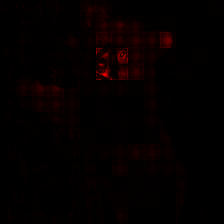}
 & \includegraphics[width=0.085\textwidth]{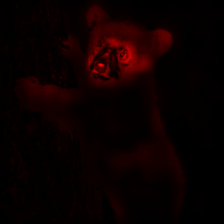}
 & \includegraphics[width=0.085\textwidth]{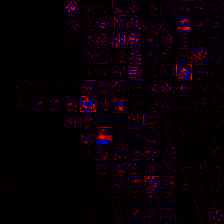}
 & \includegraphics[width=0.085\textwidth]{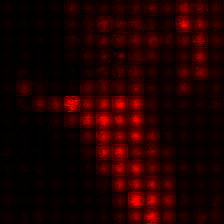}
 & \includegraphics[width=0.085\textwidth]{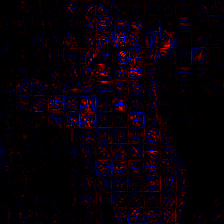}
 & \includegraphics[width=0.085\textwidth]{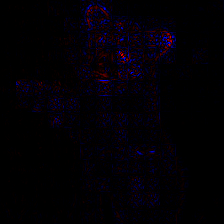}
 & \includegraphics[width=0.085\textwidth]{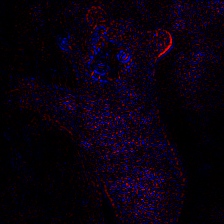}
 & \includegraphics[width=0.085\textwidth]{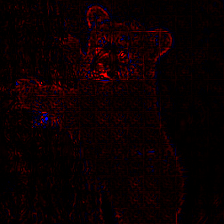}
 & \includegraphics[width=0.085\textwidth]{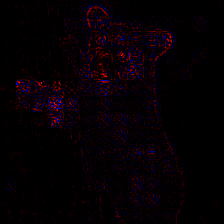} \\[1pt]
\rotatebox{90}{\scriptsize\hspace{9pt}Airliner}
 & \includegraphics[width=0.085\textwidth]{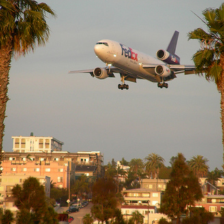}
 & \includegraphics[width=0.085\textwidth]{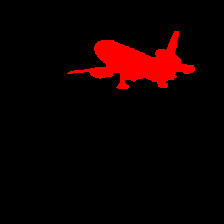}
 & \includegraphics[width=0.085\textwidth]{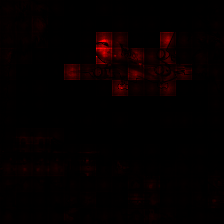}
 & \includegraphics[width=0.085\textwidth]{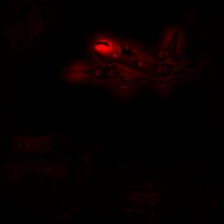}
 & \includegraphics[width=0.085\textwidth]{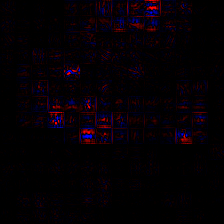}
 & \includegraphics[width=0.085\textwidth]{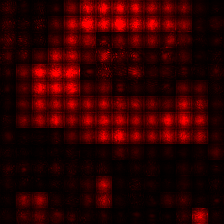}
 & \includegraphics[width=0.085\textwidth]{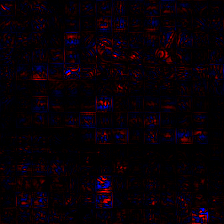}
 & \includegraphics[width=0.085\textwidth]{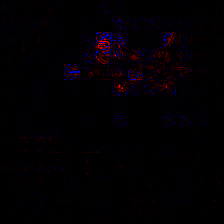}
 & \includegraphics[width=0.085\textwidth]{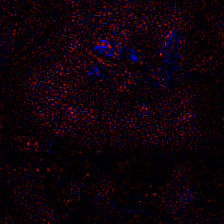}
 & \includegraphics[width=0.085\textwidth]{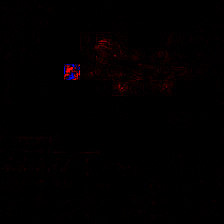}
 & \includegraphics[width=0.085\textwidth]{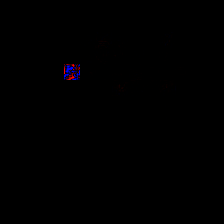} \\
\end{tabular}
}
\caption{Dense qualitative comparison on six ImageNet-S examples (ViT-B/16). Columns: Original; ImageNet-S pixel Mask; RG$_{\text{loc}}$ and \ac{RG}+Eq; followed by the main-table baselines Gradient, SmoothGrad, Integrated Gradients, DeepLift, DAVE, LRP-$\gamma$ ($\gamma{=}0.1$, $\varepsilon{=}0.25$), and AttnLRP.}
\label{fig:qualitative}
\end{figure}

\newpage
\section{Ablations}\label{app:ablations}

This appendix collects all per-mode and per-variant ablations referenced from the main text. It is organised into three parts:
\begin{itemize}
    \item \S\ref{app:dual-sweeps} \textbf{Mode ablation}: full per-architecture sweeps over the analytical modes ($\tau$, $\lambda$, $\lambda_{\mathrm{sm}}$, equivariant Reynolds count $N$) at the $\alpha\beta$-\ac{LRP} anchor. Key result: on \ac{ViT}-B/16 the localisation-selected mode $(\tau{=}0.5, \lambda_{\mathrm{sm}}{=}{-}10, \sigma^2{=}2)$ lifts AL from $0.555$ at the anchor to $0.709$, with the softmax-shift $\lambda_{\mathrm{sm}}$ as the dominant attention-side localisation lever (Table~\ref{tab:sweeps-vit-rg}); on VGG-16 the activation-side $\lambda$ at $\sigma^2{=}10$ is the corresponding dial (Table~\ref{tab:vgg-lambda-sweep}); on ViT-B/16 the same $\lambda$ is inert across base temperatures (Table~\ref{tab:vit-lambda-inert}).
    \item \S\ref{app:entropy-section} \textbf{Entropy-sparsification mode $\lambda_{\mathrm{ent}}$}: definition of the fan-in entropy bias and its strongly architecture-dependent effect, treated separately from the main-body grid. Key result: on VGG-16 at $\tau{=}0.8$ pushing $\lambda_{\mathrm{ent}}$ from $0$ to $0.3$ jointly lifts localisation (PG $0.695\to 0.742$) and robustness (MaxS $0.560\to 0.479$) at a modest faithfulness cost (Table~\ref{tab:vgg-lent-tau08}); on ResNet-50 and ViT-B/16 the same mode costs localisation, so every localisation-selected ResNet-50 / ViT row has $\lambda_{\mathrm{ent}}{=}0$.
    \item \S\ref{app:forward-ablations} \textbf{Forward-pass variant ablation}: gradient-fallback attention (A1), full forward-pass risk (A2), and forward-temperature deformation (A3) compared against the main-table backward-only configurations. Key result: all three forward-pass variants degrade localisation and faithfulness on the rows where they are non-trivial, supporting the backward-only design choice in the main paper.
\end{itemize}

\subsection{Mode ablation}\label{app:dual-sweeps}

This subsection expands Table~\ref{tab:vit-duals} of the main text with the full set of evaluated metrics for each per-mode sweep, and adds other mode ablations. All numbers are on the 566-image ImageNet-S test split; metric conventions follow Table~\ref{tab:eval-vit} ($\uparrow$/$\downarrow$). Except where indicated, each sub-table starts from the $\tau{=}1$, $\lambda{=}\lambda_{\mathrm{sm}}{=}\lambda_{\mathrm{ent}}{=}0$, $\varepsilon{=}0.5$, $\alpha{=}2$, $\beta{=}1$ anchor and varies one mode.

\subsubsection{\texorpdfstring{VGG-16, Routing Game ($\alpha\beta$)}{VGG-16, Routing Game (alpha-beta)}}\label{app:sweeps-vgg-rg}

VGG-16 is the cleanest setting in which to read every analytical mode in isolation: a deep but plain feedforward CNN, no skip connections, no attention. Table~\ref{tab:sweeps-vgg-rg} sweeps the three modes that the main-table localisation-selected configuration combines---temperature $\tau$, activation-side risk aversion $\lambda$ at non-zero \ac{ADF} input variance $\sigma^2$, and the fan-in entropy bias $\lambda_{\mathrm{ent}}$---one at a time around the $\alpha\beta$-\ac{LRP} anchor. The qualitative $\lambda$ panel below the (b) table shows that the localisation-winning row $(\sigma^2{=}10,\lambda{=}{-}1)$ produces the tightest subject mask, anchoring the quantitative $+0.046$ AL gain over the anchor.

\begin{table}[!ht]
\centering\small
\caption{VGG-16 Routing Game ($\alpha\beta$): per-dual sweeps, all metrics.}
\label{tab:sweeps-vgg-rg}
\textbf{(a) Temperature $\tau$} (rest at anchor defaults)\\[2pt]
\begin{tabular}{@{}lcccccccc@{}}\toprule
$\tau$ & AL$\uparrow$ & PG$\uparrow$ & TK$\uparrow$ & PF$_5$$\downarrow$ & PF$_{20}$$\downarrow$ & Sel$\downarrow$ & MaxS$\downarrow$ & AvgS$\downarrow$ \\\midrule
0.7 & \textbf{0.640} & 0.649 & 0.653 & 15.04 & 9.88 & 3.62 & 0.604 & 0.472 \\
0.8 & 0.637 & 0.695 & 0.663 & 15.01 & 9.90 & 3.64 & \textbf{0.560} & 0.492 \\
1.0 (anchor) & 0.609 & \textbf{0.751} & \textbf{0.689} & \textbf{14.04} & \textbf{9.43} & \textbf{3.57} & 3.902 & \textbf{0.435} \\\bottomrule
\end{tabular}\\[6pt]

\textbf{(b) Risk aversion $\lambda$ at $\tau{=}1$} ($\sigma^2$ picked per-row; best localisation at $\lambda{=}-1, \sigma^2{=}10$)\\[2pt]
\begin{tabular}{@{}lcccccccc@{}}\toprule
$\lambda$ & AL$\uparrow$ & PG$\uparrow$ & TK$\uparrow$ & PF$_5$$\downarrow$ & PF$_{20}$$\downarrow$ & Sel$\downarrow$ & MaxS$\downarrow$ & AvgS$\downarrow$ \\\midrule
$0$             & 0.609 & 0.751 & \textbf{0.689} & \textbf{14.04} & \textbf{9.43}  & \textbf{3.57} & 3.902 & 0.435 \\
$-0.3$ ($\sigma^2{=}5$)  & 0.636 & 0.753 & 0.662 & 15.62 & 10.63 & 3.83  & 0.397 & 0.466 \\
$-1$   ($\sigma^2{=}10$) & \textbf{0.655} & \textbf{0.763} & 0.661 & 16.06 & 11.05 & 4.00  & 0.572 & 0.449 \\
$-2$   ($\sigma^2{=}2$)  & 0.615 & 0.705 & 0.622 & 14.96 & 10.95 & 3.93  & \textbf{3.672} & \textbf{0.349} \\\bottomrule
\end{tabular}\\[6pt]

\begin{minipage}{\linewidth}\centering
\setlength{\tabcolsep}{2pt}\renewcommand{\arraystretch}{0.3}
\resizebox{0.85\linewidth}{!}{
\begin{tabular}{@{}cc cccc@{}}
{\scriptsize Orig.} & {\scriptsize Mask}
 & {\scriptsize $\lambda{=}0$} & {\scriptsize $\sigma^2{=}5,\lambda{=}{-}0.3$}
 & {\scriptsize $\sigma^2{=}10,\lambda{=}{-}1$} & {\scriptsize $\sigma^2{=}2,\lambda{=}{-}2$} \\[2pt]
\includegraphics[width=0.10\linewidth]{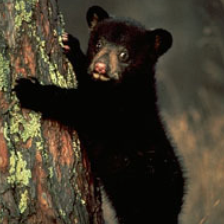}
 & \includegraphics[width=0.10\linewidth]{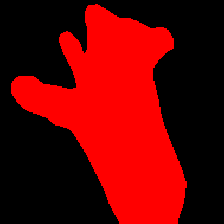}
 & \includegraphics[width=0.10\linewidth]{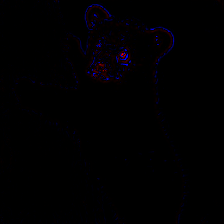}
 & \includegraphics[width=0.10\linewidth]{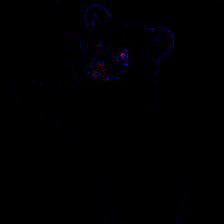}
 & \includegraphics[width=0.10\linewidth]{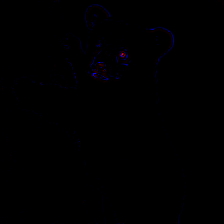}
 & \includegraphics[width=0.10\linewidth]{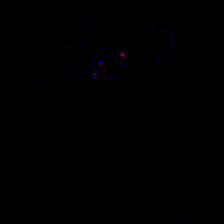} \\
\end{tabular}
}\\[2pt]
{\footnotesize Qualitative $\lambda$ sweep (black bear): $\sigma^2{=}10,\lambda{=}{-}1$ produces the tightest subject mask, matching the row that wins localisation in (b).}
\end{minipage}\\[6pt]

\textbf{(c) Entropy penalty $\lambda_{\mathrm{ent}}$ at $\tau{=}0.8$} --- lifts \emph{both} localisation and robustness.\\[2pt]
\begin{tabular}{@{}lcccccccc@{}}\toprule
$\lambda_{\mathrm{ent}}$ & AL$\uparrow$ & PG$\uparrow$ & TK$\uparrow$ & PF$_5$$\downarrow$ & PF$_{20}$$\downarrow$ & Sel$\downarrow$ & MaxS$\downarrow$ & AvgS$\downarrow$ \\\midrule
$0$   & 0.637 & 0.695 & 0.663 & \textbf{15.01} & \textbf{9.90}  & \textbf{3.64} & 0.560 & 0.492 \\
$0.2$ & 0.643 & 0.718 & \textbf{0.666} & 15.35 & 10.33 & 3.70 & 0.586 & \textbf{0.423} \\
$0.3$ & \textbf{0.646} & \textbf{0.742} & 0.665 & 15.63 & 10.74 & 3.80 & \textbf{0.479} & 0.441 \\\bottomrule
\end{tabular}
\end{table}

\textbf{Key findings (Table~\ref{tab:sweeps-vgg-rg}).} (a) Temperature alone is a faithfulness--robustness trade-off: lowering $\tau$ from $1$ to $0.8$ moves PG by ${-}0.056$ but cuts MaxS by $7\times$ ($3.902\to 0.560$) at unchanged AL. (b) Risk aversion paired with non-zero \ac{ADF} input variance is the dominant localisation lever: $(\sigma^2{=}10, \lambda{=}{-}1)$ lifts AL from $0.609$ to $0.655$ and PG from $0.751$ to $0.763$ while simultaneously reducing MaxS from the anchor's $3.902$ to the $0.4$--$0.6$ range. (c) The fan-in entropy bias $\lambda_{\mathrm{ent}}$ is the only mode that lifts both localisation and robustness simultaneously on this architecture (PG $0.695\to 0.742$, MaxS $0.560\to 0.479$ at $\tau{=}0.8$); it is excluded from the main grid because the same mode is harmful on ResNet-50 and ViT-B/16 (\S\ref{app:entropy-section}).
\FloatBarrier

\subsubsection{\texorpdfstring{ViT-B/16, Routing Game ($\alpha\beta$)}{ViT-B/16, Routing Game (alpha-beta)}}\label{app:sweeps-vit-rg}

ViT shifts the localisation work to the attention-logit oracle: the activation-side $\lambda$ that dominates on VGG-16 becomes essentially inert here (Table~\ref{tab:vit-lambda-inert}), and the corresponding lever is the softmax-shift $\lambda_{\mathrm{sm}}$ which down-weights high-uncertainty key tokens before the QK-softmax. Table~\ref{tab:sweeps-vit-rg} extends the three-up summary in Table~\ref{tab:vit-duals} of the main text with all metrics, sweeping (a) base temperature $\tau$, (b) softmax shift $\lambda_{\mathrm{sm}}$ at the localisation-winning $\tau{=}0.5$, (c) the entropy penalty $\lambda_{\mathrm{ent}}$ which on ViT trades localisation for robustness (the qualitatively opposite of VGG), and (d) a confirmation that the activation-side $\lambda$ at $\tau{=}0.7$ is only a small-magnitude robustness trim once the attention oracle is already risk-averse.

\begin{table}[!ht]
\centering\small
\caption{ViT-B/16 Routing Game ($\alpha\beta$): per-dual sweeps, all metrics. Extends the three-up summary in Table~\ref{tab:vit-duals}.}
\label{tab:sweeps-vit-rg}
\textbf{(a) Temperature $\tau$}\\[2pt]
\begin{tabular}{@{}lcccccccc@{}}\toprule
$\tau$ & AL$\uparrow$ & PG$\uparrow$ & TK$\uparrow$ & PF$_5$$\downarrow$ & PF$_{20}$$\downarrow$ & Sel$\downarrow$ & MaxS$\downarrow$ & AvgS$\downarrow$ \\\midrule
0.5 & \textbf{0.664} & \textbf{0.832} & 0.781 & 8.303 & 7.436 & 5.014 & 0.449 & 0.510 \\
0.6 & 0.645 & 0.818 & \textbf{0.791} & 8.314 & 7.424 & 4.961 & 0.318 & 0.341 \\
0.7 & 0.623 & 0.800 & 0.783 & 8.311 & 7.365 & 4.906 & 0.266 & 0.349 \\
0.8 & 0.597 & 0.792 & 0.768 & \textbf{8.297} & 7.297 & 4.854 & 0.226 & 0.334 \\
0.9 & 0.572 & 0.783 & 0.752 & 8.295 & 7.251 & 4.820 & \textbf{0.183} & 0.377 \\
1.0 (anchor) & 0.555 & 0.770 & 0.755 & \textbf{8.176} & \textbf{7.000} & \textbf{4.781} & 0.203 & \textbf{0.154} \\\bottomrule
\end{tabular}\\[6pt]

\textbf{(b) Softmax shift $\lambda_{\mathrm{sm}}$ at $\tau{=}0.5$}\\[2pt]
\begin{tabular}{@{}lcccccccc@{}}\toprule
$\lambda_{\mathrm{sm}}$ & AL$\uparrow$ & PG$\uparrow$ & TK$\uparrow$ & PF$_5$$\downarrow$ & PF$_{20}$$\downarrow$ & Sel$\downarrow$ & MaxS$\downarrow$ & AvgS$\downarrow$ \\\midrule
$0$   & 0.664 & 0.832 & 0.781 & \textbf{8.303} & \textbf{7.436} & 5.014 & \textbf{0.449} & 0.510 \\
$-5$  & 0.649 & 0.820 & 0.747 & 8.357 & 7.514 & \textbf{4.336} & 0.544 & \textbf{0.452} \\
$-10$ & \textbf{0.708} & \textbf{0.846} & \textbf{0.797} & 8.308 & 7.502 & 4.978 & 0.628 & 0.537 \\\bottomrule
\end{tabular}\\[6pt]

\textbf{(c) Entropy penalty $\lambda_{\mathrm{ent}}$ at $\tau{=}0.8$} --- trades localisation for robustness on ViT (opposite of VGG).\\[2pt]
\begin{tabular}{@{}lcccccccc@{}}\toprule
$\lambda_{\mathrm{ent}}$ & AL$\uparrow$ & PG$\uparrow$ & TK$\uparrow$ & PF$_5$$\downarrow$ & PF$_{20}$$\downarrow$ & Sel$\downarrow$ & MaxS$\downarrow$ & AvgS$\downarrow$ \\\midrule
$0$   & \textbf{0.597} & \textbf{0.792} & \textbf{0.768} & 8.297 & \textbf{7.297} & 4.854 & 0.226 & 0.334 \\
$0.1$ & 0.542 & 0.780 & 0.697 & 8.274 & 7.351 & \textbf{4.163} & 0.194 & 0.179 \\
$0.2$ & 0.488 & 0.720 & 0.633 & \textbf{8.238} & 7.461 & 4.172 & \textbf{0.130} & \textbf{0.125} \\\bottomrule
\end{tabular}\\[6pt]

\textbf{(d) Risk aversion $\lambda$ at $\tau{=}0.7$, $\lambda_{\mathrm{sm}}{=}{-}5$, $\sigma^2{=}2$} --- small-magnitude robustness lever on ViT; $\lambda{=}{-}1$ trims MaxS by $\sim\!12\%$ and AvgS by $\sim\!5\%$ at essentially unchanged localisation, confirming that on \acp{ViT} the activation-side $\lambda$ is a second-order trim while the softmax-shift variant $\lambda_{\mathrm{sm}}$ carries the primary localisation effect.\\[2pt]
\begin{tabular}{@{}lcccccccc@{}}\toprule
$\lambda$ & AL$\uparrow$ & PG$\uparrow$ & TK$\uparrow$ & PF$_5$$\downarrow$ & PF$_{20}$$\downarrow$ & Sel$\downarrow$ & MaxS$\downarrow$ & AvgS$\downarrow$ \\\midrule
$0$    & \textbf{0.627} & 0.760 & 0.743 & 8.424 & 7.507 & 4.226 & 0.351 & 0.300 \\
$-1$   & 0.620 & 0.760 & 0.745 & 8.397 & \textbf{7.454} & 4.474 & \textbf{0.307} & \textbf{0.284} \\
$-2$   & 0.618 & \textbf{0.800} & \textbf{0.747} & \textbf{8.403} & 7.487 & 4.497 & 0.340 & 0.298 \\
$-5$   & 0.614 & 0.780 & 0.751 & 8.438 & 7.486 & 4.506 & 0.362 & 0.314 \\
$-10$  & 0.615 & 0.780 & 0.751 & 8.444 & 7.481 & \textbf{4.508} & 0.379 & 0.307 \\\bottomrule
\end{tabular}
\end{table}

\textbf{Key findings (Table~\ref{tab:sweeps-vit-rg}).} (a) Temperature monotonically trades AL/PG/TK for MaxS: $\tau{=}0.5$ wins on AL/PG and $\tau{=}1$ on MaxS/AvgS, with a smooth Pareto curve in between. (b) The softmax shift $\lambda_{\mathrm{sm}}$ is the primary attention-side localisation lever: pushing $\lambda_{\mathrm{sm}}$ from $0$ to ${-}10$ at $\tau{=}0.5$ lifts AL by $+0.044$, PG by $+0.014$, and TK by $+0.016$ without disturbing the faithfulness numbers; this is the row that wins the localisation-selected ViT main-table cell. (c) The entropy penalty $\lambda_{\mathrm{ent}}$ at $\tau{=}0.8$ trades AL ($-0.109$ from $0$ to $0.2$) for MaxS ($0.226\to 0.130$); on \acp{ViT} we therefore keep $\lambda_{\mathrm{ent}}{=}0$ in every localisation-selected row of the main table. (d) Once the attention oracle is risk-averse, the activation-side $\lambda$ becomes a second-order trim: $\lambda{=}{-}1$ at $\tau{=}0.7$ shaves MaxS by ${\sim}12\%$ at unchanged AL/PG/TK, but contributes nothing to localisation.
\FloatBarrier

\subsubsection{ViT-B/16, Routing Game with Equivariant Reynolds averaging}\label{app:sweeps-vit-rgeq}

The $16{\times}16$ patch tokenization breaks translation equivariance: a fixed 14-by-14 token grid sits on the image and is read off in a fixed order, so attribution maps inherit a tokenisation-grid imprint that is not image-content. The \ac{RG}+Eq mode averages $N{=}50$ \emph{input}-equivariant samples (random rotations $\pm 10^\circ$, translations $\pm 5\%$, horizontal flips) with the operators detached à la DAVE~\citep{Wrobel2024}, cancelling the grid imprint while preserving image-dependent structure. Table~\ref{tab:sweeps-vit-rgeq} sweeps (a) base $\tau$ under \ac{RG}+Eq and (b) the entropy penalty at $\tau{=}1$.

\begin{table}[!ht]
\centering\small
\caption{ViT-B/16 RG+Eq ($N{=}50$): per-dual sweeps.}
\label{tab:sweeps-vit-rgeq}
\textbf{(a) Temperature $\tau$}\\[2pt]
\begin{tabular}{@{}lcccccccc@{}}\toprule
$\tau$ & AL$\uparrow$ & PG$\uparrow$ & TK$\uparrow$ & PF$_5$$\downarrow$ & PF$_{20}$$\downarrow$ & Sel$\downarrow$ & MaxS$\downarrow$ & AvgS$\downarrow$ \\\midrule
0.5 & \textbf{0.640} & \textbf{0.860} & 0.796 & 8.377 & 7.617 & 4.474 & 0.846 & 0.823 \\
0.6 & 0.627 & 0.800 & \textbf{0.801} & 8.392 & 7.596 & 4.361 & 0.744 & 0.678 \\
0.7 & 0.610 & 0.840 & 0.791 & \textbf{8.361} & \textbf{7.544} & 4.250 & 0.662 & 0.700 \\
0.8 & 0.589 & 0.800 & 0.775 & 8.373 & 7.583 & 4.155 & 0.608 & 0.668 \\
0.9 & 0.568 & 0.780 & 0.768 & 8.405 & 7.570 & \textbf{4.112} & 0.579 & 0.615 \\
1.0 (anchor) & 0.551 & 0.780 & 0.751 & 8.375 & 7.601 & 4.123 & \textbf{0.579} & \textbf{0.514} \\\bottomrule
\end{tabular}\\[6pt]

\begin{minipage}{\linewidth}\centering
\setlength{\tabcolsep}{2pt}\renewcommand{\arraystretch}{0.3}
\resizebox{\linewidth}{!}{
\begin{tabular}{@{}cc cccccc@{}}
{\scriptsize Orig.} & {\scriptsize Mask}
 & {\scriptsize $\tau{=}0.5$} & {\scriptsize $0.6$} & {\scriptsize $0.7$}
 & {\scriptsize $0.8$} & {\scriptsize $0.9$} & {\scriptsize $1.0$} \\[2pt]
\includegraphics[width=0.090\linewidth]{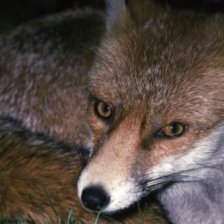}
 & \includegraphics[width=0.090\linewidth]{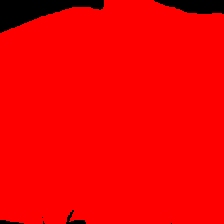}
 & \includegraphics[width=0.090\linewidth]{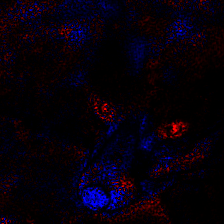}
 & \includegraphics[width=0.090\linewidth]{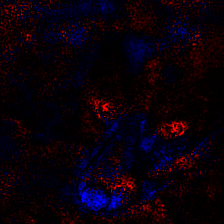}
 & \includegraphics[width=0.090\linewidth]{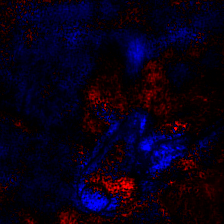}
 & \includegraphics[width=0.090\linewidth]{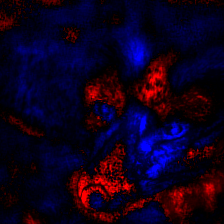}
 & \includegraphics[width=0.090\linewidth]{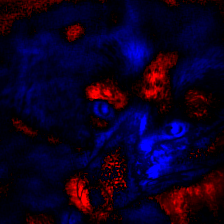}
 & \includegraphics[width=0.090\linewidth]{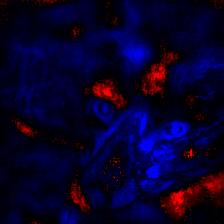} \\
\end{tabular}
}\\[2pt]
{\footnotesize Qualitative RG+Eq $\tau$ sweep (red fox): low $\tau$ concentrates the map on the head and forepaws, high $\tau$ spreads onto the body.}
\end{minipage}\\[6pt]

\textbf{(b) Entropy penalty $\lambda_{\mathrm{ent}}$ at $\tau{=}1$}\\[2pt]
\begin{tabular}{@{}lcccccccc@{}}\toprule
$\lambda_{\mathrm{ent}}$ & AL$\uparrow$ & PG$\uparrow$ & TK$\uparrow$ & PF$_5$$\downarrow$ & PF$_{20}$$\downarrow$ & Sel$\downarrow$ & MaxS$\downarrow$ & AvgS$\downarrow$ \\\midrule
$0$   & \textbf{0.551} & \textbf{0.780} & \textbf{0.751} & \textbf{8.375} & \textbf{7.601} & \textbf{4.123} & 0.579 & 0.514 \\
$0.1$ & 0.522 & 0.780 & 0.741 & 8.384 & 7.631 & 4.207 & 0.524 & 0.454 \\
$0.2$ & 0.483 & 0.780 & 0.691 & 8.396 & 7.731 & 4.322 & \textbf{0.427} & \textbf{0.422} \\\bottomrule
\end{tabular}
\end{table}

\textbf{Key findings (Table~\ref{tab:sweeps-vit-rgeq}).} \ac{RG}+Eq inherits the $\tau$ monotonicity of plain \ac{RG} but smooths the curves: AL drops more gently with $\tau$ (0.640 at $\tau{=}0.5$, 0.551 at $\tau{=}1$), and MaxS / AvgS are tighter at every $\tau$ than the corresponding non-Reynolds row in Table~\ref{tab:sweeps-vit-rg}. The qualitative panel below the (a) table shows the visual effect of $\tau$ on a red-fox example: low $\tau$ concentrates mass on the head and forepaws, high $\tau$ spreads onto the body. The entropy penalty (b) again costs localisation, mirroring (c) of Table~\ref{tab:sweeps-vit-rg} on plain \ac{RG} and confirming that the architecture-specific harm of $\lambda_{\mathrm{ent}}$ on \ac{ViT} is not an artefact of equivariant averaging.
\FloatBarrier

\subsubsection{\texorpdfstring{ResNet-50, Routing Game ($\alpha\beta$)}{ResNet-50, Routing Game (alpha-beta)}}\label{app:sweeps-resnet-rg}

ResNet-50 is the third architecture and the one where the temperature $\tau$ is by far the dominant analytical mode: the skip-connection topology means low-$\tau$ Gibbs sharpening at every linear sub-game compounds across residual paths. Table~\ref{tab:sweeps-resnet-rg} sweeps (a) the full $\tau$ range, (b) activation-side $\lambda$ at the anchor temperature with $\sigma^2{=}10$, and (c) the entropy penalty at $\tau{=}0.7$, which we expect---and confirm---to be catastrophic on this architecture given the architecture-specific story of \S\ref{app:entropy-section}.

\begin{table}[!ht]
\centering\small
\caption{ResNet-50 Routing Game ($\alpha\beta$): per-dual sweeps.}
\label{tab:sweeps-resnet-rg}
\textbf{(a) Temperature $\tau$}\\[2pt]
\begin{tabular}{@{}lcccccccc@{}}\toprule
$\tau$ & AL$\uparrow$ & PG$\uparrow$ & TK$\uparrow$ & PF$_5$$\downarrow$ & PF$_{20}$$\downarrow$ & Sel$\downarrow$ & MaxS$\downarrow$ & AvgS$\downarrow$ \\\midrule
0.3 & 0.637 & 0.620 & 0.662 & 6.031 & 5.123 & 4.081 & 2.598 & 2.431 \\
0.5 & 0.644 & 0.660 & 0.675 & 5.962 & 5.036 & 4.022 & 2.839 & 1.554 \\
0.7 & 0.660 & 0.760 & 0.681 & 5.916 & 4.826 & 3.949 & 0.970 & 0.795 \\
0.8 & \textbf{0.670} & 0.720 & 0.706 & 5.908 & 4.754 & 3.874 & 0.735 & 0.598 \\
1.0 (anchor) & 0.667 & 0.760 & \textbf{0.722} & 5.911 & \textbf{4.750} & 3.814 & 3.934 & 3.390 \\
2.0 & 0.610 & \textbf{0.800} & 0.712 & 5.907 & 4.975 & 3.843 & 0.442 & 0.362 \\
3.0 & 0.581 & 0.740 & 0.696 & \textbf{5.851} & 5.097 & 3.933 & \textbf{0.360} & \textbf{0.342} \\
5.0 & 0.556 & 0.580 & 0.669 & 5.909 & 5.246 & 4.053 & 0.370 & 0.349 \\\bottomrule
\end{tabular}\\[6pt]

\textbf{(b) Risk aversion $\lambda$ at $\tau{=}1$, $\sigma^2{=}10$} --- the clean robustness lever on ResNet.\\[2pt]
\begin{tabular}{@{}lcccccccc@{}}\toprule
$\lambda$ & AL$\uparrow$ & PG$\uparrow$ & TK$\uparrow$ & PF$_5$$\downarrow$ & PF$_{20}$$\downarrow$ & Sel$\downarrow$ & MaxS$\downarrow$ & AvgS$\downarrow$ \\\midrule
$0$    & \textbf{0.665} & 0.740 & \textbf{0.719} & 5.927 & 4.784 & 3.832 & 3.934 & 3.390 \\
$-0.3$ & \textbf{0.665} & 0.740 & \textbf{0.719} & 5.927 & 4.784 & 3.832 & 2.012 & 1.847 \\
$-1$   & 0.662 & \textbf{0.760} & 0.716 & 5.898 & \textbf{4.772} & \textbf{3.834} & \textbf{1.842} & \textbf{1.593} \\
$-2$   & 0.657 & 0.740 & 0.710 & \textbf{5.888} & 4.814 & 3.854 & 2.389 & 2.133 \\
$-5$   & 0.650 & 0.680 & 0.699 & 5.938 & 5.011 & 3.893 & 3.025 & 2.250 \\\bottomrule
\end{tabular}\\[6pt]

\textbf{(c) Entropy penalty $\lambda_{\mathrm{ent}}$ at $\tau{=}0.7$} --- \emph{catastrophic} on ResNet: entropy bias collapses the Gibbs routing rather than tightening it.\\[2pt]
\begin{tabular}{@{}lcccccccc@{}}\toprule
$\lambda_{\mathrm{ent}}$ & AL$\uparrow$ & PG$\uparrow$ & TK$\uparrow$ & PF$_5$$\downarrow$ & PF$_{20}$$\downarrow$ & Sel$\downarrow$ & MaxS$\downarrow$ & AvgS$\downarrow$ \\\midrule
$0$   & \textbf{0.660} & \textbf{0.760} & \textbf{0.681} & 5.916 & \textbf{4.826} & \textbf{3.949} & 0.970 & 0.795 \\
$0.3$ & 0.354 & 0.220 & 0.415 & 6.459 & 6.323 & 5.211 & 26.64 & 6.839 \\
$0.5$ & 0.407 & 0.460 & 0.415 & 6.420 & 6.277 & 5.588 & 1.135 & 1.127 \\
$0.7$ & 0.439 & 0.520 & 0.475 & \textbf{6.387} & 6.151 & 5.560 & \textbf{0.705} & \textbf{0.681} \\\bottomrule
\end{tabular}
\end{table}

\textbf{Key findings (Table~\ref{tab:sweeps-resnet-rg}).} (a) Temperature is the dominant lever: moving from $\tau{=}1$ (anchor) to $\tau{=}0.8$ cuts MaxS by $5\times$ ($3.934\to 0.735$) at unchanged AL ($0.667 \to 0.670$), and the high-$\tau$ regime ($\tau{\geq}2$) gives the tightest overall MaxS ($0.36$--$0.44$) at a localisation cost. (b) Activation-side risk aversion at $\sigma^2{=}10$ is the clean robustness lever: $\lambda{=}{-}1$ trims MaxS from $3.934$ to $1.842$ at unchanged AL/PG/TK---a strict Pareto improvement. (c) The entropy penalty $\lambda_{\mathrm{ent}}$ is catastrophic at $\tau{=}0.7$: $\lambda_{\mathrm{ent}}{=}0.3$ collapses AL ($0.660\to 0.354$) and PG ($0.760\to 0.220$) and inflates MaxS by $27\times$. Higher $\lambda_{\mathrm{ent}}$ values partially recover but never reach the $\lambda_{\mathrm{ent}}{=}0$ row; we therefore drop $\lambda_{\mathrm{ent}}$ from every ResNet-50 main-table configuration.
\FloatBarrier

\subsubsection{\texorpdfstring{VGG-16, Stopping Game (probabilistic gate and $\lambda_{\mathrm{ent}}$)}{VGG-16, Stopping Game (probabilistic gate and lambda\_ent)}}\label{app:sweeps-vgg-sg}

The Stopping Game admits a second analytical knob beyond temperature and $\lambda$: a probabilistic backward gate $\tilde\Phi$ that replaces the hard ReLU indicator $\1[z > 0]$ in the player's stop/continue Bellman comparison by the smooth $\Phi(z)\cdot z$ (see Appendix~\ref{app:noisy-gate-mode}). On VGG-16 with risk aversion already on ($\lambda{=}{-}2,\sigma^2{=}1$), Table~\ref{tab:sweeps-vgg-sg} (a) measures the on/off effect of $\tilde\Phi$ alone, then (b) layers $\lambda_{\mathrm{ent}}$ on top with $\tilde\Phi$ on. The qualitative panel below (a) shows the on/off contrast on two examples (red fox, airliner).

\begin{table}[!ht]
\centering\small
\caption{VGG-16 Stopping Game at $\lambda{=}{-}2$, $\sigma^2{=}1$.}
\label{tab:sweeps-vgg-sg}
\textbf{(a) Probabilistic gate on vs.\ hard ReLU backward}\\[2pt]
\begin{tabular}{@{}lcccccccc@{}}\toprule
$\tilde\Phi$ & AL$\uparrow$ & PG$\uparrow$ & TK$\uparrow$ & PF$_5$$\downarrow$ & PF$_{20}$$\downarrow$ & Sel$\downarrow$ & MaxS$\downarrow$ & AvgS$\downarrow$ \\\midrule
off & \textbf{0.614} & 0.721 & 0.637 & 17.21 & 14.16 & 5.16 & 1.730 & 0.212 \\
on  & 0.613 & \textbf{0.735} & \textbf{0.688} & \textbf{17.00} & \textbf{13.25} & \textbf{4.76} & \textbf{1.631} & \textbf{0.173} \\\bottomrule
\end{tabular}\\[6pt]

\begin{minipage}{\linewidth}\centering
\setlength{\tabcolsep}{2pt}\renewcommand{\arraystretch}{0.3}
\resizebox{0.7\linewidth}{!}{
\begin{tabular}{@{}r cc cc@{}}
 & {\scriptsize Orig.} & {\scriptsize Mask}
 & {\scriptsize $\tilde\Phi$ off} & {\scriptsize $\tilde\Phi$ on} \\[2pt]
{\scriptsize Red fox}
 & \includegraphics[width=0.12\linewidth]{figures/qualitative_sweeps/red_fox/original.png}
 & \includegraphics[width=0.12\linewidth]{figures/qualitative_sweeps/red_fox/mask.png}
 & \includegraphics[width=0.12\linewidth]{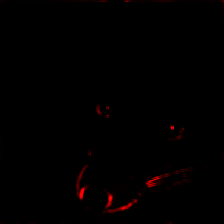}
 & \includegraphics[width=0.12\linewidth]{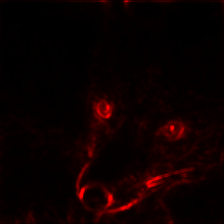} \\
{\scriptsize Airliner}
 & \includegraphics[width=0.12\linewidth]{figures/qualitative_sweeps/airliner/original.png}
 & \includegraphics[width=0.12\linewidth]{figures/qualitative_sweeps/airliner/mask.png}
 & \includegraphics[width=0.12\linewidth]{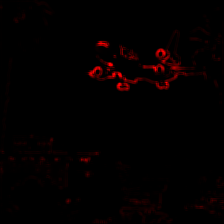}
 & \includegraphics[width=0.12\linewidth]{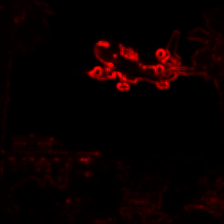} \\
\end{tabular}
}\\[2pt]
{\footnotesize Qualitative SG probabilistic-gate on/off (red fox and airliner): the soft gate sharpens focus on the subject on both examples, mirroring the PG/TK gains in (a).}
\end{minipage}\\[6pt]

\textbf{(b) $\lambda_{\mathrm{ent}}$ at $\lambda{=}{-}2$, $\sigma^2{=}1$, $\tilde\Phi$ on}\\[2pt]
\begin{tabular}{@{}lcccccccc@{}}\toprule
$\lambda_{\mathrm{ent}}$ & AL$\uparrow$ & PG$\uparrow$ & TK$\uparrow$ & PF$_5$$\downarrow$ & PF$_{20}$$\downarrow$ & Sel$\downarrow$ & MaxS$\downarrow$ & AvgS$\downarrow$ \\\midrule
$0$   & 0.613 & \textbf{0.735} & \textbf{0.688} & \textbf{17.00} & \textbf{13.25} & \textbf{4.76} & 1.631 & \textbf{0.173} \\
$0.2$ & \textbf{0.619} & 0.724 & 0.686 & 18.23 & 14.23 & 5.00 & 1.329 & 0.178 \\
$0.3$ & 0.608 & 0.717 & 0.668 & 17.08 & 13.38 & 4.76 & \textbf{1.089} & 0.187 \\\bottomrule
\end{tabular}
\end{table}

\textbf{Key findings (Table~\ref{tab:sweeps-vgg-sg}).} (a) The probabilistic backward gate $\tilde\Phi$ is a dominant improvement on localisation (TK $0.637\to 0.688$, PG $0.721\to 0.735$) and faithfulness (PF and Sel both drop) at essentially unchanged AL ($0.614\to 0.613$); the qualitative panel confirms a sharper subject focus on both red-fox and airliner. (b) Layering $\lambda_{\mathrm{ent}}$ on top further tightens MaxS by $\sim\!1.5\times$ ($1.631\to 1.089$ from $\lambda_{\mathrm{ent}}{=}0$ to $0.3$) at a mild ${\sim}3\%$ localisation cost.
\FloatBarrier

\subsubsection{\texorpdfstring{VGG-16, activation-side risk aversion $\lambda$ at $\sigma^2{=}10$}{VGG-16, activation-side risk aversion lambda at sigma\^{}2=10}}\label{app:sweeps-vgg-lambda-sigma10}

The (b) sub-block of Table~\ref{tab:sweeps-vgg-rg} reads $\lambda$ at $\sigma^2$ tuned per row. To isolate $\sigma^2$ from $\lambda$, Table~\ref{tab:vgg-lambda-sweep} records three \ac{RG}$(\alpha\beta)$ points on VGG-16 with $\sigma^2$ pinned at $10$ (or $0$): the $\alpha\beta$-anchor ($\sigma^2{=}0$, $\lambda{=}0$, i.e.\ no \ac{ADF} input noise and no activation-side risk aversion), and the two main-table rows at $\sigma^2{=}10$ with $\lambda\in\{{-}0.3,{-}1\}$.

\begin{table}[!ht]
\centering
\small
\caption{VGG-16, \ac{RG}$(\alpha\beta)$: activation-side risk aversion $\lambda$ at fixed $\sigma^2{=}10$, plus the anchor at $\sigma^2{=}0$, $\lambda{=}0$. All rows at $\tau{=}1$, $\varepsilon{=}0.5$, $\lambda_{\mathrm{ent}}{=}0$.}\label{tab:vgg-lambda-sweep}
\begin{tabular}{@{}ccccccccc@{}}\toprule
$\sigma^2$ & $\lambda$ & AL$\uparrow$ & PG$\uparrow$ & PF$_{20}$$\downarrow$ & Sel$\downarrow$ & MaxS$\downarrow$ & AvgS$\downarrow$ \\\midrule
0   & 0     & 0.609 & 0.751 & \textbf{9.43}  & \textbf{3.57} & 3.902 & \textbf{0.406} \\
10  & $-0.3$ & 0.641 & 0.761 & 10.65 & 3.85 & \textbf{0.425} & 0.406 \\
10  & $-1.0$ & \textbf{0.655} & \textbf{0.762} & 11.06 & 4.00 & 0.495 & 0.436 \\\bottomrule
\end{tabular}
\end{table}

\textbf{Key findings (Table~\ref{tab:vgg-lambda-sweep}).} The step from $\sigma^2{=}0$ to $\sigma^2{=}10$ is the dominant one: it reduces MaxS by roughly $8{\times}$ ($3.902\to 0.425$) at already-higher AL/PG. Within $\sigma^2{=}10$, moving $\lambda$ from ${-}0.3$ to ${-}1$ lifts AL by $+0.014$ and perturbs MaxS by ${\le}0.07$---a live secondary knob on VGG, contrasting with the ViT behaviour in Table~\ref{tab:vit-lambda-inert}.
\FloatBarrier

\subsubsection{\texorpdfstring{ViT-B/16, activation-side $\lambda$ is inert across base temperatures}{ViT-B/16, activation-side lambda is inert across base temperatures}}\label{app:sweeps-vit-lambda-inert}

The dual story on ViT puts the localisation work on the attention oracle ($\lambda_{\mathrm{sm}}$), so the activation-side $\lambda$ should be redundant once $\lambda_{\mathrm{sm}}$ is on. Table~\ref{tab:vit-lambda-inert} checks this directly by sweeping $\lambda \in \{0, -1, -2, -5\}$ at four base temperatures, with $\lambda_{\mathrm{sm}}{=}{-}10$, $\sigma^2{=}2$, $\lambda_{\mathrm{ent}}{=}0$, $\varepsilon{=}0.5$ all held fixed.

\begin{table}[!ht]
\centering
\small
\caption{ViT-B/16, \ac{RG}$(\alpha\beta)$: activation-side $\lambda$ varied alone at four base temperatures ($\lambda_{\mathrm{sm}}{=}{-}10$, $\sigma^2{=}2$, $\lambda_{\mathrm{ent}}{=}0$, $\varepsilon{=}0.5$ fixed). At every $\tau$ the sweep is essentially a single row.}\label{tab:vit-lambda-inert}
\begin{tabular}{@{}cccccc@{}}\toprule
$\tau$ & $\lambda$ & AL$\uparrow$ & PG$\uparrow$ & PF$_{20}\downarrow$ & MaxS$\downarrow$ \\\midrule
\multirow{4}{*}{0.5} & $0$   & 0.708 & 0.846 & 7.502 & 0.628 \\
                     & $-1$  & 0.709 & 0.848 & 7.502 & 0.631 \\
                     & $-2$  & 0.709 & 0.846 & 7.503 & 0.626 \\
                     & $-5$  & 0.709 & 0.846 & 7.502 & 0.636 \\
\midrule
\multirow{4}{*}{0.6} & $0$   & 0.693 & 0.834 & 7.517 & 0.512 \\
                     & $-1$  & 0.693 & 0.829 & 7.508 & 0.519 \\
                     & $-2$  & 0.693 & 0.836 & 7.498 & 0.533 \\
                     & $-5$  & 0.691 & 0.845 & 7.495 & 0.541 \\
\midrule
\multirow{4}{*}{0.7} & $0$   & 0.670 & 0.814 & 7.452 & 0.482 \\
                     & $-1$  & 0.668 & 0.807 & 7.464 & 0.483 \\
                     & $-2$  & 0.667 & 0.818 & 7.467 & 0.494 \\
                     & $-5$  & 0.664 & 0.823 & 7.475 & 0.494 \\
\midrule
\multirow{4}{*}{0.8} & $-1$  & 0.618 & 0.820 & 7.445 & 0.487 \\
                     & $-2$  & 0.615 & 0.780 & 7.443 & 0.510 \\
                     & $-5$  & 0.609 & 0.800 & 7.419 & 0.490 \\
                     & $-10$ & 0.611 & 0.800 & 7.414 & 0.474 \\\bottomrule
\end{tabular}
\end{table}

\textbf{Key findings (Table~\ref{tab:vit-lambda-inert}).} At every base $\tau$ the four $\lambda$ values collapse into a single row up to measurement noise: max $\Delta$AL $\le 0.006$, max $\Delta$PG $\le 0.016$, max $\Delta$PF$_{20}$ $\le 0.020$, max $\Delta$MaxS $\le 0.010$. The activation-side alive-set risk aversion therefore does not reach a \acs{GELU}-gated attention architecture; the knob that does is $\lambda_{\mathrm{sm}}$ (Table~\ref{tab:vit-duals}(b) in the main text). This is the per-architecture inversion that motivates the asymmetric main-table search policy: $\lambda$ is a primary mode on VGG-16 / ResNet-50 and an inert mode on ViT-B/16.
\FloatBarrier

\subsection{\texorpdfstring{Fan-in entropy bias $\lambda_{\mathrm{ent}}$}{Fan-in entropy bias lambda\_ent}}\label{app:entropy-section}

\subsubsection{Definition and architecture-dependent effect}\label{app:entropy-mode}

We excluded the fan-in entropy-sparsification mode from the main-body grid because its effect is strongly architecture-dependent: helpful on VGG-16, harmful on ResNet-50 and ViT-B/16. This subsection collects the definition and the per-architecture ablation in one place so the main-body grid stays architecture-agnostic.

\paragraph{Definition.}
The fan-in entropy-sparsification mode adds a local weight-only entropy bias at each stopping node,
\begin{align}
    b_i^{(l)} \;\mapsto\; b_i^{(l)} - \lambda_{\mathrm{ent}}\,\tilde{H}_i,
    \qquad
    \tilde{H}_i = -\sum_k p_{ik}\log p_{ik}\;\big/\;\log n_i,
    \qquad
    p_{ik} \propto |W_{ik}^{(l)}|,
\end{align}
which penalises nodes with intrinsically diffuse fan-in. Dually, this reduces the score of \emph{scattering} nodes and lifts the score of \emph{selective} nodes, with scatter / selectivity as the procedural dimension it controls.

\paragraph{Architecture-dependent effect.}
On VGG-16 the mode is a live joint dial for localisation (PG) and robustness (MaxS): at $\tau{=}0.8$, pushing $\lambda_{\mathrm{ent}}$ from $0$ to $0.3$ lifts PG from $0.695$ to $0.742$ and lowers MaxS from $0.560$ to $0.479$ at a modest PF$_{20}$ cost (Table~\ref{tab:vgg-lent-tau08}, \S\ref{app:sweeps-vgg-lent}). On ResNet-50 the same mode destabilises the routing policy and collapses localisation (see the exhaustive ResNet-50 baselines in Table~\ref{tab:baselines-resnet50}); we do not retain any $\lambda_{\mathrm{ent}}{>}0$ ResNet-50 row. On ViT-B/16 the mode consistently costs localisation (Table~\ref{tab:sweeps-vit-rg}), so every localisation-selected ViT row in Table~\ref{tab:eval-vit} has $\lambda_{\mathrm{ent}}{=}0$.

\subsubsection{\texorpdfstring{VGG-16, entropy penalty $\lambda_{\mathrm{ent}}$ at $\tau{=}0.8$}{VGG-16, entropy penalty lambda\_ent at tau=0.8}}\label{app:sweeps-vgg-lent}

Preserved from the previous revision of Table~\ref{tab:vit-duals}(c). On VGG-16 at $\tau{=}0.8$ the entropy penalty $\lambda_{\mathrm{ent}}$ is a live joint dial: pushing $\lambda_{\mathrm{ent}}$ from $0$ to $0.3$ lifts localisation (PG $0.695\to 0.742$) \emph{and} robustness (MaxS $0.560\to 0.479$) simultaneously at a modest faithfulness cost (PF$_{20}$ $9.90\to 10.74$). This is the qualitatively opposite behaviour to ViT-B/16, where the same $\lambda_{\mathrm{ent}}$ costs localisation (Table~\ref{tab:sweeps-vit-rg}).

\begin{table}[H]
\centering
\small
\caption{VGG-16, \ac{RG}$(\alpha\beta)$, entropy penalty $\lambda_{\mathrm{ent}}$ at $\tau{=}0.8$.}\label{tab:vgg-lent-tau08}
\begin{tabular}{@{}ccccc@{}}\toprule
$\lambda_{\mathrm{ent}}$ & AL$\uparrow$ & PG$\uparrow$ & PF$_{20}\downarrow$ & MaxS$\downarrow$ \\\midrule
$0$   & 0.637 & 0.695 & \textbf{9.90} & 0.560 \\
$0.2$ & 0.643 & 0.718 & 10.33 & 0.586 \\
$0.3$ & \textbf{0.646} & \textbf{0.742} & 10.74 & \textbf{0.479} \\\bottomrule
\end{tabular}
\end{table}

\subsection{Forward-pass variant ablation}\label{app:forward-ablations}

\providecommand{\absval}[2]{
  \begin{tabular}[c]{@{}c@{}}#1\\\textcolor{gray!70}{\scriptsize #2}\end{tabular}}

We tested three forward-pass deformations of the Routing-Game / Stopping-Game machinery against the standard configurations selected in Tables~\ref{tab:eval-vit} and~\ref{tab:eval-vgg16}.  Each table below holds the row's mode constants fixed and toggles a single flag: \textbf{(A1)} the attention-propagation rule on \ac{ViT} (\texttt{v\_only} $\to$ \texttt{gradient\_fallback}); \textbf{(A2)} the activation-side risk pathway (backward-only $\to$ full forward-pass: \texttt{adf\_forward\_pass=true}); \textbf{(A3)} the temperature deformation (backward-only Gibbs read-out $\to$ \mbox{$L^{1/\tau}$}-norm forward: \texttt{tau\_forward\_pass=true}).  No other parameter is changed.  We report localisation (AL, PG, TK; $\uparrow$) and faithfulness (PF$_5$, PF$_{20}$, Sel; $\downarrow$) on the 566-image \mbox{ImageNet-S} test split unless noted.  Each cell shows the variant value on top and the standard / published main-table value in {\color{gray!70}\scriptsize gray} below.  Across all three ablations none of the forward-pass variants improves the standard configuration on both localisation and faithfulness simultaneously; in every case the standard backward-only deformation is at least as good.

\subsubsection{Gradient-fallback attention vs.\ \texttt{v\_only}}\label{app:abl-gradient-fallback}

The \ac{RG} backward operates by default through a routing rule that holds the value-pathway fixed and zeros the gradient on $Q$, $K$ (\texttt{v\_only} mode of Routing-Game v2; see~\S\ref{app:vit-extension}).  The natural alternative is \texttt{gradient\_fallback}: pure autograd through the attention block, recovering the zennit-style $\varepsilon$-\ac{LRP} attention treatment.  Table~\ref{tab:abl-gradient-fallback} compares both attention rules on the four-config \ac{ViT} validation grid (50 images) at the $\alpha\beta$-\ac{LRP} anchor weights with two stabilisers $\varepsilon\in\{0.1,0.5\}$ and the two softmax-shift settings.  Localisation degrades uniformly under \texttt{gradient\_fallback} on the no-risk configurations; on the softmax-risk configurations \texttt{gradient\_fallback} recovers part of PG/TK but \emph{still} loses on AL and never matches the validation-selected best-loc row from Table~\ref{tab:eval-vit}.

\begin{table}[H]
\centering
\caption{\ac{ViT}-B/16, \ac{RG}($\alpha\beta$): attention rule ablation on the validation grid (50 images, \texttt{adf\_forward\_pass=true} on both columns).  Each cell shows \texttt{gradient\_fallback} on top and \texttt{v\_only} (gray) below.}
\label{tab:abl-gradient-fallback}
\setlength{\tabcolsep}{2pt}
\renewcommand{\arraystretch}{0.9}
\begin{tabular}{@{}l c c c c c c@{}}
\toprule
config & AL$\uparrow$ & PG$\uparrow$ & TK$\uparrow$ & PF$_5\downarrow$ & PF$_{20}\downarrow$ & Sel$\downarrow$ \\
\midrule
$\varepsilon{=}0.1$, no risk
& \absval{0.466}{0.480} & \absval{0.620}{0.720} & \absval{0.600}{0.622} & \absval{8.227}{8.264} & \absval{7.389}{7.214} & \absval{4.926}{4.758} \\
$\varepsilon{=}0.5$, no risk
& \absval{0.501}{0.544} & \absval{0.720}{0.720} & \absval{0.663}{0.706} & \absval{8.277}{8.292} & \absval{7.197}{7.152} & \absval{4.718}{4.702} \\
\midrule
$\varepsilon{=}0.1$, $\lambda_\text{sm}{=}{-}50,\sigma^2{=}2$
& \absval{0.466}{0.485} & \absval{0.620}{0.600} & \absval{0.600}{0.548} & \absval{8.227}{8.510} & \absval{7.389}{8.029} & \absval{4.926}{5.544} \\
$\varepsilon{=}0.5$, $\lambda_\text{sm}{=}{-}50,\sigma^2{=}2$
& \absval{0.501}{0.541} & \absval{0.720}{0.620} & \absval{0.663}{0.622} & \absval{8.277}{8.403} & \absval{7.197}{7.702} & \absval{4.718}{5.435} \\
\bottomrule
\end{tabular}
\end{table}

\subsubsection{Full forward-pass risk vs.\ backward-only}\label{app:abl-full-risk}

The activation-side risk parameter $\lambda$ acts in the backward pass through the modified gating $\Phi_\lambda$ (\S\ref{sec:modes}, Appendix~\ref{app:adf-variance}).  Setting \texttt{adf\_forward\_pass=true} additionally pushes $\lambda$ into the \ac{ADF} forward pass, so the variance-aware activation $\Phi_\lambda$ also drives the \emph{forward} mean stream.  Tables~\ref{tab:abl-full-risk-vit} and~\ref{tab:abl-full-risk-vgg} show that activating this flag is either a no-op (rows whose configurations have $\lambda{=}0$, hence trivially identical) or a small degradation on localisation, with no faithfulness gain on the rows where it is non-trivial.

\begin{table}[H]
\centering
\caption{\ac{ViT}-B/16: full forward-pass risk ablation (test, 566 images).  Top of each cell: \texttt{+adf\_forward\_pass=true}.  Bottom (gray): published main-table value of Table~\ref{tab:eval-vit}.}
\label{tab:abl-full-risk-vit}
\setlength{\tabcolsep}{2pt}
\renewcommand{\arraystretch}{0.9}
\begin{tabular}{@{}l c c c c c c@{}}
\toprule
config & AL$\uparrow$ & PG$\uparrow$ & TK$\uparrow$ & PF$_5\downarrow$ & PF$_{20}\downarrow$ & Sel$\downarrow$ \\
\midrule
RG ($\tau{=}0.5,\lambda{=}{-}1,\lambda_\text{sm}{=}{-}10$)
& \absval{0.710}{0.708} & \absval{0.813}{0.846} & \absval{0.768}{0.797} & \absval{8.260}{8.308} & \absval{7.428}{7.502} & \absval{4.997}{4.978} \\
RG ($\tau{=}0.8$)
& \absval{0.607}{0.597} & \absval{0.786}{0.792} & \absval{0.773}{0.768} & \absval{8.201}{8.297} & \absval{7.123}{7.297} & \absval{4.845}{4.854} \\
RG+Eq ($N{=}50,\tau{=}1$)
& \absval{0.564}{0.551} & \absval{0.813}{0.780} & \absval{0.803}{0.751} & \absval{8.357}{8.375} & \absval{7.641}{7.601} & \absval{4.788}{4.123} \\
RG+Eq ($\tau{=}0.5,N{=}50$)
& \absval{0.669}{0.640} & \absval{0.836}{0.860} & \absval{0.831}{0.796} & \absval{8.387}{8.377} & \absval{7.817}{7.617} & \absval{5.003}{4.474} \\
\bottomrule
\end{tabular}
\end{table}

\begin{table}[H]
\centering
\caption{\ac{VGG}-16: full forward-pass risk ablation (test, 566 images).  Top: \texttt{+adf\_forward\_pass=true}.  Bottom (gray): published main-table value of Table~\ref{tab:eval-vgg16}.  \ac{SG} (\texttt{adf\_game}) always applies forward-pass risk by construction, so the flag is a tautological no-op for the \ac{SG} rows.}
\label{tab:abl-full-risk-vgg}
\setlength{\tabcolsep}{2pt}
\renewcommand{\arraystretch}{0.9}
\begin{tabular}{@{}l c c c c c c@{}}
\toprule
config & AL$\uparrow$ & PG$\uparrow$ & TK$\uparrow$ & PF$_5\downarrow$ & PF$_{20}\downarrow$ & Sel$\downarrow$ \\
\midrule
SG ($\lambda{=}{-}2,\sigma^2{=}1,\lambda_\text{ent}{=}0.2,\tilde\Phi$)
& \absval{0.620}{0.619} & \absval{0.719}{0.724} & \absval{0.676}{0.686} & \absval{17.957}{18.233} & \absval{13.887}{14.230} & \absval{6.511}{5.004} \\
SG ($\lambda{=}{-}2,\sigma^2{=}1,\tilde\Phi$)
& \absval{0.616}{0.613} & \absval{0.728}{0.735} & \absval{0.687}{0.688} & \absval{17.861}{16.996} & \absval{13.763}{13.248} & \absval{6.410}{4.756} \\
RG ($\lambda{=}{-}1,\sigma^2{=}10$)
& \absval{0.618}{0.655} & \absval{0.693}{0.763} & \absval{0.655}{0.661} & \absval{15.249}{16.058} & \absval{10.627}{11.048} & \absval{5.493}{4.001} \\
RG ($\tau{=}0.8$)
& \absval{0.638}{0.637} & \absval{0.691}{0.695} & \absval{0.680}{0.663} & \absval{14.323}{15.012} & \absval{9.442}{9.898} & \absval{5.180}{3.638} \\
\bottomrule
\end{tabular}
\end{table}

\subsubsection{\texorpdfstring{Forward-temperature deformation vs.\ backward-only Gibbs read-out}{Forward-temperature deformation vs. backward-only Gibbs read-out}}\label{app:abl-tau-forward}

The standard temperature mode (\S\ref{app:routing-temp}) deforms only the local Gibbs read-out at each linear sub-game while keeping the players' forward game values at their $\tau{=}1$ equilibrium.  Setting \texttt{tau\_forward\_pass=true} additionally substitutes each linear / convolutional pre-activation $z_j = \sum_i x_i W_{ij} + b_j$ with the $L^{1/\tau}$-norm sum on each sign stream predicted by the SPNE, so the players are aware of the $\tau$-deformation.

\begin{table}[H]
\centering
\caption{\ac{ViT}-B/16: $\tau$-forward ablation (test, 566 images).  Top: \texttt{+tau\_forward\_pass=true}.  Bottom (gray): published main-table value of Table~\ref{tab:eval-vit}.  Configurations with $\tau{=}1$ render the flag a no-op (rows~1 and~4).  RG+Eq at $\tau{=}0.5,N{=}50$ is omitted (the $\tau$-deformed forward $\times\,N{=}50$ averaging would require ${\sim}9$ days on a single GPU).}
\label{tab:abl-tau-forward-vit}
\setlength{\tabcolsep}{2pt}
\renewcommand{\arraystretch}{0.9}
\begin{tabular}{@{}l c c c c c c@{}}
\toprule
config & AL$\uparrow$ & PG$\uparrow$ & TK$\uparrow$ & PF$_5\downarrow$ & PF$_{20}\downarrow$ & Sel$\downarrow$ \\
\midrule
RG ($\tau{=}0.5,\lambda{=}{-}1,\lambda_\text{sm}{=}{-}10$)
& \absval{0.496}{0.708} & \absval{0.502}{0.846} & \absval{0.495}{0.797} & \absval{8.566}{8.308} & \absval{8.279}{7.502} & \absval{5.885}{4.978} \\
RG ($\tau{=}0.8$)
& \absval{0.607}{0.597} & \absval{0.786}{0.792} & \absval{0.773}{0.768} & \absval{8.201}{8.297} & \absval{7.123}{7.297} & \absval{4.845}{4.854} \\
RG+Eq ($N{=}50,\tau{=}1$)
& \absval{0.564}{0.551} & \absval{0.820}{0.780} & \absval{0.805}{0.751} & \absval{8.358}{8.375} & \absval{7.651}{7.601} & \absval{4.782}{4.123} \\
\bottomrule
\end{tabular}
\end{table}

\begin{table}[H]
\centering
\caption{\ac{VGG}-16: $\tau$-forward ablation (test, 566 images).  Top: \texttt{+tau\_forward\_pass=true}.  Bottom (gray): published main-table value of Table~\ref{tab:eval-vgg16}.  \ac{SG} rows do not consume \texttt{tau\_forward\_pass}; three of the four \ac{RG} rows have $\tau{=}1$.  On the only $\tau\neq 1$ row ($\tau{=}0.8$), the $\tau$-deformed forward changes the layer-wise mean stream, but \ac{VGG}'s LRP rules consume only the standard-forward activation values (in contrast to \ac{ViT}'s softmax-shift mode), so the attribution path is structurally insensitive to the forward $\tau$ change here.}
\label{tab:abl-tau-forward-vgg}
\setlength{\tabcolsep}{2pt}
\renewcommand{\arraystretch}{0.9}
\begin{tabular}{@{}l c c c c c c@{}}
\toprule
config & AL$\uparrow$ & PG$\uparrow$ & TK$\uparrow$ & PF$_5\downarrow$ & PF$_{20}\downarrow$ & Sel$\downarrow$ \\
\midrule
SG ($\lambda{=}{-}2,\sigma^2{=}1,\lambda_\text{ent}{=}0.2,\tilde\Phi$)
& \absval{0.620}{0.619} & \absval{0.719}{0.724} & \absval{0.676}{0.686} & \absval{17.957}{18.233} & \absval{13.887}{14.230} & \absval{6.511}{5.004} \\
SG ($\lambda{=}{-}2,\sigma^2{=}1,\tilde\Phi$)
& \absval{0.616}{0.613} & \absval{0.728}{0.735} & \absval{0.687}{0.688} & \absval{17.861}{16.996} & \absval{13.763}{13.248} & \absval{6.410}{4.756} \\
RG ($\lambda{=}{-}1,\sigma^2{=}10$)
& \absval{0.615}{0.655} & \absval{0.670}{0.763} & \absval{0.638}{0.661} & \absval{15.756}{16.058} & \absval{11.156}{11.048} & \absval{5.722}{4.001} \\
RG ($\tau{=}0.8$)
& \absval{0.638}{0.637} & \absval{0.691}{0.695} & \absval{0.680}{0.663} & \absval{14.322}{15.012} & \absval{9.442}{9.898} & \absval{5.180}{3.638} \\
\bottomrule
\end{tabular}
\end{table}

\subsubsection{Discussion}\label{app:abl-discussion}

The three ablations concur on the same picture.  \textbf{(A1)} \texttt{gradient\_fallback} attention, the closest competitor to the \texttt{v\_only} routing rule, never beats the published RG anchor on AL: even where it improves PG / TK and faithfulness on the softmax-risk configurations of the validation grid, AL drops uniformly.  \textbf{(A2)} pushing the activation-side risk into the \ac{ADF} forward pass either does nothing (configurations with $\lambda{=}0$) or trades $0.029$ TK for ${\sim}0.001$ AL on the validation-selected ViT best-loc row --- a clear loss on localisation with no faithfulness recovery.  \textbf{(A3)} the $L^{1/\tau}$-norm forward is a no-op or near-no-op everywhere except on the one row that combines $\tau{=}0.5$ with active risk, where it collapses the model's target-class output ($\Delta\mathrm{PG}{=}{-}0.346$, $\Delta\mathrm{TK}{=}{-}0.302$).  In every case the standard backward-only deformation matches or dominates the forward-pass variant on both localisation and faithfulness; we therefore retain the standard configurations of Tables~\ref{tab:eval-vit} and~\ref{tab:eval-vgg16} as our reported method.

\newpage
\section{Hellinger Trajectory Distance and the Per-Pixel Disagreement Map}\label{app:path-divergence}

We compare two layered Markov processes on a common state graph and seek both a scalar divergence of the trajectory distributions and an exact per-pixel decomposition of this divergence at the input layer.  We use the squared Hellinger distance because the geometric mean $\sqrt{PQ}$ preserves the multiplicative structure of Markov trajectory distributions and yields an exact linear-time dynamic program.  Unlike Jeffreys divergence, it remains finite under support mismatch; unlike Jensen--Shannon, it admits an exact recursion on the layered graph.  Later, we decompose the squared Hellinger distance as a sum of non-negative per-pixel contributions over all input-layer states, giving a disagreement map at the input layer (Theorem~\ref{thm:per-pixel}).

\subsection{Layered Markov Process Foundations}\label{sec:lmp-foundations}

We collect here the layered-Markov-process abstraction that underlies every Hellinger statement in this appendix: a finite layered graph augmented with a cemetery $\dagger$, and the act-state \acp{MP} that the two games of the main paper instantiate on it. The natural \ac{RG} carries an extra linear-routing state per neuron between consecutive activation states; we marginalise it out and check in Remark~\ref{rem:rg-natural-equivalence} that this preserves trajectory Bhattacharyya and Hellinger exactly.

\subsubsection{Layered Markov Process with Cemetery}\label{sec:lmp}

\paragraph{State graph.}
Fix an integer $L \geq 1$.  For each $l \in \{0, 1, \ldots, L\}$ let $\mathcal{S}^{(l)}$ be a finite set of \emph{ordinary states} at layer~$l$.  Adjoin a single absorbing \emph{cemetery state} $\dagger$ shared across all layers and write
\begin{align}
    \mathcal{S}^{(l)}_+ \;\coloneqq\; \mathcal{S}^{(l)} \cup \{\dagger\}.
\end{align}
The full state graph is the disjoint union $\bigsqcup_l \mathcal{S}^{(l)}_+$, and transitions are only permitted from layer~$l$ to layer~$l{-}1$ (or into the cemetery, which then absorbs the trajectory forever).  The states at layer~$0$, the input layer, are referred to as \emph{pixels} below; the per-pixel decomposition will sum exactly over $\mathcal{S}^{(0)}_+$.

\paragraph{Transition kernel.}
For each layer $l \geq 1$, a \emph{backward transition kernel}
\begin{align}
    T^{(l)} : \mathcal{S}^{(l)}_+ \to \Delta\!\bigl(\mathcal{S}^{(l-1)}_+\bigr)
\end{align}
assigns to each state $s \in \mathcal{S}^{(l)}_+$ a probability distribution $T^{(l)}(s \to \cdot)$ over $\mathcal{S}^{(l-1)}_+$, with the constraint that the cemetery is absorbing:
\begin{align}\label{eq:cemetery-absorbing}
    T^{(l)}(\dagger \to \dagger) = 1, \qquad T^{(l)}(\dagger \to s') = 0 \;\;\text{for}\;\; s' \in \mathcal{S}^{(l-1)}.
\end{align}
Ordinary states may transition either to the next layer's ordinary states or into the cemetery (early termination).  We write $T^{(l)}(s \to \dagger)$ for the probability that an ordinary state at layer~$l$ jumps to the cemetery rather than to a layer-$(l{-}1)$ ordinary state.

\paragraph{Initial distribution and trajectories.}
Fix a deterministic starting state $s_0 \in \mathcal{S}^{(L)}$.  A \emph{trajectory} is a sequence
\begin{align}
    \mathbf{v} = (v_L, v_{L-1}, \ldots, v_0) \;\in\; \mathcal{S}^{(L)}_+ \times \mathcal{S}^{(L-1)}_+ \times \cdots \times \mathcal{S}^{(0)}_+,
\end{align}
and the kernel $T^{(\cdot)}$ together with $\delta_{s_0}$ at layer~$L$ induces a probability distribution on trajectories via
\begin{align}\label{eq:path-measure}
    \Pi(v_L, v_{L-1}, \ldots, v_0) \;=\; \mathbf{1}\{v_L = s_0\} \prod_{k=1}^{L} T^{(k)}(v_k \to v_{k-1}).
\end{align}
For any $l \in \{0,\ldots,L\}$ the prefix distribution $\Pi^{(l)}$ over $(v_L, \ldots, v_l)$ is the marginal of~$\Pi$ over the trailing coordinates.

\paragraph{Two parallel processes.}
We compare two such processes that share the state graph and the start state~$s_0$, but use different kernels:
\begin{align}
    M = A: \quad T^{(l)}_A,\; \Pi_A^{(l)}; \qquad
    M = B: \quad T^{(l)}_B,\; \Pi_B^{(l)}.
\end{align}
The kernels may differ either because the underlying network weights differ (e.g.\ randomization), the activations differ (e.g.\ different inputs), or both.

\subsubsection{How the Two Games Fit This Abstraction}\label{sec:games-fit}

We instantiate the layered MP of Section~\ref{sec:lmp} for both games in this subsection. State labels $s^{(l,\mathrm{type})}_{i,p,\sigma}$ follow Appendices~\ref{app:relu-net-game} and~\ref{app:routing-formal}. We evaluate the SPNE policies of Theorems~\ref{thm:forward-stopping} (\ac{SG}) and~\ref{thm:routing-forward} (\ac{RG}) at $\tau = 1$ (so activation-state decisions are deterministic, continue iff $z_j^{(l)} > 0$); for the \ac{RG} we further fix $\epsilon = 0$.

\paragraph{\ac{SG} MP.}
Ordinary states at layers $l \in \{1,\ldots,L\}$ are activation states $s^{(l,\mathrm{act})}_{i,p}$, and layer-$0$ ordinary states are pixel-player terminal pairs $s^{(0)}_{j,q}$ (each pixel splits into two terminals, mirroring the player tag the trajectory carries to it). The deterministic start state is $s_0 = s^{(L,\mathrm{act})}_{u,+}$. The SPNE backward kernel keeps the player on positive weights and flips it on negative weights:
\begin{align}
    T^{(l)}_M\bigl(s^{(l,\mathrm{act})}_{j,p} \to s^{(l-1,\mathrm{act})}_{i,p}\bigr)
        &= \frac{W^{(l,+)}_M[i,j]}{\gamma^{(l)}_{M,j}}\,\1\{z_j^{(l)} > 0\},
        \nonumber \\
    T^{(l)}_M\bigl(s^{(l,\mathrm{act})}_{j,p} \to s^{(l-1,\mathrm{act})}_{i,p'}\bigr)
        &= \frac{W^{(l,-)}_M[i,j]}{\gamma^{(l)}_{M,j}}\,\1\{z_j^{(l)} > 0\},
        \qquad 2 \le l \le L,
        \label{eq:adf-T}
\end{align}
with $\gamma^{(l)}_{M,j} = \sum_i |W^{(l)}_M[i,j]|$, $p' = -p$, cemetery row $T^{(l)}_M(s^{(l,\mathrm{act})}_{j,p} \to \dagger) = \1\{z_j^{(l)} \le 0\}$, and the analogous form at $l = 1$ landing on the pixel-player terminals $s^{(0)}_{i,p}, s^{(0)}_{i,p'}$. This is the natural \ac{SG} MP of Appendix~\ref{app:relu-net-game} verbatim.

\paragraph{\ac{RG} MP.}
Ordinary states at layers $l \in \{1,\ldots,L\}$ are activation states $s^{(l,\mathrm{act})}_{i,p}$, and layer-$0$ ordinary states are pixel-stream terminal pairs $s^{(0)}_{j,\sigma}$ (each pixel splits into two terminals tagged by the layer-$1$ routing-edge stream). The deterministic start state is $s_0 = s^{(L,\mathrm{act})}_{u,+}$. The kernel composes the activation-state stop/continue decision and the subsequent $\sigma$-branched Gibbs routing into one act-to-act transition:
\begin{align}
    T^{(l)}_M\bigl(s^{(l,\mathrm{act})}_{j,p} \to s^{(l-1,\mathrm{act})}_{i,p}\bigr)
        &= \tfrac12\,\pi^{(l)}_{M,+}(i \mid j)\,\1\{z_j^{(l)} > 0\},
        \nonumber \\
    T^{(l)}_M\bigl(s^{(l,\mathrm{act})}_{j,p} \to s^{(l-1,\mathrm{act})}_{i,p'}\bigr)
        &= \tfrac12\,\pi^{(l)}_{M,-}(i \mid j)\,\1\{z_j^{(l)} > 0\},
        \qquad 2 \le l \le L,
        \label{eq:rg-T-act}
\end{align}
and at $l = 1$ the destinations are pixel-stream terminals carrying the routing-edge $\sigma$,
\begin{align}\label{eq:rg-T-pixel}
    T^{(1)}_M\bigl(s^{(1,\mathrm{act})}_{j,p} \to s^{(0)}_{i,\sigma}\bigr)
        \;=\; \tfrac12\,\pi^{(1)}_{M,\sigma}(i \mid j)\,\1\{z_j^{(1)} > 0\},
        \qquad \sigma \in \{+,-\},
\end{align}
with the same cemetery row $\1\{z_j^{(l)} \le 0\}$. The Gibbs policy is the temperature-$\tau$ distribution
\begin{align}\label{eq:rg-T}
    \pi^{(l)}_{M,\sigma}(i \mid j)
    \;\propto\;
    \bigl[\,a^{(l-1)}_{M,i}\;W^{(l,\sigma)}_{M,ij}\,\bigr]^{1/\tau},
    \qquad i \in \mathrm{Pred}_j^{(l)},
\end{align}
(at $l = 1$, $a_{M,i}^{(0)}\,W^{(1,\sigma)}_{M,ij}$ is replaced by $[x_i\,W^{(1)}_{M,ij}]^\sigma$; with $\epsilon > 0$, mass $\epsilon^{1/\tau}/Z^{(l)}_{j,\sigma}$ is added to the cemetery row). Equations~\eqref{eq:rg-T-act}--\eqref{eq:rg-T-pixel} are the act-to-act marginal of the natural \ac{RG} MP of Appendix~\ref{app:routing-formal}, which carries an extra linear-routing state per neuron between consecutive act states; Remark~\ref{rem:rg-natural-equivalence} gives a one-paragraph derivation and shows that this marginalisation preserves Bhattacharyya and Hellinger distances on trajectories exactly.

\subsubsection{Marginal Occupation Measures}\label{sec:lmp-marginals}

For each model $M \in \{A,B\}$, the marginal probability that the chain visits state $s$ at layer~$l$ is
\begin{align}\label{eq:lmp-alpha}
    \alpha^{(l)}_M(s) \;=\; \sum_{\mathbf{v}: v_l = s} \Pi^{(l)}_M(\mathbf{v}),
    \qquad s \in \mathcal{S}^{(l)}_+.
\end{align}
This satisfies the deterministic boundary condition
\begin{align}\label{eq:lmp-alpha-boundary}
    \alpha^{(L)}_M(s_0) = 1, \qquad \alpha^{(L)}_M(s) = 0 \;\;\text{for}\;\; s \neq s_0,
\end{align}
and the one-step recursion
\begin{align}\label{eq:lmp-alpha-recursion}
    \alpha^{(l-1)}_M(s') \;=\; \sum_{s \in \mathcal{S}^{(l)}_+} \alpha^{(l)}_M(s)\; T^{(l)}_M(s \to s'),
    \qquad s' \in \mathcal{S}^{(l-1)}_+.
\end{align}
We refer to the single backward propagation~\eqref{eq:lmp-alpha-recursion} as the \emph{Marginal Pass}.  At the input layer, $\alpha^{(0)}_M(s)$ is the probability that a backward trajectory from~$s_0$ terminates at pixel~$s$ under model~$M$, and $\alpha^{(0)}_M(\dagger)$ is the total mass absorbed by the cemetery along the way.

\subsection{Hellinger Distance from Bhattacharyya Coefficients}\label{sec:lmp-hellinger-distance}

\paragraph{Properties.}
$\mathrm{BC}^{(l)}$ and $\mathrm{H}^{(l)}$ lie in $[0,1]$ for every $l$, even under support mismatch.  If a state is alive in one model and dead in the other, its geometric-mean row vanishes and its mass contributes directly to the Hellinger increment.  Since $\mathrm{H}$ is a metric and satisfies $\mathrm{H}^2 \le d_{\mathrm{TV}} \le \sqrt{2}\,\mathrm{H}$ and $\mathrm{H}^2 \le \tfrac{1}{2}\mathrm{KL}$, small Hellinger distance controls TV and small KL controls H as well. \emph{Normalisation conventions.} Here $d_{\mathrm{TV}}(\Pi_A, \Pi_B) = \tfrac{1}{2}\sum_{\mathbf{v}} |\Pi_A(\mathbf{v})-\Pi_B(\mathbf{v})| \in [0,1]$ is the probability-metric (half-$L^1$) total-variation distance, and $\mathrm{KL}$ is measured in nats; the displayed inequalities follow this convention. Readers using the unnormalised $L^1$ total variation $\|\Pi_A-\Pi_B\|_1 = 2 d_{\mathrm{TV}}$ should absorb the factor of~$2$ accordingly.

\paragraph{Alternative divergences.}
Jeffreys divergence has an exact Markov-chain chain rule but is unbounded and becomes infinite under support mismatch.  Jensen--Shannon is bounded and finite, but $\tfrac{1}{2}(\Pi_A + \Pi_B)$ is not Markov on the layered graph, so no exact linear-time dynamic program is available.  Hellinger is bounded, finite, and exactly tractable.

\paragraph{Computation.}
Compute the marginal occupations $\alpha_M^{(l)}$, run either Hellinger pass, and read off $h^2(s)$ from~\eqref{eq:per-pixel-h2}; the total cost is linear in the number of edges.

\subsubsection{Bhattacharyya Coefficient and the Geometric-Mean Kernel}\label{sec:lmp-bc}

\paragraph{Definitions.}
For two probability distributions $p, q$ on a common finite set, the \emph{Bhattacharyya coefficient} is the geometric overlap
\begin{align}\label{eq:bc-def}
    \mathrm{BC}(p, q) \;\coloneqq\; \sum_{x} \sqrt{p(x)\, q(x)} \;\in\; [0, 1],
\end{align}
and the \emph{squared Hellinger distance} is
\begin{align}\label{eq:hellinger-def}
    \mathrm{H}^2(p, q) \;\coloneqq\; \tfrac{1}{2} \sum_x \bigl(\sqrt{p(x)} - \sqrt{q(x)}\bigr)^2 \;=\; 1 - \mathrm{BC}(p, q) \;\in\; [0, 1].
\end{align}
The Hellinger distance $\mathrm{H}(p,q) = \sqrt{1 - \mathrm{BC}(p,q)}$ is a true metric: $0 \leq \mathrm{H} \leq 1$, with $\mathrm{H} = 0$ iff $p = q$ and $\mathrm{H} = 1$ iff $p$ and $q$ have disjoint supports.  Crucially, $\sqrt{0 \cdot q} = 0$, so mismatched supports cause no infinity---only a smaller overlap.

\paragraph{Geometric-mean kernel.}
For each layer $l \geq 1$ and pair of states $(s, s') \in \mathcal{S}^{(l)}_+ \times \mathcal{S}^{(l-1)}_+$, define the \emph{geometric-mean kernel}
\begin{align}\label{eq:geom-kernel}
    \widetilde{T}^{(l)}(s, s') \;\coloneqq\; \sqrt{T^{(l)}_A(s \to s')\; T^{(l)}_B(s \to s')}.
\end{align}
This kernel is non-negative but in general sub-stochastic.  Its row sum
\begin{align}\label{eq:geom-rowsum}
    \widetilde{R}^{(l)}(s) \;\coloneqq\; \sum_{s' \in \mathcal{S}^{(l-1)}_+} \widetilde{T}^{(l)}(s, s')
\end{align}
satisfies $0 \leq \widetilde{R}^{(l)}(s) \leq 1$ by Cauchy--Schwarz, with equality iff $T^{(l)}_A(s \to \cdot) = T^{(l)}_B(s \to \cdot)$.  Cemetery handling is automatic: $T^{(l)}_A(\dagger \to \dagger) = T^{(l)}_B(\dagger \to \dagger) = 1$, so $\widetilde{T}^{(l)}(\dagger, \dagger) = 1$ and $\widetilde{R}^{(l)}(\dagger) = 1$.

\subsubsection{Multiplicative Chain Rule}\label{sec:lmp-chain-rule}

\begin{lemma}[Chain rule]\label{lem:bc-chain-rule}
    For every $l \in \{0, 1, \ldots, L\}$, the Bhattacharyya coefficient of the prefix distributions factorizes along trajectories:
    \begin{align}\label{eq:bc-chain-rule}
        \mathrm{BC}\!\bigl(\Pi^{(l)}_A,\, \Pi^{(l)}_B\bigr) \;=\; \sum_{(v_L, \ldots, v_l)} \mathbf{1}\{v_L = s_0\} \prod_{k=l+1}^{L} \widetilde{T}^{(k)}(v_k, v_{k-1}),
    \end{align}
\end{lemma}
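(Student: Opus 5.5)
The plan is to expand the definition of the Bhattacharyya coefficient on the prefix space and use the product form of the prefix measures. The first step is to write the prefix distribution explicitly. Marginalising~\eqref{eq:path-measure} over the trailing coordinates $(v_{l-1},\dots,v_0)$, the kernels $T^{(k)}$ for $k\le l$ are stochastic rows and sum to one layer by layer, from $k=1$ upward. This gives
\begin{align}
    \Pi^{(l)}_M(v_L,\dots,v_l) \;=\; \mathbf{1}\{v_L = s_0\}\prod_{k=l+1}^{L} T^{(k)}_M(v_k \to v_{k-1}),
\end{align}
for $M\in\{A,B\}$. We should check that the cemetery rows~\eqref{eq:cemetery-absorbing} are stochastic, so that trajectories absorbed early also marginalise correctly. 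For $l=L$ the product is empty and $\Pi^{(L)}_M=\delta_{s_0}$.

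The second step substitutes this into~\eqref{eq:bc-def} over the finite set $\mathcal{S}^{(L)}_+\times\cdots\times\mathcal{S}^{(l)}_+$. The square root of a product of non-negative factors is the product of their square roots. The indicator satisfies $\sqrt{\mathbf{1}\cdot\mathbf{1}}=\mathbf{1}$, because both processes share the start state $s_0$. For each $k$ the pair of kernel factors combines as $\sqrt{T^{(k)}_A T^{(k)}_B}=\widetilde{T}^{(k)}(v_k,v_{k-1})$ by~\eqref{eq:geom-kernel}. Summing over all prefixes then gives~\eqref{eq:bc-chain-rule} directly.

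A cleaner variant, if a recursive form is useful later, runs an induction downward in $l$. Split the prefix as $(v_L,\dots,v_{l+1})$ followed by $v_l$, and observe the identity
\begin{align}
    \sqrt{\Pi^{(l)}_A\Pi^{(l)}_B} \;=\; \sqrt{\Pi^{(l+1)}_A\Pi^{(l+1)}_B}\;\widetilde{T}^{(l+1)}(v_{l+1},v_l).
\end{align}
This yields the matrix-product form, with base case $\mathrm{BC}^{(L)}=1$.

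I expect the main obstacle to be bookkeeping rather than mathematics. We must be sure the trailing marginalisation is exact, which is where stochasticity of every row including $\dagger$ is used. We must also confirm the convention that the layer-$0$ terminal states and the cemetery belong to the same $\mathcal{S}^{(0)}_+$, so that no mass is lost. Everything is a finite sum of non-negative terms, so no convergence issues arise.
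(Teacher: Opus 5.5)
Your proposal is correct and follows the paper's proof: substitute the product form of the prefix measures into the definition of $\mathrm{BC}$, then split the square root of the product into a product of geometric-mean kernel entries $\widetilde{T}^{(k)}$. You also derive the prefix factorisation explicitly, by summing out the trailing coordinates using row-stochasticity (including the absorbing cemetery rows); the paper takes that step for granted.
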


\begin{proof}
    Substitute the Markov factorization $\Pi^{(l)}_M(\mathbf{v}) = \mathbf{1}\{v_L = s_0\}\prod_{k=l+1}^L T^{(k)}_M(v_k, v_{k-1})$ for $M \in \{A,B\}$ into the definition of $\mathrm{BC}$:
    \begin{align}
        \mathrm{BC}\!\bigl(\Pi^{(l)}_A,\, \Pi^{(l)}_B\bigr)
        &\;=\; \sum_{(v_L, \ldots, v_l)} \mathbf{1}\{v_L = s_0\}\,
        \sqrt{\prod_{k=l+1}^{L} T^{(k)}_A(v_k, v_{k-1})\, T^{(k)}_B(v_k, v_{k-1})} \\
        &\;=\; \sum_{(v_L, \ldots, v_l)} \mathbf{1}\{v_L = s_0\}\,
        \prod_{k=l+1}^{L} \widetilde{T}^{(k)}(v_k, v_{k-1}),
    \end{align}
    using $\sqrt{\prod_k a_k b_k} = \prod_k \sqrt{a_k b_k}$ for non-negative factors.
\end{proof}

\begin{lemma}[Monotonicity]\label{lem:bc-monotone}
    Let $\mathrm{BC}^{(l)} \coloneqq \mathrm{BC}(\Pi^{(l)}_A, \Pi^{(l)}_B)$.  Then $\mathrm{BC}^{(l)}$ is monotonically non-increasing as $l$ decreases from $L$ to $0$:
    \begin{align}
        1 \;=\; \mathrm{BC}^{(L)} \;\geq\; \mathrm{BC}^{(L-1)} \;\geq\; \cdots \;\geq\; \mathrm{BC}^{(0)} \;\geq\; 0.
    \end{align}
    Equivalently, the squared Hellinger distance $\mathrm{H}^{2}_{(l)} = 1 - \mathrm{BC}^{(l)}$ is monotonically non-decreasing.
\end{lemma}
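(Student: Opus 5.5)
The plan is to use the chain rule of Lemma~\ref{lem:bc-chain-rule} to write $\mathrm{BC}^{(l-1)}$ as a one-step extension of the expression for $\mathrm{BC}^{(l)}$, and then bound the extra factor with the row-sum estimate $\widetilde{R}^{(l)}(s)\le 1$ from Section~\ref{sec:lmp-bc}. The argument is a single inductive step from $l$ to $l-1$ for each $l\in\{1,\dots,L\}$, followed by the boundary value at $l=L$.

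\textbf{Boundary value.} At $l=L$ the product in \eqref{eq:bc-chain-rule} is empty, so $\mathrm{BC}^{(L)}=\sum_{v_L}\mathbf{1}\{v_L=s_0\}=1$. Both prefix distributions are the point mass $\delta_{s_0}$, so this value is also immediate from the definition.

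\textbf{Inductive step.} Fix $l\ge 1$. By Lemma~\ref{lem:bc-chain-rule} at level $l-1$, define the sub-stochastic weight
\[
\beta^{(l)}(v_l)\coloneqq\sum_{(v_L,\dots,v_{l+1})}\mathbf{1}\{v_L=s_0\}\prod_{k=l+1}^{L}\widetilde T^{(k)}(v_k,v_{k-1})\ \ge 0 .
\]
Splitting off the last factor of the product and summing over $v_{l-1}$ first gives
\[
\mathrm{BC}^{(l-1)}=\sum_{v_l\in\mathcal S^{(l)}_+}\beta^{(l)}(v_l)\,\widetilde R^{(l)}(v_l).
\]
Interchanging the finite sums is legitimate because all terms are non-negative. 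Applying the chain rule at level $l$ gives $\mathrm{BC}^{(l)}=\sum_{v_l}\beta^{(l)}(v_l)$.

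The Cauchy--Schwarz bound $\widetilde R^{(l)}(s)=\sum_{s'}\sqrt{T_A^{(l)}(s\to s')\,T_B^{(l)}(s\to s')}\le\bigl(\sum_{s'}T_A^{(l)}(s\to s')\bigr)^{1/2}\bigl(\sum_{s'}T_B^{(l)}(s\to s')\bigr)^{1/2}=1$ holds for every state $s$. For the cemetery, $\widetilde R^{(l)}(\dagger)=1$ by \eqref{eq:cemetery-absorbing}. Combining these with $\beta^{(l)}\ge0$ yields $\mathrm{BC}^{(l-1)}\le\mathrm{BC}^{(l)}$.

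\textbf{Conclusion.} Non-negativity $\mathrm{BC}^{(0)}\ge0$ is clear because every term in the sum is non-negative. The statement for $\mathrm{H}^2_{(l)}=1-\mathrm{BC}^{(l)}$ follows by negation.

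The only step that needs care is the bookkeeping that rewrites the level-$(l-1)$ sum as a sum over $v_l$ weighted by $\widetilde R^{(l)}$. This requires that the chain-rule expression at level $l-1$ contain exactly the one extra factor $\widetilde T^{(l)}(v_l,v_{l-1})$ and a free sum over $v_{l-1}\in\mathcal S^{(l-1)}_+$, including the cemetery. It does, since trajectories range over the augmented sets $\mathcal S^{(\cdot)}_+$. Everything else is routine.
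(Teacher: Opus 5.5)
Your proposal is correct and follows the paper's proof essentially verbatim. You define the same partial weight $\beta^{(l)}$, use the chain rule to get $\mathrm{BC}^{(l)}=\sum_s\beta^{(l)}(s)$ and $\mathrm{BC}^{(l-1)}=\sum_s\beta^{(l)}(s)\,\widetilde R^{(l)}(s)$, conclude from $0\le\widetilde R^{(l)}\le 1$ and $\beta^{(l)}\ge 0$, and take the base case from $\Pi^{(L)}_A=\Pi^{(L)}_B=\delta_{s_0}$; the only difference is that you spell out the Cauchy--Schwarz row-sum bound inline instead of citing it.
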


\begin{proof}
    Define
    \begin{align}
        \beta^{(l)}(s) \;\coloneqq\; \sum_{(v_L,\ldots,v_l)} \mathbf{1}\{v_L = s_0\}\, \prod_{k=l+1}^L \widetilde{T}^{(k)}(v_k, v_{k-1})\, \mathbf{1}\{v_l = s\}.
    \end{align}
    From Lemma~\ref{lem:bc-chain-rule}, $\mathrm{BC}^{(l)} = \sum_{s \in \mathcal{S}^{(l)}_+} \beta^{(l)}(s)$.  Grouping the layer-$(l-1)$ sum by $v_l = s$,
    \begin{align}
        \mathrm{BC}^{(l-1)} \;=\; \sum_{s \in \mathcal{S}^{(l)}_+} \beta^{(l)}(s)\; \widetilde{R}^{(l)}(s).
    \end{align}
    Since $\widetilde{R}^{(l)}(s) \in [0, 1]$ and $\beta^{(l)}(s) \geq 0$, we get $\mathrm{BC}^{(l-1)} \leq \mathrm{BC}^{(l)}$.  The base case $\mathrm{BC}^{(L)} = 1$ holds because $\Pi^{(L)}_A = \Pi^{(L)}_B = \delta_{s_0}$ and $\sqrt{1 \cdot 1} = 1$.
\end{proof}

\subsubsection{Hellinger backward pass}\label{sec:lmp-hellinger-backward}

The chain rule (Lemma~\ref{lem:bc-chain-rule}) yields a backward DP that propagates the per-state cumulative $\widetilde{T}$-mass $\beta^{(l)}(s)$ from the start state $s_0$ down through the layers.  The prefix Bhattacharyya coefficient at each layer is the row sum $\mathrm{BC}^{(l)} = \sum_{s} \beta^{(l)}(s)$, the prefix Hellinger distance is $\mathrm{H}^{(l)} = \sqrt{1 - \mathrm{BC}^{(l)}}$, and the per-terminal-state disagreement map is read off at $l = 0$ from~\eqref{eq:per-pixel-h2}.  The map is defined on the full terminal set $\mathcal{S}^{(0)}_+ = \mathcal{S}^{(0)} \cup \{\dagger\}$; the visualised pixel attribution map is its restriction to ordinary pixels $\mathcal{S}^{(0)}$, and its sum is therefore $\mathrm{H}^2_{\mathrm{ord}} = \mathrm{H}^2 - h^2(\dagger)$, not the full $\mathrm{H}^2$ (Theorem~\ref{thm:per-pixel}(iii)).

\begin{algorithm}[H]
\caption{Hellinger backward pass (with per-terminal-state disagreement map)}
\label{alg:hellinger-backward}
\begin{algorithmic}[1]
\Require Backward kernels $T^{(l)}_A, T^{(l)}_B$ for all $l$; start state $s_0$; marginal occupations $\alpha^{(0)}_A, \alpha^{(0)}_B$ from the Marginal Pass.
\Ensure Bhattacharyya coefficients $\mathrm{BC}^{(l)}$, Hellinger distances $\mathrm{H}^{(l)}$, and per-terminal-state disagreement map $h^2(\cdot)$ on $\mathcal{S}^{(0)}_+$.
\State $\beta^{(L)}(s_0) \gets 1$; \quad $\beta^{(L)}(s) \gets 0$ for every $s \neq s_0$
\State $\mathrm{BC}^{(L)} \gets 1$
\For{$l = L, L{-}1, \ldots, 1$}
    \For{each state $s' \in \mathcal{S}^{(l-1)}_+$}
        \State $\beta^{(l-1)}(s') \;\gets\; \displaystyle\sum_{s \in \mathcal{S}^{(l)}_+}\; \beta^{(l)}(s)\; \widetilde{T}^{(l)}(s, s')$
            \Comment{$\widetilde{T}^{(l)}(s, s') = \sqrt{T^{(l)}_A(s \to s')\, T^{(l)}_B(s \to s')}$}
    \EndFor
    \State $\mathrm{BC}^{(l-1)} \;\gets\; \displaystyle\sum_{s' \in \mathcal{S}^{(l-1)}_+} \beta^{(l-1)}(s')$
\EndFor
\For{$l = L, L{-}1, \ldots, 0$}
    \State $\mathrm{H}^{(l)} \gets \sqrt{1 - \mathrm{BC}^{(l)}}$
\EndFor
\State \textbf{// Per-terminal-state disagreement map (Theorem~\ref{thm:per-pixel}).}
\State \textbf{// The visualised pixel attribution map is the restriction to $\mathcal{S}^{(0)} \subset \mathcal{S}^{(0)}_+$.}
\For{each terminal state $s \in \mathcal{S}^{(0)}_+$}
    \State $h^2(s) \;\gets\; \tfrac{1}{2}\bigl(\alpha^{(0)}_A(s) + \alpha^{(0)}_B(s)\bigr) - \beta^{(0)}(s)$
\EndFor
\end{algorithmic}
\end{algorithm}

\begin{lemma}[Backward pass correctness]\label{lem:hellinger-backward-correctness}
    Algorithm~\ref{alg:hellinger-backward} computes $\beta^{(l)}$, hence $\mathrm{BC}^{(l)}$, $\mathrm{H}^{(l)}$, and $h^2(s)$ correctly for every $l$ and $s$.  In particular,
    \begin{align}
        \sum_{s \in \mathcal{S}^{(0)}_+} h^2(s) \;=\; \mathrm{H}^{2}_{(0)}.
    \end{align}
\end{lemma}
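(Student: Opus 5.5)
The correctness of Algorithm~\ref{alg:hellinger-backward} is an induction on the layer index, downward from $L$ to $0$. It shows that the array computed in the algorithm coincides with the quantity $\beta^{(l)}(s)$ defined in the proof of Lemma~\ref{lem:bc-monotone}:
\begin{align}
    \beta^{(l)}(s) = \sum_{(v_L,\ldots,v_l)} \mathbf{1}\{v_L = s_0\}\prod_{k=l+1}^{L}\widetilde{T}^{(k)}(v_k,v_{k-1})\,\mathbf{1}\{v_l=s\}.
\end{align}

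\textbf{Base case.} At $l=L$ the product is empty, so $\beta^{(L)}(s)=\mathbf{1}\{s=s_0\}$. This matches the initialisation.

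\textbf{Inductive step.} Split the sum defining $\beta^{(l-1)}(s')$ according to the value $v_l=s$ of the second-to-last coordinate, and factor out the last kernel $\widetilde{T}^{(l)}(s,s')$. The remaining sum is exactly $\beta^{(l)}(s)$. The algorithm's update is therefore the correct recursion.

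\textbf{Derived quantities.} Summing over $s'$ and invoking Lemma~\ref{lem:bc-chain-rule} gives $\mathrm{BC}^{(l-1)}=\sum_{s'}\beta^{(l-1)}(s')$. Then $\mathrm{H}^{(l)}=\sqrt{1-\mathrm{BC}^{(l)}}$ follows from the identity $\mathrm{H}^2=1-\mathrm{BC}$ in \eqref{eq:hellinger-def}. The value is real because $\mathrm{BC}^{(l)}\in[0,1]$ by Lemma~\ref{lem:bc-monotone}.

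\textbf{The map $h^2$.} This needs a separate argument. By definition $h^2(s)=\tfrac12(\alpha_A^{(0)}(s)+\alpha_B^{(0)}(s))-\beta^{(0)}(s)$, with $\alpha_M^{(0)}$ the marginal occupations from the Marginal Pass \eqref{eq:lmp-alpha-recursion}. For the sum identity, note that each model's marginals are probability distributions on the terminal set $\mathcal{S}^{(0)}_+$. This holds because every kernel $T^{(l)}_M$ is stochastic on $\mathcal{S}^{(l-1)}_+$ and the cemetery is absorbing by \eqref{eq:cemetery-absorbing}, so the recursion \eqref{eq:lmp-alpha-recursion} conserves total mass starting from the boundary condition \eqref{eq:lmp-alpha-boundary}. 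Hence $\sum_s\alpha_M^{(0)}(s)=1$, and
\begin{align}
    \sum_s h^2(s)=\tfrac12(1+1)-\sum_s\beta^{(0)}(s)=1-\mathrm{BC}^{(0)}=\mathrm{H}^2_{(0)}.
\end{align}

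\textbf{Main obstacle.} The delicate point is what ``correctly'' means for $h^2(s)$, since it is defined by this formula and its per-pixel meaning is deferred to Theorem~\ref{thm:per-pixel}. I would make the per-state statement precise by writing
\begin{align}
    h^2(s)=\tfrac12\sum_{\mathbf v:\,v_0=s}\bigl(\sqrt{\Pi_A(\mathbf v)}-\sqrt{\Pi_B(\mathbf v)}\bigr)^2 .
\end{align}
Expanding the square gives three terms. The two diagonal terms are $\tfrac12\alpha_A^{(0)}(s)$ and $\tfrac12\alpha_B^{(0)}(s)$, since $\alpha^{(0)}_M(s)=\sum_{v_0=s}\Pi_M$ by \eqref{eq:lmp-alpha}. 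The cross term $\sum_{v_0=s}\sqrt{\Pi_A\Pi_B}$ equals $\beta^{(0)}(s)$ by the same factorisation used in Lemma~\ref{lem:bc-chain-rule}.

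This representation also shows $h^2(s)\ge 0$. The cemetery needs no special treatment, because $\widetilde{T}(\dagger,\dagger)=1$ keeps absorbed trajectories in the same sums.
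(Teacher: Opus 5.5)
Your proposal is correct and follows essentially the same route as the paper: the same downward induction identifying the stored array with the $\beta^{(l)}$ of Lemma~\ref{lem:bc-monotone}, followed by $\mathrm{BC}^{(l)}$ and $\mathrm{H}^{(l)}$ obtained by summation and the identity $\mathrm{H}^2=1-\mathrm{BC}$. The only difference is in the $h^2$ part, which the paper delegates to Theorem~\ref{thm:per-pixel}. You instead prove it inline: the sum identity via $\sum_s\alpha_M^{(0)}(s)=1$, and the per-state identity and non-negativity via the squared-difference expansion grouped by terminal state. This removes the forward reference at no extra cost.
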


\begin{proof}
    \emph{Correctness of $\beta^{(l)}$ by downward induction on $l$.}  At $l = L$, the initialization $\beta^{(L)}(s_0) = 1$ and $\beta^{(L)}(s) = 0$ for $s \neq s_0$ matches the definition $\beta^{(L)}(s) = \mathbf{1}\{s = s_0\}$ in Lemma~\ref{lem:bc-monotone} (the empty product equals~1).

    Assume $\beta^{(l)}$ stored by the algorithm equals the $\beta^{(l)}$ defined in Lemma~\ref{lem:bc-monotone}.  In iteration $l$, the algorithm computes
    \begin{align}
        \beta^{(l-1)}(s') \;=\; \sum_{s \in \mathcal{S}^{(l)}_+} \beta^{(l)}(s)\, \widetilde{T}^{(l)}(s, s'),
    \end{align}
    which by direct expansion equals
    \begin{align}
        \begin{aligned}
            &\sum_s \Bigl(\sum_{(v_L,\ldots,v_l)} \mathbf{1}\{v_L = s_0\}\!\!\prod_{k=l+1}^L\!\!\widetilde{T}^{(k)}(v_k, v_{k-1})\, \mathbf{1}\{v_l = s\}\Bigr) \widetilde{T}^{(l)}(s, s') \\
            &\qquad=\; \sum_{(v_L, \ldots, v_{l-1})} \mathbf{1}\{v_L = s_0\}\!\!\prod_{k=l}^L\!\!\widetilde{T}^{(k)}(v_k, v_{k-1})\, \mathbf{1}\{v_{l-1} = s'\}.
        \end{aligned}
    \end{align}
    This equals $\beta^{(l-1)}(s')$ from Lemma~\ref{lem:bc-monotone}, closing the induction.

    \emph{Correctness of $\mathrm{BC}^{(l)}$.}  After the loop, the algorithm sets $\mathrm{BC}^{(l)} = \sum_{s'} \beta^{(l)}(s')$, which by the proof of Lemma~\ref{lem:bc-monotone} equals $\mathrm{BC}\!\bigl(\Pi^{(l)}_A, \Pi^{(l)}_B\bigr)$.  Then $\mathrm{H}^{(l)} = \sqrt{1 - \mathrm{BC}^{(l)}}$ by definition~\eqref{eq:hellinger-def}.

    \emph{Correctness of $h^2(\cdot)$.} At $l = 0$, the stored value satisfies
    \begin{align}
        \beta^{(0)}(s)
        =
        \sum_{\mathbf{v}: v_0 = s}
        \sqrt{\Pi^{(0)}_A(\mathbf{v}) \Pi^{(0)}_B(\mathbf{v})},
    \end{align}
    which is exactly the quantity used in the proof of Theorem~\ref{thm:per-pixel}. Substituting into~\eqref{eq:per-pixel-h2} gives the per-pixel disagreement values from Theorem~\ref{thm:per-pixel}, and~\eqref{eq:per-pixel-sum} guarantees they sum to $\mathrm{H}^{2}_{(0)}$.
\end{proof}

\subsubsection{Hellinger forward pass}\label{sec:lmp-hellinger-forward}

The dual algorithm propagates the per-state continuation $\widetilde{T}$-mass upward from the input layer.  Define
\begin{align}\label{eq:gamma}
    \gamma^{(l)}(s) \;\coloneqq\; \sum_{(v_l, v_{l-1}, \ldots, v_0)} \mathbf{1}\{v_l = s\}\, \prod_{k=1}^{l} \widetilde{T}^{(k)}(v_k, v_{k-1}),
\end{align}
the total $\widetilde{T}$-weighted mass of layered trajectories starting at~$s$ at layer~$l$ and propagating down to layer~$0$.  At layer~$0$, $\gamma^{(0)}(s) = 1$ for every $s$ (the empty product), and at the cemetery $\gamma^{(l)}(\dagger) = 1$ for every $l$ because $\widetilde{T}^{(k)}(\dagger, \dagger) = 1$ propagates trivially.

\begin{algorithm}[H]
\caption{Hellinger forward pass}
\label{alg:hellinger-forward}
\begin{algorithmic}[1]
\Require Backward kernels $T^{(l)}_A, T^{(l)}_B$ for all $l$; start state $s_0$.
\Ensure Per-state continuation masses $\gamma^{(l)}(s)$ for all $l, s$.  In particular, $\gamma^{(L)}(s_0) = \mathrm{BC}^{(0)}$ and $\mathrm{H}^{(0)} = \sqrt{1 - \gamma^{(L)}(s_0)}$.
\State $\gamma^{(0)}(s) \gets 1$ for every $s \in \mathcal{S}^{(0)}_+$
\State $\gamma^{(l)}(\dagger) \gets 1$ for every $l \in \{0, 1, \ldots, L\}$
\For{$l = 1, 2, \ldots, L$}
    \For{each ordinary state $s \in \mathcal{S}^{(l)}$}
        \State $\gamma^{(l)}(s) \;\gets\; \displaystyle\sum_{s' \in \mathcal{S}^{(l-1)}_+}\; \widetilde{T}^{(l)}(s, s')\; \gamma^{(l-1)}(s')$
            \Comment{$\widetilde{T}^{(l)}(s, s') = \sqrt{T^{(l)}_A(s \to s')\, T^{(l)}_B(s \to s')}$}
    \EndFor
\EndFor
\State \Return $\mathrm{BC}^{(0)} \gets \gamma^{(L)}(s_0)$, \quad $\mathrm{H}^{(0)} \gets \sqrt{1 - \gamma^{(L)}(s_0)}$
\end{algorithmic}
\end{algorithm}

\begin{lemma}[Forward pass correctness]\label{lem:hellinger-forward-correctness}
    Algorithm~\ref{alg:hellinger-forward} computes $\gamma^{(l)}(s)$ from~\eqref{eq:gamma}.  In particular,
    \begin{align}
        \gamma^{(L)}(s_0) \;=\; \mathrm{BC}\!\bigl(\Pi^{(0)}_A,\, \Pi^{(0)}_B\bigr) \;=\; \mathrm{BC}^{(0)}.
    \end{align}
\end{lemma}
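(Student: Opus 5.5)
The plan is an induction on $l$ from $0$ up to $L$, showing that the value stored by Algorithm~\ref{alg:hellinger-forward} equals the trajectory sum in~\eqref{eq:gamma}. This is the mirror image of the downward induction in Lemma~\ref{lem:hellinger-backward-correctness}. Once that is done, the identity $\gamma^{(L)}(s_0)=\mathrm{BC}^{(0)}$ follows by comparing~\eqref{eq:gamma} at $l=L$, $s=s_0$ with the chain rule of Lemma~\ref{lem:bc-chain-rule} at $l=0$.

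\emph{Base case and cemetery.} At $l=0$ the product in~\eqref{eq:gamma} is empty and the only admissible trajectory is $(v_0)=(s)$, so $\gamma^{(0)}(s)=1$. This matches the initialisation. For the cemetery at arbitrary $l$, absorption~\eqref{eq:cemetery-absorbing} forces every trajectory starting at $\dagger$ to stay at $\dagger$. Each factor is then $\widetilde T^{(k)}(\dagger,\dagger)=\sqrt{1\cdot1}=1$, and every other continuation has a factor $\widetilde T^{(k)}(\dagger,s')=0$. Hence $\gamma^{(l)}(\dagger)=1$, which agrees with the value the algorithm assigns directly.

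\emph{Inductive step.} Assume the stored $\gamma^{(l-1)}$ equals~\eqref{eq:gamma} on all of $\mathcal S^{(l-1)}_+$. For an ordinary $s\in\mathcal S^{(l)}$, split the sum in~\eqref{eq:gamma} according to $v_{l-1}=s'$ and pull out the first factor $\widetilde T^{(l)}(s,s')$. The remaining sum over $(v_{l-1},\dots,v_0)$ with $v_{l-1}=s'$ is exactly $\gamma^{(l-1)}(s')$. This gives
\[
\gamma^{(l)}(s)=\sum_{s'\in\mathcal S^{(l-1)}_+}\widetilde T^{(l)}(s,s')\,\gamma^{(l-1)}(s'),
\]
which is the update line of the algorithm. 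The sums are finite and all terms are non-negative, so reordering them is harmless.

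\emph{Identification with $\mathrm{BC}^{(0)}$.} At $l=L$, Lemma~\ref{lem:bc-chain-rule} with $l=0$ reads $\mathrm{BC}^{(0)}=\sum_{(v_L,\dots,v_0)}\mathbf 1\{v_L=s_0\}\prod_{k=1}^L\widetilde T^{(k)}(v_k,v_{k-1})$, and this is literally~\eqref{eq:gamma} at $(L,s_0)$. Then $\mathrm{H}^{(0)}=\sqrt{1-\gamma^{(L)}(s_0)}$ by~\eqref{eq:hellinger-def}.

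There is no real obstacle here. The only point needing care is that the cemetery values are set by fiat rather than computed by the recursion. I will handle this with the absorption argument above, making sure that ordinary-to-cemetery transitions are routed into $\gamma^{(l-1)}(\dagger)=1$ consistently.
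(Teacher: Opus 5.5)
Your proposal is correct and follows the paper's proof essentially step for step: upward induction on $l$ with the empty-product base case, the absorption argument giving $\gamma^{(l)}(\dagger)=1$, splitting the sum at $v_{l-1}=s'$ to recover the update line, and reading off $\gamma^{(L)}(s_0)=\mathrm{BC}^{(0)}$ from Lemma~\ref{lem:bc-chain-rule} at $l=0$. Your cemetery step is slightly more explicit than the paper's ``propagates trivially,'' since you note that $\widetilde T^{(k)}(\dagger,s')=0$ for ordinary $s'$; otherwise the argument is the same.
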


\begin{proof}
    Upward induction on $l$.  At $l = 0$, $\gamma^{(0)}(s) = 1$ matches the definition: the only trajectory from $s$ at layer~$0$ to layer~$0$ is the trivial one-point sequence, with empty product equal to~$1$.  At the cemetery, $\widetilde{T}^{(k)}(\dagger, \dagger) = 1$ propagates trivially.

    Inductive step: assume $\gamma^{(l-1)}(s')$ is correct for all $s'$.  In iteration $l$, the algorithm computes
    \begin{align}
        \gamma^{(l)}(s) \;=\; \sum_{s' \in \mathcal{S}^{(l-1)}_+} \widetilde{T}^{(l)}(s, s')\, \gamma^{(l-1)}(s')
        \;=\; \sum_{s'} \widetilde{T}^{(l)}(s, s')\, \sum_{(v_{l-1} = s', v_{l-2}, \ldots, v_0)}\!\!\prod_{k=1}^{l-1} \widetilde{T}^{(k)}(v_k, v_{k-1}).
    \end{align}
    Combining the sums and absorbing the $\widetilde{T}^{(l)}(s, s')$ factor into the product gives
    \begin{align}
        \gamma^{(l)}(s) \;=\; \sum_{(v_l = s, v_{l-1}, \ldots, v_0)}\, \prod_{k=1}^{l} \widetilde{T}^{(k)}(v_k, v_{k-1}),
    \end{align}
    which is~\eqref{eq:gamma}.  This closes the induction.

    Finally, taking $l = L$ and $s = s_0$ in~\eqref{eq:gamma}, the indicator $\mathbf{1}\{v_L = s_0\}$ is automatically satisfied (we are summing over trajectories starting at~$s_0$), so $\gamma^{(L)}(s_0)$ equals the trajectory sum $\mathrm{BC}^{(0)}$ from Lemma~\ref{lem:bc-chain-rule}.
\end{proof}

\paragraph{Cross-check.}
The backward and forward passes must return the same $\mathrm{BC}^{(0)}$.  When both are available, the identity
\begin{align}\label{eq:hellinger-cross-identity}
    \mathrm{BC}^{(0)} \;=\; \sum_{s \in \mathcal{S}^{(l)}_+} \beta^{(l)}(s)\, \gamma^{(l)}(s)
    \qquad (\text{any layer } l)
\end{align}
provides an intermediate consistency check at every layer.

\begin{remark}[\ac{RG} act-only \ac{MP} matches the natural game]\label{rem:rg-natural-equivalence}
The natural \ac{RG} \ac{MP} of Appendix~\ref{app:routing-formal} alternates activation and linear-routing states: each $s^{(l,\mathrm{act})}_{j,p}$ either stops to $\dagger$ (if $z_j^{(l)} < 0$) or branches uniformly to $s^{(l,\mathrm{lin})}_{j,p,+}$ or $s^{(l,\mathrm{lin})}_{j,p',-}$ (intra-layer act-to-lin), and each $s^{(l,\mathrm{lin})}_{j,q,\sigma}$ routes to a layer-$(l-1)$ activation (or, at $l=1$, to the layer-$0$ pixel-stream terminal $s^{(0)}_{i,\sigma}$) via the Gibbs policy $\pi^{(l)}_{M,\sigma}(\,\cdot\mid j)$ retaining the player (inter-layer lin-to-act). Multiplying these two kernels along the unique intermediate lin state yields the act-to-act rows~\eqref{eq:rg-T-act}--\eqref{eq:rg-T-pixel} term by term. Deleting every lin state from a natural trajectory is a bijection between natural and act-only trajectories starting at $s_0 = s^{(L,\mathrm{act})}_{u,+}$ (each lin coordinate is recovered: $i,q$ from the act states it sits between, and $\sigma$ from the player update---kept $\Rightarrow +$, switched $\Rightarrow -$---or, at $l=1$, directly from the pixel-stream terminal). A bijection on trajectory supports preserves Bhattacharyya and Hellinger by the trivial reindexing $\sum_{\mathbf v} \sqrt{\Pi_A(\mathbf v)\Pi_B(\mathbf v)} = \sum_{\widehat{\mathbf v}} \sqrt{\widehat\Pi_A(\widehat{\mathbf v})\widehat\Pi_B(\widehat{\mathbf v})}$, so all $\mathrm{BC}^{(l)}$ and $\mathrm{H}^{(l)}$ computed below from the act-only kernel coincide with the natural-game ones.
\end{remark}

\subsection{Per-Pixel Disagreement Map}\label{sec:lmp-disagreement}

The squared Hellinger distance $\mathrm{H}^2$ admits an exact non-negative decomposition over input-layer states: each input pixel carries a calibrated share of the total trajectory disagreement (Theorem~\ref{thm:per-pixel}), giving a disagreement \emph{map} that is itself an attribution of the trajectory mismatch back to the input. The conditioned-on-survival variant removes the cemetery contribution that otherwise bounds plain $\mathrm{H}$ from below on randomised models.

\subsubsection{Per-Pixel Decomposition: The Disagreement Map}\label{sec:lmp-per-pixel}

The squared Hellinger distance at layer~$0$ decomposes exactly over terminal pixels.

\begin{theorem}[Per-pixel decomposition]\label{thm:per-pixel}
    Let $\mathrm{H}^2 \coloneqq \mathrm{H}^2\!\bigl(\Pi^{(0)}_A,\, \Pi^{(0)}_B\bigr)$.  For $s \in \mathcal{S}^{(0)}_+$, define
    \begin{align}\label{eq:per-pixel-h2}
        h^2(s) \;\coloneqq\; \tfrac{1}{2}\bigl(\alpha^{(0)}_A(s) + \alpha^{(0)}_B(s)\bigr) \;-\; \beta^{(0)}(s),
    \end{align}
    where $\beta^{(0)}(s)$ is the cumulative $\widetilde{T}$-mass at $s$.  Then:
    \begin{enumerate}
        \item[(i)] $h^2(s) \geq 0$ for every $s \in \mathcal{S}^{(0)}_+$.
        \item[(ii)]
        \begin{align}\label{eq:per-pixel-sum}
            \mathrm{H}^2 \;=\; \sum_{s \in \mathcal{S}^{(0)}_+} h^2(s).
        \end{align}
        \item[(iii)]
        \begin{align}\label{eq:ordinary-h2}
            \mathrm{H}^2_{\mathrm{ord}} \;\coloneqq\; \sum_{s \in \mathcal{S}^{(0)}} h^2(s) \;=\; \mathrm{H}^2 - h^2(\dagger)
        \end{align}
    \end{enumerate}
\end{theorem}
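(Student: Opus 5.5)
The plan is to rewrite every quantity in \eqref{eq:per-pixel-h2} as a sum over full trajectories ending at the terminal state $s$. Each pointwise term is then a squared difference of square roots, so the three claims reduce to bookkeeping.

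\emph{Trajectory sums.} By the definition \eqref{eq:lmp-alpha} at $l=0$, we have $\alpha^{(0)}_M(s)=\sum_{\mathbf{v}:v_0=s}\Pi^{(0)}_M(\mathbf{v})$. By the expansion in Lemma~\ref{lem:bc-chain-rule}, together with the definition of $\beta^{(0)}$ in the proof of Lemma~\ref{lem:bc-monotone}, $\beta^{(0)}(s)=\sum_{\mathbf{v}:v_0=s}\sqrt{\Pi^{(0)}_A(\mathbf{v})\,\Pi^{(0)}_B(\mathbf{v})}$. This second identity uses the Markov factorisation \eqref{eq:path-measure} and the fact that $\sqrt{\cdot}$ of a product of non-negative factors factorises. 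Substituting both into \eqref{eq:per-pixel-h2} gives
\begin{align}
h^2(s)=\tfrac12\sum_{\mathbf{v}:v_0=s}\Bigl(\sqrt{\Pi^{(0)}_A(\mathbf{v})}-\sqrt{\Pi^{(0)}_B(\mathbf{v})}\Bigr)^2 .
\end{align}

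\emph{The three claims.} Claim (i) is immediate, since each summand above is non-negative. For claim (ii), the sets $\{\mathbf{v}:v_0=s\}$ for $s\in\mathcal{S}^{(0)}_+$ partition all trajectories, because every trajectory has exactly one layer-$0$ coordinate, either an ordinary pixel or $\dagger$. Summing over $s$ therefore yields $\tfrac12\sum_{\mathbf{v}}(\sqrt{\Pi_A}-\sqrt{\Pi_B})^2$, which equals $\mathrm{H}^2$ by \eqref{eq:hellinger-def}. Equivalently, $\tfrac12(1+1)-\mathrm{BC}^{(0)}$, using that both $\alpha^{(0)}_M$ sum to one (stochasticity of the kernels) and $\sum_s\beta^{(0)}(s)=\mathrm{BC}^{(0)}$. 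Claim (iii) follows from (ii) by splitting $\mathcal{S}^{(0)}_+=\mathcal{S}^{(0)}\sqcup\{\dagger\}$.

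\emph{Main obstacle.} The only delicate point is the cemetery. Trajectories absorbed at different layers all end at the same layer-$0$ state $\dagger$, but they remain distinct sequences $\mathbf{v}$. I must check that $\alpha^{(0)}_M(\dagger)$ and $\beta^{(0)}(\dagger)$ aggregate over exactly the same set of trajectories; this holds because both are defined by the same summation over full length-$(L+1)$ sequences. The absorbing rows \eqref{eq:cemetery-absorbing} contribute factors equal to $1$ in both $\Pi_M$ and $\widetilde T$, so they do not disturb the factorisation.
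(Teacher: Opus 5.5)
Your proof is correct. For (ii) and (iii) it is the paper's argument: you expand $\alpha^{(0)}_M(s)$ and $\beta^{(0)}(s)$ as sums over trajectories with $v_0=s$, group the trajectory-level Hellinger sum by terminal state, and split off $\dagger$.

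The difference is in (i). The paper first bounds $\beta^{(0)}(s)\le\sqrt{\alpha^{(0)}_A(s)\,\alpha^{(0)}_B(s)}$ by Cauchy--Schwarz and then applies AM--GM. You instead observe that $h^2(s)$ is itself the partial sum $\tfrac12\sum_{\mathbf v:v_0=s}\bigl(\sqrt{\Pi^{(0)}_A(\mathbf v)}-\sqrt{\Pi^{(0)}_B(\mathbf v)}\bigr)^2$, so nonnegativity is immediate.

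Your route is shorter and gives the equality case directly: equality holds iff $\Pi^{(0)}_A=\Pi^{(0)}_B$ on every trajectory ending at $s$. This is equivalent to the paper's condition on the conditional laws and the masses. The paper's route, in exchange, produces the intermediate inequality $\beta^{(0)}(s)\le\sqrt{\alpha^{(0)}_A(s)\,\alpha^{(0)}_B(s)}$, which it reuses to show that the trajectory map dominates the marginal-only map $h^2_{\mathrm{marg}}$.

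Your cemetery remark is also right. Trajectories absorbed at different layers are distinct sequences, and both $\alpha^{(0)}_M(\dagger)$ and $\beta^{(0)}(\dagger)$ sum over exactly that same set.
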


\begin{proof}
    Apply the algebraic identity $(\sqrt{a} - \sqrt{b})^2 = a + b - 2\sqrt{ab}$ to each trajectory:
    \begin{align}
        \mathrm{H}^2 \;=\; \tfrac{1}{2}\sum_{\mathbf{v}}\bigl(\sqrt{\Pi^{(0)}_A(\mathbf{v})} - \sqrt{\Pi^{(0)}_B(\mathbf{v})}\bigr)^2
        \;=\; \tfrac{1}{2}\sum_{\mathbf{v}}\bigl[\Pi^{(0)}_A(\mathbf{v}) + \Pi^{(0)}_B(\mathbf{v}) - 2\sqrt{\Pi^{(0)}_A(\mathbf{v})\Pi^{(0)}_B(\mathbf{v})}\bigr].
    \end{align}
    Group the trajectory sum by the terminal pixel $v_0 = s$:
    \begin{align}
        \mathrm{H}^2 \;=\; \tfrac{1}{2}\sum_{s \in \mathcal{S}^{(0)}_+}\Bigl[\sum_{\mathbf{v}: v_0 = s}\Pi^{(0)}_A(\mathbf{v})
            + \sum_{\mathbf{v}: v_0 = s}\Pi^{(0)}_B(\mathbf{v})
            - 2\sum_{\mathbf{v}: v_0 = s}\sqrt{\Pi^{(0)}_A(\mathbf{v})\Pi^{(0)}_B(\mathbf{v})}\Bigr].
    \end{align}
    By definition, $\sum_{\mathbf{v}: v_0 = s}\Pi^{(0)}_M(\mathbf{v}) = \alpha^{(0)}_M(s)$ for $M \in \{A,B\}$.  By Lemma~\ref{lem:bc-chain-rule} restricted to trajectories terminating at~$s$ (i.e.\ tracing through Lemma~\ref{lem:bc-monotone}'s definition of $\beta^{(l)}$), $\sum_{\mathbf{v}: v_0 = s}\sqrt{\Pi^{(0)}_A(\mathbf{v})\Pi^{(0)}_B(\mathbf{v})} = \beta^{(0)}(s)$.  Substituting,
    \begin{align}
        \mathrm{H}^2 \;=\; \tfrac{1}{2}\sum_s\bigl[\alpha^{(0)}_A(s) + \alpha^{(0)}_B(s) - 2\beta^{(0)}(s)\bigr]
        \;=\; \sum_{s \in \mathcal{S}^{(0)}_+} h^2(s),
    \end{align}
    proving (ii).

    For (i), apply Cauchy--Schwarz to the sum defining $\beta^{(0)}(s)$:
    \begin{align}
        \begin{aligned}
            \beta^{(0)}(s)
            &= \sum_{\mathbf{v}: v_0 = s}
               \sqrt{\Pi^{(0)}_A(\mathbf{v}) \Pi^{(0)}_B(\mathbf{v})} \\
            &\leq
               \sqrt{
                   \Bigl(\sum_{\mathbf{v}: v_0 = s}\Pi^{(0)}_A(\mathbf{v})\Bigr)
                   \Bigl(\sum_{\mathbf{v}: v_0 = s}\Pi^{(0)}_B(\mathbf{v})\Bigr)
               } \\
            &= \sqrt{\alpha^{(0)}_A(s)\,\alpha^{(0)}_B(s)}.
        \end{aligned}
    \end{align}
    Combined with the AM--GM inequality $\sqrt{\alpha^{(0)}_A(s)\alpha^{(0)}_B(s)} \leq \tfrac{1}{2}(\alpha^{(0)}_A(s) + \alpha^{(0)}_B(s))$, this gives $\beta^{(0)}(s) \leq \tfrac{1}{2}(\alpha^{(0)}_A(s) + \alpha^{(0)}_B(s))$, so $h^2(s) \geq 0$.  Equality holds iff $\Pi^{(0)}_A(\cdot \mid v_0 = s) = \Pi^{(0)}_B(\cdot \mid v_0 = s)$ \emph{and} $\alpha^{(0)}_A(s) = \alpha^{(0)}_B(s)$.

    Item (iii) is immediate from (ii) by separating the cemetery from the ordinary pixels.
\end{proof}

\paragraph{The disagreement map.}
Theorem~\ref{thm:per-pixel} defines $h^2$ on the full terminal set $\mathcal{S}^{(0)}_+ = \mathcal{S}^{(0)} \cup \{\dagger\}$: this is the \emph{full terminal-state decomposition}, with $\sum_{s \in \mathcal{S}^{(0)}_+} h^2(s) = \mathrm{H}^2_{(0)}$, and the cemetery contribution $h^2(\dagger)$ is part of it. The \emph{Hellinger disagreement map} is the ordinary-pixel restriction $s \mapsto h^2(s)$ on $\mathcal{S}^{(0)}$, which is the quantity we visualise. It is non-negative, sums to $\mathrm{H}^2_{\mathrm{ord}}$ over ordinary pixels, and is computed from $\beta^{(0)}, \alpha_A^{(0)}, \alpha_B^{(0)}$.  The quantity $\mathrm{H}^2_{\mathrm{ord}}$ is an unnormalised ordinary-state subsum; the conditioned-on-survival Hellinger distance that normalises it is introduced below. Recall from Section~\ref{sec:games-fit} that layer-$0$ ordinary states of the two games are \emph{tagged} pixel pairs $s^{(0)}_{j,\tau}$ with $\tau = q$ (player) for the \ac{SG} and $\tau = \sigma$ (stream sign) for the \ac{RG}; the per-pixel attribution displayed in figures aggregates the two tags,
\begin{align}
    h^2_{\mathrm{pixel}}(j) \;=\; \sum_{\tau \in \{+,-\}} h^2\bigl(s^{(0)}_{j,\tau}\bigr),
\end{align}
which remains non-negative and sums to $\mathrm{H}^2_{\mathrm{ord}}$ over pixels.

\paragraph{Comparison with the marginal-only map.}
A weaker per-pixel surrogate that uses only the marginal occupations is
\begin{align}\label{eq:marginal-h2}
    h^2_{\mathrm{marg}}(s) \;\coloneqq\; \tfrac{1}{2}\bigl(\alpha^{(0)}_A(s) + \alpha^{(0)}_B(s)\bigr) - \sqrt{\alpha^{(0)}_A(s)\,\alpha^{(0)}_B(s)},
\end{align}
the per-pixel squared Hellinger distance between the layer-0 marginal occupations.  By the Cauchy--Schwarz step in the proof of Theorem~\ref{thm:per-pixel}(i), $\beta^{(0)}(s) \leq \sqrt{\alpha^{(0)}_A(s)\,\alpha^{(0)}_B(s)}$, so
\begin{align}\label{eq:path-vs-marginal}
    h^2(s) \;\geq\; h^2_{\mathrm{marg}}(s)
    \quad\text{for every pixel } s.
\end{align}
The trajectory map is strictly tighter whenever the conditional distributions of trajectories reaching pixel~$s$ differ between models, and equals the marginal map when trajectories reaching~$s$ are conditionally identical under the two models.  The marginal map is included for comparison but the trajectory map~\eqref{eq:per-pixel-h2} is the recommended diagnostic.

\paragraph{Disagreement-fraction map.}
For visualization one may normalize by the local mass:
\begin{align}\label{eq:fraction-h2}
    f(s) \;\coloneqq\; \frac{h^2(s)}{\tfrac{1}{2}(\alpha^{(0)}_A(s) + \alpha^{(0)}_B(s))} \;\in\; [0, 1],
\end{align}
This ratio has no calibrated total and should be used only as a secondary display.

\subsubsection{Conditioning on Survival to the Ordinary Input Layer}\label{sec:lmp-conditioned-survival}

When the cemetery absorbs a large fraction of trajectory mass, the total Hellinger distance mixes two effects: disagreement about \emph{whether} a trajectory survives and disagreement about \emph{where surviving trajectories go}.  To isolate the second effect, we condition each model's trajectory law on the event that the trajectory reaches an ordinary input-layer state.  The resulting scalar compares the two \emph{live} trajectory distributions directly and removes the common cemetery floor.  Because survival to the input layer factorizes along the layered graph, this conditioning remains Markov and is implemented by a per-state Doob $h$-transform.

\begin{theorem}[Conditioned-survival trajectory law and live Hellinger distance]\label{thm:conditioned-survival}
    Let
    \begin{align}
        \mathsf{Surv} \;\coloneqq\; \{(v_L,\ldots,v_0) : v_0 \in \mathcal{S}^{(0)}\}
    \end{align}
    be the event, in the trajectory space $\prod_{l=0}^{L} \mathcal{S}^{(l)}_+$, that the backward trajectory ends in an ordinary input-layer state — equivalently, the event that the trajectory never enters the cemetery~$\dagger$.  For each model $M \in \{A,B\}$ define the survival function
    \begin{align}\label{eq:survival-function}
        h_M^{(l)}(s)
        \;\coloneqq\;
        \mathbb{P}_M(\mathsf{Surv} \mid v_l = s),
        \qquad s \in \mathcal{S}^{(l)}_+, \;\; l \in \{0,\ldots,L\}.
    \end{align}
    Then:
    \begin{enumerate}
        \item[(i)] The boundary and recursion are
        \begin{align}\label{eq:survival-boundary}
            h_M^{(0)}(s) = 1 \;\; \text{for } s \in \mathcal{S}^{(0)},
            \qquad
            h_M^{(0)}(\dagger) = 0,
        \end{align}
        and for every $l \ge 1$,
        \begin{align}\label{eq:survival-recursion}
            h_M^{(l)}(s)
            \;=\;
            \sum_{s' \in \mathcal{S}^{(l-1)}_+} T_M^{(l)}(s \to s')\, h_M^{(l-1)}(s')
            \;=\;
            \sum_{s' \in \mathcal{S}^{(l-1)}} T_M^{(l)}(s \to s')\, h_M^{(l-1)}(s').
        \end{align}
        In particular,
        \begin{align}\label{eq:survival-mass}
            Z_M \;\coloneqq\; \mathbb{P}_M(\mathsf{Surv}) \;=\; h_M^{(L)}(s_0) \;=\; \sum_{s \in \mathcal{S}^{(0)}} \alpha_M^{(0)}(s) \;=\; 1 - \alpha_M^{(0)}(\dagger).
        \end{align}
        \item[(ii)] Assume $Z_M > 0$.  For every ordinary state $s \in \mathcal{S}^{(l)}$ with $h_M^{(l)}(s) > 0$, define the conditioned kernel on ordinary states by
        \begin{align}\label{eq:conditioned-kernel}
            T_{M,\mathrm{surv}}^{(l)}(s \to s')
            \;\coloneqq\;
            \frac{T_M^{(l)}(s \to s')\, h_M^{(l-1)}(s')}{h_M^{(l)}(s)},
            \qquad s' \in \mathcal{S}^{(l-1)}.
        \end{align}
        Then $T_{M,\mathrm{surv}}^{(l)}(s \to \cdot)$ is row-stochastic on $\mathcal{S}^{(l-1)}$, and the conditional trajectory distribution
        \begin{align}
            \Pi_{M,\mathrm{surv}} \;\coloneqq\; \Pi_M(\,\cdot \mid \mathsf{Surv}\,)
        \end{align}
        factorizes as the layered Markov law
        \begin{align}\label{eq:conditioned-path-measure}
            \begin{aligned}
                \Pi_{M,\mathrm{surv}}(v_L,\ldots,v_0)
                &=
                \mathbf{1}\{v_L = s_0\} \\
                &\quad \times
                \prod_{k=1}^{L} T_{M,\mathrm{surv}}^{(k)}(v_k \to v_{k-1}),
            \end{aligned}
            \qquad (v_L,\ldots,v_0) \in \mathsf{Surv}.
        \end{align}
        Its layer-$l$ marginals satisfy
        \begin{align}\label{eq:conditioned-alpha}
            \alpha_{M,\mathrm{surv}}^{(l)}(s)
            \;=\;
            \frac{\alpha_M^{(l)}(s)\, h_M^{(l)}(s)}{Z_M},
            \qquad s \in \mathcal{S}^{(l)}.
        \end{align}
        \item[(iii)] Define the conditioned geometric-mean kernel
        \begin{align}\label{eq:conditioned-geom-kernel}
            \begin{aligned}
                \widetilde{T}_{\mathrm{surv}}^{(l)}(s,s')
                &\coloneqq
                \sqrt{
                    T_{A,\mathrm{surv}}^{(l)}(s \to s')\,
                    T_{B,\mathrm{surv}}^{(l)}(s \to s')
                } \\
                &=
                \frac{
                    \widetilde{T}^{(l)}(s,s')\,
                    \sqrt{h_A^{(l-1)}(s')\, h_B^{(l-1)}(s')}
                }{
                    \sqrt{h_A^{(l)}(s)\, h_B^{(l)}(s)}
                }.
            \end{aligned}
        \end{align}
        The Bhattacharyya coefficient of the conditioned trajectory laws is
        \begin{align}\label{eq:conditioned-bc-kernel}
            \mathrm{BC}\!\bigl(\Pi_{A,\mathrm{surv}}, \Pi_{B,\mathrm{surv}}\bigr)
            \;=\;
            \sum_{(v_L,\ldots,v_0) \in \mathsf{Surv}}
            \mathbf{1}\{v_L = s_0\}
            \prod_{k=1}^{L} \widetilde{T}_{\mathrm{surv}}^{(k)}(v_k, v_{k-1}),
        \end{align}
        so it is computed by the same Hellinger forward/backward pass as before, but on the ordinary-state graph with the conditioned kernel.
        \item[(iv)] Let
        \begin{align}\label{eq:ordinary-bc}
            \mathrm{BC}_{\mathrm{ord}}^{(0)}
            \;\coloneqq\;
            \sum_{s \in \mathcal{S}^{(0)}} \beta^{(0)}(s)
            \;=\;
            \sum_{(v_L,\ldots,v_0) \in \mathsf{Surv}} \sqrt{\Pi_A(v_L,\ldots,v_0)\, \Pi_B(v_L,\ldots,v_0)}.
        \end{align}
        Then
        \begin{align}\label{eq:conditioned-bc-posthoc}
            \mathrm{BC}\!\bigl(\Pi_{A,\mathrm{surv}}, \Pi_{B,\mathrm{surv}}\bigr)
            \;=\;
            \frac{\mathrm{BC}_{\mathrm{ord}}^{(0)}}{\sqrt{Z_A Z_B}},
        \end{align}
        and the conditioned-survival Hellinger distance is
        \begin{align}\label{eq:conditioned-h2}
            \mathrm{H}_{\mathrm{surv}}^2
            \;\coloneqq\;
            \mathrm{H}^2\!\bigl(\Pi_{A,\mathrm{surv}}, \Pi_{B,\mathrm{surv}}\bigr)
            \;=\;
            1 - \frac{\mathrm{BC}_{\mathrm{ord}}^{(0)}}{\sqrt{Z_A Z_B}}.
        \end{align}
    \end{enumerate}
\end{theorem}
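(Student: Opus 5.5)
The plan is to treat this as a standard Doob $h$-transform argument on the layered graph, proving (i)–(iv) in order and reusing the Markov factorisation~\eqref{eq:path-measure} together with the chain rule of Lemma~\ref{lem:bc-chain-rule}.

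\emph{Part (i).} The boundary~\eqref{eq:survival-boundary} follows from the definition of $\mathsf{Surv}$ at $l=0$. For the recursion~\eqref{eq:survival-recursion}, condition on $v_l = s$ and apply the Markov property of $\Pi_M$: the event $\mathsf{Surv}$ depends only on $(v_{l-1},\dots,v_0)$, and the law of that tail given $v_{l-1}=s'$ does not depend on earlier coordinates. Summing over $s'$ gives the first equality. The second equality holds because~\eqref{eq:cemetery-absorbing} forces $h_M^{(l)}(\dagger)=0$ at every layer, by downward induction from the boundary. For~\eqref{eq:survival-mass}:
\begin{itemize}
\item $Z_M = h_M^{(L)}(s_0)$ because $v_L = s_0$ almost surely;
\item $Z_M = \sum_{s\in\mathcal{S}^{(0)}}\alpha_M^{(0)}(s)$ by~\eqref{eq:lmp-alpha};
\item $Z_M = 1-\alpha_M^{(0)}(\dagger)$ since $\alpha_M^{(0)}$ is a probability vector.
\end{itemize}

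\emph{Part (ii).} Row-stochasticity of~\eqref{eq:conditioned-kernel} is exactly (i) divided by $h_M^{(l)}(s)$. For~\eqref{eq:conditioned-path-measure}, take a surviving trajectory and write $\Pi_M(\mathbf v)/Z_M$ as a product over $k$. Insert the ratios $h_M^{(k-1)}(v_{k-1})/h_M^{(k)}(v_k)$; they telescope to $h_M^{(0)}(v_0)/h_M^{(L)}(s_0) = 1/Z_M$, since $h_M^{(0)}(v_0)=1$ on $\mathsf{Surv}$. Along any trajectory with positive conditioned probability every $h_M^{(k)}(v_k)>0$, so no division by zero occurs; I will note this explicitly, together with the convention for states with $h=0$ (they carry zero conditioned mass). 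For the marginals~\eqref{eq:conditioned-alpha}, split $\mathbb{P}_M(v_l=s,\mathsf{Surv})$ as $\alpha_M^{(l)}(s)\,h_M^{(l)}(s)$ by the Markov property, then divide by $Z_M$.

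\emph{Parts (iii) and (iv).} The closed form in~\eqref{eq:conditioned-geom-kernel} is immediate from substituting~\eqref{eq:conditioned-kernel} for $M=A,B$ under the square root. Equation~\eqref{eq:conditioned-bc-kernel} is then Lemma~\ref{lem:bc-chain-rule} applied verbatim to the two conditioned layered Markov laws, which share the start state $s_0$, with the graph restricted to ordinary states. For (iv), multiply the conditioned kernels along a surviving trajectory. The same telescoping gives
\[
\sqrt{\Pi_{A,\mathrm{surv}}\,\Pi_{B,\mathrm{surv}}} \;=\; \frac{\sqrt{\Pi_A\,\Pi_B}}{\sqrt{Z_A Z_B}}
\]
trajectory-wise on $\mathsf{Surv}$. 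Summing over $\mathsf{Surv}$, and identifying the numerator with $\sum_{s\in\mathcal{S}^{(0)}}\beta^{(0)}(s)$ via the expression for $\beta^{(0)}$ used in Lemma~\ref{lem:hellinger-backward-correctness}, yields~\eqref{eq:conditioned-bc-posthoc}. Then~\eqref{eq:conditioned-h2} follows from~\eqref{eq:hellinger-def}.

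The only delicate point is the handling of states with $h_M^{(l)}(s)=0$ and of states that are alive in one model but not the other. At such states the conditioned kernel is undefined in one model, and this has to be reconciled with~\eqref{eq:conditioned-geom-kernel}. I would resolve it by defining $\widetilde T_{\mathrm{surv}}$ to be $0$ whenever either $h$ vanishes, and checking that these trajectories contribute zero to both sides.
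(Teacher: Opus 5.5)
Your proposal is correct and follows the paper's proof essentially step for step: you condition on the next step and use the absorbing cemetery for (i), telescope the $h$-ratios for (ii), apply Lemma~\ref{lem:bc-chain-rule} to the conditioned laws for (iii), and factor out $\sqrt{Z_A Z_B}$ and group by $v_0$ to identify $\beta^{(0)}$ for (iv). Your explicit handling of states with $h_M^{(l)}(s)=0$ (and of $h_M^{(l)}(\dagger)=0$ for $l\ge 1$) is extra care the paper leaves implicit; in (iv) the telescoping is not needed, since $\Pi_{M,\mathrm{surv}}=\Pi_M/Z_M$ on $\mathsf{Surv}$ by the definition of conditional probability, which is how the paper argues.
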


\begin{proof}
    For (i),~\eqref{eq:survival-boundary} is immediate from the definition of $\mathsf{Surv}$: at layer~$0$ every ordinary state survives and the cemetery does not.  For $l \ge 1$, conditioning on the next step gives
    \begin{align}
        h_M^{(l)}(s)
        =
        \sum_{s' \in \mathcal{S}^{(l-1)}_+}
        T_M^{(l)}(s \to s')\, \mathbb{P}_M(\mathsf{Surv} \mid v_{l-1} = s')
        =
        \sum_{s' \in \mathcal{S}^{(l-1)}_+}
        T_M^{(l)}(s \to s')\, h_M^{(l-1)}(s').
    \end{align}
    Since $h_M^{(l-1)}(\dagger)=0$, the cemetery term drops out and~\eqref{eq:survival-recursion} follows.  Evaluating at $s_0$ gives $Z_M = h_M^{(L)}(s_0)$.  The identities with $\alpha_M^{(0)}$ are just the layer-$0$ marginal of the event $\mathsf{Surv}$.

    For (ii), row-stochasticity follows from~\eqref{eq:survival-recursion}:
    \begin{align}
        \sum_{s' \in \mathcal{S}^{(l-1)}} T_{M,\mathrm{surv}}^{(l)}(s \to s')
        =
        \frac{1}{h_M^{(l)}(s)}
        \sum_{s' \in \mathcal{S}^{(l-1)}} T_M^{(l)}(s \to s')\, h_M^{(l-1)}(s')
        = 1.
    \end{align}
    Now fix $(v_L,\ldots,v_0) \in \mathsf{Surv}$.  By definition of conditional probability,
    \begin{align}
        \Pi_{M,\mathrm{surv}}(v_L,\ldots,v_0)
        =
        \frac{\Pi_M(v_L,\ldots,v_0)}{Z_M}
        =
        \frac{\mathbf{1}\{v_L=s_0\}}{Z_M}
        \prod_{k=1}^{L} T_M^{(k)}(v_k \to v_{k-1}).
    \end{align}
    Multiplying the conditioned kernels and telescoping the survival factors gives
    \begin{align}
        \prod_{k=1}^{L} T_{M,\mathrm{surv}}^{(k)}(v_k \to v_{k-1})
        &=
        \prod_{k=1}^{L}
        T_M^{(k)}(v_k \to v_{k-1})
        \frac{h_M^{(k-1)}(v_{k-1})}{h_M^{(k)}(v_k)} \\
        &=
        \frac{h_M^{(0)}(v_0)}{h_M^{(L)}(s_0)}
        \prod_{k=1}^{L} T_M^{(k)}(v_k \to v_{k-1}) \\
        &=
        \frac{1}{Z_M}
        \prod_{k=1}^{L} T_M^{(k)}(v_k \to v_{k-1}),
    \end{align}
    because $v_0 \in \mathcal{S}^{(0)}$ implies $h_M^{(0)}(v_0)=1$.  This is~\eqref{eq:conditioned-path-measure}.  For the marginals,
    \begin{align}
        \alpha_{M,\mathrm{surv}}^{(l)}(s)
        =
        \mathbb{P}_M(v_l = s \mid \mathsf{Surv})
        =
        \frac{\mathbb{P}_M(v_l = s, \mathsf{Surv})}{Z_M}
        =
        \frac{\alpha_M^{(l)}(s)\, h_M^{(l)}(s)}{Z_M},
    \end{align}
    which is~\eqref{eq:conditioned-alpha}.

    For (iii), substitute~\eqref{eq:conditioned-kernel} into the square root to obtain~\eqref{eq:conditioned-geom-kernel}.  Then apply Lemma~\ref{lem:bc-chain-rule} to the conditioned trajectory laws~\eqref{eq:conditioned-path-measure} on the ordinary-state graph.  This gives~\eqref{eq:conditioned-bc-kernel} directly.

    For (iv), expand the Bhattacharyya coefficient of the conditioned laws:
    \begin{align}
        \mathrm{BC}\!\bigl(\Pi_{A,\mathrm{surv}}, \Pi_{B,\mathrm{surv}}\bigr)
        &=
        \sum_{(v_L,\ldots,v_0)\in\mathsf{Surv}}
        \sqrt{\frac{\Pi_A(v_L,\ldots,v_0)}{Z_A}\,
              \frac{\Pi_B(v_L,\ldots,v_0)}{Z_B}} \\
        &=
        \frac{1}{\sqrt{Z_A Z_B}}
        \sum_{(v_L,\ldots,v_0)\in\mathsf{Surv}}
        \sqrt{\Pi_A(v_L,\ldots,v_0)\, \Pi_B(v_L,\ldots,v_0)}.
    \end{align}
    Grouping the last sum by the ordinary terminal state $v_0=s$ and using the definition of $\beta^{(0)}(s)$ exactly as in the proof of Theorem~\ref{thm:per-pixel} yields~\eqref{eq:ordinary-bc}, hence~\eqref{eq:conditioned-bc-posthoc}.  Equation~\eqref{eq:conditioned-h2} is then the identity $\mathrm{H}^2 = 1 - \mathrm{BC}$ applied to the conditioned trajectory laws.
\end{proof}

\paragraph{Implementation consequence.}
Theorem~\ref{thm:conditioned-survival} gives two exactly equivalent computations of the live scalar:
\begin{enumerate}
    \item[(a)] \emph{Kernel-side conditioning:} replace each row by the reweighted kernel~\eqref{eq:conditioned-kernel}, form the geometric mean~\eqref{eq:conditioned-geom-kernel}, and run the ordinary Hellinger backward pass on $\mathcal{S}^{(L)} \times \cdots \times \mathcal{S}^{(0)}$.
    \item[(b)] \emph{Post-hoc normalization:} run the original pass once, extract the ordinary overlap $\mathrm{BC}_{\mathrm{ord}}^{(0)} = \sum_{s \in \mathcal{S}^{(0)}} \beta^{(0)}(s)$ and the survival masses $Z_A, Z_B$, and apply~\eqref{eq:conditioned-h2}.
\end{enumerate}
The first route makes the conditioned trajectory law explicit; the second is cheaper when only the scalar is needed---however for deep neural networks it becomes numerically infeasible in case the probability of survival $\Pi_M$ of one of $M \in \{ A; B \}$ gets too small.

\paragraph{Conditioned disagreement map.}
Applying Theorem~\ref{thm:per-pixel} to the conditioned kernels $T_{M,\mathrm{surv}}^{(l)}$ yields the live disagreement map
\begin{align}\label{eq:conditioned-per-pixel-h2}
    h_{\mathrm{surv}}^2(s)
    \;\coloneqq\;
    \tfrac{1}{2}\bigl(\alpha_{A,\mathrm{surv}}^{(0)}(s) + \alpha_{B,\mathrm{surv}}^{(0)}(s)\bigr)
    -
    \beta_{\mathrm{surv}}^{(0)}(s),
    \qquad s \in \mathcal{S}^{(0)},
\end{align}
which is non-negative and satisfies $\sum_{s \in \mathcal{S}^{(0)}} h_{\mathrm{surv}}^2(s) = \mathrm{H}_{\mathrm{surv}}^2$.  The corresponding marginal-only and fractional display maps are obtained by replacing $\alpha_M^{(0)}$ and $\beta^{(0)}$ in~\eqref{eq:marginal-h2} and~\eqref{eq:fraction-h2} by their conditioned counterparts.

\subsection{Experiments}\label{sec:lmp-experiments}

This subsection puts the trajectory-space machinery to two empirical tests, holding the model fixed: how the Hellinger trajectory distance reacts to input perturbations, and whether the resulting distance separates inputs from different ImageNet-S classes from one another and from random noise.

\subsubsection{Input-Noise Sensitivity}\label{sec:input-noise-hellinger}

Holding the model fixed and perturbing the input is a clean test of whether the Hellinger trajectory distance reflects what the network sees, rather than only what the parameters are. For each architecture and each noise standard deviation $\sigma \in \{0.005, 0.01, 0.02, 0.03, 0.05, 0.1, 0.2, 0.3, 0.5\}$ we draw one standard-Gaussian tensor $\varepsilon$ per image and compute $\mathrm{H}(\Pi_A(x), \Pi_A(x + \sigma\varepsilon))$ under both game kernels with the argmax target of the clean input. Figure~\ref{fig:hellinger-input-noise} reports the mean $\mathrm{H}$ and $\mathrm{H}_{\mathrm{live}}$ curves over the swept $\sigma$. On VGG the \ac{RG} $H$ and \ac{RG} $H_{\text{live}}$ are equal, as the kernels route no trajectories to states belonging to dead neurons. However, on ResNet and ViT skip connections route trajectories to dead neuron states and on ViT also the GeLU activation function lets some trajectories at every state go to the cemetery.

\begin{figure}[H]
  \centering
  \IfFileExists{figures/plots/hellinger_input_noise/hellinger_input_noise_plots.inc.tex}
    {\begin{tikzpicture}
\begin{groupplot}[
    group style={group size=3 by 1, horizontal sep=0.90cm},
    scale only axis,
    height=3.25cm,
    ymin=0,
    enlarge x limits=false,
    enlarge y limits=false,
    axis x line*=bottom,
    axis y line*=left,
    axis line style={draw=black, line width=0.8pt},
    tick style={black, line width=0.7pt},
    xlabel={Input-noise $\sigma$},
    ylabel={Path distribution distance},
    xlabel style={font=\footnotesize},
    ylabel style={font=\footnotesize},
    title style={font=\footnotesize},
    x tick label style={font=\scriptsize},
    y tick label style={font=\scriptsize},
    tick align=outside,
    scaled x ticks=false,
    scaled y ticks=false,
    grid=both,
    major grid style={draw=gray!25},
    minor grid style={draw=gray!15},
    minor y tick num=1,
]
\nextgroupplot[title={VGG16}, width=3.60cm, xmode=log, log basis x=10, xmax=0.500, xmin=0.0045, log ticks with fixed point, xtick={0.005, 0.01, 0.02, 0.03, 0.05, 0.1, 0.2, 0.3, 0.5}, x tick label style={rotate=90, anchor=east, /pgf/number format/.cd, fixed, precision=3}, ymax=1.0, ytick={0,0.2,0.4,0.6,0.8,1.0}, yticklabel style={font=\scriptsize, /pgf/number format/.cd, fixed, precision=1, zerofill}]
\addplot+[color=blue!70!black, mark=none, line width=1.5pt] coordinates {(0.005000, 0.0555667512) (0.005000, 0.0556612103) (0.020000, 0.1304642769) (0.030000, 0.1644166410) (0.050000, 0.2157093923) (0.100000, 0.2948880600) (0.200000, 0.3763379142) (0.300000, 0.4284021062) (0.500000, 0.5034515833)};
\addplot+[color=blue!70!black, mark=none, line width=1.5pt, densely dotted] coordinates {(0.005000, 0.4800247374) (0.005000, 0.4800885571) (0.020000, 0.7831126326) (0.030000, 0.8612919793) (0.050000, 0.9345903045) (0.100000, 0.9844854411) (0.200000, 0.9978797885) (0.300000, 0.9995188065) (0.500000, 0.9999497841)};
\addplot+[color=red!80!black,  mark=none, line width=1.5pt] coordinates {(0.005000, 0.2777403436) (0.005000, 0.2780039659) (0.020000, 0.5493750845) (0.030000, 0.6579523102) (0.050000, 0.7956159200) (0.100000, 0.9338420330) (0.200000, 0.9888663578) (0.300000, 0.9974254819) (0.500000, 0.9997837721)};
\addplot+[color=red!80!black, mark=none, line width=1.5pt, densely dotted] coordinates {(0.005000, 0.2777222819) (0.005000, 0.2779864090) (0.020000, 0.5493612614) (0.030000, 0.6579430919) (0.050000, 0.7956121129) (0.100000, 0.9338422149) (0.200000, 0.9888666343) (0.300000, 0.9974255589) (0.500000, 0.9997837783)};
\nextgroupplot[title={ResNet50}, width=3.60cm, xmode=log, log basis x=10, xmax=0.500, xmin=0.0045, log ticks with fixed point, xtick={0.005, 0.01, 0.02, 0.03, 0.05, 0.1, 0.2, 0.3, 0.5}, x tick label style={rotate=90, anchor=east, /pgf/number format/.cd, fixed, precision=3}, ymax=1.0, ytick={0,0.2,0.4,0.6,0.8,1.0}, ylabel={}, yticklabel style={font=\scriptsize, /pgf/number format/.cd, fixed, precision=1, zerofill}]
\addplot+[color=blue!70!black, mark=none, line width=1.5pt] coordinates {(0.005000, 0.0448989013) (0.005000, 0.0448826869) (0.020000, 0.0908768146) (0.030000, 0.1095414271) (0.050000, 0.1329393233) (0.100000, 0.1641333200) (0.200000, 0.1976659636) (0.300000, 0.2198655503) (0.500000, 0.2549360275)};
\addplot+[color=blue!70!black, mark=none, line width=1.5pt, densely dotted] coordinates {(0.005000, 0.3818192340) (0.005000, 0.3817316444) (0.020000, 0.6361703551) (0.030000, 0.7154308032) (0.050000, 0.8051839448) (0.100000, 0.8970497537) (0.200000, 0.9540059525) (0.300000, 0.9741438110) (0.500000, 0.9894508150)};
\addplot+[color=red!80!black,  mark=none, line width=1.5pt] coordinates {(0.005000, 0.1166042172) (0.005000, 0.1168499459) (0.020000, 0.2586035560) (0.030000, 0.3201631300) (0.050000, 0.3998059167) (0.100000, 0.4966518881) (0.200000, 0.5798187025) (0.300000, 0.6240830184) (0.500000, 0.6841770028)};
\addplot+[color=red!80!black, mark=none, line width=1.5pt, densely dotted] coordinates {(0.005000, 0.2700782652) (0.005000, 0.2701829986) (0.020000, 0.5428924383) (0.030000, 0.6518867011) (0.050000, 0.7841864787) (0.100000, 0.9166364916) (0.200000, 0.9797214947) (0.300000, 0.9934172502) (0.500000, 0.9989876835)};
\nextgroupplot[title={ViT-B/16}, width=3.60cm, xmode=log, log basis x=10, xmax=0.500, xmin=0.0045, log ticks with fixed point, xtick={0.005, 0.01, 0.02, 0.03, 0.05, 0.1, 0.2, 0.3, 0.5}, x tick label style={rotate=90, anchor=east, /pgf/number format/.cd, fixed, precision=3}, ymax=1.0, ytick={0,0.2,0.4,0.6,0.8,1.0}, ylabel={}, yticklabel style={font=\scriptsize, /pgf/number format/.cd, fixed, precision=1, zerofill}, legend style={font=\scriptsize, draw=black, fill=white, fill opacity=0.92, text opacity=1, cells={anchor=west}, row sep=1pt, inner xsep=4pt, inner ysep=3pt, at={(0.02,0.98)}, anchor=north west}, legend columns=1]
\addplot+[color=blue!70!black, mark=none, line width=1.5pt] coordinates {(0.005000, 0.0139722759) (0.005000, 0.0141608601) (0.020000, 0.0562904562) (0.030000, 0.0810281843) (0.050000, 0.1229994565) (0.100000, 0.1832471684) (0.200000, 0.2376256946) (0.300000, 0.2693856615) (0.500000, 0.3048048386)};
\addlegendentry{ADF $H$}
\addplot+[color=blue!70!black, mark=none, line width=1.5pt, densely dotted] coordinates {(0.005000, 0.0352128042) (0.005000, 0.0354954253) (0.020000, 0.1426512680) (0.030000, 0.2081475061) (0.050000, 0.3161294944) (0.100000, 0.4725122327) (0.200000, 0.6169962964) (0.300000, 0.6961592467) (0.500000, 0.7804365604)};
\addlegendentry{ADF $H_{\mathrm{live}}$}
\addplot+[color=red!80!black,  mark=none, line width=1.5pt] coordinates {(0.005000, 0.0492428090) (0.005000, 0.0495194010) (0.020000, 0.1695159716) (0.030000, 0.2361091922) (0.050000, 0.3374371521) (0.100000, 0.4744207873) (0.200000, 0.5965103974) (0.300000, 0.6576258442) (0.500000, 0.7235055331)};
\addlegendentry{RG $H$}
\addplot+[color=red!80!black, mark=none, line width=1.5pt, densely dotted] coordinates {(0.005000, 0.1186529095) (0.005000, 0.1190718003) (0.020000, 0.3993942013) (0.030000, 0.5403917482) (0.050000, 0.7238584401) (0.100000, 0.9078318973) (0.200000, 0.9850125785) (0.300000, 0.9968635300) (0.500000, 0.9997717262)};
\addlegendentry{RG $H_{\mathrm{live}}$}
\end{groupplot}
\end{tikzpicture}}
    {\fbox{\parbox{0.6\linewidth}{\centering\small Plot pending: run \texttt{experiments/hellinger\_input\_noise/run.py} and \texttt{figures/plots/hellinger\_input\_noise/generate\_hellinger\_input\_noise\_tikz.py}.}}}
  \caption{Input-noise Hellinger trajectory distance. Solid: plain $\mathrm{H}$; dotted: $\mathrm{H}_{\mathrm{live}}$, conditioned on survival to an ordinary input-layer state. $x$-axis is log-scale input-noise $\sigma$; same $(A,A)$ model on both sides, only the input differs. Dashed horizontal references are the four baselines of Table~\ref{tab:hellinger-input-random}.}
  \label{fig:hellinger-input-noise}
\end{figure}

\subsubsection{Class Separability and Reference Baselines}\label{sec:input-noise-baselines}

\textbf{Can the Hellinger distance differentiate between inputs of different classes?} Yes, it can---and the reference baselines in Table~\ref{tab:hellinger-input-random} make the answer architecture-specific:
\begin{itemize}
    \item On \textbf{VGG-16}, the \ac{ADF} Stopping-Game $\mathrm{H}$ is the kernel that does the work: clean-intraclass $\mathrm{H}=0.499$ sits clearly below clean-interclass $\mathrm{H}=0.607$, and both are bracketed by $\mathrm{H}=0.321$ (random vs.\ random) below and $\mathrm{H}=0.663$ (clean vs.\ random) above. The \ac{RG} kernel saturates at $\mathrm{H}=1$ on every clean pair on VGG-16, so it cannot resolve class structure on this architecture.
    \item On \textbf{ResNet-50}, the \ac{RG} kernel is the discriminating one: clean-intraclass $\mathrm{H}=0.789$ vs.\ clean-interclass $\mathrm{H}=0.902$ (a gap of $0.113$), bounded above by clean-vs-random $\mathrm{H}=0.943$ and below by random-vs-random $\mathrm{H}=0.740$. The \ac{ADF} kernel is too compressed on ResNet-50 to separate classes (intraclass $0.274$ vs.\ interclass $0.264$).
    \item On \textbf{ViT-B/16}, both kernels separate classes: \ac{ADF} gives intraclass $0.400$ vs.\ interclass $0.414$, while \ac{RG} gives $0.844$ vs.\ $0.909$ (a gap of $0.065$).
\end{itemize}
Beyond intraclass-vs-interclass discrimination, the four reference rows of Table~\ref{tab:hellinger-input-random} cleanly separate the three input regimes \emph{(image, image)}, \emph{(image, noise)}, and \emph{(noise, noise)}: on every architecture and every kernel where any separation is visible, the clean-vs-random row sits strictly above clean-pair rows and the random-vs-random row sits strictly below, clearly indicating that the model deals with noisy input very different than with data in distribution. The conditioned-on-survival variant $\mathrm{H}_{\mathrm{live}}$ saturates at $1$ on every clean / clean-vs-random / random-vs-random row of VGG-16 and ResNet-50, so it carries no class-separability signal there; it is informative \emph{only on \ac{ViT}-B/16}, where the smooth-activation survival graph is dense enough for $\mathrm{H}_{\mathrm{live}}$ to read off the class structure (intraclass $0.963$, interclass $0.968$, clean-vs-random $0.957$, random-vs-random $0.853$).

\begin{table}[H]
  \centering\small
  \setlength{\tabcolsep}{4pt}
  \begin{tabular}{l l cc cc}
    \toprule
    & & \multicolumn{2}{c}{Stopping Game (\ac{ADF})} & \multicolumn{2}{c}{Routing Game} \\
    \cmidrule(lr){3-4} \cmidrule(lr){5-6}
    Architecture & Baseline & $\mathrm{H}$ & $\mathrm{H}_{\mathrm{live}}$ & $\mathrm{H}$ & $\mathrm{H}_{\mathrm{live}}$ \\
    \midrule
    \multirow{4}{*}{VGG-16}    & clean-intraclass    & $0.499 \pm 0.069$ & $1.000 \pm 0.000$ & $1.000 \pm 0.000$ & $1.000 \pm 0.000$ \\
                                & clean-interclass    & $0.607 \pm 0.034$ & $1.000 \pm 0.000$ & $1.000 \pm 0.000$ & $1.000 \pm 0.000$ \\
                                & clean-vs-random     & $0.663 \pm 0.020$ & $1.000 \pm 0.000$ & $1.000 \pm 0.000$ & $1.000 \pm 0.000$ \\
                                & random-vs-random    & $0.321 \pm 0.020$ & $1.000 \pm 0.000$ & $1.000 \pm 0.000$ & $1.000 \pm 0.000$ \\
    \midrule
    \multirow{4}{*}{ResNet-50} & clean-intraclass    & $0.274 \pm 0.054$ & $0.999 \pm 0.001$ & $0.789 \pm 0.046$ & $1.000 \pm 0.000$ \\
                                & clean-interclass    & $0.264 \pm 0.042$ & $1.000 \pm 0.000$ & $0.902 \pm 0.033$ & $1.000 \pm 0.000$ \\
                                & clean-vs-random     & $0.270 \pm 0.036$ & $1.000 \pm 0.000$ & $0.943 \pm 0.010$ & $1.000 \pm 0.000$ \\
                                & random-vs-random    & $0.152 \pm 0.009$ & $0.992 \pm 0.000$ & $0.740 \pm 0.011$ & $0.999 \pm 0.000$ \\
    \midrule
    \multirow{4}{*}{ViT-B/16}  & clean-intraclass    & $0.400 \pm 0.022$ & $0.963 \pm 0.009$ & $0.844 \pm 0.044$ & $1.000 \pm 0.000$ \\
                                & clean-interclass    & $0.414 \pm 0.017$ & $0.968 \pm 0.005$ & $0.909 \pm 0.008$ & $1.000 \pm 0.000$ \\
                                & clean-vs-random     & $0.416 \pm 0.013$ & $0.957 \pm 0.006$ & $0.916 \pm 0.005$ & $1.000 \pm 0.000$ \\
                                & random-vs-random    & $0.351 \pm 0.011$ & $0.853 \pm 0.010$ & $0.657 \pm 0.009$ & $1.000 \pm 0.000$ \\
    \bottomrule
  \end{tabular}
  \caption{Reference baselines for the input-noise sweep, averaged over $R$ pairs drawn from the $566$-image ImageNet-S test split (clean rows: $R=200$ sampled pairs; clean-vs-random: $R=566$ images paired with a fresh Gaussian each; random-vs-random: $R=200$ Gaussian pairs). \textbf{clean-intraclass}: $\mathrm{H}(\Pi_A(x_1), \Pi_A(x_2))$ with $x_1, x_2$ from the same ground-truth class; \textbf{clean-interclass}: same, with distinct classes; \textbf{clean-vs-random}: one side clean, other Gaussian; \textbf{random-vs-random}: both sides Gaussian. The discriminating kernel is architecture-dependent (\ac{ADF} on VGG-16, \ac{RG} on ResNet-50, both on ViT-B/16); $\mathrm{H}_{\mathrm{live}}$ is informative only on ViT-B/16 because dead-mask saturation pins it to $1$ on the \ac{ReLU} backbones.}
  \label{tab:hellinger-input-random}
\end{table}

\subsection{How to measure the distance between network calculations}\label{sec:model-distance-invariances}

To understand the fundamental limitations and requirements of measuring distances between the ``backward calculations'' of two neural networks, we must first characterise the symmetries of the parameter space that leave the network's function unchanged and therefore naturally should also not affect the distances between different network calculations.

\paragraph{Functional vs. Parametric Equivalence.}
As shown by \citet{phuong2020functional}, for almost every ReLU network with non-increasing width, the only transformations of the weights and biases that preserve the implemented function are \emph{neuron permutations} within layers and \emph{positive neuron-wise scaling}. Specifically, for a neuron $j$ in layer $l$, one can scale its incoming weights $W^{(l)}[j, \cdot]$ and bias $b^{(l)}[j]$ by $\mu_j > 0$ and its outgoing weights $W^{(l+1)}[\cdot, j]$ by $1/\mu_j$ without changing the output. Concretely, the term ``almost all'' signifies that the set of networks not satisfying these uniqueness conditions (termed \emph{non-general} in \citet{phuong2020functional}) forms a closed set of Lebesgue measure zero in the parameter space.

\paragraph{Invariances of the Trajectory Distributions.}
We now examine how these function-preserving transformations affect the backward trajectory distributions $\Pi_M$ of the two games.

\begin{theorem}[RG Scaling Invariance]
The \ac{RG} trajectory distribution is invariant under any positive neuron-wise scaling.
\end{theorem}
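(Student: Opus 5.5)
The plan is to work with the act-only layered Markov process of Section~\ref{sec:games-fit} and show that every transition probability is unchanged. Since the trajectory law~\eqref{eq:path-measure} is a product of kernel entries started at the fixed state $s_0$, kernel invariance immediately gives invariance of $\Pi_M$. By Remark~\ref{rem:rg-natural-equivalence}, this also gives invariance of the natural \ac{RG} trajectory law with its linear states.

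Fix a neuron $j$ in a hidden layer $l$ and a scalar $\mu_j>0$. The transformation rescales the incoming weights $W^{(l)}_{ij}$ for all $i$ and the bias $b^{(l)}_j$ by $\mu_j$, and rescales the outgoing weights $W^{(l+1)}_{jk}$ for all $k$ by $1/\mu_j$. A general scaling is a composition of such single-neuron scalings, so it suffices to treat one neuron.

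\textbf{Step 1 (forward quantities).} By positive homogeneity of ReLU, $z^{(l)}_j\mapsto\mu_j z^{(l)}_j$ and $a^{(l)}_j\mapsto\mu_j a^{(l)}_j$. All other pre-activations and activations are unchanged: for a successor $k$, the terms $W^{(l+1)}_{jk}a^{(l)}_j$ are invariant. Two consequences follow.
\begin{itemize}
\item The gates $\1\{z^{(\cdot)}>0\}$, and hence the stop/cemetery rows, are invariant, because $\mu_j>0$ preserves signs.
\item Since $\mu_j>0$ also preserves the sign of every weight, the sign splits $W^{(\cdot,\sigma)}$ scale in the same way. The branch in which $\sigma$-stream each edge lies, and hence which player label is kept or switched, is unchanged.
\end{itemize}

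\textbf{Step 2 (Gibbs routing rows).} The only kernel entries that could change are the routing rows~\eqref{eq:rg-T}. These are built from the edge masses $a^{(l'-1)}_iW^{(l',\sigma)}_{ij'}$, and it suffices to check that these masses are invariant.
\begin{itemize}
\item \emph{Rows at neuron $j$ itself.} Every mass $a^{(l-1)}_iW^{(l,\sigma)}_{ij}$ is multiplied by the common factor $\mu_j$. This factor becomes $\mu_j^{1/\tau}$ after exponentiation and cancels in the normalisation. This holds for every $\tau>0$, provided $\epsilon=0$, the convention fixed in Section~\ref{sec:games-fit}.
\item \emph{Rows at successors $k$ in layer $l+1$.} The mass routed to predecessor $j$ is $a^{(l)}_jW^{(l+1,\sigma)}_{jk}$, in which the factors $\mu_j$ and $1/\mu_j$ cancel. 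All other masses are untouched, so these rows are literally identical.
\end{itemize}
Together with the uniform $\tfrac12$ sign-branch factors, all act-to-act kernel entries and cemetery entries are therefore unchanged. At the input layer $l=1$, the masses $[x_iW^{(1)}_{ij}]^\sigma$ are scaled uniformly by $\mu_j$ and cancel in the same way.

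\textbf{Main obstacle.} The delicate point is the stabiliser. With $\epsilon>0$, the outside option carries the fixed mass $\epsilon^{1/\tau}$, which does not scale with $\mu_j$. The rows at $j$ then change, and invariance fails.

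\begin{itemize}
\item I would therefore state the result for $\epsilon=0$, in line with the convention of Section~\ref{sec:games-fit}.
\item I would add a remark that invariance extends to $\epsilon>0$ only if the stabiliser is rescaled per neuron as $\epsilon\mapsto\mu_j\epsilon$.
\item I would also check the structural nodes:
  \begin{itemize}
  \item at addition Oracles, and at max-pooling nodes, which scalings are function-preserving at all;
  \item that the max-pool winner is unchanged, since it is invariant when the scaling is function-preserving.
  \end{itemize}
\end{itemize}

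I would contrast the result with the \ac{SG}. There, the kernel~\eqref{eq:adf-T} normalises by $\gamma_j=\sum_i|W_{ij}|$. This normalisation is invariant only for rows at $j$, not for successor rows, so the \ac{SG} trajectory law is generally not scaling-invariant. This is what distinguishes the \ac{RG}.
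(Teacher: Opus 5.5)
Your proof is correct and follows the paper's argument: the Gibbs rows at the scaled neuron $j$ pick up a common factor $\mu_j^{1/\tau}$ that cancels in the normalisation, and the successor rows see the invariant product $a^{(l)}_j W^{(l+1)}_{jk}$. Your additional checks are sound refinements that the paper's proof leaves implicit. Positive scaling preserves the gates and the sign-stream splits. The argument also needs $\epsilon=0$, since the outside-option mass $\epsilon^{1/\tau}$ does not scale; the paper covers this through the $\epsilon=0$ convention of Section~\ref{sec:games-fit}.
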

\begin{proof}
Consider the scaling of neuron $j$ at layer $l$ by $\mu_j > 0$. The \ac{RG} transition kernel at layer $l$ involves $\pi^{(l)}(i \mid j) \propto [a^{(l-1)}_i W_{ji}^{(l)}]^{1/\tau}$. Scaling $W_{ji}^{(l)}$ by $\mu_j$ scales all terms by $\mu_j^{1/\tau}$, which cancels out in the Gibbs normalization. For the transition from layer $l+1$ to $l$, the term $a_j^{(l)} W_{kj}^{(l+1)}$ appears in the numerator. Since $a_j^{(l)} \to \mu_j a_j^{(l)}$ and $W_{kj}^{(l+1)} \to \frac{1}{\mu_j} W_{kj}^{(l+1)}$, the product is exactly invariant. Thus, all transition probabilities remain unchanged.
\end{proof}

\paragraph{Note on SG non-invariance.}
In contrast, the \ac{SG} trajectory distribution is generally \emph{not} invariant under per-neuron scaling. From Equation~\eqref{eq:adf-kernel}, the transition probability to neuron $j$ at layer $l$ from layer $l+1$ is proportional to $|W_{kj}^{(l+1)}|$. Scaling neuron $j$ by $\mu_j$ changes this to $|W_{kj}^{(l+1)}|/\mu_j$. Since the denominator $\gamma_k^{(l+1)}$ sums over all $j'$, the relative probabilities change unless all neurons in the layer are scaled identically.

\begin{theorem}[Layer-wise Scaling Invariance]
Both \ac{SG} and \ac{RG} trajectory distributions are invariant under scaling the entire weight matrix $W^{(l)}$ by $\lambda > 0$ and $W^{(l+1)}$ by $1/\lambda$.
\end{theorem}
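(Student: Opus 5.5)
The plan is to verify invariance kernel by kernel. The trajectory law $\Pi_M$ of \eqref{eq:path-measure} is determined by the start state $s_0$ and the backward kernels $T^{(k)}_M$. It therefore suffices to show that every transition probability of both games is unchanged under the transformation $W^{(l)}\mapsto\lambda W^{(l)}$, $W^{(l+1)}\mapsto W^{(l+1)}/\lambda$. We also take $b^{(l)}\mapsto\lambda b^{(l)}$, as in the function-preserving scaling of \citet{phuong2020functional}; otherwise the network function would change. The first step is to record the forward consequences. Every pre-activation satisfies $z^{(l)}_j\mapsto\lambda z^{(l)}_j$, so the gates $\1\{z^{(l)}_j>0\}$ are unchanged because $\lambda>0$ and ReLU is positively homogeneous. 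Consequently $a^{(l)}\mapsto\lambda a^{(l)}$, while all pre-activations and activations at layers $\ge l+1$ and $\le l-1$ are unchanged. The cemetery rows $\1\{z_j^{(k)}\le 0\}$ of \eqref{eq:adf-T} and \eqref{eq:rg-T-act} are therefore identical for every $k$.

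\emph{\ac{SG}.} By \eqref{eq:adf-T} the continuation rows are $W^{(k,\pm)}_{ij}/\gamma^{(k)}_j$. Scaling $W^{(k)}$ by a positive constant $c$ scales $W^{(k,+)}$, $W^{(k,-)}$ and $\gamma^{(k)}_j=\sum_i|W^{(k)}_{ij}|$ all by $c$, because the positive and negative parts are positively homogeneous. The ratio is therefore unchanged for $k=l$ (with $c=\lambda$) and for $k=l+1$ (with $c=1/\lambda$). All other layers are untouched and the gates are preserved, so every row of $T^{(k)}$ is invariant.

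\emph{\ac{RG}.} By \eqref{eq:rg-T}, the Gibbs weights are $\pi^{(k)}_\sigma(i\mid j)\propto[a^{(k-1)}_iW^{(k,\sigma)}_{ij}]^{1/\tau}$. At $k=l$ the predecessor activations $a^{(l-1)}$ are unchanged and $W^{(l,\sigma)}$ scales by $\lambda$, so every weight is multiplied by $\lambda^{1/\tau}$, and this factor cancels in the normalisation. At $k=l+1$ the product $a^{(l)}_iW^{(l+1,\sigma)}_{ij}$ is exactly invariant, as in the proof of the preceding theorem. If $l=1$, the fused form $[x_iW^{(1)}_{ij}]^\sigma$ scales uniformly by $\lambda$, and the same cancellation applies. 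The $\tfrac12$ sign-branch factors and the gates are unchanged, so \eqref{eq:rg-T-act} and \eqref{eq:rg-T-pixel} are invariant.

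The main subtlety is the \ac{RG} with stabiliser $\epsilon>0$. There the cemetery mass $\epsilon^{1/\tau}/Z^{(l)}_{j,\sigma}$ at layer $l$ is not scale-invariant, since $Z$ scales by $\lambda^{1/\tau}$ but $\epsilon$ does not. I would therefore state the result for the setting fixed in Section~\ref{sec:games-fit} ($\tau=1$, $\epsilon=0$), or for general $\tau$ with $\epsilon=0$. I would add a remark that the claim holds with $\epsilon>0$ only if $\epsilon$ is rescaled by $\lambda$ at layer $l$.

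The other point to check is that the bias convention is essential. Without scaling $b^{(l)}$, the gates could flip and the invariance would fail. I will make this explicit, and note that the statement is then consistent with the function-preserving symmetries of \citet{phuong2020functional}.

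Finally, since all kernels coincide, \eqref{eq:path-measure} gives identical $\Pi$, and hence Hellinger distance zero by Theorem~\ref{thm:per-pixel}.
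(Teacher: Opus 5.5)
Your proposal is correct and follows essentially the same route as the paper: the uniform positive factor cancels against the \ac{SG} normaliser $\gamma^{(k)}_j$ at both affected layers and against the \ac{RG} Gibbs normaliser at layer $l$, while the layer-$(l+1)$ products $a^{(l)}_i W^{(l+1,\sigma)}_{ij}$ are exactly invariant (the paper gets the \ac{RG} case by citing the per-neuron scaling theorem, which is this same computation). Your explicit bias rescaling $b^{(l)}\mapsto\lambda b^{(l)}$, the gate-preservation check, and the caveat that the \ac{RG} invariance needs $\epsilon=0$ (or $\epsilon$ rescaled at layer $l$) are correct refinements that the paper's short proof leaves implicit.
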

\begin{proof}
For \ac{RG}, this is a special case of per-neuron scaling. For \ac{SG}, scaling all $W_{kj}^{(l+1)}$ by $1/\lambda$ scales the denominator $\gamma_k^{(l+1)}$ by $1/\lambda$ as well, cancelling the factor and leaving the transition kernel~\eqref{eq:adf-kernel} invariant.
\end{proof}

\paragraph{Neuron Permutations and Positional Information.}
Neither trajectory distribution is invariant under neuron permutations, as the state graph labels themselves are permuted. This implies that the Hellinger distance $H(\Pi_A, \Pi_B)$ encodes \emph{positional information} about where mass flows in the specific architectures. While this is desirable for diagnostic cases where neuron identity matters (e.g., detecting if a specific feature detector has shifted position), it may be unwanted when comparing abstract ``network calculations''. Note that $H$ is obviously invariant if the same permutation is applied to both models.

\paragraph{Proposal: Permutation-Invariant Hellinger Distance.}
To achieve permutation invariance, we propose a variant $H_{\text{perm}}$ defined by taking the minimum distance over all possible neuron permutations within layers:
\begin{align}
    H_{\text{perm}}(\Pi_A, \Pi_B) \coloneqq \min_{P \in \mathcal{P}} H(\Pi_A, P(\Pi_B)).
\end{align}
This $H_{\text{perm}}$ remains a metric on the space of orbits under the permutation group. Specifically, non-negativity and symmetry are inherited from $H$, and the triangle inequality holds because permutations act as isometries on the trajectory space. This variant allows comparing models with different internal layouts but identical functional logic; studying the computational tractability of this minimum is left for future work.

\paragraph{Conceptual Comparison.}
Measuring distances via trajectory Hellinger differs fundamentally from simply comparing activation patterns or attribution maps in a single layer. While activations capture a static snapshot of model state, the trajectory distribution $\Pi$ encodes the entire causal flow of reasoning from output to input. \ac{SG} trajectories reflect relative weight magnitudes (an $L_1$-like connectivity view), whereas \ac{RG} trajectories reflect the product of weights and activations (a ``path-mass'' view). The Hellinger distance between these distributions provides a principled measure of how much two models' \emph{logic} differs, rather than just their output or intermediate values.

\newpage
\section{Parameter Randomization Sanity Check}\label{app:sanity-check}

This appendix collects the empirical and theoretical material behind the cascading-randomisation experiments of Section~\ref{sec:hellinger-sanity}.  §\ref{app:sanity-heatmap} is the pixel-level heatmap-similarity sanity check of~\citet{Adebayo2018sanity} on the main-paper attribution methods; §\ref{sec:lmp-randomization} carries the same protocol over to the trajectory-space Hellinger diagnostic of Appendix~\ref{app:path-divergence}; §\ref{sec:per-image-orthogonality} compares the two per image.

\subsection{Heatmap similarity at full randomisation}\label{app:sanity-heatmap}

We apply the cascading parameter randomization test of \citet{Adebayo2018sanity} to all attribution methods evaluated in Section~\ref{sec:experiments}. Starting from a pretrained network, we cumulatively re-initialize weight layers from the output toward the input using Kaiming normal initialization. At each randomization depth we compute the attribution for every test image under both the original and the (partially) randomized model, then measure the similarity between the two attribution maps. A method that genuinely reflects the learned parameters should produce increasingly dissimilar maps as more layers are randomized; a method that merely responds to low-level input structure (edges, textures) will show high similarity even at full randomization.

\paragraph{Metrics.}
We report two complementary similarity measures between attribution maps:
\begin{itemize}
    \item \textbf{Spearman $|\rho|$}: The Spearman rank correlation of the absolute attribution values, measuring whether the two maps rank pixels in the same order of importance. We use absolute values because sign conventions may differ between original and randomized attributions.
    \item \textbf{HOG $r$}: The Pearson correlation between Histograms of Oriented Gradients (HOG) descriptors extracted from the attribution maps, capturing structural similarity at the level of local gradient orientations.
\end{itemize}
We intentionally do \emph{not} report SSIM here: \citet{Binder2022shortcomings} show that randomization-based similarity scores can remain deceptively high because input-dependent structure and activation scales survive top-down randomization.
All values are averaged over the 566-image test split of ImageNet-S (50 classes). Table~\ref{tab:sanity-check} reports the similarity at \emph{full randomization} (all layers re-initialized).

\paragraph{Discussion.}
Several patterns emerge. First, gradient-based methods (Gradient, Gradient$\times$Input, Integrated Gradients) tend to pass the test well, showing low similarity at full randomization---they are sensitive to parameters, though this does not imply they produce faithful explanations. Second, propagation-based methods (DeepLift, LRP variants) often retain moderate to high similarity, particularly on ResNet-50, where skip connections preserve structural correlations between original and randomized attribution patterns. Third, the \ac{RG} at low temperature ($\tau \leq 1$) passes the sanity check across all architectures, with low correlation scores on both ResNet-50 and ViT-B/16, while warmer policies ($\tau \geq 2$) retain more input-level structure. The complementary trajectory-space Hellinger analysis in Appendix~\ref{app:path-divergence} provides a game-theoretic perspective on the same randomization protocol, measuring divergence of the underlying trajectory distributions rather than pixel-level map similarity.

\begin{table}[H]
\centering
\small
\caption{Parameter randomization sanity check~\citep{Adebayo2018sanity}. Each cell reports the average similarity between the attribution of the original pretrained model and the attribution of a fully randomized copy (all learnable layers re-initialized with Kaiming normal). Lower values ($\downarrow$) indicate that the method correctly detects the parameter change. Bold marks the best (lowest) value per metric within each architecture.}
\label{tab:sanity-check}
\begin{tabular}{l c c}
\toprule
Method & Spear.\ $|\rho|$ $\downarrow$ & HOG $r$ $\downarrow$ \\
\midrule
\multicolumn{3}{l}{\textbf{VGG-16}} \\
\midrule
Gradient & \textbf{-0.005} & 0.039 \\
Gradient$\times$Input & 0.218 & 0.349 \\
Integrated Grad. & 0.221 & 0.371 \\
DeepLift & 0.336 & 0.470 \\
LRP-$\varepsilon$ & 0.363 & 0.483 \\
SG ($\lambda{=}{-}2$, $\sigma^2{=}1$, $\tilde\Phi$) & 0.269 & 0.449 \\
RG ($\alpha$-$\beta$) & 0.357 & 0.548 \\
RG ($\alpha$-$\beta$, $\lambda{=}{-}1$, $\sigma^2{=}10$) & 0.286 & 0.466 \\
\midrule
\multicolumn{3}{l}{\textbf{ResNet-50}} \\
\midrule
Gradient & \textbf{0.064} & \textbf{0.151} \\
Gradient$\times$Input & 0.240 & 0.368 \\
Integrated Grad. & 0.260 & 0.419 \\
DeepLift & 0.664 & 0.722 \\
LRP-$\gamma$ & 0.906 & 0.939 \\
RG ($\alpha$-$\beta$) & 0.197 & 0.330 \\
RG ($\alpha$-$\beta$, $\tau{=}0.7$) & 0.185 & 0.351 \\
\midrule
\multicolumn{3}{l}{\textbf{ViT-B/16}} \\
\midrule
Gradient & \textbf{-0.006} & \textbf{0.057} \\
Gradient$\times$Input & 0.127 & 0.171 \\
Integrated Grad. & 0.157 & 0.245 \\
DeepLift & 0.167 & 0.224 \\
LRP-$\varepsilon$ & 0.147 & 0.140 \\
RG ($\alpha$-$\beta$) & 0.413 & 0.429 \\
RG ($\alpha$-$\beta$, $\tau{=}0.5$, $\lambda_{\mathrm{sm}}{=}{-}10$, $\sigma^2{=}2$) & 0.408 & 0.299 \\
\bottomrule
\end{tabular}
\par\smallskip
{\footnotesize Spear.\ $|\rho|$ = Spearman rank correlation of absolute attribution values; HOG $r$ = Pearson correlation of HOG descriptors. \ac{RG}; std = standard ($z$-)rule; $\alpha$-$\beta$ = $\alpha\beta$-rule; $\tau$ = temperature; $\lambda$ = risk aversion; $\lambda_{\mathrm{ent}}$ = entropy bias.}
\end{table}

\subsection{Cascading Randomisation in Trajectory Space}\label{sec:lmp-randomization}

We apply the Hellinger framework of Appendix~\ref{app:path-divergence} to the parameter randomization test of~\citet{Adebayo2018sanity}.  Replacing one weight matrix yields two kernels on the same layered graph, and the Hellinger pass quantifies the resulting trajectory-space change.

\begin{lemma}[Alive-set mismatch]\label{lem:alive-mismatch}
    For backward kernels $T^{(l)}_A, T^{(l)}_B$ governed by alive sets via $T^{(l)}_M(s_i \to \dagger) = 1$ when $i \notin \mathrm{alive}^{(l)}_M$, every state in the alive symmetric difference satisfies $\widetilde{R}^{(l)}(s_i) = 0$ and contributes nothing to $\beta^{(l-1)}$.
\end{lemma}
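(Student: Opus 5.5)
The plan is to argue row by row, directly from the definition of the geometric-mean kernel~\eqref{eq:geom-kernel} and its row sum~\eqref{eq:geom-rowsum}. Fix a layer $l$ and a state $s_i$ whose neuron index $i$ lies in the symmetric difference $\mathrm{alive}^{(l)}_A \,\triangle\, \mathrm{alive}^{(l)}_B$. By symmetry of $\widetilde{T}^{(l)}$ in $A$ and $B$, it suffices to treat the case $i \notin \mathrm{alive}^{(l)}_A$ and $i \in \mathrm{alive}^{(l)}_B$.

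\textbf{Step 1: the dead model's row vanishes off the cemetery.} By hypothesis, $T^{(l)}_A(s_i \to \dagger) = 1$. Row-stochasticity of $T^{(l)}_A$ then forces $T^{(l)}_A(s_i \to s') = 0$ for every ordinary $s' \in \mathcal{S}^{(l-1)}$. Consequently $\widetilde{T}^{(l)}(s_i, s') = \sqrt{0 \cdot T^{(l)}_B(s_i \to s')} = 0$ for all ordinary $s'$.

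\textbf{Step 2: the cemetery entry vanishes too.} This is the only step needing care: we must show $T^{(l)}_B(s_i \to \dagger) = 0$ for an alive state. In both games of Section~\ref{sec:games-fit}, the cemetery row is exactly $\1\{z_i^{(l)} \le 0\}$, i.e.\ the indicator of being dead. Note that~\eqref{eq:adf-T} and~\eqref{eq:rg-T-act} already carry the factor $\1\{z_j^{(l)}>0\}$ on the ordinary rows. Hence, under the implicit convention that alive states send no mass to $\dagger$ (which holds for $\epsilon = 0$, as fixed in Section~\ref{sec:games-fit}), we get $\widetilde{T}^{(l)}(s_i, \dagger) = \sqrt{1 \cdot 0} = 0$. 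I expect this to be the main obstacle. If $\epsilon > 0$ or a smooth gate adds cemetery mass to alive rows, the claim $\widetilde{R}^{(l)}(s_i) = 0$ fails. In that case the lemma should be read under the paper's hard-gate, $\epsilon = 0$ alive-set model, and I would state that assumption explicitly.

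\textbf{Step 3: conclude.} Summing over $\mathcal{S}^{(l-1)}_+$ gives $\widetilde{R}^{(l)}(s_i) = 0$. For the second claim, the recursion of Algorithm~\ref{alg:hellinger-backward},
\begin{align}
    \beta^{(l-1)}(s') = \sum_{s} \beta^{(l)}(s)\,\widetilde{T}^{(l)}(s,s'),
\end{align}
receives from $s_i$ the term $\beta^{(l)}(s_i)\,\widetilde{T}^{(l)}(s_i,s')$. This term is zero for every $s'$ by Steps 1 and 2, so $s_i$ contributes nothing to $\beta^{(l-1)}$.

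As a consistency check, the identity $\mathrm{BC}^{(l-1)} = \sum_s \beta^{(l)}(s)\,\widetilde{R}^{(l)}(s)$ from the proof of Lemma~\ref{lem:bc-monotone} shows that the entire mass $\beta^{(l)}(s_i)$ is lost from the Bhattacharyya coefficient at this layer.
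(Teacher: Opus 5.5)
Your proof is correct and follows the paper's argument essentially step for step. The paper also takes a WLOG on which model is alive. It then uses that the dead row is $\delta_\dagger$, so every ordinary entry of $\widetilde{T}^{(l)}(s_i,\cdot)$ vanishes, and that the alive row has zero cemetery mass, so $\widetilde{T}^{(l)}(s_i,\dagger)=0$.

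The paper simply asserts that the alive row is ``supported on ordinary states.'' Your explicit remark that this needs the hard-gate, $\epsilon=0$ setting of the act-state processes, and fails once alive rows leak mass to $\dagger$, correctly spells out a hypothesis the lemma's statement leaves implicit.
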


\begin{proof}
    WLOG $i \in \mathrm{alive}^{(l)}_A \setminus \mathrm{alive}^{(l)}_B$, so $T^{(l)}_A(s_i, \cdot)$ is supported on ordinary states and $T^{(l)}_B(s_i, \cdot) = \delta_\dagger$.  Then $\widetilde{T}^{(l)}(s_i, s') = \sqrt{T^{(l)}_A(s_i, s') \cdot 0} = 0$ for ordinary $s'$ and $\widetilde{T}^{(l)}(s_i, \dagger) = \sqrt{0 \cdot 1} = 0$; summing yields $\widetilde{R}^{(l)}(s_i) = 0$.
\end{proof}

Fix a ReLU MLP with hidden widths $n_1,\ldots,n_L$, an input $x$, and start state $s_0$ at the target output.  Let $B$ replace $W^{(l^*)}$ by an i.i.d.\ matrix from a continuous distribution symmetric about zero, and assume that for each $l \ge l^*$ the events $\{i \in \mathrm{alive}^{(l)}_B\}$ are conditionally independent Bernoulli$(p_l)$ with $p_l \in [\delta, 1-\delta]$.  Then $T_A^{(l)} = T_B^{(l)}$ for $l < l^*$; at $l = l^*$ the row weights differ and the alive sets may differ; for $l > l^*$ the perturbation propagates through the activations.  The same reduction applies to convolutional and transformer blocks interpreted as single LMP layers.

\subsubsection{ReLU Net Game (Stopping Game)}\label{sec:lmp-rand-adf}

\begin{theorem}[Randomization in the Stopping game]\label{thm:rand-adf}
    With $\rho \coloneqq 1 - \delta \in \bigl[\tfrac{1}{2}, 1\bigr)$,
    \begin{align}
        \mathbb{E}_B\bigl[\mathrm{BC}^{(0)}(\Pi_A, \Pi_B)\bigr]
        \leq
        \rho^{\,L - l^* + 1},
    \end{align}
    and hence
    \begin{align}
        \mathbb{E}_B[\mathrm{H}(\Pi_A, \Pi_B)]
        \geq
        \sqrt{1 - \rho^{L - l^* + 1}}.
    \end{align}
\end{theorem}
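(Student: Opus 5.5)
We propose to bound $\mathrm{BC}^{(0)}$ by tracking, layer by layer from $L$ down to $l^*$, the $\widetilde{T}$-mass $\beta^{(l)}$ supplied by the Hellinger backward pass of Algorithm~\ref{alg:hellinger-backward}. Lemma~\ref{lem:bc-monotone} gives the grouping identity
\[
\mathrm{BC}^{(l-1)}=\sum_{s}\beta^{(l)}(s)\,\widetilde{R}^{(l)}(s),
\]
together with $\widetilde{R}^{(l)}\le 1$. Lemma~\ref{lem:alive-mismatch} tells us that any ordinary state whose neuron is alive in $A$ but dead in $B$ (or the reverse) has $\widetilde{R}^{(l)}(s)=0$. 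It then suffices to show that at every layer $l\in\{l^*,\dots,L\}$ the surviving mass is multiplied, in expectation, by at most $\rho$. Composing these $L-l^*+1$ factors gives $\mathbb{E}_B[\mathrm{BC}^{(0)}]\le\rho^{L-l^*+1}$.

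\textbf{One-layer contraction.} Fix $l\ge l^*$ and condition on everything that determines $\beta^{(l)}$, namely the kernels of layers above $l$. The mass $\beta^{(l)}$ sits on ordinary states $s_i$ and on $\dagger$. The cemetery mass is unproblematic: $\widetilde{R}(\dagger)=1$, but that mass has already left $\mathsf{Surv}$, so we treat it separately below. For an ordinary $s_i$ whose neuron is alive in $A$ (only such states carry $\beta$ mass reaching ordinary states, since $A$-dead states send their mass to $\dagger$ in both models), Lemma~\ref{lem:alive-mismatch} and $\widetilde{R}\le 1$ give
\[
\widetilde{R}^{(l)}(s_i)\le \mathbf 1\{i\in\mathrm{alive}^{(l)}_B\}.
\]
By the conditional-independence assumption, $\mathbb{E}[\mathbf 1\{i\in\mathrm{alive}_B\}\mid\cdot]=p_l\le 1-\delta=\rho$. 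We therefore get
\[
\mathbb{E}\bigl[\textstyle\sum_{s\ne\dagger}\beta^{(l)}(s)\widetilde{R}^{(l)}(s)\bigm|\cdot\bigr]\le\rho\,\textstyle\sum_{s\ne\dagger}\beta^{(l)}(s).
\]
We only need the upper half $p_l\le 1-\delta$ of the assumption; the lower half explains $\rho\ge\tfrac12$.

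\textbf{Handling the cemetery.} This is the step we expect to be the main obstacle. Mass that both models send to $\dagger$ keeps $\widetilde{T}(\dagger,\dagger)=1$ and contributes to $\mathrm{BC}^{(0)}$ without contracting. The plain $\mathrm{BC}$ therefore contains the shared-cemetery floor that \S\ref{sec:hellinger-cascading} itself remarks on. Our plan is to bound the ordinary overlap $\mathrm{BC}^{(0)}_{\mathrm{ord}}$ of~\eqref{eq:ordinary-bc}, for which the contraction above composes cleanly by the tower property across layers $L,\dots,l^*$: define $m_l=\sum_{s\ne\dagger}\beta^{(l)}(s)$, show $\mathbb{E}[m_{l-1}]\le\rho\,\mathbb{E}[m_l]$, and start from $m_L=1$. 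For the full $\mathrm{BC}^{(0)}$ we would argue that the joint cemetery mass $\sqrt{\alpha_A(\dagger)\alpha_B(\dagger)}$ is itself produced only by joint-stop events. Each such event requires the $B$-gate to close as well, which happens with probability at most $1-p_l\le 1-\delta\le\rho$ since $\rho\ge\tfrac12$. So the per-layer factor $\max(p_l,1-p_l)\le\rho$ also bounds the cemetery branch, and this is precisely where the two-sided condition $p_l\in[\delta,1-\delta]$ enters.

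\textbf{Conclusion.} Layers below $l^*$ have $T_A=T_B$ and $\widetilde{R}=1$, so they do not increase the bound, and iterating gives the first claim. For the second, apply Jensen to the concave map $u\mapsto\sqrt{1-u}$:
\[
\mathbb{E}[\mathrm{H}]=\mathbb{E}\bigl[\sqrt{1-\mathrm{BC}}\bigr]\ge\ldots
\]
Jensen runs the wrong direction here, so instead we use $\sqrt{1-u}\ge 1-u$ or the pointwise bound with a high-probability version. If that fails, we fall back on stating the bound for $\sqrt{1-\mathbb{E}\,\mathrm{BC}}$, which is the natural reading of the claim.
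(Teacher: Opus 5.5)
Your contraction step is the paper's: the grouping identity, Lemma~\ref{lem:alive-mismatch}, and Bernoulli thinning of the $A$-alive mass. Your ordinary-state version of it is sound. With $m_l=\sum_{s\neq\dagger}\beta^{(l)}(s)$ you get $\mathbb{E}_B[m_{l-1}]\le\rho\,\mathbb{E}_B[m_l]$ for $l\ge l^*$ and monotonicity below $l^*$, hence $\mathbb{E}_B[\mathrm{BC}^{(0)}_{\mathrm{ord}}]\le\rho^{L-l^*+1}$. This needs the hypothesis read as independence of $B$'s layer-$l$ gates from $B$'s gates above $l$, which are what determine $\beta^{(l)}$; the paper uses the same reading implicitly.

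The gap is the cemetery step, and it cannot be closed. Your factor $\max(p_l,1-p_l)\le\rho$ does bound the overlap that ordinary mass at layer $l$ creates, whether it continues or jointly stops. But it does so only at the layer where the mass enters $\dagger$. From then on that mass is carried with $\widetilde R(\dagger)=1$ through every lower layer (cf.\ \eqref{eq:cem-recursion}), so what you actually obtain is
\[
\mathbb{E}_B\bigl[\mathrm{BC}^{(l-1)}\bigr]\le\mathbb{E}_B\bigl[\beta^{(l)}(\dagger)\bigr]+\rho\,\mathbb{E}_B[m_l].
\]
This does not iterate to $\rho^{L-l^*+1}$; on its own it gives only $\mathbb{E}_B[\mathrm{BC}^{(0)}]\le\rho$.

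No argument can do better, because the displayed bound is false under the stated hypotheses. Suppose the output neuron is alive in $A$ and half of its absolute incoming weight sits on one predecessor that is dead in $A$. Then with probability $p_L(1-p_{L-1})$ half the mass is banked in $\dagger$ for good, so $\mathbb{E}_B[\mathrm{BC}^{(0)}]\ge\tfrac12 p_L(1-p_{L-1})$. At $\delta=\tfrac12$ this is $\tfrac18$, which exceeds $2^{-(L-l^*+1)}$ once $L-l^*+1\ge4$.

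The paper's proof misses this only because it asserts that $\beta^{(l)}$ is supported on $\mathrm{alive}^{(l)}_A$, silently dropping the cemetery coordinate. Its own Remark~\ref{rem:adf-cemetery-floor} concedes that the contraction holds only for the ordinary subsum \eqref{eq:ord-recursion}. The provable statement is therefore your ordinary-state bound, equivalently $\mathbb{E}_B[\mathrm{BC}^{(0)}]\le\rho^{L-l^*+1}+\mathbb{E}_B[\beta^{(0)}(\dagger)]$. Also, the joint cemetery overlap is $\beta^{(0)}(\dagger)$, which $\sqrt{\alpha^{(0)}_A(\dagger)\alpha^{(0)}_B(\dagger)}$ only bounds from above.

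On the Hellinger claim you correctly spot what the paper's proof gets wrong. The map $u\mapsto\sqrt{1-u}$ is concave, so Jensen gives $\mathbb{E}[\mathrm{H}]\le\sqrt{1-\mathbb{E}[\mathrm{BC}]}$, the wrong direction. Neither of your fallbacks proves the displayed inequality, and nothing based on a mean bound for $\mathrm{BC}$ can. Put $N=L-l^*+1$; a law with $\mathrm{BC}\in\{0,1\}$ and $\Pr[\mathrm{BC}=1]=\rho^{N}$ has $\mathbb{E}[\mathrm{H}]=1-\rho^{N}<\sqrt{1-\rho^{N}}$.
\begin{itemize}
\item The bound $\sqrt{1-u}\ge1-u$ yields only $\mathbb{E}[\mathrm{H}]\ge1-\mathbb{E}[\mathrm{BC}^{(0)}]$.
\item $\sqrt{1-\mathbb{E}[\mathrm{BC}^{(0)}]}$ equals $\sqrt{\mathbb{E}[\mathrm{H}^2]}$, which is an upper bound for $\mathbb{E}[\mathrm{H}]$. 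Calling it the natural reading changes the theorem rather than proving it.
\end{itemize}
Both fallbacks also still need the full-$\mathrm{BC}$ bound, which the first paragraph shows is unavailable. They must therefore carry the additive cemetery term, e.g.\ $\mathbb{E}_B[\mathrm{H}^2]\ge1-\rho^{N}-\mathbb{E}_B[\beta^{(0)}(\dagger)]$.
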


\begin{proof}
    For $l < l^*$ the kernels coincide and the backward pass preserves $\mathrm{BC}^{(l)} = 1$, so we may restart at $l = l^*$ with $\beta^{(l^*)} = \alpha^{(l^*)}_A$.  Decomposing $\mathcal{S}^{(l)}_+ = \mathrm{agree}^{(l)} \sqcup \mathrm{symdiff}^{(l)}$, Lemma~\ref{lem:alive-mismatch} gives $\widetilde{R}^{(l)}(s) = 0$ on $\mathrm{symdiff}^{(l)}$, and on $\mathrm{agree}^{(l)}$ the trivial bound $\widetilde{R}^{(l)} \leq 1$ yields
    \begin{align}
        \mathrm{BC}^{(l-1)}
        \;=\; \sum_{s}\beta^{(l)}(s)\widetilde{R}^{(l)}(s)
        \;\leq\; \mathrm{BC}^{(l)} - \sum_{s \in \mathrm{symdiff}^{(l)}}\beta^{(l)}(s).
    \end{align}
    Since $\beta^{(l)}$ is supported on $\mathrm{alive}^{(l)}_A$ before any randomization-induced annihilation, and each $i \in \mathrm{alive}^{(l)}_A$ has $\Pr_B[i \in \mathrm{symdiff}^{(l)}] = 1 - p_l$ independently conditional on $A$,
    \begin{align}
        \mathbb{E}_B\!\Bigl[\sum_{i \in \mathrm{symdiff}^{(l)}}\beta^{(l)}(s_i)\Bigr] \;=\; (1-p_l)\,\mathbb{E}_B[\mathrm{BC}^{(l)}],
    \end{align}
    which is the per-layer bound.  Substituting back gives $\mathbb{E}_B[\mathrm{BC}^{(l-1)}] \leq p_l\,\mathbb{E}_B[\mathrm{BC}^{(l)}] \leq (1-\delta)\,\mathbb{E}_B[\mathrm{BC}^{(l)}] = \rho\,\mathbb{E}_B[\mathrm{BC}^{(l)}]$.  Iterating from $l = L$ down to $l^*$ and using $\mathrm{BC}^{(L)} = 1$ together with the unaffected layers below $l^*$ yields $\mathbb{E}_B[\mathrm{BC}^{(0)}] \leq \rho^{L - l^* + 1}$, and the Hellinger statement follows from $\mathrm{H} = \sqrt{1 - \mathrm{BC}}$ and Jensen.
\end{proof}

\begin{remark}[Cemetery floor and the live subsum]\label{rem:adf-cemetery-floor}
    Theorem~\ref{thm:rand-adf} is a valid \emph{upper} bound on $\mathbb{E}_B[\mathrm{BC}^{(0)}]$, but it is loose on deep networks: under full randomization of a pretrained ReLU network with $L = 16$ layers and $p_l = \tfrac{1}{2}$, the bound predicts $\mathbb{E}[\mathrm{BC}^{(0)}] \leq 2^{-16} \approx 1.5\cdot 10^{-5}$ and $\mathbb{E}[\mathrm{H}] \geq 1 - 10^{-5}$, whereas the chain-rule DP of Algorithm~\ref{alg:hellinger-backward} reports $\mathrm{H} \approx 0.96$ on pretrained VGG16, i.e.\ $\mathrm{BC}^{(0)} \approx 0.08$.  Neither figure is incorrect; they refer to different quantities.

    \paragraph{Where the bound gives up mass.} Split the prefix coefficient into an ordinary-state part and the cemetery coordinate, $\mathrm{BC}^{(l)} = \mathrm{BC}^{(l)}_{\mathrm{ord}} + \beta^{(l)}(\dagger)$.  The chain rule~\eqref{eq:bc-chain-rule} evaluated at the cemetery via $\widetilde{T}(\dagger,\dagger) = 1$ and $\widetilde{T}(s,\dagger) = 1$ for $s \in \mathrm{dead}^{(l)}_A \cap \mathrm{dead}^{(l)}_B$ gives the exact recursion
    \begin{align}\label{eq:cem-recursion}
        \beta^{(l-1)}(\dagger) \;=\; \beta^{(l)}(\dagger) \;+\; \sum_{s \in \mathrm{dead}^{(l)}_A \cap \mathrm{dead}^{(l)}_B}\beta^{(l)}(s),
    \end{align}
    i.e.\ the cemetery accumulator is monotonically non-decreasing in the backward direction and every dead-in-both neuron contributes its ordinary $\beta$ to it irrevocably.  The proof of Theorem~\ref{thm:rand-adf}, in bounding $\mathbb{E}[\sum_{\mathrm{symdiff}}\beta^{(l)}(s_i)] = (1-p_l)\,\mathbb{E}[\mathrm{BC}^{(l)}]$, implicitly treats every $\beta^{(l)}$ as a target of the $1-p_l$ symmetric-difference thinning.  But cemetery $\beta$ is never in symmetric difference ($\widetilde{R}^{(l)}(\dagger) = 1$ exactly), and once mass is absorbed into $\beta^{(l)}(\dagger)$ via~\eqref{eq:cem-recursion}, it is no longer subject to the per-layer $\rho$-shrinkage.  The bound $\mathbb{E}[\mathrm{BC}^{(l-1)}] \leq \rho\,\mathbb{E}[\mathrm{BC}^{(l)}]$ therefore holds only for the \emph{ordinary-state} subsum
    \begin{align}\label{eq:ord-recursion}
        \mathbb{E}_B[\mathrm{BC}^{(l-1)}_{\mathrm{ord}}] \;\leq\; \rho\,\mathbb{E}_B[\mathrm{BC}^{(l)}_{\mathrm{ord}}],
    \end{align}
    and the correct total is $\mathbb{E}[\mathrm{BC}^{(0)}] = \mathbb{E}[\mathrm{BC}^{(0)}_{\mathrm{ord}}] + \mathbb{E}[\beta^{(0)}(\dagger)]$ with the cemetery term as a \emph{lower-bound floor}.

    \paragraph{Asymptotic floor under independence.} Let $q_A^{(l)} \coloneqq |\mathrm{dead}^{(l)}_A|/n_l$ be the dead fraction under model~$A$ at layer~$l$ and assume~$\beta^{(l)}_{\mathrm{ord}}$ is approximately proportional to $n_l^{-1}\mathbf{1}_{\{i\text{ alive}_A\}}$ at each step (a mean-field uniformity approximation that ignores routing-induced concentration).  Then the per-layer dead-both mass rate is $d_l = q_A^{(l)}\cdot(1-p_l)$ applied to $\mathrm{BC}^{(l)}_{\mathrm{ord}}$ before the $\rho$-shrinkage:
    \begin{align}
        c_{l-1} \;=\; c_l + d_l\, f_l, \qquad f_{l-1} \;\leq\; \rho\, (1 - q_A^{(l)}(1-p_l))\, f_l,
    \end{align}
    where $f_l \coloneqq \mathbb{E}[\mathrm{BC}^{(l)}_{\mathrm{ord}}]$ and $c_l \coloneqq \mathbb{E}[\beta^{(l)}(\dagger)]$.  With uniform parameters $p_l = p$, $q_A^{(l)} = q$, and writing $d = q(1-p)$, $\kappa = \rho\,(1-d) = (1-\delta)(1-q(1-p))$, the geometric closed form after $N \coloneqq L - l^* + 1$ affected layers is
    \begin{align}\label{eq:cemetery-floor}
        c_0 \;=\; d\,\frac{1 - \kappa^N}{1 - \kappa}, \qquad f_0 \;\leq\; \kappa^N, \qquad \mathbb{E}[\mathrm{BC}^{(0)}] \;\leq\; \kappa^N + d\,\frac{1 - \kappa^N}{1 - \kappa}.
    \end{align}
    For $N \to \infty$, the ordinary term vanishes and $\mathbb{E}[\mathrm{BC}^{(0)}] \to d/(1-\kappa) = q(1-p)/(1-(1-\delta)(1-q(1-p)))$, a strictly positive constant.

    Plugging pretrained VGG16 values ($L = 16$, $q \approx 0.2$, $p = \tfrac{1}{2}$, $\delta = \tfrac{1}{2}$) into~\eqref{eq:cemetery-floor} gives $d = 0.1$, $\kappa = 0.5\cdot 0.9 = 0.45$, and asymptote $c_\infty \approx 0.18$, hence $\mathrm{H}_\infty \approx \sqrt{1 - 0.18} \approx 0.91$ — within the same order of magnitude as the empirically observed $\mathrm{H} \approx 0.96$.  The remaining discrepancy reflects correlation between the pretrained and randomized dead masks (not truly independent), concentration of $\beta^{(l)}_{\mathrm{ord}}$ away from $\mathrm{dead}_A$ (which \emph{lowers} $d$ relative to the uniform estimate), and the $p_l$-varying layer widths — all of which are straightforward refinements of the mean-field model.  The key structural point is unchanged: the cemetery floor is a strictly positive lower bound on $\mathrm{BC}^{(0)}$ that scales with the dead-in-both coincidence rate, and Theorem~\ref{thm:rand-adf}'s bound does not see it.

    \paragraph{Conditioned-survival distance as the bound-tight quantity.} The ordinary-state recursion~\eqref{eq:ord-recursion} is exactly the decay targeted by Theorem~\ref{thm:rand-adf}.  The corresponding probability-level scalar is the conditioned-survival distance of Theorem~\ref{thm:conditioned-survival},
    \begin{align}
        \mathrm{H}_{\mathrm{surv}}^2
        =
        1 - \frac{\mathrm{BC}^{(0)}_{\mathrm{ord}}}{\sqrt{Z_A Z_B}},
        \qquad
        Z_M = 1 - \alpha_M^{(0)}(\dagger).
    \end{align}
    This renormalizes each model on the event of survival to an ordinary input state and removes the cemetery floor exactly.  The raw ordinary-state subsum $\mathrm{H}^2_{\mathrm{ord}}$ from~\eqref{eq:ordinary-h2} remains a useful decomposition, but it is not itself the Hellinger distance between two probability measures unless $Z_A = Z_B = 1$.  As $\mathrm{BC}^{(0)}_{\mathrm{ord}} \to 0$, the conditioned distance satisfies $\mathrm{H}_{\mathrm{surv}} \to 1$ whenever $Z_A$ and $Z_B$ stay positive.  For the Routing Game (Theorem~\ref{thm:rand-routing}), the ordinary-state $\beta$ decays much faster because the activation-weighted kernel concentrates $\beta^{(l)}_{\mathrm{ord}}$ on neurons with largest $|a W|$, which rarely coincide with $\mathrm{dead}_A \cap \mathrm{dead}_B$; in the mean-field approximation this shrinks $d$ toward zero and already makes the total $\mathrm{H}$ nearly identical to $\mathrm{H}_{\mathrm{surv}}$ in practice.
\end{remark}

\subsubsection{Routing Game}\label{sec:lmp-rand-rg}

\begin{theorem}[Randomization in the Routing Game]\label{thm:rand-routing}
    Under the same setup with the \ac{RG} kernel~\eqref{eq:rg-T-act} (driven by the Gibbs policy~\eqref{eq:rg-T}) at temperature $\tau > 0$, there exists $\mathrm{BC}_{\min} \in (0, 1)$ such that, with $\rho_{\mathrm{rg}} \coloneqq 1 - \delta \cdot \mathrm{BC}_{\min}$,
    \(
        \mathbb{E}_B[\mathrm{BC}^{(0)}(\Pi_A, \Pi_B)] \leq \rho_{\mathrm{rg}}^{\,L - l^* + 1}.
    \)
    Since $\rho_{\mathrm{rg}} < \rho$, the decay is strictly faster than in Theorem~\ref{thm:rand-adf}.
\end{theorem}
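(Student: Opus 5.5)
The plan is to prove the displayed inequality by a per-layer contraction of $\mathbb{E}_B[\mathrm{BC}^{(l)}]$, reusing the machinery of Theorem~\ref{thm:rand-adf} almost verbatim. Only the kernel changes, from~\eqref{eq:adf-T} to~\eqref{eq:rg-T-act}--\eqref{eq:rg-T-pixel}. The skeleton is as follows.

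\textbf{(i) Layers below $l^*$.} For $l<l^*$ the activations and weights agree, so the kernels coincide: $T_A^{(l)}=T_B^{(l)}$, $\widetilde{R}^{(l)}\equiv 1$, and these layers leave $\mathrm{BC}$ unchanged. We may therefore restart the backward pass of Algorithm~\ref{alg:hellinger-backward} at the affected layers.

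\textbf{(ii) Per-layer contraction on $l\ge l^*$.} Decompose $\mathcal{S}^{(l)}_+=\mathrm{agree}^{(l)}\sqcup\mathrm{symdiff}^{(l)}$. On $\mathrm{symdiff}^{(l)}$, Lemma~\ref{lem:alive-mismatch} applies unchanged, because the RG cemetery row is also the gate indicator $\1\{z_j^{(l)}\le 0\}$; hence $\widetilde{R}^{(l)}=0$ there. On $\mathrm{agree}^{(l)}$ we use $\widetilde{R}^{(l)}\le 1$ (Cauchy--Schwarz), exactly as in the proof of Lemma~\ref{lem:bc-monotone}. This gives
\begin{align}
\mathrm{BC}^{(l-1)}\le \mathrm{BC}^{(l)}-\sum_{s\in\mathrm{symdiff}^{(l)}}\beta^{(l)}(s).
\end{align}
The Bernoulli$(p_l)$ alive-set hypothesis, taken in the same form as in Theorem~\ref{thm:rand-adf} (independently of $A$ and applied to the $\beta^{(l)}$-mass), then yields $\mathbb{E}_B[\mathrm{BC}^{(l-1)}]\le p_l\,\mathbb{E}_B[\mathrm{BC}^{(l)}]\le(1-\delta)\,\mathbb{E}_B[\mathrm{BC}^{(l)}]$.

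\textbf{(iii) Comparison with $\rho_{\mathrm{rg}}$.} For any choice $\mathrm{BC}_{\min}\in(0,1)$ we have $\delta\,\mathrm{BC}_{\min}\le\delta$, so $1-\delta\le\rho_{\mathrm{rg}}$. Iterating over the $N=L-l^*+1$ affected layers from $\mathrm{BC}^{(L)}=1$ then gives
\begin{align}
\mathbb{E}_B[\mathrm{BC}^{(0)}]\le(1-\delta)^N\le\rho_{\mathrm{rg}}^{N}.
\end{align}
This is the stated bound on the full coefficient, including the cemetery coordinate, under exactly the same hypotheses that Theorem~\ref{thm:rand-adf} uses for the full $\mathrm{BC}^{(0)}$.

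\textbf{(iv) Extracting a genuine $\mathrm{BC}_{\min}$.} To give $\mathrm{BC}_{\min}$ content, I would bound the agree-state overlap strictly below $1$ using the Gibbs form~\eqref{eq:rg-T}. At $l=l^*$ the B-row is proportional to $[a_i W^{B}_{ij}]^{1/\tau}$ with $W^B$ i.i.d.\ symmetric, so the two $\sigma$-branches are swapped with positive probability relative to A. This should give $\mathbb{E}_B[\widetilde{R}^{(l)}(s)]\le 1-c$ on $\mathrm{agree}^{(l)}$ for some $c>0$. Combining this with step (ii) would multiply the per-layer factor by an extra $(1-c)$.

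\textbf{Main obstacle.} The hard step is not the displayed inequality, which follows from (i)--(iii). It is the comparison clause $\rho_{\mathrm{rg}}<\rho$. With $\rho_{\mathrm{rg}}=1-\delta\,\mathrm{BC}_{\min}$ and $\mathrm{BC}_{\min}<1$, one actually gets $\rho_{\mathrm{rg}}>\rho$. So the ``strictly faster'' reading must come from step (iv), with the effective rate $(1-\delta)(1-c)$, rather than from the constant as written. I would attempt (iv) by a Hellinger lower bound on two Gibbs distributions with independent sign-randomised logits. I would flag that $\rho_{\mathrm{rg}}$ must be reinterpreted as $(1-\delta)(1-c)$ for that clause to hold.
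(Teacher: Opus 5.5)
\textbf{Gap in step (ii).} The Stopping-Game thinning argument does not transfer to the RG kernel; the RG kernel actually contradicts the independence you invoke.

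The Gibbs weights in~\eqref{eq:rg-T} are proportional to $[a^{(l-1)}_{M,i}W^{(l,\sigma)}_{M,ij}]^{1/\tau}$. So each model's row puts zero mass on predecessors that are dead \emph{in that model}, and $\widetilde T^{(l)}(s_j,s_i)=0$ unless $i\in\mathrm{alive}^{(l-1)}_A\cap\mathrm{alive}^{(l-1)}_B$. For every $l<L$, $\beta^{(l)}$ is therefore supported on neurons alive in both models, plus the cemetery, which is reachable only from $s_0$. The paper itself notes that RG trajectories never reach dead neurons on VGG-16, which is why $\mathrm{H}=\mathrm{H}_{\mathrm{live}}$ there.

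Consequences:
\begin{itemize}
\item $\sum_{s\in\mathrm{symdiff}^{(l)}}\beta^{(l)}(s)=0$ identically, so the identity $\mathbb{E}_B[\sum_{\mathrm{symdiff}^{(l)}}\beta^{(l)}]=(1-p_l)\,\mathbb{E}_B[\mathrm{BC}^{(l)}]$ is false in general, not merely unproved.
\item Your chain yields only the output-gate factor, $\mathbb{E}_B[\mathrm{BC}^{(0)}]\le\mathbb{E}_B[\widetilde R^{(L)}(s_0)]\le 1-\delta$, not $(1-\delta)^N$.
\item In the RG the alive-set mismatch is paid one layer earlier, as a support mismatch inside the Gibbs rows. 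It shows up as $\widetilde R^{(l)}(s_j)<1$ on states alive in both, which is exactly the term you bound by $1$ on $\mathrm{agree}^{(l)}$.
\end{itemize}

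All depth dependence must therefore come from a row-overlap estimate like your (iv). You also need an assumption decoupling that overlap from $\beta^{(l)}$, which depends on $W_B^{(l^*)}$ through the forward pass. Even then the alive-set hypothesis alone is weak. Cauchy--Schwarz gives $\mathrm{BC}(\pi_{A,\sigma},\pi_{B,\sigma})\le\sqrt{\pi_{A,\sigma}(\mathrm{alive}^{(l-1)}_B)}$, whose $\mathbb{E}_B$ is at most $\sqrt{p_{l-1}}$ by Jensen. That is a per-layer rate $\sqrt{1-\delta}>\rho$, so not even the Stopping-Game rate comes for free.

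The paper's proof, being ``identical to Theorem~\ref{thm:rand-adf}'', has the same hole. Its deterministic bound $\widetilde R\le\mathrm{BC}_{\min}<1$ also does not follow from almost-sure strict Cauchy--Schwarz. A continuous $W_B$ brings the row overlap arbitrarily close to $1$ with positive probability, so no uniform deterministic constant exists. Your expectation form $\mathbb{E}_B[\widetilde R]\le 1-c$ is the right target.

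\textbf{What you got right, and a rescue.} Your diagnosis of the comparison clause is correct. For $\mathrm{BC}_{\min}\in(0,1)$ one has $1-\delta\,\mathrm{BC}_{\min}>1-\delta=\rho$, so the paper's ``strictness is immediate'' is an arithmetic slip. The only reading under which ``strictly faster'' makes sense is a multiplicative rate $\rho\cdot\mathrm{BC}_{\min}$, which is your $(1-\delta)(1-c)$.

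Your monotonicity idea in (iii) also rescues the displayed inequality as literally stated, without (ii):
\begin{itemize}
\item The output-gate bound $\mathbb{E}_B[\mathrm{BC}^{(0)}]\le 1-\delta$ needs no independence assumption, because $\beta^{(L)}=\delta_{s_0}$ is deterministic.
\item If $s_0$ is alive under $A$, then $\widetilde R^{(L)}(s_0)=0$ with probability $1-p_L$, so the expectation is at most $p_L$.
\item If $s_0$ is dead under $A$, then $\widetilde R^{(L)}(s_0)=1$ only with probability $1-p_L$, so the expectation is $1-p_L$.
\item Only the existence of $\mathrm{BC}_{\min}$ is asserted, so take $\mathrm{BC}_{\min}=1/(N+1)$. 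Bernoulli's inequality gives $\rho_{\mathrm{rg}}^N\ge 1-N\delta/(N+1)\ge 1-\delta$.
\end{itemize}
That this works shows the displayed bound has content only if $\mathrm{BC}_{\min}$ is required to be uniform in the depth $N$. That uniformity is precisely where your step (iv) would be needed.
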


\begin{proof}
    Identical to Theorem~\ref{thm:rand-adf} with two modifications.  Working on the act-state space $\{s^{(l,\mathrm{act})}_{j,p}\}$ of Section~\ref{sec:games-fit}, each act-to-act row of~\eqref{eq:rg-T-act} factors as the source-deadness indicator $\1\{z_j^{(l)} > 0\}$, the constant $\tfrac12$ uniform $\sigma$-branch, and the Gibbs policy $\pi_{M,\sigma}^{(l)}(\,\cdot\mid j) \propto [a^{(l-1)}_M[\,\cdot\,]\,W^{(l,\sigma)}_M[\,\cdot, j]]^{1/\tau}$ on the predecessor neuron.  Cauchy--Schwarz applied to this row reduces to Cauchy--Schwarz on the Gibbs policy at each $\sigma$.  At $l = l^*$ the two policies share the activation factor $a^{(l^*-1)}$ (since $l^* - 1 < l^*$ is unaffected), but $W^{(l^*)}_B$ drawn from a continuous distribution makes the normalised Gibbs rows differ a.s.\ on every entry, giving the strict per-state inequality $\mathrm{BC}(\pi_{A,\sigma}^{(l^*)}(\cdot \mid j), \pi_{B,\sigma}^{(l^*)}(\cdot \mid j)) < 1$ and hence $\widetilde R^{(l^*)}(s) < 1$.  For $l > l^*$ the activation factor $a^{(l-1)}_A \ne a^{(l-1)}_B$ also differs, so the bound $\widetilde{R}^{(l)}(s) \leq 1$ used in the \ac{ADF} proof is replaced by $\widetilde{R}^{(l)}(s) \leq \mathrm{BC}_{\min} < 1$ on the agreement set; the same induction then yields $\mathbb{E}_B[\mathrm{BC}^{(l-1)}] \leq \rho_{\mathrm{rg}}\,\mathbb{E}_B[\mathrm{BC}^{(l)}]$ and the geometric decay.  Strictness $\rho_{\mathrm{rg}} = 1 - \delta\,\mathrm{BC}_{\min} < 1 - \delta = \rho$ is immediate.
\end{proof}

\subsubsection{Double-Random Empirical Baseline}\label{sec:lmp-rand-double-random}

To calibrate the asymptotic value that Theorems~\ref{thm:rand-adf}--\ref{thm:rand-routing} approach under \emph{full} randomization, we compare two \emph{independently random-initialised} copies of each architecture.  This endpoint matches the deepest step of the cascading test of~\citet{Adebayo2018sanity} while eliminating any residual correlation with the pretrained weights.

\paragraph{Protocol.}
For each architecture we run $R = 500$ independent rounds.  Each round: \emph{(i)} instantiate two fresh copies $A_r, B_r$ that share only the architecture; every \texttt{Conv2d}/\texttt{Linear} weight in each copy is re-initialised via Kaiming normal starting from the pretrained checkpoint, matching the cascading endpoint of~\citet{Adebayo2018sanity} (\texttt{weights=None} leaves the ViT classification head identically zero so the two trajectory laws would be trivially indistinguishable; we use the shared re-init for all three architectures to keep the comparison clean).  \emph{(ii)} Draw one image index with replacement from the $566$-image ImageNet-S test split of Section~\ref{sec:experiments} (the $500$ draws hit $319$ distinct images under the fixed seed we use).  \emph{(iii)} Compute the plain Hellinger $\mathrm{H}(\Pi_{A_r}, \Pi_{B_r})$ and conditioned-survival $\mathrm{H}_{\mathrm{surv}}$ for both the Stopping Game and Routing Game kernels using the ground-truth label as target class.  Table~\ref{tab:double-random-hellinger} reports the round-to-round mean and standard deviation; the round-to-round variance mixes two sources (image choice and the random-pair realisation).

\begin{table}[H]
  \centering
  \small
  \begin{tabular}{lcccc}
    \toprule
    & \multicolumn{2}{c}{Stopping Game} & \multicolumn{2}{c}{Routing Game} \\
    \cmidrule(lr){2-3} \cmidrule(lr){4-5}
    Architecture & $\mathrm{H}$ & $\mathrm{H}_{\mathrm{surv}}$ & $\mathrm{H}$ & $\mathrm{H}_{\mathrm{surv}}$ \\
    \midrule
    VGG-16      & $0.939 \pm 0.003$ & $1.000 \pm 0.000$ & $1.000 \pm 0.000$ & $1.000 \pm 0.000$ \\
    ResNet-50   & $0.959 \pm 0.002$ & $1.000 \pm 0.000$ & $0.960 \pm 0.004$ & $1.000 \pm 0.000$ \\
    ViT-B/16    & $0.958 \pm 0.002$ & $1.000 \pm 0.000$ & $0.963 \pm 0.003$ & $1.000 \pm 0.000$ \\
    \bottomrule
  \end{tabular}
  \caption{Hellinger trajectory distance between two independently random-initialised copies of the same architecture, averaged over $R = 500$ rounds (one fresh ImageNet-S test image per round).  Plain $\mathrm{H}$ concentrates at an architecture-dependent cemetery floor of $0.94$--$0.96$; conditioned-survival $\mathrm{H}_{\mathrm{surv}}$ saturates at unity.  The VGG-16 Routing Game cell is the one case in which the plain distance also saturates at $1.000$.}
  \label{tab:double-random-hellinger}
\end{table}

\paragraph{Interpretation.}
Plain $\mathrm{H}$ concentrates at a model-dependent floor of $0.94$--$0.96$, matching the cemetery-floor prediction of Remark~\ref{rem:adf-cemetery-floor}: once both random kernels produce uncorrelated dead masks, the shared dead-in-both coincidence rate caps $\mathrm{BC}^{(0)}$ at a strictly positive constant and plain $\mathrm{H}$ cannot reach unity.  The conditioned-survival distance removes the cemetery floor by renormalising on survival and correctly reports the two independent random trajectory laws as essentially support-disjoint on ordinary input states ($\mathrm{H}_{\mathrm{surv}} \approx 1$ throughout).  The pretrained-vs-randomised Hellinger distance that the cascading sanity check reports should therefore be read against this architecture-specific asymptote rather than against unity: values close to the double-random baseline already indicate that the game-theoretic trajectory law has become fully decoupled from the pretrained one.

\paragraph{Per-image versus per-round variance.}
Both the cascading run (Figure~\ref{fig:hellinger-sanity}) and the double-random control (Table~\ref{tab:double-random-hellinger}) use \emph{one} random realisation of the re-initialisation per cascading step: \texttt{torch.manual\_seed} is set once, and \texttt{\_randomize\_layer} consumes the global RNG in place; all $566$ test images pass through the \emph{same} randomised weights before the next step. The per-image standard deviations that follow therefore measure \textbf{image-content variance under a fixed weight realisation}, not weight-realisation variance---the latter is reported separately by the $R{=}500$ round-to-round spread in Table~\ref{tab:double-random-hellinger}. Figure~\ref{fig:hellinger-variance} plots the per-image std across all cascading steps.

\begin{figure}[H]
  \centering
  \input{figures/plots/hellinger_variance/hellinger_variance_plots.inc.tex}
  \caption{Per-image standard deviation of the Hellinger trajectory distance vs.\ cascading randomization step. Three panels with independent $y$-scales (VGG-16 and ResNet-50 reach $\sigma\sim 0.12$--$0.15$ because ReLU dead-mask content varies strongly across images; ViT-B/16 stays within $\sigma\lesssim 0.015$ because the smooth-activation survival graph is dense). Every point is image-content variance under a single fixed weight realisation; weight-realisation variance at full randomization is in Table~\ref{tab:double-random-hellinger} ($0.002$--$0.004$ round-to-round std).}
  \label{fig:hellinger-variance}
\end{figure}

The full per-image standard-deviation curves across all cascading steps are in Figure~\ref{fig:hellinger-variance} (above). For the \ac{ReLU} backbones (VGG-16, ResNet-50) the per-image std under the Stopping Game reaches $\sim 0.04$--$0.05$ at the full-randomisation endpoint---an order of magnitude larger than the round-to-round std of the double-random baseline in Table~\ref{tab:double-random-hellinger}. The reference $A$ is pretrained, so its layer-wise dead masks depend on the input and the shared dead-in-both coincidence rate fluctuates with image content; in the doubly random setup both $A_r$ and $B_r$ are random and the dead masks are determined by weight statistics alone, so the cemetery floor is image-agnostic and the round-to-round std collapses. ViT-B/16 has no comparable binary dead-set structure (pre-activation normalisation and smooth activations keep the survival graph densely connected), so its per-image std is already tight at the cascading endpoint and coincides with the double-random regime. The tight double-random variance in every case is consistent with the cemetery-floor mechanism of Remark~\ref{rem:adf-cemetery-floor} being input-independent in the doubly random regime.

\subsection{Path-Space Sensitivity vs.\ ``Shadow'' Maps}\label{sec:lmp-rand-shadows}

By Theorem~\ref{thm:lrp-recovery}, the input-layer marginal $\alpha^{(0)}_M$ recovers gradient and $\alpha\beta$-\ac{LRP} maps in the appropriate limits.  Theorems~\ref{thm:rand-adf}--\ref{thm:rand-routing} show that $\alpha^{(0)}_M$ can remain similar while the full trajectory measure $\Pi_M$ changes drastically, so trajectory-space diagnostics are strictly more informative than pixel-level projections alone.

\subsection{Per-Image Trajectory-vs-Map Independence}\label{sec:per-image-orthogonality}

Theorems~\ref{thm:rand-adf}--\ref{thm:rand-routing} and the double-random baseline of Table~\ref{tab:double-random-hellinger} bound the \emph{aggregate} behaviour of the Hellinger trajectory distance and the pixel-level attribution-map similarity.  A sharper version of the ``shadow-map'' argument (\S\ref{sec:lmp-rand-shadows}) is the claim that the two diagnostics disagree \emph{per image}: at fixed randomisation depth, whether a given image's trajectory distribution is strongly displaced by the randomisation need not predict whether its attribution map becomes more dissimilar.  Table~\ref{tab:per-image-corr} tests this directly.

\paragraph{Protocol.}
For each architecture we pick the deepest cascading step at which the Hellinger run and the attribution-sanity run share the same \texttt{last\_randomized\_layer}, so per-image values are directly comparable: step~4 / \texttt{features.26} on VGG-16, step~4 / \texttt{layer4.1.conv3} on ResNet-50, step~3 / \texttt{encoder\_layer\_11.self\_attention.out\_proj} on ViT-B/16 (the deepest alignable step without matching the per-head Q/K/V granularity that the sanity pipeline enumerates separately).  The mean $\mathrm{H}$ / $\mathrm{H}_{\mathrm{live}}$ at these steps can be read off Figure~\ref{fig:hellinger-sanity}.  For each of the $566$ ImageNet-S test images we then record four Hellinger scalars ($\mathrm{H}$ plain/live for the Stopping and Routing Games) and four attribution-map scalars (Spearman-ABS and HOG, each computed between the pretrained and partially-randomised model, for Gradient and for $\mathrm{RG}$ at $\tau{=}1$; these are the attribution methods whose marginal laws line up with the Hellinger game kernels).  Table~\ref{tab:per-image-corr} reports Spearman rank correlations across the $566$ images.

\begin{table}[H]
  \centering
  \scriptsize
  \setlength{\tabcolsep}{3.5pt}
  \begin{tabular}{ll cccc cccc}
    \toprule
    & & \multicolumn{4}{c}{Within Hellinger} & \multicolumn{2}{c}{Gradient} & \multicolumn{2}{c}{$\mathrm{RG}$ ($\tau{=}1$)} \\
    \cmidrule(lr){3-6} \cmidrule(lr){7-8} \cmidrule(lr){9-10}
    Architecture & Hellinger variant
      & $\mathrm{H}_{\mathrm{RG,pl}}$ & $\mathrm{H}_{\mathrm{RG,live}}$
      & $\mathrm{H}_{\mathrm{SG,pl}}$ & $\mathrm{H}_{\mathrm{SG,live}}$
      & Spear.\,$|\rho|$ & HOG\,$r$ & Spear.\,$|\rho|$ & HOG\,$r$ \\
    \midrule
    \multirow{4}{*}{VGG-16} & $\mathrm{H}_{\mathrm{RG,plain}}$ & $-$     & $-0.79$ & $+0.76$ & $-0.84$ & $+0.02$ & $+0.03$ & $-0.06$ & $-0.01$ \\
                            & $\mathrm{H}_{\mathrm{RG,live}}$  & $-0.79$ & $-$     & $-0.79$ & $+0.81$ & $-0.09$ & $-0.12$ & $-0.03$ & $-0.06$ \\
                            & $\mathrm{H}_{\mathrm{SG,plain}}$ & $+0.76$ & $-0.79$ & $-$     & $-0.82$ & $+0.04$ & $+0.07$ & $-0.04$ & $-0.01$ \\
                            & $\mathrm{H}_{\mathrm{SG,live}}$  & $-0.84$ & $+0.81$ & $-0.82$ & $-$     & $-0.08$ & $-0.06$ & $-0.04$ & $-0.05$ \\
    \midrule
    \multirow{4}{*}{ResNet-50} & $\mathrm{H}_{\mathrm{RG,plain}}$ & $-$     & $-0.83$ & $+0.68$ & $-0.55$ & $-0.01$ & $-0.08$ & $-0.05$ & $-0.11$ \\
                               & $\mathrm{H}_{\mathrm{RG,live}}$  & $-0.83$ & $-$     & $-0.74$ & $+0.50$ & $+0.08$ & $+0.19$ & $+0.01$ & $+0.07$ \\
                               & $\mathrm{H}_{\mathrm{SG,plain}}$ & $+0.68$ & $-0.74$ & $-$     & $-0.61$ & $+0.02$ & $+0.06$ & $+0.07$ & $+0.08$ \\
                               & $\mathrm{H}_{\mathrm{SG,live}}$  & $-0.55$ & $+0.50$ & $-0.61$ & $-$     & $-0.10$ & $-0.14$ & $-0.16$ & $-0.13$ \\
    \midrule
    \multirow{4}{*}{ViT-B/16} & $\mathrm{H}_{\mathrm{RG,plain}}$ & $-$     & $+0.97$ & $-0.17$ & $-0.15$ & $+0.04$ & $+0.06$ & $-0.05$ & $-0.03$ \\
                              & $\mathrm{H}_{\mathrm{RG,live}}$  & $+0.97$ & $-$     & $-0.12$ & $-0.10$ & $+0.03$ & $+0.04$ & $-0.03$ & $-0.01$ \\
                              & $\mathrm{H}_{\mathrm{SG,plain}}$ & $-0.17$ & $-0.12$ & $-$     & $+1.00$ & $+0.08$ & $+0.06$ & $+0.05$ & $+0.07$ \\
                              & $\mathrm{H}_{\mathrm{SG,live}}$  & $-0.15$ & $-0.10$ & $+1.00$ & $-$     & $+0.09$ & $+0.06$ & $+0.06$ & $+0.07$ \\
    \bottomrule
  \end{tabular}
  \caption{Per-image Spearman rank correlations across the $566$-image ImageNet-S test split at the deepest cascading step that is layer-aligned between the Hellinger and attribution-sanity pipelines.  Left block: within-Hellinger cross-correlations (self-entries shown as $-$; trivially $+1$).  Right block: Hellinger-vs-map cross-correlations.  The within-Hellinger block serves as a sanity check that H is not degenerate across the measurement conditions here (discussed in the \emph{Readings} below).  The Hellinger-vs-map block is uniformly $|\rho| \leq 0.2$: per-image Hellinger and per-image attribution-map similarity rank images essentially independently.}
  \label{tab:per-image-corr}
\end{table}

\paragraph{Readings.}
The Hellinger-vs-map cross-block in Table~\ref{tab:per-image-corr} is uniformly near zero ($|\rho| \leq 0.2$ on all three architectures): per-image trajectory displacement and per-image attribution-map drift rank images essentially independently.

The within-Hellinger block shows that per-image H carries stable rank signal at the sample size and variance of this experiment.  On ViT-B/16, the two normalisations of the same game agree almost perfectly per-image ($\rho = +0.97$ under $\mathrm{RG}$, $+1.00$ under $\mathrm{SG}$), establishing that tight absolute variance ($\sigma \approx 0.005$--$0.006$) is compatible with reproducible per-image ordering.  On the \ac{ReLU} backbones, plain and live normalisations of the same game are strongly anti-correlated per-image ($\rho \approx -0.8$ on VGG-16, $\rho \approx -0.6$ to $-0.8$ on ResNet-50), and cross-game same-normalisation correlations are moderately positive ($\rho \approx +0.7$ to $+0.8$).  Cross-game correlations on ViT-B/16 are small ($|\rho| \leq 0.17$), indicating the $\mathrm{SG}$ and $\mathrm{RG}$ kernels read different per-image structure there.  The plain-vs-live anti-correlation on the \ac{ReLU} backbones reflects per-image fluctuation of $B$'s live-mass fraction $q_B$ under randomisation: when little mass survives the cemetery coordinate dominates plain $\mathrm{BC}$ and suppresses $\mathrm{H}^2 = 1 - \mathrm{BC}_{\mathrm{cem}} - \mathrm{BC}_{\mathrm{live}}$, while the live-normalisation re-scale $1/\sqrt{q_A q_B}$ amplifies the same surviving disagreement and raises $\mathrm{H}_{\mathrm{live}}^2 = 1 - \mathrm{BC}_{\mathrm{live}}/\sqrt{q_A q_B}$, so the single image-level axis $q_B$ pushes the two normalisations in opposite directions; ViT-B/16's smooth-activation survival graph has no hard dead-set, $q_B$ is image-stable, and the two normalisations therefore rank images identically.  Beyond this mechanism, the patterns serve as evidence that H is not degenerate across the Table~\ref{tab:per-image-corr} measurement conditions.

A secondary observation from the within-method pattern: Spearman-ABS and HOG within a single attribution method agree strongly ($\rho = 0.70$--$0.98$, computed from the same per-image joins), but Gradient and $\mathrm{RG}$-$\tau{=}1$ similarities barely agree across images ($\rho = 0.02$--$0.23$ depending on architecture and similarity metric).  Together with the near-zero Hellinger-vs-map block, this tightens the conclusion of \S\ref{sec:lmp-rand-shadows}: trajectory-space and pixel-level diagnostics look at orthogonal per-image structure, and within pixel-level diagnostics the choice of attribution method changes \emph{which} images a randomisation sanity check flags.

\newpage
\section{Cross-Dataset Evaluation: Pascal VOC 2012}\label{app:voc-cross-data}

This appendix details the protocol behind the ViT-B/16 cross-dataset table (Table~\ref{tab:eval-vit-voc}). The goal is twofold: (i) test whether the \ac{RG} / \ac{RG}+Eq mode parameters validation-tuned on ImageNet-S transfer to a different dataset \emph{without} re-tuning, and (ii) put baselines on equal footing by re-running the same per-method validation-grid search on the new pool. Implementation lives in \texttt{experiments/baseline\_tune\_pipeline/run.py} and \texttt{experiments/baseline\_tune\_pipeline/build\_voc\_grid.py}; the entry-point \texttt{reproduce/pascal\_voc.py} wires the stages end-to-end (\texttt{seed=42} fixed everywhere).

\subsection{Dataset and split}\label{app:voc-cross-data-split}

 \begin{table}[t]
  \centering
  \caption{Pascal VOC 2012 cross-dataset evaluation on ViT-B/16. Per-image disjoint random split of the trainval pool (val:test $=$ 1:5, seed $42$); the test partition contributes 2{,}849 (image, foreground-class) samples (person class skipped, no faithful ImageNet target). \textbf{Column conventions, metric definitions, colour coding (best per column in red, block-best in gray), and the \ac{RG} / \ac{RG}+Eq anchor-row semantics are inherited from Table~\ref{tab:eval-vit}; see its caption for the full legend.} \ac{RG} and \ac{RG}+Eq use the ImageNet-S–tuned mode parameters \emph{verbatim} (no re-tuning on VOC), while every baseline row is the val-split rank-sum winner over $\{\text{AL}, \text{PG}, \text{TK}\}$ from the per-method search grid; full protocol, mask convention, and search ranges in Appendix~\ref{app:voc-cross-data}.}
  \label{tab:eval-vit-voc}
  \resizebox{\textwidth}{!}{%
  \begin{tabular}{@{}l l c c c c c c c c c@{}}
  \toprule
  Method & Config & AL $\uparrow$ & PG $\uparrow$ & TK $\uparrow$ & PF$_5$ $\downarrow$ & PF$_{20}$ $\downarrow$ & Sel.\ $\downarrow$ & MaxS $\downarrow$ & AvgS $\downarrow$ & $t$/img \\
  \midrule
  Gradient & -- & 0.303 & 0.403 & 0.368 & 3.365 & 3.167 & 2.253 & 1.541 & 1.218 & 0.07 \\
  \midrule
  SmGrad & $N_{\text{samples}}{=}10$, $\sigma{=}0.3$, std & 0.310 & 0.542 & 0.484 & 3.148 & 2.729 & 2.088 & 0.448 & 0.384 & 1.52 \\
  \midrule
  IntGrad & $N_{\mathrm{steps}}{=}20$ & 0.330 & 0.496 & 0.426 & 3.206 & 2.908 & 2.141 & 1.023 & 0.862 & 2.56 \\
  \midrule
  DeepLift & -- & 0.356 & 0.543 & 0.463 & 3.156 & 2.697 & 1.987 & 0.834 & 0.677 & 0.13 \\
  \midrule
  DAVE & $N{=}50$ & 0.312 & 0.552 & 0.435 & 3.282 & 2.990 & 2.183 & 0.808 & 0.759 & 2.60 \\
  \specialrule{1.2pt}{2pt}{2pt}
  LRP-$\varepsilon$ & $\varepsilon{=}2$ & 0.400 & 0.564 & 0.511 & 3.259 & 2.857 & 2.002 & 1.561 & 1.301 & 0.14 \\
  \midrule
  LRP-$\gamma$ & $\varepsilon{=}0.5$, $\gamma{=}0.1$ & 0.349 & 0.497 & 0.455 & 3.220 & 2.777 & 2.024 & 3.287 & 1.709 & 0.29 \\
  \midrule
  AttnLRP & $\varepsilon{=}1.0$ (val-tuned) & 0.394 & 0.550 & 0.505 & 3.217 & 2.807 & 2.011 & 3.812 & -- & 0.12 \\
  \midrule
  TIBAV & -- & 0.265 & 0.455 & 0.301 & 3.475 & 3.388 & 2.322 & 0.722 & 0.598 & 0.15 \\
  \midrule
  GAE & -- & 0.377 & \textcolor{red}{\textbf{0.734}} & 0.541 & 2.838 & 2.414 & 1.650 & 0.482 & 0.423 & 0.07 \\
  \midrule
  \multirow{2}{*}{RG} & $\alpha{=}2,\beta{=}1,\varepsilon{=}0.5,\tau{=}1$ & 0.357 & 0.563 & 0.522 & 3.203 & 2.633 & 1.907 & \textcolor{red}{\textbf{0.171}} & \textcolor{red}{\textbf{0.148}} & 0.16 \\
   & $\tau{=}0.5,\lambda{=}{-}1,\lambda_{\mathrm{sm}}{=}{-}10$ & \textcolor{red}{\textbf{0.459}} & 0.585 & 0.522 & 3.316 & 2.911 & 2.031 & 0.568 & 0.478 & 0.25 \\
  \midrule
  \multirow{2}{*}{RG+Eq} & $N{=}50$ & 0.362 & 0.614 & 0.568 & 3.295 & 2.935 & 1.905 & 0.539 & 0.523 & 7.17 \\
   & $\tau{=}0.5$, $N{=}50$ & 0.438 & 0.618 & \textcolor{red}{\textbf{0.572}} & 3.325 & 3.027 & 2.019 & 0.882 & 0.867 & 7.39 \\
  \specialrule{1.2pt}{2pt}{2pt}
  AR & $w_r{=}0.25$ & 0.335 & 0.649 & 0.519 & 3.441 & 3.288 & 2.011 & 0.222 & 0.204 & 0.02 \\
  \bottomrule
  \end{tabular}
  }%
  \end{table}

We use Pascal VOC 2012~\citep{Everingham2010voc} segmentation annotations: the trainval pool of $2{,}913$ images with pixel-accurate \texttt{Seg\-mentationClass} masks across $20$ foreground classes plus background. Each image is expanded to one (image, foreground-class) sample per distinct VOC class label present in its mask, yielding $3{,}430$ total samples after dropping the \texttt{person} class (no faithful ImageNet-1K target exists for it) and discarding sliver masks whose pixel area on the \emph{original} (pre-resize) annotation is below $64$ pixels. Each (image, class) sample carries the same input image and an explanation target equal to the most semantically central ImageNet-1K index for that VOC class (e.g.\ \texttt{horse}$\,\to\,$\textsc{n02389026 sorrel}; \texttt{tvmonitor}$\,\to\,$\textsc{n03782006 monitor}); the curated mapping is in \texttt{data/pascal\_voc.py}.

\paragraph{Per-image disjoint val/test split.} The trainval pool is partitioned randomly with seed $42$ at the \emph{image} level (not the (image, class) level) into a $1{:}5$ val/test split. All (image, class) samples for a given image stay in the same partition, so leakage between val tuning and test evaluation is structurally impossible. The resulting partitions:
\begin{center}
\begin{tabular}{lcc}
\toprule
& Images & (image, class) samples \\
\midrule
val (tuning)  & $463$  & $581$ \\
test (eval)   & $2{,}314$ & $2{,}849$ \\
\bottomrule
\end{tabular}
\end{center}
The split helper is \texttt{dc\_decompose.evaluation.\allowbreak{}dataset.split\_voc\_trainval(\allowbreak{}val\_fraction=1/6, seed=42)}; it deterministically shuffles the trainval stems and assigns the first $1/6$ to val.

\paragraph{Pixel masks for localisation.} \ac{AttrLoc}, \ac{PG}, and Top-$k$ Intersection are computed against the per-(image, class) \emph{binary VOC2012 segmentation pixel mask} for that class, not bounding boxes. The mask is resized via NEAREST interpolation and centre-cropped to $224{\times}224$ to match the input pipeline (\texttt{Resize(256)$\,\to\,$CenterCrop(224)}), then passed verbatim into Quantus~\citep{hedstrom2023quantus} alongside the attribution map. This avoids the well-known bbox over-credit issue~\citep{Selvaraju2017gradcam} where a method that puts mass anywhere inside the box wins the metric; pixel masks reward only mass on the actual foreground object.

\subsection{Tuning protocol}\label{app:voc-cross-data-protocol}

The protocol mirrors the ImageNet-S two-stage scheme used in the main paper (\S\ref{sec:experiments}), with one deliberate asymmetry: \ac{RG} and \ac{RG}+Eq do \emph{not} get a fresh grid search on VOC.
\begin{itemize}
\item \textbf{Baselines (Stage 1, val).} Every \texttt{(method, params)} row of the per-method grid (Appendix~\ref{app:voc-cross-data-grid}) is evaluated on the $581$-sample val pool with the same Quantus metric set as the main table: localisation $\{\text{AL}, \text{PG}, \text{TK}\}$, faithfulness $\{\text{PF}_5, \text{PF}_{20}, \text{Sel}\}$, robustness $\{\text{MaxS}, \text{AvgS}\}$, plus inference time. The single config per method that minimises the rank-sum over $\{\text{AL}, \text{PG}, \text{TK}\}$ is selected as the val winner.
\item \textbf{Baselines (Stage 2, test).} The val winner per method is re-evaluated on the $2{,}849$-sample test pool. This is the row reported in Table~\ref{tab:eval-vit-voc}.
\item \textbf{\ac{RG} / \ac{RG}+Eq (transfer test).} The two \ac{RG} rows and the two \ac{RG}+Eq rows in Table~\ref{tab:eval-vit-voc} use the \emph{same} mode parameters as the corresponding rows of Table~\ref{tab:eval-vit} (the $\alpha\beta$-\ac{LRP} anchor and the localisation-tuned mode), without any VOC-side re-tuning. They are evaluated directly on the VOC test pool. Every other knob ($\tau$, $\lambda_{\mathrm{sm}}$, $\sigma^2$, $N$) is fixed at its ImageNet-S value.
\end{itemize}

\subsection{Per-method search ranges}\label{app:voc-cross-data-grid}

The full grid is built by \texttt{experiments/baseline\_tune\_pipeline/build\_voc\_grid.py} (one row per cell). Configs are applied verbatim by the evaluator via \texttt{setattr} on the registered method instance.

\begin{center}
\renewcommand{\arraystretch}{1.15}
\begin{tabular}{@{}l p{0.66\textwidth}@{}}
\toprule
Method & Grid \\
\midrule
Gradient & single config (no tunable params) \\
SmGrad & $(N, \sigma, \text{agg}) \in \{10\}\times\{0.05, 0.1, 0.15, 0.2, 0.3\}\times\{\text{mean}, \text{std}\}$ $\,\cup\,$ $\{(30, 0.1, \text{mean}), (30, 0.2, \text{mean})\}$ \quad (12 configs) \\
IntGrad & $N_{\text{steps}} \in \{5, 10, 20, 50, 100\}$ \quad (5 configs) \\
DeepLift & $\{\text{default}\} \cup \{\text{rule}=\gamma, \gamma\!\in\!\{0.1, 0.25, 0.5\}\} \cup \{\text{rule}=z^+\}$ \quad (5 configs) \\
LRP-$\varepsilon$ & $\varepsilon \in \{0.01, 0.05, 0.1, 0.25, 0.5, 1, 2\}$ \quad (7 configs) \\
LRP-$\gamma$ & $(\varepsilon, \gamma) \in \{0.1, 0.25, 0.5\}\times\{0.1, 0.25, 0.5, 1, 2\}$ \quad (15 configs) \\
AttnLRP & pure-$\varepsilon$ (Linear/Conv only): $\varepsilon \in \{0.05, 0.1, 0.25, 0.5, 1.0\}$ \quad $\cup$ paper-faithful $\gamma$-composite~\citep{achtibat2024attnlrp}: anchor $(\gamma_{\text{conv}}{=}0.25, \gamma_{\text{lin}}{=}0.05, \varepsilon_{\text{attn}}{=}0.25)$ + univariate sweeps $\gamma_{\text{lin}}\!\in\!\{0.01, 0.025, 0.1, 0.25\}$, $\gamma_{\text{conv}}\!\in\!\{0.1, 0.5, 1.0\}$, $\varepsilon_{\text{attn}}\!\in\!\{0.05, 0.1, 0.5, 1.0\}$ \quad (17 configs) \\
TIBAV & single config \\
GAE & single config \\
AR & residual-weight $w_r \in \{0.0, 0.25, 0.5, 0.75, 1.0\}$ \quad (5 configs) \\
DAVE & $N \in \{25, 50\}$ at paper-fixed (max\_angle$=20$, max\_shift\_frac$=0.1$, flip\_prob$=0.5$, noise\_t\_max$=0.5$, multiply\_input$=$True) \quad (2 configs) \\
\bottomrule
\end{tabular}
\end{center}

\subsection{Result framing}\label{app:voc-cross-data-result}

Table~\ref{tab:eval-vit-voc} shows that \ac{RG} / \ac{RG}+Eq retain their lead on \emph{localisation} after transfer: the localisation-tuned \ac{RG} mode is best on \ac{AttrLoc} ($0.459$ vs.\ next-best $0.400$ for LRP-$\varepsilon$), \ac{RG}+Eq is best on Top-$k$ Intersection ($0.572$), and the $\alpha\beta$-\ac{LRP} anchor leads both robustness columns (MaxS $0.171$, AvgS $0.148$) by an order of magnitude over every baseline. The single localisation column where \ac{GAE}~\citep{Chefer2021genatt} wins is \ac{PG} ($0.734$). \ac{PG} is the noisiest of the three localisation metrics by construction --- it is a binary indicator (does the argmax of the saliency map land inside the foreground mask?) so its mean over $N$ samples advances in steps of $1/N$ and is dominated by single-image flips on the boundary. \ac{AttrLoc} and Top-$k$ Intersection, which both integrate continuous mass-overlap signals, paint a consistent picture in the other direction.

\paragraph{What this demonstrates.} The \ac{RG} / \ac{RG}+Eq mode parameters ($\tau$, $\lambda$, $\lambda_{\mathrm{sm}}$, $\sigma^2$, $N$) generalise across the ImageNet-S~$\to$~Pascal VOC dataset shift without re-tuning, while the baselines --- which got a fresh per-method grid search on the VOC val pool --- still trail on the localisation block. The transfer claim is therefore conservative against the baselines: any tuning advantage flows to them, not to us.

\newpage
\section{Compute Resources and Broader Impact}\label{app:reproducibility-impact}

\subsection{Compute resources}\label{app:compute-resources}

All reported experiments were run on a local workstation with an Intel Xeon E5-2687W v4 CPU (Broadwell-EP, one socket, 12 physical cores, 24 logical threads, 3.00\,GHz base clock and 3.50\,GHz maximum turbo), 31\,GiB system RAM, and 8\,GiB swap. The CPU supports AVX2 and FMA, but not AVX-512. The GPU devices available for the experimental runs were an NVIDIA GeForce RTX 3060 with 12\,GiB memory and an NVIDIA TITAN V with 12\,GiB memory. The software stack used NVIDIA driver 555.42.06 and CUDA 12.5.

The attribution evaluations are inference-only. The main cost is the repeated backward attribution pass over the validation and test images across the method grids, including the equivariant-Reynolds and noise-averaged configurations.

As a conservative reproduction budget, we estimate that the complete reported evaluation suite, including validation sweeps, test-set evaluation, appendix sweeps, randomisation diagnostics, and qualitative attribution generation, can be reproduced within approximately 300 GPU-hours on this workstation class. This estimate is meant as an upper-bound planning number rather than a benchmark: wall-clock time varies with GPU assignment, batching, whether interactive display processes share the RTX 3060, and whether appendix sweeps are run serially or in parallel across the two GPUs.

\subsection{Broader impact}\label{app:broader-impact}

The work is methodological: it studies attribution methods for existing image classifiers and does not introduce a deployed decision system, a new dataset of people, or a high-risk generative model. Its intended positive impact is to make explanation methods more diagnostically accountable. In particular, trajectory-level comparisons can reveal cases where visually similar attribution maps arise from different backward calculations, and can therefore support more careful auditing of explanation pipelines used in scientific or engineering workflows.

The same capability can also have negative or misleading uses. Sharper or more stable attribution maps may create unwarranted confidence in a model if users treat explanations as causal guarantees rather than as post-hoc diagnostics. In safety-critical or fairness-sensitive settings, improved-looking explanations could be used to justify a classifier whose predictions remain brittle, biased, or dataset-dependent. The Hellinger trajectory distance also compares internal calculation paths rather than social validity; small trajectory distance does not imply that the underlying model is fair, robust, or appropriate for deployment.

We therefore view the framework as an analysis tool rather than a deployment certificate. The experiments use public benchmark classifiers and ImageNet-derived evaluation data, and the results should be interpreted as evidence about attribution behavior under these benchmarks, not as evidence that the underlying models are suitable for consequential use.

\end{document}